%
%
%
%
%


\documentclass[
11pt, 
english, 
singlespacing, 
headsepline, 
]{MastersDoctoralThesis} 

\usepackage[utf8]{inputenc} 
\usepackage[T1]{fontenc} 

\usepackage{palatino} 

\usepackage[backend=bibtex,style=authoryear,natbib=true]{biblatex} 

\addbibresource{main.bib} 

\usepackage[autostyle=true]{csquotes} 

\usepackage{graphicx}
\usepackage{times}
\usepackage{amsmath}
\usepackage{amssymb}
\usepackage{breqn}
\usepackage{algorithmic}
\usepackage{float}
\usepackage{algorithm}
\usepackage{booktabs}

\newtheorem{theorem}{Theorem}
\newtheorem{definition}[theorem]{Definition}
\newtheorem{lemma}[theorem]{Lemma}

\def\qed{$\blacksquare$}

\newenvironment{proof}{\noindent\textit{Proof:}}{ \qed\smallskip}


\geometry{
	paper=a4paper, 
	inner=2.5cm, 
	outer=3.8cm, 
	bindingoffset=.5cm, 
	top=1.5cm, 
	bottom=1.5cm, 
}


\thesistitle{Causal Discovery and Prediction: \\ Methods and Algorithms}

\supervisor{Dr. Marta \textsc{Arias} \\ Dr. Ricard \textsc{Gavaldà}} 
\examiner{} 
\degree{Doctor of Philosophy} 
\author{Gilles \textsc{Blondel}} 
\addresses{} 

\subject{Artificial Intelligence} 
\keywords{} 
\university{\href{http://www.university.com}{Universitat Politècnica de Catalunya}} 
\department{\href{http://department.university.com}{Department of Computer Science}} 
\group{\href{http://researchgroup.university.com}{Ph.D. program in Artificial Intelligence}} 
\faculty{\href{http://faculty.university.com}{}} 

\AtBeginDocument{
\hypersetup{pdftitle=\ttitle} 
\hypersetup{pdfauthor=\authorname} 
\hypersetup{pdfkeywords=\keywordnames} 
}

\begin{document}

\frontmatter 

\pagestyle{plain} 


\begin{titlepage}
\begin{center}

\vspace*{.06\textheight}
\vspace{1.5cm} 
\textsc{\Large Doctoral Thesis}\\[0.5cm] 

\HRule \\[0.4cm] 
{\huge \bfseries \ttitle\par}\vspace{0.4cm} 
\HRule \\[1.5cm] 
 
\begin{minipage}[t]{0.4\textwidth}
\begin{flushleft} \large
\emph{Author:}\\
\href{http://www.johnsmith.com}{\authorname} 
\end{flushleft}
\end{minipage}
\begin{minipage}[t]{0.4\textwidth}
\begin{flushright} \large
\emph{Supervisor: } \\
\href{http://www.jamessmith.com}{\supname} 
\end{flushright}
\end{minipage}\\[3cm]
 
\vfill

\large \textit{A thesis submitted in fulfillment of the requirements\\ for the degree of \degreename}\\[0.3cm] 
\textit{in the}\\[0.4cm]
\groupname\\\deptname\\\univname\\[1cm] 


\vfill

{\large \today}\\[4cm] 

\vfill
\end{center}
\end{titlepage}


\begin{declaration}
\addchaptertocentry{\authorshipname} 
\noindent I, \authorname, declare that this doctoral thesis titled, \enquote{\ttitle} and the work presented in it are my own. I confirm that:

\begin{itemize} 
\item This work was done wholly or mainly while in candidature for a PhD degree at this University.
\item Where any part of this doctoral thesis has previously been submitted for a degree or any other qualification at this University or any other institution, this has been clearly stated.
\item Where I have consulted the published work of others, this is always clearly attributed.
\item Where I have quoted from the work of others, the source is always given.
\item I have acknowledged all main sources of help.
\item While the doctoral thesis supervisors contributed to defining the problem under study, the methods, algorithms, proofs of their correctness, and the experiments are my work. \\
\end{itemize}
 
\noindent Signed:\\
\rule[0.5em]{25em}{0.5pt} 
 
\noindent Date:\\
\rule[0.5em]{25em}{0.5pt} 
\end{declaration}

\cleardoublepage


\vspace*{0.2\textheight}

{\enquote {\itshape We do not know a truth without its cause... And these things, the most universal, are the hardest for men to know; for they are farthest from the senses.}}

\bigbreak

\indent \indent \indent  \indent \indent \indent \indent \indent \indent \indent \indent \indent \indent \indent Aristotle (Metaphysics; Books I - II)

\bigbreak
\bigbreak
\bigbreak

\indent \indent \indent \indent \indent \enquote {\itshape  There is no dark side 
of the moon, really. 
Matter of fact, it's all dark.}
\break

\indent \indent \indent  \indent \indent \indent \indent \indent \indent \indent \indent \indent \indent \indent \indent \indent \indent \indent \indent Pink Floyd


\begin{abstract}
\addchaptertocentry{\abstractname} 
We are not only observers but also actors of reality, or part of it. Our apparent capability to intervene and alter the course of some events in the space and time surrounding us is an essential component of how we build our model of the world. 

In this doctoral thesis we propose a novel method for measuring how efficient an intervention is to discover the causal relations at play. We introduce a generic a-priori assessment of each possible intervention on the subset of reality we are modelling, in order to select the most cost-effective interventions only, and avoid unnecessary systematic experimentation on the real world. Based on this a-priori assessment, we propose an active learning algorithm that identifies the causal relations in any given causal model, using a least cost sequence of interventions. There are several novel aspects introduced by our algorithm. It is, in most case scenarios, able to discard many causal model candidates using relatively inexpensive interventions that only test one value of the intervened variables. Also, the number of interventions performed by the algorithm can be bounded by the number of causal model candidates. Hence, fewer initial candidates (or equivalently, more prior knowledge) lead to fewer interventions for causal discovery.

Causality is intimately related to time, as causes appear to precede their effects. Cyclical causal processes are a very interesting case of causality in relation to time. In this doctoral thesis we introduce a formal analysis of time cyclical causal settings by defining a causal analog to the purely observational Dynamic Bayesian Networks, and provide a sound and complete algorithm for the identification of causal effects in the cyclic setting. We introduce the existence of two types of hidden confounder variables in this framework, which affect in substantially different ways the identification procedures, a distinction with no analog in either Dynamic Bayesian Networks or standard causal graphs.
\end{abstract}


\begin{acknowledgements}
\addchaptertocentry{\acknowledgementname} 
I would like to thank Qiang Yang for his kindness, support and generosity in helping me find a fundamental topic of research when I visited him at the Hong Kong University of Science and Technology. Also, I would like to thank Isabelle Guyon, Ilya Shpitser, Elias Bareinboim and Marek J. Druzdzel for showing their support. Finally, I thank my advisors, Ricard Gavaldà for sharing his extraordinary wisdom, experience and far-reaching vision, and Marta Arias for her continuous support and for always asking interesting and challenging questions.
\end{acknowledgements}


\tableofcontents 

\listoffigures 

\listoftables 














\dedicatory{ \mbox{To my beautiful wife and daughter, as these are the best years of my life.} } 


\mainmatter 

\pagestyle{thesis} 



\chapter{Introduction} 

\label{ch:Chapter1} 


\newcommand{\keyword}[1]{\textbf{#1}}
\newcommand{\tabhead}[1]{\textbf{#1}}
\newcommand{\code}[1]{\texttt{#1}}
\newcommand{\file}[1]{\texttt{\bfseries#1}}
\newcommand{\option}[1]{\texttt{\itshape#1}}


\section{Causality}
It is our perception, from the moment we start observing and interacting with mother nature, that we live in a time evolving environment. We learn that our actions lead to consequences. We extrapolate towards the outer world these basic frameworks from early personal experiences and we quickly develop a simple and mechanistic explanation of the world: causes produce effects in time. This is our intuitive and practical model of reality.

Imagine a simple universe, fully described using a finite number of variables, and where all these variables are constant across any dimensions, since inception and for ever after. An external observer may make a picture of this universe, but would not be able to describe any dependence relations between its variables. This may not be a problem if nothing ever alters the course of the universe, because there is no expectation of change and no need to model the world to predict change. However, if there is any chance that there will be an external influence altering any of the variables, the observer may ask himself what are the relations (if any) between the variables to attempt predicting the consequences. Causality, as other conceptualizations of the world, is intimately related to the existence of an external observer, and the ability of the observer to interact with the world and alter its course. Simply observing the world without acting on it, would not let us reach the conclusion that causal relations exist, at best we could only describe sequences of events in time and their correlations.

In the causal context, an intervention or experiment is defined as externally forcing a variable, pushing it outside of its natural behaviour, and therefore cutting it from the variables it naturally depends from. By observing the consequences of such action, we may learn what are the causal effects of the intervened variable. An intervention not only isolates a variable from its natural observable causes but also from hidden confounders, unobserved variables that causally influence two or more observed variables such that the association they produce between these observed variables may be taken erroneously as causal.

We use causal graphs to encode the causal relations between variables. Both observable and unobservable variables may be represented as vertex in the graph, and the causal relations may be represented as edges. Causal relations are directional in the sense that acting on one variable causes another to change (smoking causes cancer) while the opposite is untrue. We may reduce the chance of contracting cancer by reducing the amount of smoking. This is not to say that there are not cyclic dependencies among variables, however we may observe this cyclic process over time and, if the observation time scale is small enough, the causal relations may be describable again as directional. A simple way to model causal relations is to use directed acyclic graphs (DAGs).


\section{Discovery of Causal Relations}
\label{sec:discovery_static}

Causal graphs encode the causal relations between the variables in a model. In the smoking and cancer example, we may want to observe a population and attempt to build a causal graph from the characteristics of the observed data. Note that in a deterministic world, once we know the cause variables, we would know without a doubt the effects (cancer would be totally predictable). However, this is almost never the case, as we usually know and observe only part of a complex reality. When we observe samples of a limited set of variables, and assume that unobserved variables exist, the model becomes probabilistic. The observed variables may be represented by their joint probability distribution.

In the probabilistic context, one may analyse the joint probability distribution of the observed variables to extract dependence and independence relations among them. In particular the conditional independence relations may provide causal information. As an example, given four variables $X$,$Y$,$Z$,$T$, if the statistical test $P(Y|X)=P(Y)$ holds true this indicates $Y$ and $X$ are independent. Translated to the causal graph this means there is no "d-connected" path, i.e., a sequence of adjacent edges between $X$ and $Y$ through which $X$ and $Y$ have a causal influence on each other. Also, if $P(T|Z,X)=P(T|Z)$ we conclude that $T$ is independent of $X$ given $Z$ so that $Z$ blocks any such paths between $X$ and $T$. We say that $X$ and $T$ are "d-separated" conditional to $Z$. These statistical tests lead to graphical rules that "d-connect" or "d-separate" variables depending on other variables, and a set of DAGs compatible with these rules can be inferred, in other words, a skeleton structure of a completed partially directed acyclic graph (CPDAG) can be inferred from the observed joint probability distributions.

From a general perspective, the discovery of causal relations from empirical data is the basis of the scientific method. In the last few decades, some focus has been given to the development of algorithms for the discovery of causal relations from observed probabilistic data. The attempts to automate causal discovery from purely observed data, aim at avoiding the requirement of experimentation on the domain, which can be a very costly process. Systematic analysis of the observed data, under various assumptions, provides appropriate hints that there might be causal relationships between the variables under analysis in the model.

In this thesis we will assume that the distributions generated by causal processes satisfy the Markov and Faithfulness conditions. 

\begin{definition}[Markov Condition]
\label{def:markov}
Let $G$ be a DAG, $V$ be the set of vertices of $G$, $X\in V$ be a vertex of $G$, $Pa(X)$ be the set of parents of $X$ and $De(X)$ be the set of descendants of $X$ in $G$. The DAG $G$ and a probability distribution $P$ on the vertices $V$ satisfy the Markov condition iff for every $X\in V$, $X$ and $\{V\setminus(X\cup De(X))\}$ are independent conditional on $Pa(X)$.
\end{definition}

\begin{definition}[Faithfulness Condition]
\label{def:faithfulness}
Let $G$ be a DAG, $V$ the set of vertices of $G$ and $P$ a distribution over $V$. Then $P$ is faithful to $G$ iff $G$ and $P$ satisfy the Markov condition and every conditional independence relation true in $P$ is entailed by the Markov condition for $G$.
\end{definition}

These conditions connect the observed probability distribution with the underlying causal system that generates the distribution.

A number of methods have been proposed for finding causal relations from observational data. Several algorithms have  been developed, IC, IC* \parencite{Ref14} and \parencite{Ref15}, PC \parencite{Ref10}, GES \parencite{Ref13}, FCI \parencite{Ref11} and \parencite{Ref10}, RFCI \parencite{Ref12}. Some of these algorithms assume there are no hidden variables, while others assume that hidden variables exist and may confound, i.e., influence causally, several of the observed variables.

The discovery of causal relations from observational data has its limits. In most cases we will obtain, at best, a set of causal models compatible with the observed data. Examples of this are CPDAGs, output of the PC algorithm and Partial Ancestral Graphs (PAG), output of the FCI algorithm. 

A different approach is to, not only find causal relations from the observational data, but also do so by performing experiments (also called "interventions") on the system of variables we are trying to discover the causal relations from, and gathering the joint probability distributions of the system under various interventions. A number of interventions may be required to build a complete DAG. Worst case scenario bounds for the number of interventions have been identified by \parencite{Ref1}. \parencite{Ref2} expand the GES algorithm to causal discovery with experimental data. \parencite{PetersBuhlmannMeinshausen2015} use the predicted invariance of causal models under several interventions as a method for causal discovery, in comparison with non-causal models where the invariance does not hold, and \parencite{HeinzeDemlPetersMeinshausen2018} expand this setting to non-linear models.

\parencite{Ref6} propose a method for learning causal models from a mixture of observational and interventional data.
\parencite{Ref5} and 
\parencite{Ref7} introduce active learning algorithms for the discovery of causal relations, based on interventional data. However, these methods and algorithms have focused on the discovery of causal relations by using interventions in causal models without hidden confounders. This is very restrictive in real world applications, because hidden confounders are typically present in real world data. 

This thesis is concerned with the discovery of causal models that contain hidden confounders, using sequences of interventions. This is a more realistic scenario. Causal models without hidden confounders are a particular sub-case and as such our proposition includes and unifies previous active learning methods.


\section{Prediction of Causal Effects}

In the previous section we have discussed the discovery of causal relations. Using appropriate methods, the objective of causal discovery is to build a causal model that best represents the causal relations that exist in the system we want to describe. Another central topic in causality is the prediction of causal effects. Once we know what causal relations exist among the variables of a system, we may want to reason about the system's behaviour, and in particular we may want to predict the causal effects of an intervention on the system. This is called the causal identification problem.

Pearl’s causal graphical models and do-calculus \parencite{pearl1994probabilistic} are a leading approach to modelling causal relations and predicting the effect of interventions. The do-calculus is an algebraic framework for reasoning
about such interventions: An expression $P(Y|do(X))$ indicates the probability distribution of a set of variables $Y$ upon performing an intervention on another set $X$. In some cases, the effect of such an experiment can be obtained given a
causal graph and some observational distributions; this is convenient as some experiments may be impossible, expensive, or unethical to perform. When $P(Y|do(X))$, for a given causal graph, can be rewritten as an expression containing
only observational probabilities, without a do operator, we say that it is identifiable. \parencite{shpitser2006identification} and \parencite{huang2006identifiability} showed that a do expression is identifiable if and only if it can be rewritten in
this way with a finite number of applications of the three rules
of do-calculus, and \parencite{shpitser2006identification} proposed the ID algorithm which performs this transformation if at all possible, or else returns {\em fail} indicating non-identifiability. More precisely, the do-calculus framework provides non-parametric identifiability, whereas other identification methods for causal effects need to rely on more restrictive assumptions beyond the causal graph, such
as structural equations models, linearity, or other parametric assumptions involving the variables in the model. In this thesis, by {\em identifiability} we always mean {\em non-parametric identifiability}.

The soundness and completeness of do-calculus opens the way for new research. In particular, under some assumptions, given a causal model we can predict the effects of interventions. This thesis introduces a method for learning causal models by comparing the predicted outcomes of interventions with the actual intervention results.


\section{Timescale and Equilibrium}

Causal discovery and causal identification, as described so far in this thesis, can be time agnostic. In the probabilistic context, indeed the time component may be one more unobserved variable, along with many others. However, we clearly perceive that causes produce effects after some lapse of time, even if microscopical, and that time is crucial to understanding causality. 

One of the problems that appear when trying to discover causal effect relations from data over time is choosing an appropriate timescale. Data sets are usually built from data samples, and are not in the form of continuous data over time. Without any prior knowledge about the rate of change under which the system evolves some causal relations may not be discovered, just because the sampling rate is not adjusted to the dynamics of the variables. Also, there may be various causal effects occurring at different timescales within the same system, which adds to the complexity of the problem. This is a generic challenge in science.  

Another challenge of causal discovery across time is that some causal relations may be hidden if we monitor the system after these causal relations have reached some stable equilibrium. We do not know if and when these causal relations have existed and have caused the system to evolve in the past. Modelling the system only with the observed dynamic causal relations will not guarantee we can correctly predict the effect of interventions. Indeed, in reality some interventions may take the system out of equilibrium and awaken the previously unobserved causal relations.   

Existing research discusses some of the challenges of the dynamic temporal environment. Dynamic causal systems are often modelled with sets of differential
equations. However \parencite{dashfundamental} \parencite{dash2001caveats} \parencite{dash2005restructuring} show the caveats of
the discovery of causal models based on differential equations
which pass through equilibrium states, and how causal
reasoning based on the models discovered in such way may
fail. \parencite{voortman2012learning} propose an algorithm for the discovery of causal relations
based on differential equations while ensuring those
caveats due to system equilibrium states are taken into account.
Timescale and sampling rate at which we observe a
dynamic system play a crucial role in how well the obtained
data may represent the causal relations in the system. \parencite{aalen2014can} discuss
the difficulties of representing a dynamic system with
a DAG built from discrete observations and \parencite{gong2015discovering} argue that
under some conditions the discovery of temporal causal relations
is feasible from data sampled at a lower rate than the
system dynamics. \parencite{HYTTINEN2017208} extend the discussion on the impact of choosing an appropriate timescale and sampling rate, and propose a discovery algorithm based on a general-purpose constraint solver.

\subsection{Causal Discovery in Dynamic Systems}

Regarding the discovery of causal models from observational data in dynamic, time dependent systems \parencite{iwasaki1989causality} and \parencite{dash2008note} propose an algorithm to establish an ordering of the variables corresponding to the temporal order of propagation of causal effects. Methods for the discovery of cyclic causal graphs from data have been proposed using independent component analysis \parencite{lacerda2012discovering} and using local d-separation criteria \parencite{meek2014toward}. Existing algorithms for causal discovery from static data have been extended to the dynamic setting  by \parencite{moneta2006graphical} and \parencite{chicharro2015algorithms}. \parencite{dahlhaus2003causality,white2010granger,white2011linking} discuss the discovery of causal graphs from time series by including Granger causality concepts into their causal models. \parencite{amort_causal_disc} introduce a method to infer causal relations
from samples obtained across various underlying causal models with shared structural dynamics.

\subsection{Causal Prediction in Dynamic Systems}

Regarding reasoning from a given dynamic causal model, one existing line of research is based on time series and Granger causality concepts \parencite{eichler2010granger,eichler2012causal,eichler2012causal2}. The authors in \parencite{queen2009intervention} use multivariate time series for identification of causal effects in traffic flow models. \parencite{lauritzen2002chain} discuss interventions in dynamic systems in equilibrium, for several types of discrete-time and continuous-time processes with feedback. \parencite{didelezcausal} uses local independence graphs to represent time-continuous dynamic systems and identify the effect of interventions by re-weighting the causal processes involved.

Existing work on causality does not thoroughly address causal reasoning in dynamic systems using do-calculus. \parencite{eichler2010granger,eichler2012causal,eichler2012causal2} discuss back-door and front-door criteria in time-series. \parencite{DawidDidelez} study a framework of sequential data-gathering and decision-making through a discrete sequence of stages, but do not extend their work to the full power of do-calculus as a complete logic for general causal effect identification. \parencite{Peters2020CausalMF} discuss causal prediction in dynamic systems, using an extension of structural causal models to models governed by differential
equations that include noise.

One of the advantages of do-calculus is its non-parametric approach so that it leaves the type of functional relation between variables undefined. This thesis extends the use of do-calculus to time series while requiring less restrictions than existing parametric causal analysis. Parametric approaches may require to differentiate the intervention impacts depending on the system state, non-equilibrium or equilibrium, while our non-parametric approach is generic across system states. This thesis shows the generic methods and explicit formulas revealed by the application of do-calculus to the dynamic setting. These methods and formulas simplify the identification of time evolving effects and reduce the complexity of causal identification algorithms.


\section{Thesis Contributions}

This thesis introduces novel contributions in two fields: the discovery of causal relations and the prediction of causal effects. 

Regarding causal discovery, this thesis introduces a novel and generic method to learn causal graphs by performing a sequence of interventions, where each intervention is applied on a single value of the intervened variables, and while minimizing the overall cost of the sequence of intervened and observed variables during the discovery process. Regarding causal effect prediction, this thesis introduces a comprehensive causal reasoning method for models recurrent in time. In this thesis, all causal models are assumed to contain hidden confounders that have an influence on observed variables in the causal model, except when explicitly referring to causal models without hidden confounders as a sub-case. Also, all variables are assumed to be in a finite domain.

\subsection{Contributions to the Discovery of Causal Relations}

Our method for the discovery of causal relations introduces several novelties. Firstly, we use interventions on a single value of the intervened variables. To the best of our knowledge, all previous methods require interventions on several values of the intervened variables in order to measure correlation or conditional independence among variables. By using do-calculus as a tool to predict systematically and numerically the effect of all the interventions that are possible, without having to actually perform them, we move the search space out of the real world, and eliminate the need for systematic correlation and independence testing in the real world. We assume that computational cost is not a concern, when compared with the cost of actually experimenting in the real world. 

Secondly, we accept any set of candidate graphs as input to our method. Previous knowledge may or may not be in the form of an equivalence class of graphs, and the set of candidate graphs may or may not have any particular parametric characteristic. Some candidate graphs may have been discarded previously based on analysing the available observational data, however no algorithm based on observational data only can identify the true graph of a causal model. As such, our method accepts any set of causal graph candidates, with the only assumption that the true graph, the solution to the problem, is included in the set of candidates. 

Thirdly, all causal graphs are assumed to contain hidden confounders, whereas most previous work focused on causal graphs without hidden confounders.

In more detail, our contribution to the discovery of causal relations is as follows:
\begin{itemize}

\item We introduce a mechanism to predict the effect of all possible interventions across a set of candidate graphs, under the hypothesis that any of the candidates can be the true graph, given a-priori knowledge. We do this prediction across all possible values of the intervened variables. This allows to systematically assess and compare the potential effect of all interventions a-priori, choose the most appropriate ones in order to discriminate between the candidates, and avoid the need to apply most of the interventions in the real world. 

\item We avoid the need to use systematic correlation and conditional independence tests in the real world, whereas most previous methods use a systematic interventional approach in the real world.

\item We introduce an algorithm for active learning of causal graphs, based on identifying the set of single value interventions that discriminate between all candidate graphs with the minimal cost of intervening and observing variables. As we assess the effects of all possible interventions a-priori, we can apply a surgical approach and identify with precision the required intervened variables, observed variables and value of the intervened variables that provide the most effective discrimination information between the candidate causal graphs, at the lowest cost.

\item We uncover the graphical conditions under which the true causal graph cannot be fully identified with a sequence of single value interventions, in which case conditional independence testing is required to complete the discovery process. This case scenario occurs if problematic graphical structures called {\em hedges} are present in specific parts of the graph. 

\item We prove that, if we start the discovery process with a set of candidate causal graphs with hidden confounders, but without {\em hedges}, we can always learn the true graph with a sequence of least cost, single value interventions.

\end{itemize}

To provide a simple example of our method, let us consider a causal model $M^\star$, which we call the {\em true} model, with induced graph $G^\star$, which we call the {\em true} graph. Let us consider a set of causal graphs that include
$G^\star$, which we call the {\em candidate} graphs. We do not know which {\em candidate} graph is the {\em true} graph $G^\star$. Figure~\ref{fig:ALCAM_example} shows a set of {\em candidate} graphs. Bi-directed edges represent the presence of a hidden confounder which has an effect on the two variables the bi-directed edge is pointing at. To find which one of the {\em candidates} is $G^\star$ we apply selected interventions on $M^\star$ that provide us enough information to discriminate between the {\em candidates}. Let us consider an intervention on variables $X$ and $Y$, and let us observe the effect of that intervention on $Z$. Assume the {\em true} graph is $G_1$. If that is the case, we predict, using do-calculus rules and standard probability manipulations, that the effect of the intervention should be $P(Z|X,Y)$. Using the same logic, if $G_2$ is the {\em true} graph instead of $G_1$, then we predict that the effect should be $\sum_{X}P(Z|X,Y)P(X)$. Figure~\ref{fig:ALCAM_example1} shows the predicted effect on $Z$ from the intervention for each candidate graph. We see in Figure~\ref{fig:ALCAM_example1} that the predicted effect is different for each {\em candidate} graph. Let us now apply this intervention on $M^\star$, i.e., "in the real world", and find the actual effect of the intervention on $Z$. It suffices to apply the intervention on $M^\star$ for one value of the intervened variables for which the predicted effect on $Z$, i.e., the joint probability distribution of $Z$, differs among the {\em candidate} graphs. We can now eliminate the {\em candidate} graphs for which the predicted effect and the actual effect differ, and conclude that $G^\star$ is the graph for which the prediction and the actual effect are the same.

Another intervention may provide different results. Figure~\ref{fig:ALCAM_example2} shows that an intervention on $Y$ and measuring the effect on $Z$ does not provide enough information to discriminate between the four candidates. We see in Figure~\ref{fig:ALCAM_example2} that the predicted effect is the same for $G_1$ and $G_2$ and is the same for $G_3$ and $G_4$. The intervention is not sufficient to identify $G^\star$, and additional interventions will be required.

The example shows that given a set of candidate causal graphs, some interventions have more discriminative power than others. Also, some interventions may be more costly than others, e.g., observing more variables or intervening more variables, and some variables may be more costly to intervene or observe than others. This thesis proposes a generic method for selecting inexpensive interventions with a high power of discrimination among the candidates, and using these interventions to eliminate candidate models that are incompatible with the interventional effects in $M^\star$. We may do this iteratively until a single graph is found to be compatible with all performed interventions, while minimizing the cost of the sequence of interventions.

\begin{figure}[H]
\begin{center}
\begin{tabular}{cc}
    \includegraphics[width=0.2\textwidth]{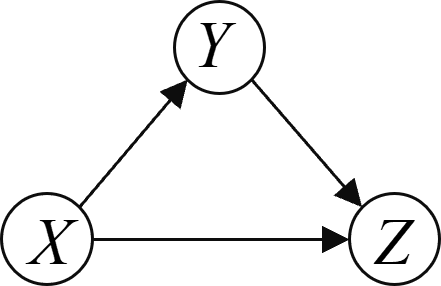} & \includegraphics[width=0.2\textwidth]{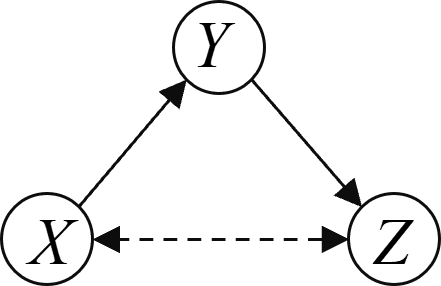}  \\
    
    $G_1$ & $G_2$ \\
    
     &  \\
    \includegraphics[width=0.2\textwidth]{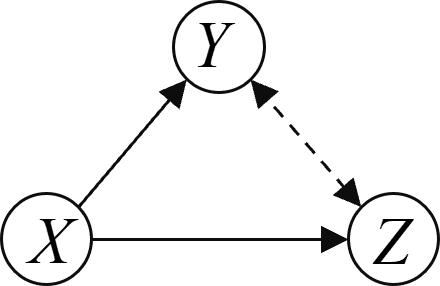} & \includegraphics[width=0.2\textwidth]{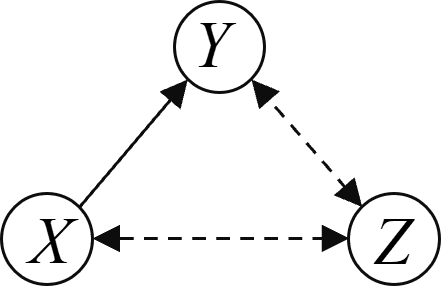}  \\
    $G_3$ & $G_4$ \\

\end{tabular}
\end{center}

\caption{Set of causal graph candidates.}
\label{fig:ALCAM_example}
\end{figure}

\begin{figure}[H]
\begin{center}
\begin{tabular}{cc}
    \includegraphics[width=0.2\textwidth]{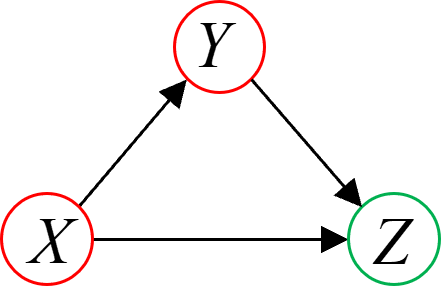} & \includegraphics[width=0.2\textwidth]{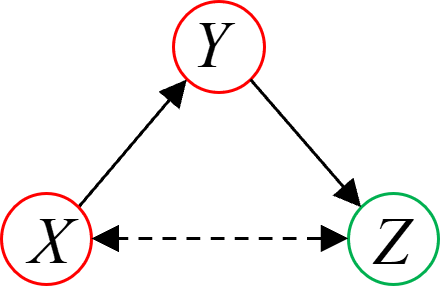}  \\
    
    $P_1=P(Z|X,Y)$ & $P_2=\sum_{X}P(Z|X,Y)P(X)$ \\
    
     &  \\
    \includegraphics[width=0.2\textwidth]{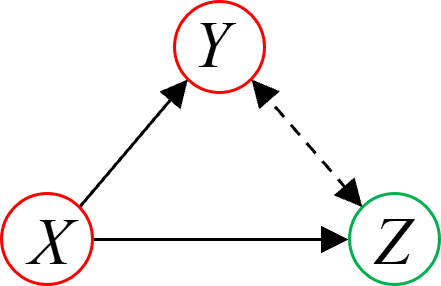} & \includegraphics[width=0.2\textwidth]{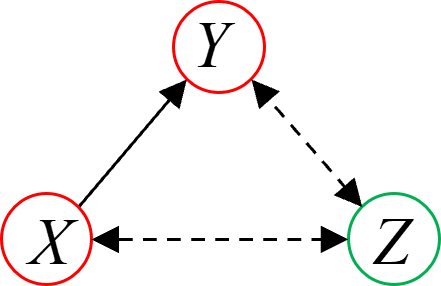}  \\
    $P_3=P(Z|X)$ & $P_4=P(Z)$ \\

\end{tabular}
\end{center}

\caption{Set of candidate graphs for a causal model $M^\star$. Variables $X, Y$ are intervened and variable $Z$ is observed. The intervention $P(Z|do(X,Y))$ can discriminate between the four candidates as follows: assuming $G_1$ is the {\em true} graph $G^\star$, we predict the effect on $Z$ as $P_1$. Assuming $G_2$ is the {\em true} graph $G^\star$, we predict the effect on $Z$ as $P_2$, and we do the same for graphs $G_3$ and $G_4$. We find that the predicted effect for this intervention differs across all candidate graphs. If we apply the intervention on $M^\star$ and find the actual effect on $Z$, we can eliminate the candidate graphs for which the prediction and the actual effect differ. We keep the candidate for which the prediction matches the actual effect of the intervention on $M^\star$, and conclude it is $G^\star$.}
\label{fig:ALCAM_example1}
\end{figure}

\begin{figure}[H]
\begin{center}
\begin{tabular}{cc}
    \includegraphics[width=0.2\textwidth]{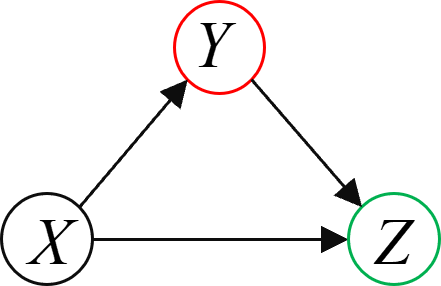} & \includegraphics[width=0.2\textwidth]{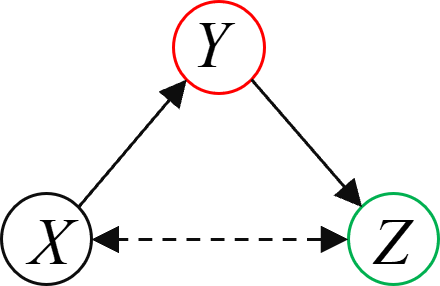}  \\
    
    $P_1=\sum_{X}P(Z|X,Y)P(X)$ & $P_2=\sum_{X}P(Z|X,Y)P(X)$ \\
    
     &  \\
    \includegraphics[width=0.2\textwidth]{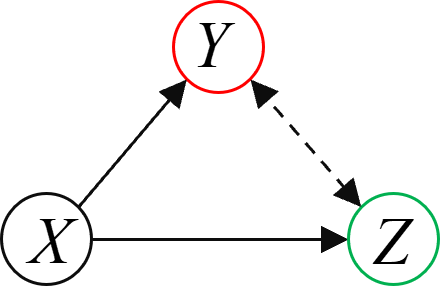} & \includegraphics[width=0.2\textwidth]{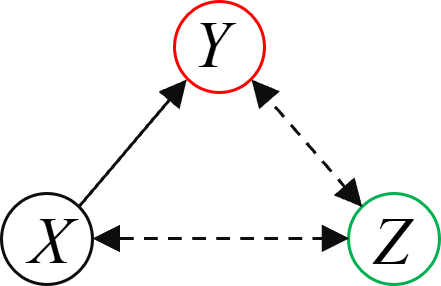}  \\
    $P_3=P(Z)$ & $P_4=P(Z)$ \\

\end{tabular}
\end{center}

\caption{The intervention $P(Z|do(Y))$ cannot discriminate between the four candidates. For instance, the predicted effect on $Z$ for $G_1$ and $G_2$ are the same. Also, the predicted effect is the same for $G_3$ and $G_4$. If we apply the intervention on $M^\star$ and find the actual effect on $Z$, we can eliminate the two candidate graphs for which the prediction and the actual effect differ. We keep, as potentially valid, the two candidate graphs for which the prediction and the actual effect on $M^\star$ are the same.}
\label{fig:ALCAM_example2}
\end{figure}

\subsection{Contributions to the Prediction of Causal Effects}

The second subject of this thesis is causal prediction. The thesis focus is on dynamic causal reasoning: given the formal description of a dynamic causal system and a set of assumptions, we propose methods to evaluate the modified trajectory of the system over
time, after an experiment or intervention. We assume that the
observation timescale is sufficiently small compared to the system dynamics, and that the causal model includes both the non-equilibrium causal relations and those under equilibrium states. We assume that a stable set of causal dependencies exist which generate the system evolution along time. Our proposed algorithms take such models as an input and predict their evolution over time, after an intervention. We also introduce transportability algorithms in the dynamic setting, where causal knowledge in source time-recurrent domains may be used for prediction in target time-recurrent domains.

In more detail, our contribution to the prediction of causal effects is as follows:
\begin{itemize}

\item We introduce a complete reasoning method (Lemmas and Theorems) for the identification of causal effects in causal models recurrent in time, which we call dynamic causal networks. We apply do-calculus to this setting, and show what parts of a bi-infinite causal graph, across time, need to be analysed in order to solve the causal effect identification of the entire graph.

\item We uncover the existence of several scenarios in regards to time sampling and slicing of the bi-infinite temporal graph, and in relation to how hidden confounders affect the dynamic causal network. More specifically, we show how static hidden confounders, affecting variables in the same time step, and dynamic hidden confounders, affecting variables in different time steps, have a very different impact on the complexity of our causal effect identification methods.

\item We introduce several algorithms for the identification of causal effects in dynamic causal networks, when static hidden confounders are present, and when dynamic hidden confounders are present. We identify the causal graph structures that prevent identifiability in the dynamic setting.
 
\item We introduce an algorithm for the transportability of causal effects in dynamic causal networks. That is, we extend the transportability algorithm to the use case where some interventional data is known in a source dynamic domain, and is used to help identify causal effects in a target dynamic domain with the same cyclic causal structure.

\end{itemize}

To provide a simple running example (not as a precise or accurate modelling of reality), let us consider two roads joining the same two cities, where drivers choose every day to use one or the other road. The average travel delay between the two cities on any given day depends on the traffic distribution among the two roads. Drivers choose between the two roads depending on recent experience, in particular how congested a road was last time they used it. Figure~\ref{fig:dynamic_no_confounder_extended} indicates these relations: the weather ($w$) has an effect on traffic conditions on a given day ($tr1$, $tr2$) which affects the travel delay on that same day ($d$). Driver experience has an influence on the road choice next day, therefore impacting $tr1$ and $tr2$. To simplify, we assume that drivers have short memory, being influenced by the conditions on the previous day only. This infinite network can be folded into a finite representation as shown in Figure~\ref{fig:dynamic_no_confounder_compact}, where $+1$ indicates an edge linking two consecutive replicas of the DAG. Additionally, if one assumes the weather to be an unobserved variable, then it becomes a {\em hidden confounder} as it causally affects two observed variables, as shown in Figure~\ref{fig:dcn_confounder_compact_intro}. We call the hidden confounders with causal effect over variables in the same time slice {\em static hidden confounders}, and hidden confounders with causal effect over variables at different time slices {\em dynamic hidden confounders}. Our models allow for causal identification with both types of hidden confounders. 

This setting enables the resolution of causal effect identification problems where causal relations are recurrent over time. These problems are not solvable in the context of classic DBNs, as causal interventions are not defined in such models. For this we use causal networks and do-calculus. However, time dependencies cannot be modelled with static causal networks. As we want to predict the trajectory of the system over time after an intervention, we must use a dynamic causal network. 

Using our example, in order to reduce travel delay traffic controllers could consider actions such as limiting the number of vehicles admitted to one of the two roads. We would like to predict the effect of such action on the travel delay a few days later, e.g., $\Pr(d_{t+\alpha}|do(tr1_t))$.

This thesis solves the causal identification problem (causal prediction) in such settings.

\begin{figure}[H]
\begin{center}
\includegraphics[width=0.60\textwidth]{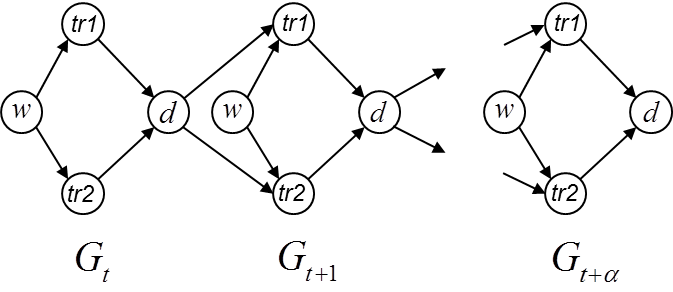}
\end{center}
\caption{A dynamic causal network. The weather $w$ has an effect on traffic flows $tr1$, $tr2$, which in turn have an impact on the average travel delay $d$. Based on the travel delay, car drivers may choose a different road next time, which has a causal effect on the traffic flows.}
\label{fig:dynamic_no_confounder_extended}
\end{figure}

\begin{figure}[H]
\begin{center}
\includegraphics[width=0.18\textwidth]{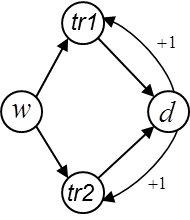}
\end{center}
\caption{Compact representation of the Dynamic Causal Network in Figure~\ref{fig:dynamic_no_confounder_extended} where $+1$ indicates an edge linking a variable in $G_t$ with a variable in $G_{t+1}$.}
\label{fig:dynamic_no_confounder_compact}
\end{figure}

\begin{figure}[H]
\begin{center}
\includegraphics[width=2cm]{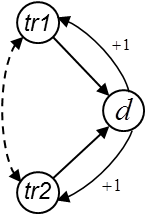}
\end{center}
\caption{Dynamic Causal Network where $tr1$ and $tr2$ have a common unobserved cause, a {\em hidden confounder}. Since both variables are in the same time slice, we call it a {\em static hidden confounder}.}
\label{fig:dcn_confounder_compact_intro}
\end{figure}










\thispagestyle{plain} 
\mbox{}


\chapter{Background and Previous Work} 

\label{ch:Chapter2} 


\section{Causal Models}

The notation used in this thesis is based on causal models and do-calculus \parencite{pearl1994probabilistic,Ref15}.

\begin{definition}[Causal Model]
\label{def:causalnetwork}
A causal model over a set of variables $V$ is a tuple $M=\langle V,U,F,Pa,P(U) \rangle$, where $V=\{ V_1,V_2,...V_n\}$ is a set of n variables that are determined by the model ("endogenous" or "observed" variables), U is a set of random variables that are determined outside the model ("exogenous" or "unobserved" variables) but that can influence the rest of the model, $F$ is a set of n functions such that $V_{i} = f_i(Pa(V_{i}),U_{i})$, $Pa(V_{i})$ is a subset of $V \setminus V_i$ ("observed parents of $V_{i}$"), $U_i$ is a subset of $U$ ("unobserved parents of $V_i$") and $P(U)$ is a joint probability distribution over the variables in $U$. A causal model has an associated graph in which each observed variable $V_i$ corresponds to a vertex, there is one edge pointing to $V_i$ from each of its observed parents ${Pa(V_{i})}$ and there is a doubly-pointed edge between the pairs of vertices influenced by a common unobserved parent in $U$.
\end{definition}

In other words, in a causal model the probability distribution of each variable $V_i$ is assigned by a function $f_i$ which is determined by a subset of $V \setminus V_i$ called the "observed parents" of $V_i$ ($Pa(V_{i}$)) and a subset of $U$ ($U_i$) called the "unobserved parents" of $V_i$. The joint probability distribution of the observed variables in a causal model $M$ is \parencite{TianThesis2002}:

\begin{equation}
\label{eqn:PofV}
P(V)=P(V_1, V_2, ...V_n) = \sum_{U} \prod_{i} P(V_i|Pa(V_i), U_i) \prod_{i} P(U_i)
\end{equation}

The graphical representation of a causal model is also called the "induced graph of the causal model" or "causal graph". It contains vertices  $V_i$, edges from ${Pa(V_{i})}$ to $V_i$ and bidirected edges between the pairs of vertices influenced by a common unobserved variable, that is between $V_i$ and $V_j$ if $U_i \cap U_j \neq \emptyset$ (see Figure~\ref{fig:graph_confounder}). We call the unobserved variables $U$ "hidden confounders".

In this thesis, in chapters~\ref{ch:Chapter3} and \ref{ch:Chapter4} all causal models are assumed to be acyclic. Causal models where there exist $Pa(V_i)$ relations such that the model contains cycles are studied in chapters~\ref{ch:Chapter5} and \ref{ch:Chapter6}, where we introduce a time dependent definition of causal models. 

\begin{figure}
\begin{center}
\includegraphics[width=0.22\textwidth]{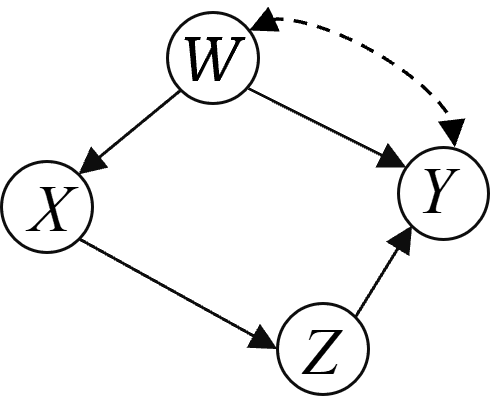}
\end{center}
\caption{Causal graph with vertices representing variables $X,Y,W,Z$, edges representing functions $X = f_1(W); Z = f_2(X); W = f_3(U); Y = f_3(W,Z,U)$. The hidden confounder that has an effect on $W$ and $Y$ is represented by the doubly-pointed edge.}
\label{fig:graph_confounder}
\end{figure}

Causal graphs encode causal relations between variables in a model. The primary purpose of causal graphs is to help estimate the joint probability of some of the variables in the model upon controlling some other variables by forcing them to specific values; this is called an action,  experiment, or intervention. For every model $M$, every set of variables $X \subset V$ and every set of values $X=x$ we define the model $M_{do(X=x)}$ to be the same as $M$ except that every function $f_i$ for variable $X_i \in X$ assigns a probability distribution of 1 to the value $x_i$ and 0 to the rest of values.

Graphically, this is represented by removing all the incoming edges (which represent the causes) of the variables in the graph that we control in the intervention. Mathematically the $do()$ operator represents this intervention on the variables, by transforming $M$ into $M_{do(X=x)}$. Given a causal graph where $X$ and $Y$ are sets of variables, the expression $P(Y|do(X=x))$ is the joint probability of $Y$ upon doing an intervention on the controlled set $X$, rigorously the application of Equation~\ref{eqn:PofV} on $M_{do(X=x)}$ instead of $M$.

A causal relation represented by the expression $P(Y|do(X=x))$ is said to be \textit{identifiable} if 
it can be uniquely determined from the graph $G$ induced by causal model $M$, and from the joint distribution $P$ of its observed variables. A formal definition of \textit{identifiability} is given in Definition~\ref{def:identifiability}.

In many real-world scenarios, it is impossible, impractical, unethical or too expensive to perform an intervention, thus the interest in evaluating its effects from observational data only, i.e., without actually performing the intervention "in the real world".

\subsection{Do-calculus}

The three rules of do-calculus \parencite{pearl1994probabilistic} allow us to transform expressions with $do()$ operators into other equivalent expressions, based on the causal relations present in the causal graph. 

For any disjoint sets of variables $X$, $Y$, $Z$ and $W$:

\begin{enumerate}
\item
$P(Y|Z,W,do(X))=P(Y|W,do(X))$ 
\\if $(Y\perp Z|X,W)_{G_{\overline{X}}}$
\item 
$P(Y|W,do(X),do(Z))=P(Y|Z,W,do(X))$ 
\\if $(Y\perp Z|X,W)_{G_{\overline{X}\underline{Z}}}$
\item 
$P(Y|W,do(X),do(Z))=P(Y|W,do(X))$ 
\\if $(Y\perp Z|X,W)_{G_{\overline{X}\overline{Z(W)}}}$
\end{enumerate}

Where $(X \perp Y|Z)$ means $X$ and $Y$ are independent conditional to $Z$. $Z(W) = Z \setminus An(W)_{G_{\overline{X}}}$. $An(W)_G$ is the set of ancestors of $W$ in $G$.
$G_{\overline{X}}$ is the graph $G$ where all edges incoming to $X$ are removed from the graph. $G_{\underline{Y}}$ is the graph $G$ where all edges outgoing from $Y$ are removed from the graph.

Do-calculus was proven to be sound and complete \parencite{shpitser2006identification,huang2006identifiability}, in the sense that an expression obtained by iterative application of the three rules of calculus is always correct, and if an expression cannot be converted into a do-free one by iterative application of the three do-calculus rules, then it is not identifiable.


\section{Causal Discovery Algorithms}

\subsection{Causal discovery without interventions}

The algorithms PC \parencite{Ref10}, FCI \parencite{Ref11} and \parencite{Ref10}, and RFCI \parencite{Ref12} can, under some assumptions, infer information about the causal structure from observational data. While PC assumes that all variables in the system under consideration are observed, FCI and RFCI consider structure learning in the presence of hidden variables. Other standard algorithms for causal discovery from observational data include IC, IC* \parencite{Ref14} and \parencite{Ref15}, GES \parencite{Ref13}.

\subsubsection{The PC Algorithm}
\label{sec:PC_alg}
Under the assumption of faithfulness, the PC algorithm estimates the set of DAGs compatible with the probability distribution of the observed variables. In general, several DAGs may be compatible with the conditional independence relations among the observed variables. The output of the PC algorithm is a CPDAG, which represent an equivalence class of DAGs, also called Markov equivalence class. 

A CPDAG may contain both directed and undirected edges. There is an edge (directed or undirected) in a CPDAG if the edge's endpoint variables are conditionally dependent given all possible subsets of the remaining variables. A directed edge in a CPDAG represents that all DAGs in the equivalence class contain the directed edge. An undirected edge in a CPDAG represents that some DAGs in the equivalence class contain the edge going in one direction, while the other DAGs in the class contain the edge going in the opposite direction.

The PC algorithm works as follows. The algorithm starts with a complete undirected graph where all variables have an undirected edge to all other variables in the graph. Then based on conditional independence tests (using the observed distribution) some edges are removed, when the variables connected initially by an edge are found to be in fact independent given subsets of the other variables. Then the edges are oriented based on several criteria. First, each set of three variables $i-j-k$ is oriented as a collider $i->j<-k$ if $i$ and $k$ are found to be dependent given $j$. Then some of the remaining undirected edges may be oriented given the rules that no new collider and no directed cycles should be introduced in the graph.
\subsubsection{The FCI Algorithm}
\label{sec:FCI_alg}
The FCI algorithm, as the PC algorithm, assumes the Faithfulness condition, however it is a generalization of the PC algorithm as it considers hidden confounders may be present in the causal model. The output of the FCI algorithm is a PAG. The FCI algorithm finds an equivalence class of graphs which may include hidden confounders. 

A PAG is an equivalence class of Maximal Ancestral Graphs (MAG). A PAG may contain several types of edges: o-o, o-, o->, ->, <->, -. A tail or an arrowhead on an edge of a PAG means all MAGs in the equivalence class contain the tail or arrowhead. An edge ending with o- means some MAGs in the equivalence class contain a tail and the remaining MAGs contain an arrowhead. Bidirected edges represent hidden confounders.

The FCI algorithm works in a similar way than the PC algorithm. It starts with a completed undirected graph, performs conditional independence tests in order to remove edges, and then orients the remaining edges using some rules. However, the conditional independence tests used in FCI are more conservative than for the PC algorithm due to the potential presence of hidden confounders. Indeed, the hidden confounders may not be conditioned upon, and more precise rules for determining independence are required.

However, the discovery of causal relations from observational data only has its limits. In most cases we will obtain, at best, a set of causal models compatible with the observed data, in other words, an equivalence class of DAGs. Examples of this are Completed Partially Directed Acyclic Graphs (CPDAG), which are an output of the PC algorithm, and Partial Ancestral Graphs (PAG), output of the FCI algorithm.


\subsection{Causal discovery with interventions, in the absence of hidden confounders}

The largest part of the literature on causal discovery with interventions assumes the absence of hidden confounders. This is also described by saying that ``observable variables are causally sufficient'': there may be unobserved variables, but none of them causally affects more than one observable variables.

For the active learning process, \parencite{Ref6}, \parencite{Ref5} and \parencite{Ref7} select the next intervention based on maximally reducing some entropy function amongst causal graphs without hidden confounders, and using interventions on single variables only, which limits the graphs that can be learned.
 
The method exposed in \parencite{Ref1} discusses the number of interventions required when there is no prior knowledge of the causal structure, in causal graphs without hidden confounders. For this, it is required to perform adjacency and directional tests on all pairs of variables naively. This is an interesting setup to identify worst case scenarios, starting with a complete graph. However, in an active learning setup, there will be prior knowledge right after the first intervention, since a number of candidate graphs (even if maximal at the start) are discarded at each iteration. So, the worst-case scenarios in \parencite{Ref1} are interesting theoretical bounds but with limited application in real case scenarios, other than setting the bound for worst case. Also, the fact that the setup is assuming the absence of hidden confounders makes it even more theoretical, whereas real world data and causal models include hidden confounders. 

The algorithm proposed in \parencite{Meganck} uses several decision criteria (maximax, maximin, Laplace) to identify the next best intervention for uncovering causal structure. In essence the method evaluates how many edges will be identified by an intervention, in models without hidden confounders and using single variable interventions. This method offers several limitations, as it does not evaluate the effect of interventions, it only evaluates if there is an effect or not. 

\parencite{Eberhardt_Thesis} provides a comprehensive analysis that has influenced much later research, including our own,
and remarks interesting differences among the cases with and without hidden confounders. 
In particular, it shows that, for graphs with $N$ vertices:

\begin{itemize}
\item $\log_2(N) + 1$ interventions suffice for causal identification in the absence of hidden confounders, both for adaptive and non-adaptive strategies.
\item $N$ interventions are necessary and sufficient for causal identification of the observable graph in the presence of hidden confounders, 
both for adaptive and non-adaptive strategies.
\item In particular, for the ``sufficient'' part, $N$ interventions on $N-1$ observable variables suffice. We note that \parencite{Eberhardt_Thesis} does not
explicitly identify in addition the hidden confounders, but this can be done with existing techniques. 
\end{itemize}

The research following \parencite{Eberhardt_Thesis} usually address specific sub-cases, in order to estimate computational costs of selecting the right interventions, or sub-cases with some restrictions on the types of causal models being considered.

Similar to \parencite{Ref1}, the method proposed in \parencite{Eberhardt2008} analyses the theoretical worst-case scenarios but now starting with a Markov equivalence class of causal graphs, instead of starting with no prior knowledge. This is a useful theoretical setup for identifying worst case bounds in scenarios without hidden confounders. However, in real case scenarios, we will have to account for the presence of hidden confounders. The sub-case without hidden confounders discussed in \parencite{Eberhardt2008}, considers the number of interventions as a function of the cliques (a type of graphical structure) present in the graph.

\parencite{He} use minimax and maximum entropy criteria as scoring method for selecting interventions, either sequentially or in a batch (simultaneous), in causal models without hidden confounders. \parencite{Hyttinen} expand and generalize previous combinatorial results given in \parencite{Ref1} and \parencite{Eberhardt2008} for worst case bounds, with no prior knowledge in causal graphs without hidden confounders, and highlight the assumptions and conditions for complete identification of the causal structure.

\parencite{Hauser2014} propose two greedy approaches, one using single interventions, and another with multiple simultaneous interventions. Both are combinatorial approaches, the first aims at maximizing the number of oriented edges after each intervention and the second aims at minimizing the clique number of the graph with simultaneous intervention on several variables. In both cases these methods do not actually predict and measure effects of interventions, and only use a combinatorial approach, in the context of causal graphs without hidden confounders.

\parencite{DBLP:journals/corr/ShanmugamKDV15} and \parencite{https://doi.org/10.48550/arxiv.2011.00641} analyse the bounds in the number of interventions required to learn a causal graph. However, this is without considering hidden confounders, and only considering the number of interventions and not the overall cost of the sequence. 

In general, interventions on more variables can be considered more costly. The number and the size of the interventions should be somehow taken into account in the analysis, rather than only the number of interventions. Our method removes as many candidates with inexpensive interventions as possible. We consider that the general setting of minimizing the cost for discovering the graph is a more realistic approach than just minimizing the number of interventions.

Minimizing the overall cost of the interventions to learn a causal model, instead of only minimizing the number of interventions, has attracted attention recently in the context of causal graphs without hidden confounders \parencite{DBLP:journals/corr/abs-1709-03625},
\parencite{agrawal19}.


\subsection{Causal discovery with interventions, in the presence of hidden confounders}

For causal models which may include hidden confounders, \parencite{Ref50}, \parencite{Ref51}, \parencite{Ref52}, \parencite{Ref53} introduce methods for the discovery of causal models using interventional data, however the approach is limited to linear models only.

\parencite{Kocaoglu2017} propose an algorithm to learn the ancestral relations and the observable graph using strongly separating sets of nodes, which leads to interventions on a large number of variables. The cost of intervening large numbers of variables is not being considered and not being minimized, as only the number of interventions is being evaluated by the method. The main merit of the algorithm is that it uses $O(d \log^2 N))$ interventions to find the observable graph, therefore circumventing (when $d$ is small) the lower bound of $N$ proved in \parencite{Eberhardt_Thesis}. It also identifies the hidden cofounders using $O(d^2 \log_2 N)$ interventions, instead of the more baseline $O(N^2)$.

\parencite{Acharya} propose methods for learning causal graphs with hidden confounders, and prove bounds
on number of samples and interventions required. However, the proposed framework only considers the number of interventions, and does not evaluate the cost of intervening or observing more or less variables, and hence does not minimize the overall cost of the sequence of interventions. \parencite{DBLP:journals/corr/abs-2012-13976} propose algorithms for the discovery of causal graphs, in the presence of hidden confounders, via a minimum cost set of interventions in specific settings: when an undirected graph is already provided as input and when a superset of causal relations is provided. However other costs are disregarded, like the cost of observations, and the cost of conditional independence testing, which requires sampling several values of the intervened variables. Also, in these settings the presence and location of the hidden confounders are not learned, they are only assumed.

\parencite{DBLP:journals/corr/abs-2005-11736} propose two specific settings with hidden confounders. One setting where the cost of interventions is linear with the number of intervened variables, in order to discover the ancestral relations of the underlying graph (not the entire graph) with minimal cost. Another setting is to discover the entire causal graph including hidden confounders, but only considering the number of interventions and not the overall cost as a function of the size of the interventions.


\section{Causal Identification Algorithms}

Several algorithms have been developed using do-calculus offers for the identification of causal effects. In this thesis we will mainly refer to the ID algorithm and the Transportability algorithm.

\subsection{The ID Algorithm}
\label{sec:ID_alg}

The ID algorithm \parencite{shpitser2006identification}, and earlier versions by \parencite{tianpearl2002,tian2004identifying} implement an  iterative application of do-calculus rules to transform a causal expression $P(Y|do(X))$ into an equivalent expression without any $do()$ terms in semi-Markovian causal graphs. This enables the identification of interventional distributions from non-interventional data in such graphs.

The ID algorithm is sound and complete \parencite{shpitser2006identification} in the sense that if a do-free equivalent expression exists it will be found by the algorithm, and if it does not exist the algorithm will exit and provide an error.

The algorithm specifications are as follows. Inputs: a causal graph $G$, variable sets $X$ and $Y$, and a probability distribution $P$ over the observed variables in $G$; Output: an expression for $P(Y|do(X))$ without any $do()$ terms, or {\em fail}.

The ID algorithm is based on the general method for identification of causal effects by {\em C-component} factorization \parencite{tianphd} and \parencite{tian2002general}.

\begin{definition}[C-component]
A set of nodes S is a C-component in a graph G if any two nodes in S are connected by a path consisting entirely of bidirected
edges in G.
\end{definition}

Tian proved that a graph $G$ can be partitioned into a set {\em C-components}, and the joint distribution $P(V)$ in $G$ can be expressed as a product of interventional distributions factors, where each factor corresponds to a {\em C-component}. If all factors from each component are identifiable then $P(V)$ is identifiable. This is known as the {\em C-component} factorization of causal models.

\begin{lemma}[C-component factorization]
\label{lem:CcomponentFactorization}
Let $M$ be a causal model with graph G. Let $X,Y$ be disjoint variables in $G$. Let $C(G\setminus X) = \{S_1, ... S_k\}$. Then:

\begin{align*}
P(Y|do(X))=\sum_{V\setminus (Y\cup X)}\prod_{i} P(S_i|do(V\setminus S_i)) 
\end{align*}

\end{lemma}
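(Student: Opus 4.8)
The plan is to reduce the interventional distribution $P(Y|do(X))$ to the observational distribution of the mutilated submodel $M_X$ in which $X$ is held fixed, and then apply Tian's basic C-component factorization inside that submodel. Throughout I write $Q[S]=P(S|do(V\setminus S))$ for the interventional factor attached to a set $S\subseteq V$, so that the right-hand side of the claim reads $\sum_{V\setminus(Y\cup X)}\prod_i Q[S_i]$.

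First I would marginalize. Since $Y\subseteq V\setminus X$, the target is obtained from the full post-intervention law over the remaining observables by summing out everything outside $Y$:
\begin{align*}
P(Y|do(X))=\sum_{V\setminus(Y\cup X)}P(V\setminus X|do(X)).
\end{align*}
This is immediate from applying the $do$ operator to Equation~\ref{eqn:PofV}, and it isolates the real work: writing $P(V\setminus X|do(X))$ as the product $\prod_i Q[S_i]$.

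Next I would note that $P(V\setminus X|do(X))$ is exactly the observational joint law of the submodel $M_X$ over its observable variables $V\setminus X$. For the dependencies among these variables the induced graph of $M_X$ is $G\setminus X$: fixing $X$ severs it from its own confounders, so every bidirected edge incident to $X$ becomes irrelevant to the joint law of the rest, and the surviving bidirected structure is precisely that of $G\setminus X$. Hence the C-components of $M_X$ are the sets $S_i\in C(G\setminus X)$ of the statement, and Tian's factorization applied inside $M_X$ gives
\begin{align*}
P(V\setminus X|do(X))=\prod_i Q_{M_X}[S_i],
\end{align*}
where $Q_{M_X}[S_i]=P_{M_X}(S_i|do(V\setminus X\setminus S_i))$ is computed inside the submodel.

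The final and most delicate step is to identify each submodel factor with the corresponding original factor, $Q_{M_X}[S_i]=P(S_i|do(V\setminus S_i))=Q[S_i]$. Because $S_i$ is disjoint from $X$, intervening on $V\setminus X\setminus S_i$ in $M_X$ (which has already fixed $X$) is the same experiment as fixing $(V\setminus X\setminus S_i)\cup X=V\setminus S_i$ in $M$; interventions on disjoint sets compose, and the order of fixing does not matter. This invariance of a C-factor under an additional intervention outside its component is the substantive content of the proof, and I expect it to be the main obstacle, since it rests on the semantics of the $do$ operator rather than on any algebraic manipulation. Substituting back gives $P(Y|do(X))=\sum_{V\setminus(Y\cup X)}\prod_i P(S_i|do(V\setminus S_i))$, as claimed.
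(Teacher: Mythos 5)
You should know at the outset that the paper contains no proof of this lemma: it appears in the background chapter on the ID algorithm as a quoted result, attributed to Tian's C-component factorization (\parencite{tianphd}, \parencite{tian2002general}) and to its use in \parencite{shpitser2006identification}, so there is no in-paper argument to compare yours against. Judged on its own merits, your proposal is a correct reconstruction of the standard argument from that literature. The marginalization step is immediate; the identification of $P(V\setminus X|do(X))$ with the observational law of the mutilated submodel $M_{do(X)}$ is exactly right, and so is your observation that fixing $X$ severs its incoming directed and bidirected edges, so that the C-components of the submodel's graph among the remaining variables are precisely the sets $S_i \in C(G\setminus X)$; finally, the composition property $(M_{do(X)})_{do(W)} = M_{do(X\cup W)}$ for $W$ disjoint from $X$ follows directly from the definition of a submodel (replacing functions by constants commutes), which gives $Q_{M_X}[S_i] = P(S_i|do(V\setminus S_i))$ as you claim. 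The one point worth flagging is that you invoke Tian's observational factorization $P(V\setminus X|do(X)) = \prod_i Q_{M_X}[S_i]$ inside $M_{do(X)}$ as a black box; this is legitimate here, since that is exactly the result the paper itself cites, but a fully self-contained proof would also derive that base case from Equation~\ref{eqn:PofV} by partitioning the hidden variables $U$ according to the C-components they connect, so that the sum over $U$ splits into independent sums, one per component.
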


However, in some cases, one or more of the factors, corresponding to {\em C-components} in the graph, are not identifiable. This happens when a {\em C-component} contains a graphical structure called {\em hedge} \parencite{shpitser2006identification}. Before defining a {\em hedge}, Shiptser provides the definition of {\em C-forest}.

\begin{definition}[C-forest]
\label{def:Cforest}
A graph G where the set of all its nodes is a C-component,
and where each node has at most one child is called a C-forest.
\end{definition}

A {\em C-forest} with a set $R$ of nodes with no children is called $R$-rooted. This leads to the following definition for {\em hedge}.

\begin{definition}[Hedge]
\label{def:hedge}
Let $X,Y$ be sets of variables in $G$. Let $F, F$' be R-rooted C-forests in $G$ such that $F$' is a subgraph of $F$, $X$ only occur in $F$, and $R \in
An(Y)_{G_{\overline{X}}}$ , where $An(X)_G$ denotes the set of ancestors of $X$ in $G$. Then $F$ and $F$' form a hedge for $P(Y|do(X))$.
\end{definition}

The {\em hedge} criterion \parencite{shpitser2006identification} states that $P(Y|do(X))$ is identifiable
in $G$ if and only if there are no two {\em C-forests}
$F, F$' that form a {\em hedge} for $P(Y|do(X))$ in $G$.

Another algorithm for the identification of causal effects is given in \parencite{shpitser2012efficient}. 

\subsection{The Transportability Algorithm}
\label{sec:Trang_alg}

\parencite{pearl2011transportability} introduced the sID algorithm, based on do-calculus, to identify a transport formula between two domains, where the effect in a target domain can be estimated from experimental results in a source domain and some observations on the target domain, thus avoiding the need to perform an experiment on the target domain.

Let us consider a country with a number of alternative roads linking city pairs in different provinces. Suppose that the alternative roads are all consistent with the same causal model but have different traffic patterns (proportion of cars/trucks, toll prices, traffic light durations...).
Traffic authorities in one of the provinces may have experimented with policies and observed the impact on, say, traffic delay. This information may be usable to predict the average travel delay in another province for a given traffic policy, provided that the source domain (province where the impact of traffic policy has already been monitored) and target domain (new province) share the same causal relations among variables. 

The target domain may have specific distributions of the toll price and traffic signs, which are accounted for in the model by adding a set of selection variables, pointing at variables whose distribution differs among the two domains. Under some assumptions the transportability algorithm provides a transport formula which combines experimental probabilities from the source domain and observed distributions from the target domain. Thus, the traffic authorities in the new province can evaluate the impacts before effectively changing traffic policies. This amounts to relational knowledge transfer learning between the two domains \parencite{pan2010survey}.


\section{Dynamic Bayesian Networks}

Dynamic Bayesian Networks (DBN) are graphical models that generalize Bayesian Networks (BN) in order to model time-evolving phenomena. 
We rephrase them as follows. 

\begin{definition}
A DBN is a directed graph $D$ over a set of nodes that represent time-evolving metavariables. Some of the arcs in the graph have no label, and others are labelled ``$+1$''.
It is required that the sub-graph $G$ formed by the nodes and the unlabelled edges must be acyclic, therefore forming a DAG.

Unlabelled arcs denote dependence relations between metavariables within the same time step, and arcs labelled ``$+1$'' denote dependence between a variable at one time and another variable at the next time step.
\end{definition}

\begin{definition}
A DBN with graph  $G$ represents an infinite Bayesian Network $\hat G$ as follows. Timestamps $t$ are the integer numbers; $\hat G$ will thus be a biinfinite graph. For each metavariable $X$ in $G$ and each time step $t$ there is a variable 
$X_t$ in $\hat G$. The set of variables indexed by the same $t$ is denoted $G_t$ and called ``the slice at time $t$''. There is an edge from $X_t$ to $Y_t$ iff there is an unlabelled edge from $X$ to $Y$ in $G$, and there is an edge from $X_t$ to $Y_{t+1}$ iff 
there is an edge labelled ``$+1$'' from $X$ to $Y$ in $G$. Note that $\hat G$ is acyclic.
\end{definition}

The set of metavariables in $G$ is denoted $V(G)$, or simply $V$ when $G$ is clear from the context.
Similarly, $V_t(G)$ or $V_t$ denote the variables in the $t$-th slice of $G$. 

In this thesis, we will use transition matrices to model the time evolution of probability distributions. Rows and columns are indexed by tuples assigning values to each variable, and the $(v,w)$ entry of the matrix represents the probability $P(V_{t+1} = w|V_t = v)$.
Let $T_t$ denote this transition matrix. Then we have, in matrix notation, $P(V_{t+1})=T_t\,P(V_t)$  and, more in general, $P(V_{t+\alpha}) = (\prod_{i=t}^{t+\alpha-1}T_i) \, P(V_t)$. In the case of time-invariant distributions, all $T_t$ matrices are the same matrix $T$, so  $P(V_{t+\alpha}) = T^{\alpha} P(V_t)$. 

Note that transition matrices model how probability distributions evolve from one time step to the next. They do not model dependencies between variables within a time step. For modelling dependencies within a time step, or without considering time at all, we may use BNs and, in the context of causality, existing non-dynamic causal model theory.


\thispagestyle{plain} 
\mbox{}


\chapter{Distinguishability of Causal Graphs} 

\label{ch:Chapter3} 


This chapter sets the theoretical foundations for the efficient causal discovery algorithm ALCAM that will be presented in Chapter 4. 

The main difference of ALCAM with respect to most existing algorithms is that it does not take a tabula-rasa approach to discovering the causal structure of the phenomenon. Often, we have previous knowledge from our familiarity with reality: we may know that some variable causally influences another; we may know that some pairs of variables are likely to be affected by common unobserved confounders, etc.

One way of providing this a-priori knowledge to an algorithm is with a set ${\cal G}$ of {\em candidate graphs}, 
with the promise that the graph induced by the
true model is included in this candidate set. In other words, the
algorithm can safely assume that all graphs {\em not} in ${\cal G}$
are definitely not the graph induced by the true model. If the algorithm is delicate enough, it can focus on performing only the interventions needed to eliminate all graphs in ${\cal G}$ other than true graph,
rather than eliminating all potential graphs on $n$ variables,
of which there are $2^{O(n^2)}$.

Thus, if $|{\cal G}|$ (the number of candidate graphs in ${\cal G}$) is much smaller than $2^{O(n^2)}$, our algorithm performs fewer, or simpler, interventions than a tabula-rasa algorithm.

In this chapter we provide lemmas that identify the specific interventions needed to progressively rule out graphs in $\cal G$.
In Chapter 4 we present ALCAM and, using these lemmas, we prove its correctness as well as non-trivial bounds on the
number of interventions it performs. It turns out that this number of interventions can always be bounded by $|{\cal G}|$; 
not only that, we provide finer bounds that depend on how similar or different the candidate graphs are to the
graph induced by the true model, hence very often fewer than  $|{\cal G}|$.

\section{Notation and Basic Lemmas} 

In this thesis all causal models are assumed to contain hidden confounders, except when explicitly referring to causal models without hidden confounders as a sub-case. Also, all variables are assumed to be in a finite domain. We will use the following notation: 

\begin{itemize}

\item $V$: a set of observed variables;

\item $U$: a set of hidden confounder variables;

\item $V_i$, $V_j$: single variables in $V$;

\item $X$, $Y$: disjoint sets of variables in $V$;

\item $v_i$, $v_j$, $x$, $y$: a value assignment for $V_i$, $V_j$, $X$, $Y$ respectively;

\item $M^\star$: the reference or ``true'' causal model over the set $V$ of observed variables and the set $U$ of hidden confounder variables. Unless otherwise indicated, all statements in this thesis are implicitly universally quantified over this true model, meaning that they hold for any $M^\star$.

\item $G^\star$: graph induced by $M^\star$;

\item $P^\star$: probability distribution of the observed variables in $M^\star$ without interventions;

\item $P^\star(Y|do(X=x))$: probability distribution of $Y$, given the intervention $X=x$ in $M^\star$;

\item $\cal G$: a set of causal graphs over the set $V$ of observed variables and the set $U$ of hidden confounder variables, which we call the {\em candidates};

\item $G_k$, $G_l$: two causal graphs in $\cal G$;

\item $E=(X,x,Y)$: intervention or experiment where variable $X$ is set to a value $x$, and the causal effect is measured on variable $Y$;


\end{itemize}

In this thesis we use do-calculus \parencite{pearl1994probabilistic}, its completeness and soundness \parencite{huang2006complete}, as well as the notion of causal effect identifiability and the ID algorithm \parencite{shpitser2006identification} as tools for the identification of causal effects.


\begin{definition}[Causal Effect]
\label{def:causal_effect}
The causal effect of an intervention $E=(X,x,Y)$ in a causal model $M$, mathematically $P(Y|do(X=x))$, is the probability distribution of $Y$ given the intervention $X=x$ in $M$.
\end{definition}

\begin{definition}[Causal Effect Identifiability]
\label{def:identifiability}
Let $M$ be a causal model with causal graph $G$ and observational joint probability distribution $P$.
The causal effect of an intervention $E=(X,x,Y)$ in $M$ is said to be identifiable if $P(Y|do(X=x))$ is uniquely determined from $G$ and $P$, that is, if it is the same probability distribution for every model $M$ that induces the same $G$ and $P$.
\end{definition}


\begin{theorem}[Do-calculus is sound and complete]
\label{thm:DoCalculusComplete}
Let $M$ be a causal model with causal graph $G$ and observational joint probability distribution $P$, then
\begin{itemize}
    \item \textbf{\em Completeness:\ } If the causal effect of an intervention $E=(X,x,Y)$ is identifiable then there is a sequence of application of do-calculus rules that finds the probability distribution $P(Y|do(X=x))$ from $G$ and $P$;
    
    \item \textbf{\em Uniqueness:\ } If the causal effect of an intervention $E=(X,x,Y)$ is identifiable then any sequence of application of do-calculus rules finds the same probability distribution of $P(Y|do(X=x))$ from $G$ and $P$;
    
    \item \textbf{\em Soundness:\ } If there exists a sequence of application of do-calculus rules that finds a probability distribution for $P(Y|do(X=x))$ from $G$ and $P$, then the causal effect is identifiable;
\end{itemize}
\end{theorem}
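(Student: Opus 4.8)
The plan is to treat the three claims separately, since they rest on different ingredients. \emph{Soundness} is the foundational piece and is essentially a rule-by-rule verification; \emph{uniqueness} then follows almost immediately from soundness together with the definition of identifiability; and \emph{completeness} is the substantive direction, where I would build on the $C$-component factorization (Lemma~\ref{lem:CcomponentFactorization}), the ID algorithm (Section~\ref{sec:ID_alg}) and the hedge criterion (Definition~\ref{def:hedge}). Throughout I would rely on the established results of \parencite{shpitser2006identification} and \parencite{huang2006complete}, so that the thesis-level argument is mainly one of assembling these pieces into the three-part statement above.

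For soundness, I would first check that each of the three do-calculus rules is an exact probabilistic identity whenever its $d$-separation side-condition holds in the appropriate mutilated graph ($G_{\overline{X}}$, $G_{\overline{X}\underline{Z}}$, or $G_{\overline{X}\,\overline{Z(W)}}$). This is the content of Pearl's original derivation and reduces to the Markov property of the model read off the mutilated graph. Since each single rule preserves the value of the expression in every model inducing $G$ and $P$, any finite composition of rules does too; hence if a do-free expression is derived, that expression equals $P(Y|do(X=x))$ in every such model, so the effect is identifiable in the sense of Definition~\ref{def:identifiability}, which is exactly the soundness claim. \emph{Uniqueness} is then immediate: by Definition~\ref{def:identifiability}, identifiability means $P(Y|do(X=x))$ is a single fixed function of $G$ and $P$; soundness says every derivation computes precisely this quantity; therefore any two derivations must yield the same distribution.

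The real work is \emph{completeness}: identifiability must imply the existence of a do-calculus derivation. Here I would argue through the ID algorithm. The strategy is to show that when the effect is identifiable the algorithm terminates successfully, and that each elementary step it performs ($C$-component factorization, marginalization, and conditioning) can be simulated by an application of a do-calculus rule or an ordinary probability manipulation. Running ID thus produces an explicit do-free expression together with a corresponding rule sequence, establishing the derivable side. For the converse obstruction I would invoke the hedge criterion: ID fails precisely when $G$ contains a hedge for $P(Y|do(X=x))$, and the presence of a hedge rules out identifiability. Combining the two gives the chain identifiable $\iff$ no hedge $\iff$ ID succeeds $\iff$ a do-calculus derivation exists.

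The main obstacle is the hedge direction of completeness, namely proving that a hedge genuinely blocks identifiability rather than merely defeating one particular algorithm. This requires constructing, on the two nested $R$-rooted $C$-forests $F' \subseteq F$ of the hedge, two explicit structural causal models that induce the same graph $G$ and the same observational distribution $P$, yet assign different values to $P(Y|do(X=x))$. That construction (binary variables with parity-style structural functions along the forest, due to \parencite{shpitser2006identification}) is the delicate quantitative part; once it is in place, non-identifiability follows directly from Definition~\ref{def:identifiability}, and the completeness equivalence closes.
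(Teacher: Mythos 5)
Your proposal is correct, but note that the thesis itself offers no proof of Theorem~\ref{thm:DoCalculusComplete}: the statement is presented as an imported background result, credited to \parencite{shpitser2006identification} and \parencite{huang2006identifiability}, and is used thereafter only as a tool. Your blind reconstruction faithfully follows the structure of exactly those published proofs --- rule-by-rule soundness in the mutilated graphs, uniqueness as an immediate corollary of soundness plus Definition~\ref{def:identifiability}, and completeness via the ID algorithm of Section~\ref{sec:ID_alg} together with the hedge criterion (Definition~\ref{def:hedge}) and the two-model counterexample construction on nested $C$-forests --- so it matches the approach the paper implicitly relies on, and no gap needs to be flagged.
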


\textbf{Remark}: P(Y|do(X=x)) is an expression, and therefore a syntactic object. The do-calculus rules perform symbolic (i.e., syntactic) transformations and, for all we know, may yield different do-free expressions that represent the same probability distributions. In this thesis we deal with finite domains and finitely many variables, so it is always possible to determine in finite time by brute-force evaluation if two do-free expressions actually represent the same probability distributions. Therefore, we can without loss of generality assume that the application of do-calculus yields a probability distribution whenever it provides a do-free expression. 
We sometimes write "a (do-free) probability distribution" to emphasize that we mean an actual distribution and not a do-free expression.

In the following Definition~\ref{def:identifiability_graph} we relax the assumptions from Definition~\ref{def:identifiability}, as we do not assume the graph $G$ to be induced by the model.


\begin{definition}[Causal Effect from an arbitrary Graph]
\label{def:identifiability_graph}
Given a causal model $M$ with observational joint probability distribution $P$, and given an arbitrary graph $G_k$ (not necessarily induced from $M$), we call $P_k(Y|do(X=x))$ the causal effect of an intervention $E=(X,x,Y)$ from $G_k$ and $P$, and define it as follows:
\begin{enumerate}
    \item let $P^1_k, .., P^Q_k$ be the set of all (do-free) probability distributions that can be obtained by starting from $P$ and $G_k$ and repeatedly applying do-calculus rules, in any order. Then, 
    $P_k(Y|do(X=x)) = \{P_k^1,... ,P_k^Q\}$;
    \item if there is no (do-free) probability distribution, then $P_k(Y|do(X=x)) = \emptyset$;
\end{enumerate}
\end{definition}

Clearly, if the graph $G_k$ is induced by the model $M$ in the definition above, then by Theorem~\ref{thm:DoCalculusComplete} (uniqueness) the causal effect consists of either a singleton joint probability distribution or the empty set, depending on whether $P_k(Y|do(X=x))$ is identifiable or not.




\begin{lemma}
\label{lem:multi_prob}
Let $M$ be a causal model with observational joint probability distribution $P$. If the causal effect from an arbitrary graph $G_k$ consists of more than one probability distribution, i.e., $P_k(Y|do(X=x)) = \{P_k^1,... ,P_k^Q\}$ with $Q>1$ then $G_k$ is not induced from $M$.
\end{lemma}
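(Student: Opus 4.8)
The plan is to prove the contrapositive: assuming that $G_k$ \emph{is} induced by $M$, I would show that the set $P_k(Y|do(X=x))$ has at most one element, which directly contradicts the hypothesis $Q > 1$. This is essentially the observation already recorded immediately after Definition~\ref{def:identifiability_graph}, so the work amounts to spelling out that observation rigorously using the three clauses of Theorem~\ref{thm:DoCalculusComplete}.

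First I would split into two cases according to whether the causal effect $P(Y|do(X=x))$ is identifiable in $G_k$. In the identifiable case, I would invoke the \emph{uniqueness} clause of Theorem~\ref{thm:DoCalculusComplete}: any sequence of do-calculus rule applications starting from $P$ and $G_k$ yields the same probability distribution. Hence all of $P_k^1, \dots, P_k^Q$ coincide, and after identifying equal do-free distributions (the finite-domain brute-force check guaranteed by the Remark), the set collapses to a single element, so $Q = 1$.

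In the non-identifiable case, I would use the \emph{soundness} clause in contrapositive form: if some sequence of do-calculus rules produced a do-free probability distribution for $P(Y|do(X=x))$, then the effect would be identifiable. Since by assumption it is not, no such sequence exists, and therefore, by part (2) of Definition~\ref{def:identifiability_graph}, $P_k(Y|do(X=x)) = \emptyset$, giving $Q = 0$. Combining both cases, whenever $G_k$ is induced by $M$ we have $Q \le 1$, which is exactly the contrapositive of the statement to be proved.

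I do not anticipate a substantive obstacle, since the lemma is a direct corollary of the uniqueness and soundness properties already established in Theorem~\ref{thm:DoCalculusComplete}. The only point requiring mild care is the distinction, emphasized in the Remark, between do-free \emph{expressions} (syntactic objects, of which do-calculus may generate many) and the \emph{distributions} they denote; the claim concerns the latter, and the finite-domain assumption is precisely what lets me merge syntactically distinct expressions that evaluate to the same distribution into one element of the set, so that "more than one probability distribution" genuinely forces non-uniqueness.
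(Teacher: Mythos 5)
Your proof is correct and takes essentially the same approach as the paper's: the contrapositive, arguing that if $G_k$ were induced by $M$ then the uniqueness clause of Theorem~\ref{thm:DoCalculusComplete} forces $P_k(Y|do(X=x))$ to have at most one element. Your write-up is in fact slightly more thorough than the paper's one-line argument, since you also explicitly handle the non-identifiable case via the soundness clause (yielding the empty set), a step the paper leaves implicit.
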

\begin{proof}
If the causal graph $G$ is induced by $M$ then, by Theorem~\ref{thm:DoCalculusComplete}, if the causal effect is identifiable then any sequence of application of do-calculus rules finds the same probability distribution.
\qed\end{proof}

\begin{definition}[Distinguishability of Causal Graphs from an Intervention]
\label{def:distinguishability_int}
Let $M$ be a causal model with observational joint probability distribution $P$. Let $G_k$ and $G_l$ be two causal graphs. We say that $G_k$ and $G_l$ are distinguishable from an intervention $E=(X,x,Y)$ under $P$, and note this with $G_k \not\approx_{P_E} G_l$
iff:
\begin{itemize}
\item $P_k(Y|do(X=x)) \neq \emptyset$, $P_l(Y|do(X=x)) \neq \emptyset$ and $P_k(Y|do(X=x)) \neq P_l(Y|do(X=x))$; 

or
\item $P_k(Y|do(X=x)) = \emptyset, P_l(Y|do(X=x))=\{P(Y)\}$; 

or
\item $P_k(Y|do(X=x)) =\{P(Y)\}, P_l(Y|do(X=x))= \emptyset$; 
\end{itemize}
\end{definition}

Definition~\ref{def:distinguishability_int} leads to several case scenarios an intervention makes two causal graphs distinguishable. A list of case scenarios is shown in Table ~\ref{tab:intervention_evaluation} and Figure~\ref{fig:Case_scenarios} shows causal graph examples for each case scenario given in Table ~\ref{tab:intervention_evaluation}. 

\begin{table*}[h!]
  \centering
  \caption{Case scenarios an intervention $E=(X,x,Y)$ makes two causal graphs $G_k, G_l$ distinguishable.
  }
  \label{tab:intervention_evaluation}

  \begin{tabular}{cccc}
    \toprule
    Case & $P_k(Y|do(X))$ & $P_l(Y|do(X))$ & $G_k \not\approx_{P_E} G_l$  \\
    \midrule
    1 & $=\{P(Y)\}$ & $=\{P(Y)\}$ & no \\
    2 & $=\{P(Y)\}$ & $\neq \{P(Y)\}$ & yes \\
    3 & $\neq \{P(Y)\}$ & $\neq \{P(Y)\}$ & no \\
    4 & $\neq \{P(Y)\}$ & $\neq \{P(Y)\}$ & yes \\
    5 & $=\{P(Y)\}$ & $\emptyset$ & yes \\
    6 & $\neq \{P(Y)\}$ & $\emptyset$ & no \\
    7 & $\emptyset$ & $\emptyset$ & no \\

    \bottomrule
  \end{tabular}
\end{table*}

\begin{figure*}[h!]
\begin{center}
\includegraphics[width=1\textwidth]{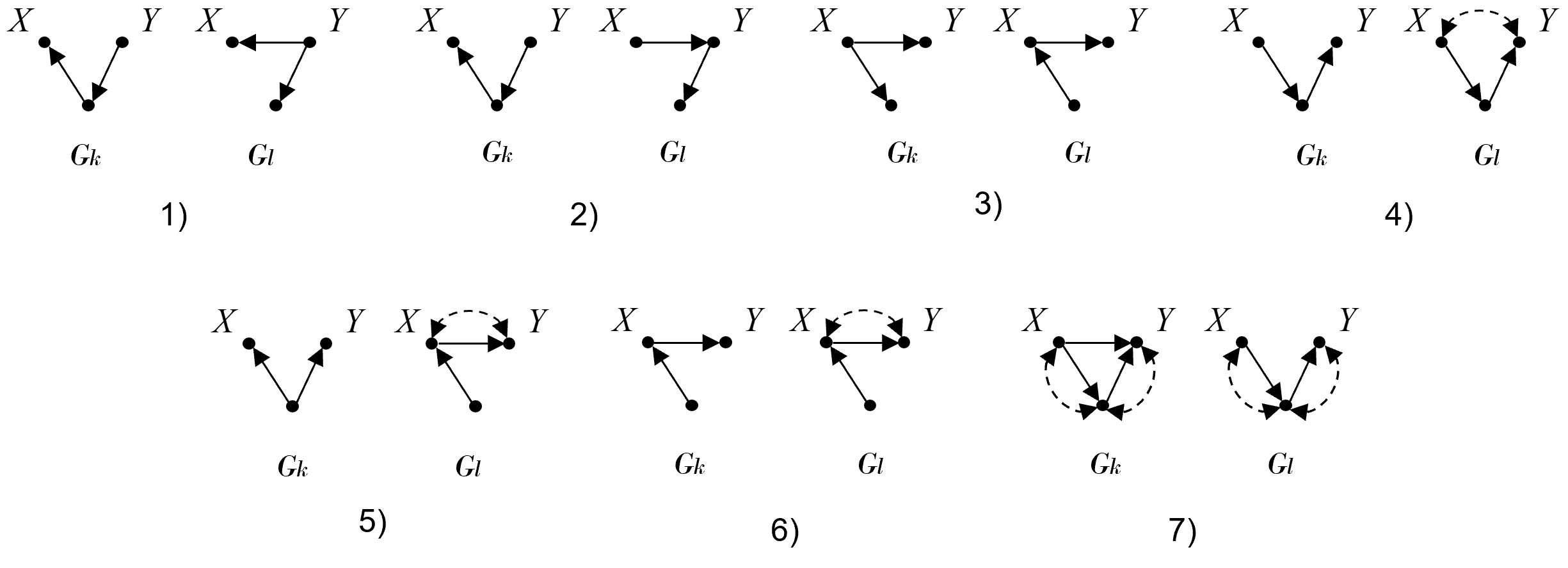}
\end{center}
\caption{Causal graph examples for case scenarios 1-7 from Table ~\ref{tab:intervention_evaluation}.}
\label{fig:Case_scenarios}
\end{figure*}

Note if an intervention does not make the two graphs distinguishable, there may exist other interventions that do.

\begin{definition}[Distinguishability]
\label{def:distinguishability}
Let $P$, $G_k$ and $G_l$ be as in Definition~\ref{def:distinguishability_int}. We say that $G_k$ and $G_l$ are distinguishable iff there exists an intervention $E=(X,x,Y)$ for which $G_k \not\approx_{P_E} G_l$, and note this with $G_k \not\approx_{P} G_l$.

\end{definition}






The intuition behind definitions~\ref{def:distinguishability_int} and \ref{def:distinguishability} is we are able to guarantee that the causal effect from two arbitrary graphs per Definition~\ref{def:identifiability_graph} differs in three case scenarios. Firstly, when there exist causal effect probability distributions from both graphs, and they are different for the two graphs. 

The second and third case occur when the effect is not identifiable in one of the graphs, while there is no effect from $X$ on $Y$ on the other graph. If there is no probability distribution from one of the graphs, there exists a hedge for the intervention \parencite{shpitser2006identification} in that graph, which also implies there exists a direct path from $X$ to $Y$ in the graph. This means there is an effect from $X$ on $Y$ in that graph. If there is no effect from $X$ on $Y$ on the other graph then the two graphs are distinguishable.

Note that for two graphs to be distinguishable it suffices that there exists an intervention $E=(X,x,Y)$ on a single value $X=x$ of the intervened variables, for which one of the conditions of definitions~\ref{def:distinguishability_int} is true.

Figure~\ref{fig:DistinguishThreeGraphs} provides an example of distinguishability among three graphs, across all possible interventions. For the intervention $E=(X,x,Y)=(X_1,x_1,X_4)$ we find $P_1(X_4|do(X_1=x_1)) \neq P_2(X_4|do(X_1=x_1))$ so $G_1 \not\approx_{P_E} G_2$ and therefore $G_1 \not\approx_{P} G_2$. There is a hedge in $G_3$ for the intervention $E$ so $P_3 = \emptyset$ and, since $P_1 \neq P(Y)$ and $P_2 \neq P(Y)$, $G_3$ is not distinguishable from $G_1$ and $G_2$ with $E$: $G_3 \approx_{P_E} G_1$ and $G_3 \approx_{P_E} G_2$. The same logic applies to interventions $P(X_4|do(X_1=x_1,X_2=x_2))$ and $P(X_4|do(X_1=x_1, X_3=x_3))$. And there are no other interventions that make the graphs distinguishable. So $G_3 \approx_{P} G_1$ and $G_3 \approx_{P} G_2$.

\begin{figure}[H]
\begin{center}
\includegraphics[width=0.7\textwidth]{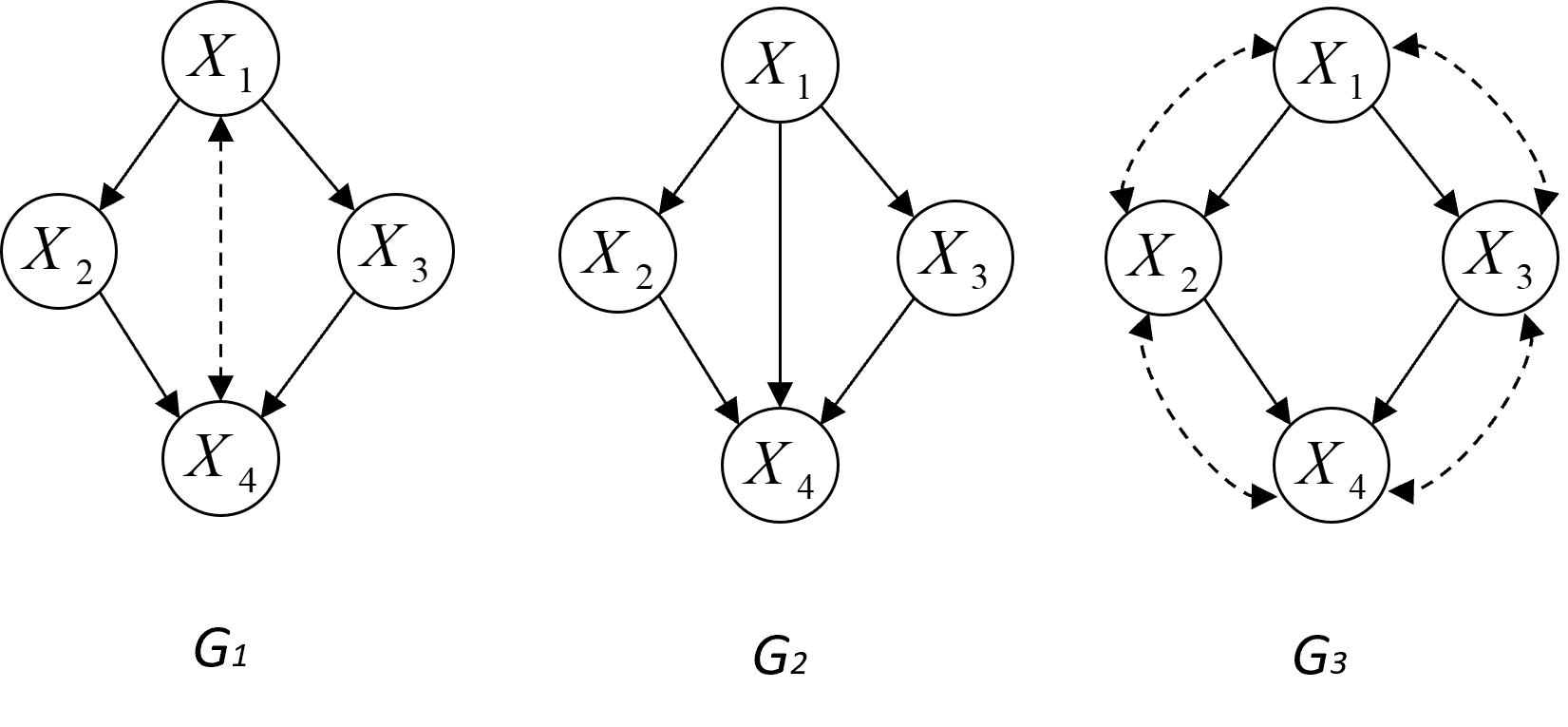}
\end{center}
\caption{Example of distinguishability among three causal graphs. As $P_1(X_4|do(X_1=x_1)) \neq P_2(X_4|do(X_1=x_1))$: $G_1 \not\approx_{P} G_2$. However, $G_3$ is not distinguishable from $G_1$ or $G_2$ due to the hidden confounders forming a {\em hedge} for the interventions $P(X_4|do(X_1=x_1))$, $P(X_4|do(X_1=x_1,X_2=x_2))$ and $P(X_4|do(X_1=x_1, X_3=x_3))$, so
$G_3 \approx_{P} G_1$ and $G_3 \approx_{P} G_2$.
}
\label{fig:DistinguishThreeGraphs}
\end{figure}

\begin{lemma}
\label{lem:DistinguishPair}
Let $M$ be a causal model with induced graph $G$ and observational joint probability distribution $P$. If two causal graphs $G_k$, $G_l$ are distinguishable then either $G_k \neq G$ or $G_l \neq G$.
\end{lemma}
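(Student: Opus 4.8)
The plan is to prove the contrapositive. The conclusion ``either $G_k \neq G$ or $G_l \neq G$'' is exactly the negation of ``$G_k = G$ and $G_l = G$'', so it suffices to show that if both candidates coincide with the induced graph $G$, then $G_k$ and $G_l$ are \emph{not} distinguishable, i.e.\ $G_k \approx_{P} G_l$. First I would record the trivial observation that $G_k = G$ and $G_l = G$ together force $G_k = G_l = G$, so the two candidates are literally the same graph.

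The key step is to appeal to the purely graph-and-distribution-dependent nature of the causal effect in Definition~\ref{def:identifiability_graph}. The object $P_k(Y|do(X=x))$ is defined as the set of all do-free distributions obtainable by repeatedly applying do-calculus rules starting from the pair $(G_k, P)$; in particular it is a function of $G_k$ and of the common observational distribution $P$ alone. Since $G_k = G_l$ and the same $P$ is used to evaluate both effects, I would conclude that $P_k(Y|do(X=x)) = P_l(Y|do(X=x))$ as sets, for \emph{every} intervention $E=(X,x,Y)$.

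Finally I would verify that this equality rules out all three cases of Definition~\ref{def:distinguishability_int}: the first case requires the two sets to be nonempty and unequal, contradicted directly by equality; the second and third require one set to be empty and the other to equal $\{P(Y)\}$, which is impossible for equal sets, since equal sets are simultaneously empty or simultaneously nonempty. Hence $G_k \approx_{P_E} G_l$ holds for every intervention $E$, and by Definition~\ref{def:distinguishability} we obtain $G_k \approx_{P} G_l$, completing the contrapositive.

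I do not expect a genuine obstacle here: the lemma is essentially immediate once one notes that $P_k$ and $P_l$ are determined by their respective graphs (given $P$). The only point worth flagging is the role of $M$: the hypothesis that $G$ is induced by $M$ with observational distribution $P$ is what fixes the single reference distribution $P$ used to compute both causal effects, so that equality of the graphs $G_k$ and $G_l$ propagates to equality of the effects. The argument does not otherwise need the ``induced'' property or Theorem~\ref{thm:DoCalculusComplete}; it relies solely on the construction in Definition~\ref{def:identifiability_graph}.
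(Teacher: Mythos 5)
Your proposal is correct and is essentially the paper's argument in contrapositive form: the paper proceeds directly (if $G_k \not\approx_{P} G_l$ and $G_k = G$, then $G \not\approx_{P} G_l$, forcing $G_l \neq G$), which implicitly rests on the very fact you make explicit — that $P_k(Y|do(X=x))$ is determined by the pair $(G_k, P)$, so a graph can never be distinguishable from itself. If anything, your write-up is slightly more complete, since the paper leaves unjustified the step ``$G \not\approx_{P} G_l$ implies $G_l \neq G$,'' which is exactly the irreflexivity of $\not\approx_{P}$ that you verify against all three cases of Definition~\ref{def:distinguishability_int}.
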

    

\begin{proof}
If $G_k \not\approx_{P} G_l$ then:
\begin{itemize}
    \item $G_k = G \Rightarrow G \not\approx_{P} G_l$
    
    \item $G_l = G \Rightarrow G_k \not\approx_{P} G$
\end{itemize}
If $G \not\approx_{P} G_l$ then $G_l \neq G$, and if $G_k \not\approx_{P} G$ then $G_k \neq G$.
\qed\end{proof}

The key importance of Lemma~\ref{lem:DistinguishPair} is that it is constructive:
if we have some intervention $E$ witnessing $G_k \not\approx_{P} G_l$,
then performing $E$ in the real world lets us pick up one
of $G_k$ and $G_l$ that is guaranteed not to be $G$. 

Let us analyse each one of the three scenarios of distinguishability from Definition~\ref{def:distinguishability_int}. If $G_k$ and $G_l$ are distinguishable due to $P_k(Y|do(X=x)) \neq \emptyset$, $P_l(Y|do(X=x)) \neq \emptyset$ and $P_k(Y|do(X=x)) \neq P_l(Y|do(X=x))$, then $P_k(Y|do(X=x))$ or $P_l(Y|do(X=x))$ will differ from the effect of the intervention in the real world, so we can guarantee that $G_k$ or $G_l$ differ from the true model. 

If $P_k(Y|do(X=x)) = \emptyset, P_l(Y|do(X=x))=\{P(Y)\}$ and the effect of the intervention in the real world is $P(Y)$ then, based on the completeness of do-calculus given in Theorem~\ref{thm:DoCalculusComplete}, $G_k$ is not correct, because otherwise $P_k(Y|do(X=x))$ would be $P(Y)$. Alternatively, if the effect of the intervention in the real world is not $P(Y)$ then, based on the soundness and uniqueness of do-calculus given in Theorem~\ref{thm:DoCalculusComplete}, $G_l$ is not correct.

\section{Structure of What is to Come}

Lemma~\ref{lem:DistinguishPair} is the central tool of our strategy to prune
the set of candidates down to a single element, which must
be the {\em true} graph. Indeed, if we can find an intervention
$E$ that distinguishes any two candidates $G_k$ and $G_l$, 
then performing that intervention ``in the real world'' 
is guaranteed, by Lemma~\ref{lem:DistinguishPair}, to rule out either $G_k$ or $G_l$.

\bigbreak
\bigbreak

Our strategy reduces the candidate set to one graph in phases.

{\bf 1. Pruning candidates that are distinguishable.}

We first prune the candidate graphs that are distinguishable from the {\em true} graph. We prove in Section~\ref{sec_graphical_conditions} that by doing this we can prune many candidate graphs, and leave only the graphs
that have the same ancestor relations, the same edges and the same hidden confounders as the {\em true} graph, except graphs with edges and hidden confounders for which there are {\em hedges} on some specific interventions. We show in Lemma~\ref{lem:DistinguishTopological} that for any two graphs 
that do not have the same ancestor relation one can always find
an intervention $E=(X,x,Y)$ that distinguishes them in the sense
of Definition~\ref{def:distinguishability_int}. Also, we show in Lemma~\ref{lem:DistinguishEdges} that for any two graphs where there exists an edge in one of the graphs that does not exist in the other, we can find an intervention that distinguishes the two graphs, except if the intervention forms a {\em hedge}. And we show in Lemma~\ref{lem:DistinguishHC} that for any two graphs where there exists a hidden confounder in one of the graphs that does not exist in the other, we can find an intervention that distinguishes the two graphs, except if the intervention forms a {\em hedge} or the hidden confounder is across variables not ancestors of each other.

Furthermore, all these interventions can be found
with no intervention "in the real world", by just applying the rules
of do-calculus and computation. And since the graphs are distinguishable,
we know by Lemma~\ref{lem:DistinguishPair} that at least one of them is not the {\em true} graph, and by performing the interventions "in the real world", we can find which candidate graphs are wrong.

Another interesting point is that these interventions are with
a single assignment $x$ to the intervened variables $X$.
In contrast, existing causal discovery algorithms
use conditional independence tests which need
to test all values of $X$. A conditional independence test when the intervention on $X$ is required implies interventions on all values of $X$. Presumably, in almost all real cases
testing for all the values of $X$ is more expensive than testing a single
value of $X$. Therefore, this pruning of the set of candidates
can in many cases lead to enormous savings.

Finally, the pruning of distinguishable graphs leads directly to the {\em true} graph, and nothing else is required to find the {\em true} graph, except when some specific graphical conditions exist in the set of candidate graphs, per Lemma~\ref{lem:DistinguishEdges} and Lemma~\ref{lem:DistinguishHC}. For example, as a sub-case, when there are no hidden confounders in the set of candidate graphs, the pruning of distinguishable graphs leads directly to the {\em true} graph. This in turn allows us to find a sequence of single valued interventions with minimal cost to complete the learning process.

{\bf 2. Pruning candidates with "wrong observable edges" using conditional independence tests. }

In some cases, when specific graphical conditions exist in the set of candidate graphs (given by Lemma~\ref{lem:DistinguishEdges}), it may happen that after the pruning of distinguishable candidate graphs, some of the remaining graphs still differ in the presence or absence of edges among
observable variables.

To determine the presence or absence of such edges, therefore to discard further graphs, one needs
to perform conditional independence tests, which we discuss in Section~\ref{sec_non_dist_edg}.

In Lemma~\ref{lem:CI_Edges} we give a graphical condition that lets us choose
one smallest conditional independence tests that distinguishes
graphs that differ in at least one edge. By performing that
test "in the real world", we can then select the graphs based on if the test confirms the presence or absence of the edge. 
We can then iteratively apply this test to leave only the candidate
graphs whose observable set of edges is the same, which we write
as "have the same observable graph". 

In contrast, \parencite{Eberhardt2008} performs conditional independence
tests via systematic interventions on $n-1$ variables, in causal models without hidden confounders. \parencite{Kocaoglu2017} and \parencite{DBLP:journals/corr/abs-2012-13976} perform conditional independence tests using systematic interventions on strongly separating sets of variables. In practice, interventions tend to be more
expensive the more variables they intervene. Furthermore, there may be
parts of the causal model whose structure is somehow known, and 
therefore there is no real need to perform interventions on those parts.
Therefore, our surgical approach intervenes only "where the uncertainties lie",
which is more efficient in practice than the carpet-bombing approach of \parencite{Eberhardt2008}, \parencite{Kocaoglu2017} and \parencite{DBLP:journals/corr/abs-2012-13976}.

{\bf 3. Pruning candidates with "wrong hidden confounders" using conditional independence tests.}

Finally, in some cases, when specific graphical conditions exist in the set of candidate graphs (given by Lemma~\ref{lem:DistinguishHC}), it may happen that after pruning the distinguishable candidate graphs, and pruning the graphs with "wrong edges" using conditional independence tests, some of the remaining graphs still differ in the presence or absence of some hidden confounders.

To determine the presence or absence of such hidden confounders, therefore to discard further graphs, one needs
to perform conditional independence tests, which we discuss in Section~\ref{sec_non_dist_hc}. 

In Theorems~\ref{thm:confounders_non_adjacent} and \ref{thm:confounders_adjacent} we give graphical conditions that lets us choose conditional independence tests that distinguishes
graphs that differ in at least one hidden confounder. By performing that
test "in the real world", we can then select the graphs based on if the test confirms the presence or absence of the hidden confounder. 
We can then iteratively apply this test to leave only the candidate
graphs whose set of hidden confounders is the same as the {\em true} graph.

Note that previous interventions, for example when pruning the distinguishable graphs, may already have
removed graphs with different hidden confounders, so there may only be few graphs with hidden confounder differences. Again, our surgical approach performs only
interventions where the candidates differ among themselves, which
will be more efficient in many practical cases.

\section{Power of an Intervention}

Given a set of candidate causal graphs, we want to find interventions for which a maximal number of candidates are distinguishable from each other. If we query an oracle for the causal effects from these interventions, we may eliminate a maximal number of candidate graphs, as their causal effect differs from the response from the oracle.

In this section we introduce a measure of the
power an intervention has to make a set of graphs distinguishable. 

\begin{definition}[Power of an Intervention] \label{def:discr_value}
Let $\cal G$ be a set of causal graphs. Let $M$ be a causal model with observational joint probability distribution $P$. We say that the Power of Intervention $E=(X,x,Y)$ over $\cal G$ and under $P$, and note this with $PI(E,\cal G,$$P)$, is the number of pairs of graphs in $\cal G$ that are distinguishable with this intervention under $P$, i.e., the number of pairs $G_k,G_l$ in $\cal G$ that satisfy one of the three distinguishability conditions

\begin{itemize}
\item $P_k(Y|do(X=x)) \neq \emptyset$, $P_l(Y|do(X=x)) \neq \emptyset$ and $P_k(Y|do(X=x)) \neq P_l(Y|do(X=x))$; 

or
\item $P_k(Y|do(X=x)) = \emptyset, P_l(Y|do(X=x))=\{P(Y)\}$; 

or
\item $P_k(Y|do(X=x)) =\{P(Y)\}, P_l(Y|do(X=x))= \emptyset$;

\end{itemize}

\end{definition}

Given a set of causal graphs including the $true$ graph $G{^\star}$, if we find interventions with $PI(E,\cal G,$$P^\star) > $$0$ we are guaranteed to discover invalid graphs. Assuming we have an oracle that provides the causal effect from the {\em true} graph from the intervention, we can eliminate graphs for which the causal effect differs from the oracle response. Interventions with $PI(E,\cal G,$$P^\star) = 0$, are not able to eliminate candidate graphs, while interventions with the highest $PI(E,\cal G,$$P^\star)$ eliminate the maximal possible number of graphs from the candidate set $\cal G$. This process allows us to eliminate all graphs that are distinguishable from the {\em true} graph with a small number of interventions. Also, we can control the cost of the process, choosing interventions with high $PI$ and low cost.

It may happen that more than one graph remain in $\cal G$ and all interventions have $PI(E,\cal G,$$P^\star)=0$. This is the case when the remaining graphs are not distinguishable. This is a worst case which we must consider to complete the discovery process in all situations. Our approach is to first remove as many candidate graphs as possible, using a sequence of interventions with $PI(E,\cal G,$$P^\star)>0$ and minimum total cost of the sequence, until there remains only one graph, or there are no further interventions with $PI(E,\cal G,$$P^\star) > $$0$. Only in the latter case we apply additional steps with conditional independence testing. If the remaining candidates contain edge differences, then we apply conditional independence tests specifically for detecting the edges that are different between the remaining candidate graphs, as discussed in Section~\ref{sec_non_dist_edg}. And if the remaining candidates contain hidden confounder differences, then we apply conditional independence tests specifically for detecting these hidden confounders, as discussed in Section~\ref{sec_non_dist_hc}. This is likely to reduce the overall cost of the discovery process. What is interesting is that we can formulate an algorithm whose improvement over the algorithms that brute-force through conditional independence tests is quantifiable, as a function of the "goodness" of the initial set of candidates.

\section{Graphical Conditions for Distinguishability}
\label{sec_graphical_conditions}

In this section we analyse the graphical conditions that make causal graphs distinguishable. We first define {\em ancestral relation}.

\begin{definition}[Ancestral Relation]
\label{def:ancestral_relation}
We say that two graphs $G_k$ and $G_l$ with the same observable variables have the same ancestral relations iff for every pair of variables $V_i, V_j$ 
\begin{itemize}
\item iff $V_i \in An(V_j)$ in $G_k$ then $V_i \in An(V_j)$ in $G_l$.
\end{itemize}
\end{definition}

\begin{lemma}[Distinguishability of Ancestral Relations]
\label{lem:DistinguishTopological}
Let $G_k, G_l$ be two causal graphs. Let $P$ be an observational joint probability distribution. If the ancestral relations of $G_k$ and $G_l$ are different, then $G_k \not\approx_{P} G_l$.  
\end{lemma}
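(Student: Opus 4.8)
\section*{Proof proposal}

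The plan is to convert the asymmetry in ancestral relations into a single concrete intervention that witnesses distinguishability. Since $G_k$ and $G_l$ have different ancestral relations, Definition~\ref{def:ancestral_relation} gives an ordered pair $V_i, V_j$ with, without loss of generality, $V_i \in An(V_j)$ in $G_k$ but $V_i \notin An(V_j)$ in $G_l$ (if the roles are reversed, swap the names $G_k$ and $G_l$; note that $\not\approx_{P_E}$ is symmetric because the second and third bullets of Definition~\ref{def:distinguishability_int} are mirror images). I would then fix the candidate intervention $E=(\{V_i\}, v_i, \{V_j\})$, i.e. $X=\{V_i\}$ and $Y=\{V_j\}$, and show that for a suitable value $v_i$ one of the three conditions of Definition~\ref{def:distinguishability_int} holds; this yields $G_k \not\approx_{P_E} G_l$ and hence $G_k \not\approx_{P} G_l$ by Definition~\ref{def:distinguishability}.

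First I would pin down the effect in the graph lacking the ancestral relation. In $G_l$ with the arrows into $V_i$ removed, $V_i$ has no incoming edges; with no conditioning set, any active path leaving $V_i$ must exit through an outgoing edge and remain directed, which would exhibit $V_i$ as an ancestor of $V_j$. Since $V_i \notin An(V_j)$ in $G_l$, there is no such path, so $V_i$ and $V_j$ are independent in that mutilated graph. Rule~3 of do-calculus then reduces $P(V_j|do(V_i=v_i))$ to $P(V_j)$ in one step, and by the uniqueness clause of Theorem~\ref{thm:DoCalculusComplete} this is the only do-free distribution obtainable, so $P_l(V_j|do(V_i=v_i)) = \{P(V_j)\} = \{P(Y)\}$; crucially the right-hand side does not depend on $v_i$, and $P_l \neq \emptyset$.

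Next I would analyse $G_k$, where the directed path guaranteed by $V_i \in An(V_j)$ is the source of an effect, and split into two sub-cases. If a hedge for $P(V_j|do(V_i))$ is present in $G_k$, then $P_k(V_j|do(V_i=v_i)) = \emptyset$ by the hedge criterion (Definition~\ref{def:hedge}), and together with $P_l=\{P(Y)\}$ we are in the second bullet of Definition~\ref{def:distinguishability_int}. Otherwise the effect is identifiable in $G_k$; invoking the Faithfulness assumption, the directed path from $V_i$ to $V_j$ forces a genuine causal effect, so the derivable interventional distribution depends on the intervened value and, for a suitable $v_i$, differs from the constant $P(V_j)$. Fixing that $v_i$, the set $P_k(V_j|do(V_i=v_i))$ is nonempty and different from $P_l=\{P(Y)\}$, so the first bullet of Definition~\ref{def:distinguishability_int} holds. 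In either sub-case $E$ distinguishes the two graphs.

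The routine parts are the $G_l$ computation and the bookkeeping of the three distinguishability conditions. The delicate step, and the one I expect to be the main obstacle, is the non-hedge sub-case in $G_k$: showing that the identifiable effect is not \emph{accidentally} equal to the marginal $P(V_j)$, since Definition~\ref{def:distinguishability_int} tests equality of distributions, not merely the existence of a cause-effect path. This is exactly where Faithfulness (together with the uniqueness of do-calculus derivations, Theorem~\ref{thm:DoCalculusComplete}) must be used to exclude cancellations along the path, and I would make sure the chosen $v_i$ witnesses the non-constancy of the effect. A secondary point to verify carefully is that the Rule~3 reduction in $G_l$ genuinely yields the singleton $\{P(Y)\}$ rather than merely containing it, which I would again justify via the uniqueness clause for identifiable effects.
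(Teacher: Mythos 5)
Your proposal follows essentially the same route as the paper's proof: the same witness intervention $E=(V_i,v_i,V_j)$ on a pair with differing ancestral relation, the same conclusion that $P_l(V_j|do(V_i=v_i))=\{P(V_j)\}$ in the graph lacking the relation, and the same hedge versus no-hedge case split in $G_k$ yielding $\emptyset$ or a distribution different from $\{P(V_j)\}$, matching the three bullets of Definition~\ref{def:distinguishability_int}. In fact you supply more justification than the paper's terse proof does (the explicit Rule~3 argument in $G_l$, and Faithfulness to rule out the identifiable effect in $G_k$ accidentally equalling the marginal), so the proposal is correct in the same sense and to at least the same degree of rigor as the original.
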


\begin{proof}
If the ancestral relations of $G_k$ and $G_l$ are different, then there exists an intervention $E=(V_i,v_i,V_j)$ for which $V_i$ is an ancestor of $V_j$ in $G_k$ and $V_i$ is not an ancestor of $V_j$ in $G_l$ (or vice-versa).
Then 

\begin{itemize}

\item $P_k(V_j|do(V_i=v_i)) = \emptyset$, if there is a hedge for $V_i,V_j$ in $G_k$

\item $P_k(V_j|do(V_i=v_i)) \neq \{P(V_j)\}$, if there is no hedge for $V_i,V_j$ in $G_k$

\item $P_l(V_j|do(V_i=v_i)) = \{P(V_j)\}$

\end{itemize}

so $G_k$ and $G_l$ are distinguishable from the intervention $E=(V_i,v_i,V_j)$.

\qed\end{proof}

Graphs with the same ancestral relations may have differences in some of their edges. We are interested in finding interventions that distinguish graphs with the same ancestral relations but different edges.

\begin{lemma}[Distinguishability of Edges]
\label{lem:DistinguishEdges}
Let $G_k, G_l$ be two causal graphs with the same ancestral relations. If

\begin {itemize}
\item there exists an edge from $V_i$ to $V_j$ in $G_k$ that does not exist in $G_l$;

and

\item there are no hedges in $G_k$ and $G_l$ for $P_k(V_j|do(block(V_i,V_j)))$ and $P_l(V_j|do(block(V_i,V_j)))$ respectively, where $block(V_i,V_j)$ is a set of variables that blocks all paths from $V_i$ to $V_j$ in $G_l$, such that $(V_i \perp V_j|block(V_i,V_j))$ in $G_l{}_{\overline{V_i}\overline{block(V_i,V_j)}}$ ;

\end {itemize}
then $G_k \not\approx_{P} G_l$.

\end{lemma}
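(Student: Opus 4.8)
The plan is to exhibit a single intervention that witnesses $G_k \not\approx_{P_E} G_l$ and then invoke Definition~\ref{def:distinguishability}. Write $B = block(V_i,V_j)$ and take $E = (\{V_i\}\cup B,\,(v_i,b),\,V_j)$: we intervene jointly on $V_i$ and on the blocking set $B$, with a value assignment $(v_i,b)$ to be chosen, and we observe $V_j$. Because the hypotheses exclude the relevant hedges, both $P_k(V_j|do(V_i=v_i,B=b))$ and $P_l(V_j|do(V_i=v_i,B=b))$ will be non-empty, so the goal reduces to showing these two effects are unequal for some $(v_i,b)$, which places us in the first case of Definition~\ref{def:distinguishability_int}.

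On the side of $G_l$ the argument is clean. Since $V_i\to V_j$ is absent in $G_l$ and $B$ blocks the remaining paths, the hypothesis supplies $(V_i \perp V_j | B)$ in $G_l{}_{\overline{V_i}\,\overline{B}}$, which is exactly the premise of rule~3 of do-calculus applied with $Y=V_j$, $X=B$, $Z=V_i$ and $W=\emptyset$. Hence I can delete the intervention on $V_i$ and obtain $P_l(V_j|do(V_i=v_i,B=b)) = P_l(V_j|do(B=b))$. The decisive feature is that this distribution does not depend on $v_i$, and the stated absence of a hedge for $P_l(V_j|do(block(V_i,V_j)))$ is precisely what guarantees that this reduced effect is a genuine (identifiable) distribution rather than the empty set.

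On the side of $G_k$ the interventions on $V_i$ and $B$ remove only edges \emph{incoming} to those variables, so the direct edge $V_i\to V_j$ survives in $G_k{}_{\overline{V_i}\,\overline{B}}$ and keeps $V_i$ and $V_j$ d-connected there. I will use this to conclude that $P_k(V_j|do(V_i=v_i,B=b))$ genuinely varies with $v_i$. Granting this, the proof closes at once: the $G_l$ effect is a single distribution constant in $v_i$, whereas the $G_k$ effect attains at least two distinct values as $v_i$ ranges over its finite domain, so it cannot agree with the constant $G_l$ effect at every value. Choosing a value $v_i$ where they disagree yields $P_k(V_j|do(V_i=v_i,B=b)) \neq P_l(V_j|do(V_i=v_i,B=b))$ with both sides non-empty, hence $G_k \not\approx_{P_E} G_l$ and therefore $G_k \not\approx_{P} G_l$.

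The main obstacle is the claim that the surviving direct edge forces the identified $G_k$ effect to be non-constant in $v_i$: d-connection in the mutilated graph only tells me that rule~3 \emph{cannot} be used to erase the dependence, not by itself that the computed distribution moves. To close this gap I would appeal to the Faithfulness assumption in force throughout the thesis, under which the structural d-connection of $V_i$ and $V_j$ in $G_k{}_{\overline{V_i}\,\overline{B}}$ translates into a genuine probabilistic dependence of $V_j$ on the intervened value $v_i$; equivalently, were the effect constant, that equality would be a valid interventional identity whose only do-calculus derivation proceeds through rule~3 and thus demands the very d-separation the edge destroys, contradicting the completeness of do-calculus (Theorem~\ref{thm:DoCalculusComplete}). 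Finally, I would record that the hypothesis of identical ancestral relations is what makes the construction legitimate: the edge $V_i\to V_j$ in $G_k$ forces $V_i\in An(V_j)$, hence also in $G_l$, so the $V_i$-to-$V_j$ paths in $G_l$ are all indirect and admit a blocking set $B$, making the reduction by rule~3 meaningful. The remaining bookkeeping, namely checking that the joint-intervention effect in $G_k$ inherits identifiability from the stated no-hedge condition, is routine and I would handle it last.
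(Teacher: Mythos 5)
Your witness intervention is not the one covered by the lemma's hypotheses, and that is where the proof breaks. The paper's proof distinguishes the graphs with the intervention on $block(V_i,V_j)$ \emph{alone}; you intervene on $\{V_i\}\cup block(V_i,V_j)$. This is not a cosmetic change, because the second hypothesis excludes hedges only for $P(V_j|do(block(V_i,V_j)))$, and identifiability is not monotone under enlarging the intervention set. Concretely, the hypotheses permit $G_k$ to contain, besides the edge $V_i \to V_j$, a hidden confounder between $V_i$ and $V_j$; then $F=\{V_i,V_j\}$ and $F'=\{V_j\}$ form a $\{V_j\}$-rooted hedge (Definition~\ref{def:hedge}) for $P_k(V_j|do(V_i,B))$, so your predicted effect in $G_k$ is $\emptyset$ even though $P_k(V_j|do(B))$ may be perfectly identifiable. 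With $P_k=\emptyset$ and $P_l\neq\{P(V_j)\}$ you are in case 6 of Table~\ref{tab:intervention_evaluation}, which is \emph{not} distinguishing, so your chosen $E$ witnesses nothing; the "routine bookkeeping" you defer to the end is exactly the step that fails. The paper avoids this by keeping the intervention equal to the one named in the hypothesis and bringing $V_i$ in only symbolically: rule 3 in $G_l$ gives $P_l(V_j|do(B))=P_l(V_j|do(V_i,B))$, and C-component factorization in both graphs then yields the explicit relation $P_k(V_j|do(B))=\sum_{V_i}P_l(V_j|do(B))\,P(S'_{\phi}|do(V\setminus S'_{\phi}))/P(S_{\phi}|do(V\setminus S_{\phi}))$, where the extra factor contains $V_i$ and the marginalization over $V_i$ is what forces the two predictions apart.

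The second gap is your justification of non-constancy on the $G_k$ side. Faithfulness (Definition~\ref{def:faithfulness}) ties a distribution to the graph of the model that generated it, but in this lemma $P$ is the observational distribution of an arbitrary model $M$ and $G_k$, $G_l$ are arbitrary candidates, neither of which need be induced by $M$; so d-connection of $V_i$ and $V_j$ in $G_k{}_{\overline{V_i}\,\overline{B}}$ gives no license to conclude that the do-free expression obtained \emph{from $G_k$} and evaluated \emph{on $P$} actually varies with $v_i$. The fallback via Theorem~\ref{thm:DoCalculusComplete} also does not work: completeness concerns identities valid for every model inducing a given graph, not numerical coincidences of one particular $P$, so constancy of the evaluated effect is not refuted by the absence of a rule-3 derivation. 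To be fair, the paper's own proof is not free of genericity either — it assumes the ratio of the differing C-component factors is not $1$ and that differences in other C-components do not cancel exactly — but it localizes that assumption inside an explicit algebraic relation between the two predictions, rather than importing Faithfulness for a graph to which $P$ need not be faithful.
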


\begin{proof}
Let $block(V_i,V_j)$ be any set of variables such that $(V_i \perp V_j|block(V_i,V_j))$ in $G_l{}_{\overline{V_i}\overline{block(V_i,V_j)}}$ (for example, but not necessarily, one of minimal cardinality). This condition is the do-calculus rule 3 condition for the intervention $P_l(V_j|do(block(V_i,V_j)))$ in $G_l$. Applying do-calculus rule 3 we obtain:

\begin{align*}
&P_l(V_j|do(block(V_i,V_j)))=P_l(V_j|do(V_i,block(V_i,V_j))) = \\
\end{align*}

by C-component factorization

\begin{align*}
& = \sum_{V \setminus (V_i  \cup V_j \cup block(V_i,V_j))} \prod_{r} P(S_r| do (V \setminus S_r))\\
\end{align*}

where $S_r$ are the C-components  $C( G_l \setminus V_i \setminus block(V_i,V_j) ) = \{S_1, S_2,...\}$.

In $G_k$, rule 3 cannot be applied due to the presence of the edge from $V_i$ to $V_j$, and the C-component factorization gives

\begin{align*}
& P_k(V_j|do(block(V_i,V_j))) = \sum_{V \setminus ( V_j \cup block(V_i,V_j) )} \prod_{r} P(S'_r| do (V \setminus S'_r)) = \\
\end{align*}

where $S'_r$ are the C-components  $C( G_k \setminus block(V_i,V_j) ) = \{S'_1, S'_2,...\}$. Clearly, one of the C-component $S'$ in $G_l$ includes $V_i$, and none of the C-components $S$ in $G_k$ include $V_i$. Assume the two graphs differ only in one C-component $S'_{\phi}$ and $S_{\phi}$, where $S'_{\phi}$ includes $V_i$ and $S_{\phi}$ includes all variables in $S'_{\phi}$ except $V_i$, and $S'_r = S_r$ for all other C-components, then

\begin{align*}
& = \sum_{V_i} \sum_{V \setminus (V_i \cup V_j \cup block(V_i,V_j) )} \prod_{r\neq \phi} P(S'_r| do (V \setminus S'_r)) P(S'_{\phi}| do (V \setminus S'_{\phi})) \frac{P(S_{\phi}| do (V \setminus S_{\phi}))} {P(S_{\phi}| do (V \setminus S_{\phi}))} \\
\end{align*}

and recombining the factors

\begin{align*}
&  P_k(V_j|do(block(V_i,V_j))) = \sum_{V_i}  P_l(V_j|do(block(V_i,V_j)))  \frac{P(S'_{\phi}| do (V \setminus S'_{\phi}))} {P(S_{\phi}| do (V \setminus S_{\phi}))} \\
\end{align*}

By assumption there are no hedges for the intervention in $G_k$ and $G_l$ so $P_k(V_j|do(block(V_i,V_j))) \neq  \emptyset$ and $P_l(V_j|do(block(V_i,V_j))) \neq  \emptyset$. The factor $\frac{P(S'_{\phi}| do (V \setminus S'_{\phi}))} {P(S_{\phi}| do (V \setminus S_{\phi}))} \neq 1$ as $V_i \in S'_{\phi}$ and $V_i \notin S_{\phi}$, and the marginalization by $V_i$ makes $P_k(V_j|do(block(V_i,V_j))) \neq P_l(V_j|do(block(V_i,V_j)))$ so $G_k$ and $G_l$ are distinguishable. 

In the sub-case where $S'_{\phi}=V_i$ and $S_{\phi}=\emptyset$

\begin{align*}
&  P_k(V_j|do(block(V_i,V_j))) = \sum_{V_i}  P_l(V_j|do(block(V_i,V_j)))  P(V_i| do (V \setminus V_i)) \\
\end{align*}

and $P_k(V_j|do(block(V_i,V_j))) \neq P_l(V_j|do(block(V_i,V_j)))$.

If the two graphs differ in additional C-components (due to the presence of different hidden confounders in $G_k$ and $G_l$), we assume that the effects of the other C-components will not cancel out exactly, and across all values of $V_i$,$V_j$, the difference coming from C-components $S'_{\phi}$ and $S_{\phi}$.

\qed\end{proof}

Graphs with the same ancestral relations and the same edges, in other words, graphs with the same observable graph, may have differences in some of their hidden confounders. We are interested in finding interventions that distinguish graphs with the same observable graph but different hidden confounders.

\begin{lemma}[Distinguishability of Hidden Confounders]
\label{lem:DistinguishHC}
Let $G_k, G_l$ be two causal graphs with the same observed graph. If

\begin {itemize}
\item there exists a hidden confounder between $V_i$ and $V_j$ in $G_k$ that does not exist in $G_l$;

and

\item there is no hedge in $G_k$ for $P_k(V_j|do(V_i))$ and in $G_l$ for $P_l(V_j|do(V_i))$;

and 

\item $V_i \in An(V_j)$;

\end {itemize}
then $G_k \not\approx_{P} G_l$.

\end{lemma}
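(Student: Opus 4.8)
The plan is to mirror the strategy used in the proof of Lemma~\ref{lem:DistinguishEdges}, but now with the single intervention $E=(V_i,v_i,V_j)$, i.e. the causal effect $P(V_j|do(V_i))$, and to show that the presence of the hidden confounder forces the two identification formulas to disagree. First I would record the easy structural facts. Since by hypothesis there is no hedge in either graph for this intervention, both $P_k(V_j|do(V_i))$ and $P_l(V_j|do(V_i))$ are non-empty (the effect is identifiable in each graph), exactly as in Lemma~\ref{lem:DistinguishEdges}. Moreover $G_k$ and $G_l$ have the same observed graph and $V_i\in An(V_j)$, so the directed path from $V_i$ to $V_j$ is present in both, and generically neither effect collapses to $P(V_j)$. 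Thus we land in the first (and main) case of Definition~\ref{def:distinguishability_int}, and the whole task reduces to proving that the two do-free distributions are different.

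Next I would apply C-component factorization (Lemma~\ref{lem:CcomponentFactorization}) to $P(V_j|do(V_i))$ in each graph, writing
\[
P(V_j|do(V_i))=\sum_{V\setminus(V_i\cup V_j)}\prod_r P(S_r|do(V\setminus S_r)),\qquad \{S_r\}=C(G\setminus V_i).
\]
The crucial observation is that deleting $V_i$ also deletes the only edge in which $G_k$ and $G_l$ differ, so $G_k\setminus V_i=G_l\setminus V_i$ and the two factorizations range over the \emph{same} partition $\{S_r\}$. Hence any discrepancy must come from the factors $P(S_r|do(V\setminus S_r))$ themselves. These are identified from the full graphs $G_k$ and $G_l$, and therefore depend on the full bidirected structure, which is precisely where $G_k$ and $G_l$ disagree.

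Then I would isolate the C-component $S_\phi$ containing $V_j$. In $G_l$ the full-graph C-component containing $S_\phi$ does not contain $V_i$, whereas in $G_k$ the bidirected edge between $V_i$ and $V_j$ places $V_i$ in the same full-graph C-component as $V_j$. Consequently the factor $P(S_\phi|do(V\setminus S_\phi))$ is extracted from the corresponding full-graph c-factor involving $V_i$ in $G_k$ but not in $G_l$, so this one factor differs while (in the simplest case, where $V_i$ is confounded only with $V_j$) all remaining factors $P(S_r|do(V\setminus S_r))$ with $r\neq\phi$ coincide. Writing $P_k$ and $P_l$ as the same sum-of-products with only this factor changed, exactly as in the ratio argument of Lemma~\ref{lem:DistinguishEdges}, the extra dependence on $V_i$ in the $G_k$ factor survives the marginalization and yields $P_k(V_j|do(V_i))\neq P_l(V_j|do(V_i))$. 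By Definition~\ref{def:distinguishability_int} this gives $G_k\not\approx_{P_E}G_l$, and hence $G_k\not\approx_{P}G_l$ by Definition~\ref{def:distinguishability}.

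The main obstacle is this last step: showing that the differing factor produces genuinely different distributions rather than an accidental coincidence, and handling the case where $V_i$ is additionally confounded with other variables (so that more than one full-graph C-component changes). As in Lemma~\ref{lem:DistinguishEdges}, I would discharge exact cancellation with a genericity assumption, namely that the contributions of the unchanged C-components do not conspire to cancel the difference coming from $S_\phi$ across all values of $V_i$ and $V_j$. The hypothesis $V_i\in An(V_j)$ is what guarantees that the confounder lies on an active portion of the graph, so that its presence is not vacuous and the intervention $do(V_i)$ actually probes it.
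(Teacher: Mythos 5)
Your proposal follows essentially the same route as the paper's proof: C-component factorization of $P(V_j|do(V_i))$ in both graphs, isolating the C-component containing $V_j$, observing that the confounder places $V_i$ in the same full-graph C-component as $V_j$ in $G_k$ (so its c-factor carries an extra $P(V_i|\cdot)$ term and marginalization over $V_i$) but not in $G_l$, concluding inequality via the ratio argument, and discharging possible cancellation from additional differing confounders with the same genericity assumption the paper invokes. Your remark that $G_k\setminus V_i=G_l\setminus V_i$ makes the two factorizations range over the same partition is a slightly cleaner framing of what the paper writes with separate notations $S_r$ and $S'_r$, but it is the same argument.
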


\begin{proof}
By C-component factorization

\begin{align*}
& P_k(V_j|do(V_i)) = \sum_{V \setminus (V_i \cup V_j )} \prod_{r} P(S_r| do (V \setminus S_r))\\
\end{align*}

\begin{align*}
& P_l(V_j|do(V_i)) = \sum_{V \setminus (V_i \cup V_j )} \prod_{r} P(S'_r| do (V \setminus S'_r))\\
\end{align*}

where $S_r$ are the C-components  $C( G_k \setminus V_i ) = \{S_1, S_2,...\}$, and $S'_r$ are the C-components  $C( G_l \setminus V_i ) = \{S'_1, S'_2,...\}$

In $G_k$ there is one C-component $S_{\phi} \subset \hat{S}_{\phi} \in C(G_k)$, where $S_{\phi}$ includes $V_j$ and $\hat{S}_{\phi}$ includes $V_i$ and $V_j$, due to the presence of the hidden confounder between $V_i$ and $V_j$. Also, by assumption $V_i \in An(V_j)$ and there are no hedges, so the conditions of the last recursive call of the ID algorithm are satisfied \parencite{shpitser2006identification}, then in $G_k$:

\begin{align*}
& P(S_{\phi}| do (V \setminus S_{\phi})) = \sum_{\hat{S}_{\phi} \setminus V_j } \prod_{\{t|V_t\in \hat{S}_{\phi}\}} P(V_t|V_{\pi}^{(t-1)})  \\\
\end{align*}

where $V_{\pi}^{(t-1)}$ is the set of nodes preceding $V_t$ in some topological ordering of $\hat{S}_{\phi}$ in $G$.

In $G_l$ there is one C-component $S'_{\phi} \in C(G_l)$, where $S'_{\phi}$ includes $V_j$, as there is no hidden confounder between $V_i$ and $V_j$. Also, by assumption $V_i \in An(V_j)$ and there are no hedges, then in $G_l$:

\begin{align*}
& P(S'_{\phi}| do (V \setminus S'_{\phi})) = \sum_{S'_{\phi} \setminus V_j } \prod_{\{t|V_t \in S'_{\phi}\}} P(V_t|V_{\pi}^{(t-1)})  \\\
\end{align*}

As $\hat{S}_{\phi} \setminus S'_{\phi} = V_i$
then $P(S_{\phi}| do (V \setminus S_{\phi}))$ in $G_k$ includes the additional factor $P(V_i|V_{\pi}^{(i-1)})$, compared with $P(S'_{\phi}| do (V \setminus S'_{\phi}))$ in $G_l$. Assume the two graphs differ only in the C-components $S_{\phi}$ and $S'_{\phi}$, where $S_{\phi}$ includes the additional factor $P(V_i|V_{\pi}^{(i-1)})$, and $S_r = S'_r$ for all other C-components, then

\begin{align*}
& P_k(V_j|do(V_i)) = \sum_{V \setminus (V_i \cup V_j )} \prod_{r} P(S_r| do (V \setminus S_r)) = \\
\end{align*}

\begin{align*}
& = \sum_{V \setminus (V_i \cup V_j )} \prod_{r\neq \phi} P(S_r| do (V \setminus S_r)) P(S_{\phi}| do (V \setminus S_{\phi})) \frac{P(S'_{\phi}| do (V \setminus S'_{\phi}))} {P(S'_{\phi}| do (V \setminus S'_{\phi}))} \\
\end{align*}

and recombining the factors

\begin{align*}
&  P_k(V_j|do(V_i)) = P_l(V_j|do(V_i))  \frac{P(S_{\phi}| do (V \setminus S_{\phi}))} {P(S'_{\phi}| do (V \setminus S'_{\phi}))} \\
\end{align*}

\bigbreak
\bigbreak
\bigbreak
\bigbreak
\bigbreak
\bigbreak

where

\begin{align*}
&  \frac{P(S_{\phi}| do (V \setminus S_{\phi}))} {P(S'_{\phi}| do (V \setminus S'_{\phi}))} = \frac{\sum_{\hat{S}_{\phi} \setminus V_j } \prod_{\{t|V_t\in \hat{S}_{\phi}\}} P(V_t|V_{\pi}^{(t-1)})} {\sum_{S'_{\phi} \setminus V_j } \prod_{\{t|V_t \in S'_{\phi}\}} P(V_t|V_{\pi}^{(t-1)}) } = \\
\end{align*}

\begin{align*}
&  = \frac{\sum_{V_i} \sum_{S'_{\phi} \setminus V_j} \prod_{\{t|V_t\in S'_{\phi}\}} P(V_t|V_{\pi}^{(t-1)}) P(V_i|V_{\pi}^{(i-1)})} {\sum_{S'_{\phi} \setminus V_j } \prod_{\{t|V_t \in S'_{\phi}\}} P(V_t|V_{\pi}^{(t-1)}) } \\
\end{align*}

By assumption there are no hedges for the intervention in $G_k$ and $G_l$ so $P_k(V_j|V_j)) \neq  \emptyset$ and $P_l(V_j|V_i)) \neq  \emptyset$ and the factor $\frac{P(S_{\phi}| do (V \setminus S_{\phi}))} {P(S'_{\phi}| do (V \setminus S'_{\phi}))} \neq 1$, so $G_k$ and $G_l$ are distinguishable. 

If the two graphs differ in additional C-components (due to the presence of different hidden confounders in $G_k$ and $G_l$), we assume that the effects of the other C-components will not cancel out exactly, and across all values of $V_i$,$V_j$, the difference coming from C-components $S'_{\phi}$ and $S_{\phi}$.

\qed\end{proof}

\begin{lemma}
\label{lem:Dist_Graphs}
Two graphs $G_k, G_l$ are distinguishable if :
\begin{enumerate}

\item they have different ancestral relations;

or

\item they have the same ancestral relations, there exists an edge from $V_i$ to $V_j$ in $G_k$ that does not exist in $G_l$, and there are no hedges for $P(V_j|do(block(V_i,V_j)))$ in either graph, where $block(V_i,V_j)$ is a set of variables that blocks all paths from $V_i$ to $V_j$ in $G_l$, such that $(V_i \perp V_j|block(V_i,V_j))$ in $G_l{}_{\overline{V_i}\overline{block(V_i,V_j)}}$ ;

or

\item they have the same observable graph, there exists a hidden confounder between $V_i$ and $V_j$ in $G_k$ that does not exist in $G_l$, there are no hedges for $P(V_j|do(V_i))$ in either graph, and $V_i \in An(V_j)$;

\end{enumerate}
\end{lemma}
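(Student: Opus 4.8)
The plan is to prove this by a straightforward case analysis, observing that each of the three numbered conditions is precisely the hypothesis of one of the distinguishability lemmas already established earlier in this section, and that each of those lemmas concludes exactly $G_k \not\approx_{P} G_l$. In other words, this lemma is a consolidation of Lemmas~\ref{lem:DistinguishTopological}, \ref{lem:DistinguishEdges}, and \ref{lem:DistinguishHC} into a single statement, so no new technical machinery (no fresh do-calculus manipulation or C-component factorization) should be required. The three conditions are mutually exclusive in the natural reading—they are stratified by how much structure the two graphs share (different ancestral relations; same ancestral relations but a differing edge; same observable graph but a differing confounder)—so it suffices to dispatch each one independently.

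First I would handle condition 1: the hypothesis that $G_k$ and $G_l$ have different ancestral relations is verbatim the premise of Lemma~\ref{lem:DistinguishTopological}, which directly yields $G_k \not\approx_{P} G_l$. Next, for condition 2, I would check that the assumptions—same ancestral relations, an edge from $V_i$ to $V_j$ present in $G_k$ but absent in $G_l$, and the absence of hedges for the blocking intervention $P(V_j|do(block(V_i,V_j)))$ in both graphs—reproduce exactly the hypotheses of Lemma~\ref{lem:DistinguishEdges}, whose conclusion is again $G_k \not\approx_{P} G_l$. Finally, condition 3 matches the hypotheses of Lemma~\ref{lem:DistinguishHC}—same observable graph, a hidden confounder between $V_i$ and $V_j$ present in $G_k$ but not in $G_l$, no hedges for $P(V_j|do(V_i))$ in either graph, and $V_i \in An(V_j)$—so that lemma immediately delivers distinguishability.

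Because the three conditions do not interact and each invocation is self-contained, I do not expect a genuine mathematical obstacle here; the lemma is essentially a bookkeeping corollary that packages the preceding three results for convenient reference in the ALCAM correctness proof of Chapter~4. The only point requiring care is purely a matter of matching hypotheses exactly: I would verify that the wording of each condition (in particular the asymmetric roles of $G_k$ and $G_l$ in the edge and confounder cases, and the precise blocking-set condition $(V_i \perp V_j|block(V_i,V_j))$ in $G_l{}_{\overline{V_i}\overline{block(V_i,V_j)}}$) aligns with the corresponding lemma's premises, so that each application is legitimate and nothing is silently strengthened. Given that alignment, the result follows immediately by the three cited lemmas.
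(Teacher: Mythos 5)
Your proposal is correct and takes exactly the route the paper does: the paper's own proof of Lemma~\ref{lem:Dist_Graphs} is a one-line citation of Lemmas~\ref{lem:DistinguishTopological}, \ref{lem:DistinguishEdges} and \ref{lem:DistinguishHC}, which is precisely your case-by-case dispatch of the three conditions onto those three results. Your added care about matching the hypotheses verbatim (the asymmetric roles of $G_k$ and $G_l$, the blocking-set condition) is sensible but introduces nothing beyond what the paper implicitly relies on.
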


\begin{proof}
By lemmas~\ref{lem:DistinguishTopological},~\ref{lem:DistinguishEdges} and ~\ref{lem:DistinguishHC}.

\qed\end{proof}

Lemma~\ref{lem:Dist_Graphs} provides conditions under which two graphs are distinguishable. This means we can distinguish them with interventions using a single value of the intervened variables. However, two graphs may contain edges or hidden confounder differences that we cannot distinguish with such interventions, particularly in the presence of hedges, in which case we need to use conditional independence testing, as described in sections~\ref{sec_non_dist_edg} and ~\ref{sec_non_dist_hc}.

\section{Testing Non-Distinguishable Edges}
\label{sec_non_dist_edg}

In this section we discuss the criteria for detecting edges using conditional independence tests. As discussed in previous sections, this is avoidable if interventions with $PI(E,\cal G,$$P^\star) > $$0$ have eliminated enough candidate graphs so that the remaining candidates do not contain edge differences. In other words, given a set of candidate causal graphs, conditional independence tests for detecting edges are only required if $PI(E,\cal G,$$P^\star)= $$0$ for all interventions across the remaining candidate graphs (the remaining candidate graphs are non-distinguishable) and some of the remaining candidate graphs contain edge differences.

\begin{lemma}[Conditional Independence Testing of Edges]
\label{lem:CI_Edges}
Let $M$ be a causal model with observational joint probability distribution $P$. Let ${\cal G}$ be a set of causal graphs that includes the graph $G$ induced by $M$. Let ${\cal G}_{ed} \subset {\cal G}$ be the subset of all graphs in ${\cal G}$ that contain an edge from $V_i$ to $V_j$: $ed(V_i,V_j)$, and ${\cal G} \setminus {\cal G}_{ed}$ be the subset of graphs that do not contain $ed(V_i,V_j)$. Let ${\cal G}_{ed} \neq \emptyset$ and ${\cal G} \setminus {\cal G}_{ed} \neq \emptyset$. Let $dsep(V_i,V_j)$ be a minimal set of variables whose intervention d-separates $V_i$ and $V_j$ in all graphs in ${\cal G} \setminus {\cal G}_{ed}$. Then

\begin {itemize}
\item iff $G \in {\cal G}_{ed}$ then $V_i$ and $V_j$ are dependent in $M$ under the intervention on $dsep(V_i,V_j)$;

\item iff $G \in {\cal G} \setminus {\cal G}_{ed}$ then $V_i$ and $V_j$ are independent in $M$ under the intervention on $dsep(V_i,V_j)$;

\end {itemize}

\end{lemma}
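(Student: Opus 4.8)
The plan is to prove the two biconditionals together by exploiting their complementary structure. Since ${\cal G}_{ed}$ and ${\cal G}\setminus{\cal G}_{ed}$ partition ${\cal G}$ and the true graph $G$ belongs to ${\cal G}$, while ``dependent'' and ``independent'' are the only two possible outcomes of the test, it suffices to establish the two forward implications $G\in{\cal G}_{ed}\Rightarrow V_i\not\perp V_j$ and $G\in{\cal G}\setminus{\cal G}_{ed}\Rightarrow V_i\perp V_j$ (both under the intervention on $dsep(V_i,V_j)$); each converse then follows by contraposition against the partition. The single tool used throughout is the standard correspondence between d-separation in the mutilated graph $G_{\overline{dsep(V_i,V_j)}}$ and independence in the interventional distribution $P(\cdot\mid do(dsep(V_i,V_j)))$, together with the standing Markov and Faithfulness assumptions (Definitions~\ref{def:markov} and \ref{def:faithfulness}). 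Throughout I use that $dsep(V_i,V_j)$ is disjoint from $\{V_i,V_j\}$, as its definition as a separating set presupposes.

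First I would treat the independent case. Assume $G\in{\cal G}\setminus{\cal G}_{ed}$. By the defining property of $dsep(V_i,V_j)$, intervening on this set d-separates $V_i$ from $V_j$ in every candidate graph lacking the edge, and in particular in $G$ itself; equivalently, $V_i$ and $V_j$ are d-separated given $dsep(V_i,V_j)$ in $G_{\overline{dsep(V_i,V_j)}}$ (in that mutilated graph the intervened nodes are roots, so conditioning on their fixed values blocks every fork passing through them). The interventional distribution $P(\cdot\mid do(dsep(V_i,V_j)))$ factorizes according to $G_{\overline{dsep(V_i,V_j)}}$, so the Markov condition turns this d-separation into the independence $V_i\perp V_j$ under the intervention.

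Next I would treat the dependent case. Assume $G\in{\cal G}_{ed}$, so $G$ contains the edge $V_i\to V_j$. Intervening on $dsep(V_i,V_j)$ deletes only the edges incoming to that set; since neither $V_i$ nor $V_j$ lies in $dsep(V_i,V_j)$, the edge $V_i\to V_j$ survives intact in $G_{\overline{dsep(V_i,V_j)}}$. A single direct edge contains no intermediate vertex and therefore cannot be blocked by conditioning on $dsep(V_i,V_j)$, so $V_i$ and $V_j$ remain d-connected in the mutilated graph. By Faithfulness applied to the interventional model, this d-connection forces $V_i\not\perp V_j$ under the intervention. Combining the two cases with the partition argument above closes both biconditionals.

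The hard part will be the justification in the dependent case, because it requires Faithfulness to hold for the interventional distribution $P(\cdot\mid do(dsep(V_i,V_j)))$ relative to $G_{\overline{dsep(V_i,V_j)}}$, rather than merely for the observational $P$ relative to $G$: an adversarial parameterization could in principle cancel the influence carried by the surviving edge $V_i\to V_j$ against other open paths. I would handle this by invoking the thesis's standing Faithfulness assumption in its interventional form (equivalently, noting that such cancellations occur only on a measure-zero set of model parameters). I would also remark that the minimality of $dsep(V_i,V_j)$ plays no role in correctness---it serves only to minimise the cost of the real-world test---and that the hypothesis silently presupposes the existence of some such separating set, which can fail precisely when a hidden confounder sits directly between $V_i$ and $V_j$; that situation is outside the scope of this lemma and is deferred to the hidden-confounder tests of Section~\ref{sec_non_dist_hc}.
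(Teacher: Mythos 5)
Your proof follows essentially the same route as the paper's: the paper's own argument is exactly the two-directional d-separation claim (edge absent $\Rightarrow$ the intervention d-separates $\Rightarrow$ independent; edge present $\Rightarrow$ d-connected $\Rightarrow$ dependent), stated in two sentences with the Markov/Faithfulness bridge left implicit. Your version is more careful: you make the biconditional-via-partition step explicit, you separate the Markov direction from the Faithfulness direction, and you correctly flag that the dependence direction needs Faithfulness to hold for the \emph{interventional} distribution relative to the mutilated graph --- a point the paper silently assumes.

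There is, however, one genuine discrepancy. You assume throughout that $dsep(V_i,V_j)$ is disjoint from $\{V_i,V_j\}$, claiming the definition presupposes this, and you close by asserting that a separating set can fail to exist precisely when a hidden confounder sits between $V_i$ and $V_j$, deferring that case to the hidden-confounder tests. The paper intends the opposite: immediately after the lemma it notes that ``it may be required to intervene on $V_i$ to d-separate $V_i$ and $V_j$, due to the presence of hidden confounders in ${\cal G} \setminus {\cal G}_{ed}$, in which case the conditional independence test requires doing interventions across all values of $V_i$.'' Indeed, intervening on $V_i$ cuts every edge incoming to $V_i$, including the arrowhead of any confounder between $V_i$ and $V_j$, so a suitable separating set always exists once $V_i$ itself is allowed into it; disjointness is neither presupposed nor tenable in general. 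Your argument is not hard to repair: the edge $V_i \to V_j$ is outgoing from $V_i$ and so still survives the mutilation, and ``dependence under the intervention'' must then be read as non-invariance of $P(V_j \mid do(V_i = v_i), do(\cdot))$ across the values $v_i$ (by interventional Faithfulness), while the independent case becomes invariance across $v_i$ (rule 3 of do-calculus, or the truncated factorization). But as written, your proof covers only a strict subcase of the lemma as the paper actually uses it in the \emph{idEdges} function, and your closing remark about non-existence of the separating set is incorrect in the paper's framework.
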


\begin{proof}
If $G \in {\cal G} \setminus {\cal G}_{ed}$ an intervention on $dsep(V_i,V_j)$ in $M$ d-separates $V_i$ and $V_j$, as $G$ does not have the edge $ed(V_i,V_j)$, so $V_i$ and $V_j$ are independent, and vice-versa. If $G \in {\cal G}_{ed}$ an intervention on $dsep(V_i,V_j)$ in $M$ does not d-separate $V_i$ and $V_j$, due to the presence of the edge $ed(V_i,V_j)$ in $G$, so $V_i$ and $V_j$ are dependent, and vice-versa.

\qed\end{proof}

Lemma~\ref{lem:CI_Edges} provides a strategy for further reducing the set of candidates in ${\cal G}$. Each edge $ed(V_i,V_j)$ such that ${\cal G}_{ed} \neq \emptyset$ and ${\cal G} \setminus {\cal G}_{ed} \neq \emptyset$
leads to a conditional independence test between $V_i$ and $V_j$, under the intervention on $dsep(V_i,V_j)$. If $V_i$ and $V_j$ are independent then we know none of the graphs in ${\cal G}_{ed}$ is the {\em true} graph. Else, we know that none of the graphs in ${\cal G} \setminus {\cal G}_{ed}$ is the {\em true} graph. Note that it may be required to intervene on $V_i$ to d-separate $V_i$ and $V_j$, due to the presence of hidden confounders in ${\cal G} \setminus {\cal G}_{ed}$, in which case the conditional independence test requires doing interventions across all values of $V_i$.

\section{Testing Non-Distinguishable Hidden Confounders}
\label{sec_non_dist_hc}

In this section we discuss the criteria for detecting hidden confounders using conditional independence tests. Given a set of candidate causal graphs, this is only required if $PI(E,\cal G,$$P^\star)= $$0$ for all interventions, as discussed in previous sections. If $PI(E,\cal G,$$P^\star)= $$0$ across all interventions, we cannot eliminate further candidate graphs with interventions, however in some cases the remaining candidate graphs  may still contain some hidden confounder differences among them. At this stage, all remaining candidate graphs have the same edges between observable variables than the {\em true} graph. We say that we have learned the {\em observable} graph. We use the {\em observable} graph in the criteria to test the remaining hidden confounders.

We formally define {\em confounding} in definition~\ref{def:confounding}. 

\begin{definition}[Confounding]
\label{def:confounding}
We say that two variables $V_i$ and $V_j$ are not confounded iff:
\begin{align*}
P(V_j|do(V_i))=P(V_j|V_i)
\end{align*}
\end{definition}


Two variables $V_i, V_j$ are confounded if there exists another variable with causal effect on both $V_i$ and $V_j$, in which case $P(V_j|do(V_i)) \neq P(V_j|V_i)$. Note that two observed variables may be confounded by another observed variable or may be confounded by an unobserved variable, a hidden confounder.

Given the {\em observable} graph, an intervention on all observed parents of $V_i$ and all observed parents of $V_j$ makes $V_i$ and $V_j$ independent, except if there is an edge between $V_i$ and $V_j$ (they are adjacent in the graph) or there is a hidden confounder between them. This leads to Theorem~\ref{thm:confounders_non_adjacent} \parencite{Kocaoglu2017} for non-adjacent variables.
We assume that the causal relations between the observed variables in the graph are known. In other words, the adjacencies between observed variables are known, and the observed parents of all observed variables are also known.



\begin{theorem}[Hidden Confounders between non-adjacent variables]
\label{thm:confounders_non_adjacent}
Two non-adjacent variables $V_i$ and $V_j$ are not confounded by an unobserved variable iff

\begin{align*}
&P(V_j=v_j|do(O)) = P(V_j=v_j|V_i=v_i,do(O))
\end{align*}

for all values $V_i=v_i$ and $V_j=v_j$, where $O$ is the union of the set of observed parents of $V_i$ and the set of observed parents of $V_j$.
\end{theorem}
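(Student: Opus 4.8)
The plan is to reduce the statement to a single d-separation test in the mutilated graph $G_{\overline{O}}$ and then settle that test by a structural analysis, working throughout in the known observable graph, where ``confounded by an unobserved variable'' for non-adjacent $V_i,V_j$ means precisely the presence of a bidirected edge $V_i \leftrightarrow V_j$. First I would observe that the displayed equality, holding for all values $v_i, v_j$, is exactly the assertion that $V_i$ and $V_j$ are independent in the interventional distribution $P(\cdot \mid do(O))$. By rule 1 of do-calculus (with $X=O$, $Z=V_i$, $Y=V_j$, $W=\emptyset$), the condition $(V_j \perp V_i \mid O)_{G_{\overline{O}}}$ implies $P(V_j \mid V_i, do(O)) = P(V_j \mid do(O))$, which gives the ``not confounded $\Rightarrow$ equality'' direction once the d-separation is established. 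For the converse I would argue by contraposition and invoke the Faithfulness condition (Definition~\ref{def:faithfulness}): a bidirected edge $V_i \leftrightarrow V_j$ is an open path in $G_{\overline{O}}$ given $O$, so $V_i$ and $V_j$ are d-connected and hence dependent in $P(\cdot \mid do(O))$, making the equality fail for some values. Thus the theorem rests entirely on the graph-theoretic claim that $(V_i \perp V_j \mid O)_{G_{\overline{O}}}$ holds if and only if there is no bidirected edge between $V_i$ and $V_j$.

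The core structural step is to analyse $G_{\overline{O}}$. Intervening on $O$ deletes every edge, directed or bidirected, pointing into a node of $O$, so each node of $O$ is a root of $G_{\overline{O}}$ and therefore $An(O)_{G_{\overline{O}}} = O$. Since $V_i, V_j$ are non-adjacent and $O = Pa(V_i) \cup Pa(V_j)$, the only directed edges entering $V_i$ (respectively $V_j$) in $G_{\overline{O}}$ come from its parents, which lie in $O$ and can never be reached by any further directed edge; hence there is no directed path between $V_i$ and $V_j$ in $G_{\overline{O}}$. Now consider any path $\pi$ between $V_i$ and $V_j$ that is open given $O$. Every collider on $\pi$ must have itself or a descendant in the conditioning set, i.e. must lie in $An(O)_{G_{\overline{O}}} = O$; but a node of $O$ has no incoming edges and so cannot be a collider. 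Therefore $\pi$ must be collider-free, i.e. a trek with a single source.

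It then remains to show the only open trek is the direct edge $V_i \leftrightarrow V_j$. On a collider-free path, the edge incident to $V_i$ either points into $V_i$ --- in which case it originates at a parent in $O$, a conditioned non-collider that blocks $\pi$ --- or points out of $V_i$, which forces, by the non-collider condition at each subsequent node, the path to continue as a directed path, already ruled out unless it is the single bidirected edge to $V_j$; the symmetric argument applies at the $V_j$ end. Hence any open $\pi$ is exactly the edge $V_i \leftrightarrow V_j$, which establishes the graph-theoretic claim and, together with the two reductions above, the theorem.

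The main obstacle I anticipate is this trek case analysis: carefully ruling out every long open path requires tracking edge orientations around each internal node and repeatedly using that the parents of $V_i$ and $V_j$ are simultaneously conditioned and rootless in $G_{\overline{O}}$. A secondary subtlety is the necessity direction, which genuinely relies on Faithfulness (or on interventional faithfulness of the mutilated model) to convert d-connection into statistical dependence; without it one could only conclude that confounding \emph{may}, rather than \emph{must}, break the equality for some value assignment.
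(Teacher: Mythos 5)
The first thing to note is that the paper does not actually prove this theorem: it is imported from \parencite{Kocaoglu2017}, and the surrounding text offers only the one-sentence intuition that intervening on all observed parents of $V_i$ and $V_j$ severs every connection between them except a direct edge or a direct hidden confounder. So there is no in-paper argument to compare against; judged on its own, your reconstruction takes the right route and its skeleton is sound. The displayed equality over all $v_i,v_j$ is exactly independence of $V_i$ and $V_j$ in $P(\cdot\mid do(O))$; the ``no confounder $\Rightarrow$ equality'' direction follows from do-calculus rule 1 once $(V_j\perp V_i\mid O)_{G_{\overline{O}}}$ is established; the converse genuinely needs faithfulness of the intervened model, a caveat you correctly flag and which the thesis's blanket Markov-and-Faithfulness assumption is meant to cover; and your reading of ``confounded by an unobserved variable'' as a bidirected edge $V_i\leftrightarrow V_j$ is the right one given the paper's projection semantics for hidden confounders. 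Your structural observations --- nodes of $O$ are roots of $G_{\overline{O}}$, hence $An(O)_{G_{\overline{O}}}=O$, hence no collider can be open and no directed path can join $V_i$ and $V_j$ --- are all correct and are exactly the tools needed.

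The one genuine hole is in the endpoint case analysis, at precisely the spot you yourself identify as the main obstacle. You assert that an edge of $\pi$ pointing into $V_i$ ``originates at a parent in $O$''; that is false for a bidirected edge $V_i\leftrightarrow w$ with $w\neq V_j$, which points into $V_i$ yet has $w\notin O$ (indeed $w\notin O$ is forced, since bidirected edges into nodes of $O$ are deleted in $G_{\overline{O}}$). Consequently paths of the form $V_i\leftrightarrow w\to\cdots\to V_j$ fall into neither of your two stated cases, and these are the nontrivial ones. They can be killed with your own tools: the bidirected edge points into $w$, so the non-collider condition forces the continuation out of $w$ to be a directed path; that path must enter $V_j$ through some $q\in Pa(V_j)\subseteq O$, and $q$ is a non-collider on $\pi$ (its edge to $V_j$ leaves it), so $\pi$ is blocked --- except in the length-two case $V_i\leftrightarrow w\to V_j$, where $w\in Pa(V_j)\subseteq O$ and the bidirected edge into $w$ does not even exist in $G_{\overline{O}}$. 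The symmetric configuration at the $V_j$ end, and the configuration with bidirected edges at both ends (which forces a collider on the interior, already ruled out), are handled the same way. With that case added, your structural claim --- the only possible open path is the direct edge $V_i\leftrightarrow V_j$ --- holds, and the theorem follows as you outline.
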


Theorem~\ref{thm:confounders_non_adjacent} leads to conditional independence tests, where the equality is tested across all values $V_i=v_i$ and $V_j=v_j$. Note that it is not required to test the equality across all values of $O$. The equality is tested with a constant value of $O$.
 
When $V_i$ and $V_j$ are adjacent, the observed parents of $V_i$ and $V_j$ include $V_i$ or $V_j$ so an intervention on $O$ implies intervening $V_i$ or $V_j$. The criteria used to in Theorem~\ref{thm:confounders_non_adjacent} to detect hidden confounders for non-adjacent variables cannot be used for adjacent variables. Theorem~\ref{thm:confounders_adjacent} \parencite{Kocaoglu2017} provides a criterion for detecting hidden confounders when $V_i$ and $V_j$ are adjacent variables in the {\em observable} graph.

\begin{theorem}[Hidden Confounders between adjacent variables]
\label{thm:confounders_adjacent}
Two adjacent variables $V_i$ and $V_j$ are not confounded by an unobserved variable iff
\begin{align*}
&P(V_j=v_j|do(V_i=v_i,O)) = P(V_j=v_j|V_i=v_i,do(O))
\end{align*}

for all values $V_i=v_i$ and $V_j=v_j$, where $O$ is the union of the set of observed parents of $V_i$ and the set of observed parents of $V_j$.
\end{theorem}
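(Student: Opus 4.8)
The plan is to recognize the claimed equality as an instance of the action/observation exchange encoded in rule 2 of do-calculus, and then to translate the applicability of that rule into the presence or absence of a hidden confounder between $V_i$ and $V_j$. Without loss of generality I take the known observable edge to be $V_i \to V_j$, so that $V_i \in Pa(V_j) \subseteq O$ and I may treat $V_i$ as the variable whose status (intervened versus observed) is being toggled while the remaining observed parents in $O$ stay intervened. Under this reading the left-hand side $P(V_j=v_j \mid do(V_i=v_i,O))$ is $P(V_j \mid do(O),do(V_i))$ and the right-hand side $P(V_j=v_j \mid V_i=v_i,do(O))$ is $P(V_j \mid do(O),V_i)$, so the asserted identity is exactly what one obtains by exchanging $do(V_i)$ for conditioning on $V_i$ in the submodel where $O$ is held fixed.

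First I would apply rule 2 with $Y=V_j$, $Z=V_i$, $X=O$ and $W=\emptyset$: the rule yields the equality whenever $(V_j \perp V_i \mid O)$ holds in $G_{\overline{O}\,\underline{V_i}}$. By the soundness of do-calculus (Theorem~\ref{thm:DoCalculusComplete}) this d-separation is sufficient for the equality, and under the Faithfulness condition (Definition~\ref{def:faithfulness}) it is also necessary: if $V_i$ and $V_j$ fail to be d-separated in $G_{\overline{O}\,\underline{V_i}}$ given $O$, faithfulness guarantees that the corresponding graphical dependence is reflected as an inequality of the two distributions. Thus the stated equality holds if and only if $(V_j \perp V_i \mid O)$ in $G_{\overline{O}\,\underline{V_i}}$.

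The core of the argument is then a purely graphical step establishing that this d-separation holds precisely when there is no hidden confounder between $V_i$ and $V_j$. In $G_{\overline{O}\,\underline{V_i}}$ the direct edge $V_i \to V_j$ is deleted, since it is outgoing from $V_i$; and every backdoor path between $V_i$ and $V_j$ must enter $V_i$ through some parent in $Pa(V_i)$ or enter $V_j$ through some parent in $Pa(V_j)$. Since all such parents lie in $O$ and have had their incoming edges removed by $\overline{O}$, each is a conditioned non-collider, so conditioning on $O$ blocks every observed backdoor path. The only path that can survive is a confounding path $V_i \leftarrow U \to V_j$ induced by a bidirected edge, whose endpoint edges $U\to V_i$ and $U\to V_j$ are untouched by $\overline{O}$ and $\underline{V_i}$ and whose middle vertex $U$ is an unconditioned fork. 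Hence the pair is d-separated exactly when no such $U$ exists, i.e. when $V_i$ and $V_j$ are not confounded in the sense of Definition~\ref{def:confounding}; chaining the equivalences then closes both directions. This mirrors the non-adjacent case (Theorem~\ref{thm:confounders_non_adjacent}), the difference being that here $V_i$ itself lies in $O$, which forces the test to contrast \emph{intervening} on $V_i$ with \emph{observing} it rather than merely conditioning.

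I expect the main obstacle to be the graphical bookkeeping in the last step, specifically arguing that $O$ blocks all observed backdoor paths while leaving the single bidirected path open: one must check that no parent in $O$ ever acts as an opened collider (ruled out precisely because $\overline{O}$ strips its incoming edges) and that no longer path through other unobserved variables is silently created. A secondary delicate point is the necessity direction, where I rely on Faithfulness to promote graphical connection into distributional inequality; I would flag this dependence explicitly, since rule 2 on its own delivers only the sufficiency half of the \emph{iff}.
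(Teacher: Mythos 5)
Your proposal cannot be checked against the paper's own argument for the simple reason that the paper has none: Theorem~\ref{thm:confounders_adjacent} is imported by citation from \parencite{Kocaoglu2017} and stated without proof (the same is true of Theorem~\ref{thm:confounders_non_adjacent}). What you have written is essentially a reconstruction of the standard ``do-see'' test argument behind that cited result, and its skeleton is sound: reading $O$ as $(Pa(V_i)\cup Pa(V_j))\setminus\{V_i,V_j\}$ with $V_i\to V_j$ the known observable edge (the only reading under which the statement is non-degenerate, since otherwise $V_i$ would be simultaneously intervened and conditioned), rule 2 with $Y=V_j$, $Z=V_i$, $X=O$, $W=\emptyset$ reduces the claimed equality to $(V_j\perp V_i\mid O)$ in $G_{\overline{O}\,\underline{V_i}}$, and that d-separation does hold exactly when no bidirected edge joins $V_i$ and $V_j$.

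Two points need tightening. First, the graphical step you flag as the main obstacle does close, and for one uniform reason worth stating explicitly: in $G_{\overline{O}}$ no member of $O$ has any incoming edge, hence (a) a member of $O$ can only appear on a path as a conditioned fork, never as a collider, and (b) no collider anywhere can have a descendant in $O$, so every path containing a collider is blocked outright. Since all edges out of $V_i$ are cut by $\underline{V_i}$ and a directed path from $V_j$ back to $V_i$ is excluded by acyclicity, the only potentially open paths are collider-free treks $V_i\leftarrow U\to\cdots\to V_j$ whose fork $U$ must be a hidden parent of $V_i$ (an observed fork would lie in $O$); if such a trek is anything other than the two-edge path $V_i\leftarrow U\to V_j$, its directed tail must enter $V_j$ through an observed parent lying in $O$, which blocks it. Second, your necessity direction needs more than Definition~\ref{def:faithfulness}: that definition constrains only observational conditional independences, whereas the failure mode here is an exact cancellation inside the interventional quantities $P(V_j\mid V_i,do(O))$ and $P(V_j\mid do(V_i),do(O))$, which can occur without violating observational faithfulness. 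One must assume faithfulness (no cancellation) for the post-intervention models as well --- an assumption made in \parencite{Kocaoglu2017} and consistent in spirit with the ``effects do not cancel out exactly'' hypotheses the thesis invokes in Lemmas~\ref{lem:DistinguishEdges} and~\ref{lem:DistinguishHC} --- so flagging this as ordinary faithfulness understates what is being assumed.
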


Theorem~\ref{thm:confounders_adjacent} also leads to a conditional independence test for detecting hidden confounders, where the equality is tested across all values $V_i=v_i$ and $V_j=v_j$. Note that it is not required to test the equality across all values of $O$. The equality is tested with a constant value of $O$.




\thispagestyle{plain} 
\mbox{}


\chapter{Active Learning of Causal Graphs} 

\label{ch:Chapter4} 


In this chapter, we introduce a generic method for active learning of causal graphs from interventions. 
The main features of the algorithm are as follows:

\begin{itemize}
\item It allows causal models that contain hidden confounders, and identifies both the edges among observable variables
and the pairs of variables affected by a common hidden confounder. 
Most previous research regarding learning of causal graphs did not consider for the presence of hidden confounders, whereas real world causal systems and causal data typically contain hidden confounders. 

\item It accepts any arbitrary set of candidate graphs as representations of previous knowledge, without any requisite on restrictions or forms. 
The number of interventions performed by the algorithm is $O(|G|)$. In most previous work it is difficult to assess how the amount of previous knowledge
about the domain translates to a reduction in the number of interventions. For example, eliminating some of the candidate graphs with expert knowledge or running a causal discovery algorithm from observational data, before running our active learning algorithm, 
may reduce the number of candidates and reduce the execution time of our algorithm, but is not a requirement. Most existing algorithms are not designed to
take advantage of this pre-processing option. 

\item Our algorithm can accommodate various functions describing the cost of an intervention, unlike others that 
are designed for one particular cost (e.g., unit cost where all interventions have the same cost). Our method considers that the cost of intervening variables may differ for every variable, or combination of variables, and for every value assignment of the intervened variables. Also, the cost of observing may differ for every variable, or combination of variables. We aim at minimizing the total cost of the sequence of interventions and observations. To the best of our knowledge, our method is the first that encompasses all these cost dimensions, and is general in that sense.

\item Finally, all existing algorithms for the discrete case perform tests that implicitly require testing all values of the intervened variables
(i.e., $P(Y|do(X))$, where $X$ is implicitly intervened or sampled with all of its domain values). Our algorithm has a pre-processing phase
that eliminates as many candidates as possible using single-valued interventions, i.e., $P(Y|do(X=x))$ for a single assignment of values to 
the variables in $X$. We claim that the number of values tested by an intervention is, in many practical cases, 
a leading factor in the cost of an intervention, and therefore this approach can drastically reduce the cost of the discovery process. We argue that by using single valued interventions it is possible, in the vast majority of cases, to distinguish two candidates with differences in their edge structure, and also candidates that have differences in their hidden confounders.
To the best of our knowledge our method is the first method that uses interventions on a single value of the intervened variables. 

\end{itemize}

Our approach is combinatorial rather than statistical, in that we consider that the answers to 
our interventions are retrieved exactly and we do not study the number of individual samples
required to know the result of an intervention with a given precision. In fact, 
in most cases the primitive operation is simply checking for equality or inequality 
among two probability distributions. 


We start with a number of causal graph candidates that may be the output of other previous analysis using observational data and expert knowledge. This is the most likely general setting, as interventions are usually more expensive than observations, so we consider that everything is done to first use previous expert knowledge and use observational distributions and algorithms based on that data, before starting any experiment design.

We take a two-phase approach. First we use causal effect predictions to discard as many causal graphs candidates as possible. This is done using interventions with a single value of the intervened variables $X$, which avoids the cost of repeating the interventions with multiple values of $X$. Using do-calculus as a predictive mechanism ensures that a causal effect prediction is either correct or not possible, due to the soundness and completeness of do-calculus. The first phase concludes when there are no further single valued interventions capable of discarding any more candidate graphs, which means the remaining graphs are not distinguishable per Definition\ref{def:distinguishability}.

The second phase uses conditional independence tests, in order to discard the remaining causal graph candidates. We provide the graphical conditions that explain which conditional independence tests are required. The advantage of eliminating as many candidates as possible in the first phase, with selected interventions using single values of the intervened variables $X$, before applying the second phase, reduces the overall cost of the process.

\section{Algorithm for Active Learning of Causal Graphs}

This section introduces the {\em ALCAM} algorithm for active learning of causal graphs. The algorithm is based on the iterative application of two main functions: the function {\em SelectIntervention} is given in Figure~\ref{fig:select-int}, and the function {\em SelectGraphs}  is given in Figure~\ref{fig:select-causal}. The algorithm uses a causal effect {\em Predictor} based on do-calculus, given in Figure~\ref{fig:predictor} and an intervention {\em Oracle} given in Figure~\ref{fig:oracle}. Additionally, the function {\em PowerOfIntervention} given in Figure~\ref{fig:countdv} provides the metric required for distinguishing causal graphs. The {\em ALCAM} algorithm for active learning of causal graphs is given in Algorithm~\ref{alg:active-algo}.

\subsection{Power of Intervention function}
Given an intervention and a set of causal graph candidates, the {\em PowerOfIntervention} function (Figure~\ref{fig:countdv}) counts the number of pairs of candidates that are distinguishable (Definition~\ref{def:distinguishability}) with the intervention, based on all case scenarios (see Table ~\ref{tab:intervention_evaluation}).

\begin{figure}[H]
\hrule\medskip
Function \textbf{PowerOfIntervention}($E$,$\cal G$,$P^\star$,$P_{k}$)

INPUT: 
\begin{itemize}
\item $E$: intervention $(X,x,Y)$ where $X, Y \subset V$, $Y\cap X = \emptyset$ and $x$ is a value assignment for $X$

\item $\cal G =$ $\{G_1, G_2,... G_N\}$: set of $N$ causal models over a set $V$ of observed variables and a set $U$ of unobserved variables (hidden confounders)

\item $P^\star$: probability distribution of the observed variables in the true causal model $M^\star$, without interventions
\item $P_{k}$: set of causal effects $P_{k}(Y|do(X=x))$ for all $G_k$ in $\cal G$
\end{itemize}

OUTPUT:
\begin{itemize}
\item {\em PI}: number of pairs of causal graphs in $\cal G$ that are distinguishable with intervention $E$
\end{itemize}

\begin{enumerate}

\item let PI be the number of pairs of graphs $G_k$, $G_l$ in $\cal G$ for which $G_k \not\approx_{P_E} G_l$;

\item return {\em PI};
\end{enumerate}

\caption{The PowerOfIntervention function}
\label{fig:countdv}
\medskip\hrule
\end{figure}


\subsection{Select Interventions}

Intervening and observing variables has a cost. We represent the cost of intervening or observing variables with cost functions $C_I(X=x)$ and $C_O(Y)$ respectively. Our aim is to find the sequence of interventions which eliminates the maximal number of candidate graphs at the lowest possible cost.

Given a set of candidate causal graphs $\mathcal {G}$, some graphs are distinguishable from others in the set, and some graphs are not distinguishable. We provide an example in Figure~\ref{fig:SelectInterventionTables}, which shows a set of candidate causal graphs $\mathcal {G} =\{G_1, G_2, G_3, G_4, G_5\}$ and a method to find interventions that maximize the distinguishability among the graphs at the lowest possible cost. 

In step {\em a} we find all maximal non-distinguishable subsets of graphs $\hat{\mathcal {G}} = \{\mathcal {G}_1, \mathcal {G}_2...\}$ in $\mathcal {G}$ so that for every subset $\mathcal {G}_r \in \hat{\mathcal {G}}$, $PI(E,\mathcal {G}_r,$$P)=0$ for all $E$, and for every pair of different subsets $\mathcal {G}_r, \mathcal {G}_s \in \hat{\mathcal {G}}$, 
there exist interventions $E$ for which $PI(E,\mathcal {G}_r \cup \mathcal {G}_s,P) > 0$. This means no interventions exist that distinguish pairs of graphs within a subset, but there exist interventions that distinguish graphs from different subsets. Note that a graph may belong to several subsets if it is not distinguishable from the other graphs in these subsets. In the example, $G_2$ is not distinguishable from $G_1$ and $G_3$. Interventions $E_3$ and $E_5$ make $G_1$ and $G_3$ distinguishable from each other, but not $G_2$. This is due to $P_1(Y|do(X=x)) \neq P_3(Y|do(X=x))$, however $P_2(Y|do(X=x)) = \emptyset$, $P_1(Y|do(X=x)) \neq P(Y)$ and $P_3(Y|do(X=x)) \neq P(Y)$.


In step {\em b}, we find all minimal sets of interventions $\hat {\mathcal {E}} = \{\mathcal {E}_1,\mathcal {E}_2...\}$ that split all the subsets of non-distinguishable graphs in $\hat{\mathcal {G}}$. In the example, the sets of interventions $\mathcal {E}_1 = \{E_1,E_2,E_5\}$, $\mathcal {E}_2 = \{E_1,E_3,E_4\}$... split all subsets of non-distinguishable graphs. If the set of candidate graphs includes the {\em true} graph, then any of these sets of interventions allows us to eliminate all subsets of graphs other than the subset where the {\em true} graph is. Also, no further intervention will be able to eliminate any additional graphs.

In step {\em c}, we select the set of interventions $\mathcal {E}_t$ with the smallest total cost. This means $\mathcal {E}_t$ has the smallest cost among all sets of interventions that are able to eliminate all candidate graphs except the subset of non-distinguishable graphs where the {\em true} graph is.

At this stage, we can apply a strategy to select the order for the interventions in $\mathcal {E}_t$. In the example, the smallest cost set of interventions is $\mathcal {E}_2=\{E_1, E_3, E_4\}$. If the {\em true} graph is $G_4$ and we choose intervention $E_3$ first, we will need additional interventions as $E_3$ does not split $G_4$ from the other subsets. However, if we choose $E_4$ first we find the {\em true} graph directly. If the {\em true} graph is $G_1$ we need the three interventions $E_1, E_3, E_4$ in order to eliminate all graphs other than the subset of non-distinguishable graphs where $G_1$ is, and there is no other smaller set of interventions able to do that. From these examples we can see that it is better to prioritize interventions that split single subsets of graphs without requiring the entire set of selected interventions. If the {\em true} graph belongs to these subsets we will avoid the cost of the remaining interventions. Our strategy is to select first the subsets of interventions in $\mathcal {E}_t$ that split single subsets of graphs, starting with the smallest cost ones. With each intervention we eliminate subsets of graphs from the candidate set, so we use an active learning process to adjust the strategy after every intervention.

\begin{figure}[H]
\begin{center}
\includegraphics[width=1.2\textwidth]{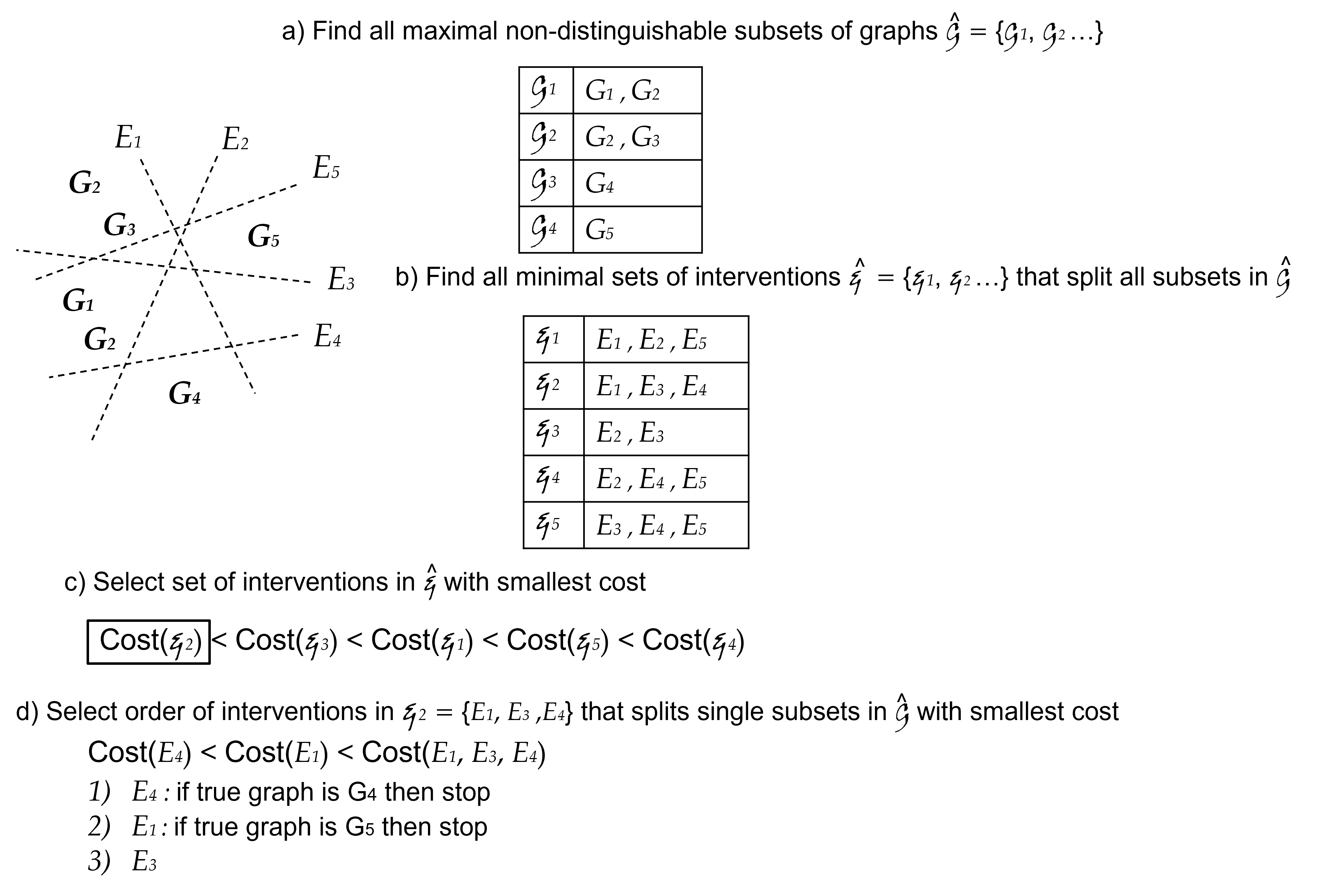}
\end{center}
\caption{Method to find the set of interventions that splits all non-distinguishable subsets of graphs with the smallest cost.}
\label{fig:SelectInterventionTables}
\end{figure}

\begin{figure}[H]
\hrule\medskip
Function \textbf{SelectIntervention}($\mathcal {E}, \hat{\mathcal {G}}, P^\star, P_{k}, C_I, C_O$)

INPUT: 
\begin{itemize}
\item $\mathcal {E}$: set of interventions that splits all subsets of non-distinguishable graphs in $\hat{\cal G}$
\item $\hat{\cal G} = \{\mathcal {G}_1, \mathcal {G}_2...\}$: subsets of non-distinguishable graphs

\item $P^\star$: probability distribution of the observed variables in the true causal model $M^\star$, without interventions
\item $P_{k}$: set of causal effect identifications $P_{k}(Y|do(X=x))$, $\forall G_k \in \cal G$
\item $C_I(X,x)$: cost of intervention $do(X=x)$, $\forall X, x$
\item $C_O(Y)$: cost of observing $Y$, $\forall Y$
\end{itemize}

OUTPUT:
\begin{itemize}
\item $E = (X,x,Y)$: intervention with smallest cost in the subset of interventions $I_{min} \subset \mathcal{E}$: $I_{min}=\{(X_1,x_1,Y_1), (X_2,x_2,Y_2), ... (X_i,x_i,Y_i),..\}$ that splits any subset of graphs in $\hat{\cal G}$ from all other subsets with smallest total cost $\sum_i (C_I(X_i,x_i) + C_O(Y_i))$
\end{itemize}

\begin{enumerate}

\item if {\em PowerOfIntervention}($E,\hat{\cal G},P^\star,P_{k})=0$ for all $E \in \mathcal {E}$ then return NA;

 \item let $\hat{I} = \{I_1, I_2, ...\}$ be all subsets of $\mathcal {E}$ that split any subset of graphs  in $\hat{\cal G}$ from all other subsets, i.e., for every $\mathcal {G}_i \in \hat{\cal G}$ there exists $I_j \in \hat{I}$ that includes interventions $E$ for which {\em PowerOfIntervention}$(E,\mathcal {G}_i \cup \mathcal {G}_s,P^\star,P_{k})>0$ for all ${G}_s \in \hat{\cal G}$, ${G}_s \neq {G}_i$;

\item let  $I_{min}$ be the set in $\hat{I}$ with smallest total cost $\sum_i (C_I(X_i,x_i) + C_O(Y_i))$;

\item let $E = (X,x,Y)$ be the intervention with smallest cost in $I_{min}$;

\item return $E$;

\end{enumerate}

\caption{The SelectIntervention function}
\label{fig:select-int}
\medskip\hrule
\end{figure}

\subsection{Select Graphs}

The {\em SelectGraphs} function selects the subsets of non-distinguishable graphs for which the causal effect from an intervention $E=(X,x,Y)$ is the same as the effect in the {\em true} graph $G^\star$, as well as subsets for which the effect is unknown (empty set) while the effect in $G^\star$ is not $P^{\star}(Y)$.

From the three scenarios of distinguishability from Definition~\ref{def:distinguishability_int}, and the soundness and completeness of do-calculus, we can see that if $P_k(Y|do(X=x)) \neq \emptyset$ then we select $G_k$ only if $P_k(Y|do(X=x))$ is the same joint distribution that the joint distribution $P^{\star}(Y|do(X=x))$ in $M^{\star}$. If $P_k(Y|do(X=x)) = \emptyset$ then we select $G_k$ only if
the effect of the intervention in $M^{\star}$ is not $P^{\star}(Y)$. The graphs for which the effect is unknown have a {\em hedge} for $X$,$Y$ \parencite{shpitser2006identification} so the effect in $G^\star$ is not $P^{\star}(Y)$ and we cannot eliminate these graphs.

\begin{figure}[H]
\hrule\medskip
Function \textbf{SelectGraphs}($\hat{\cal G}$,$E$,$P^\star(Y,do(X=x))$,$P_{k}$)

INPUT: 
\begin{itemize}
\item $\hat{\cal G} = \{\mathcal {G}_1, \mathcal {G}_2...\}$: subsets of non-distinguishable graphs
\item $E$: intervention $(X,x,Y)$ where $X, Y \subset V$, $Y\cap X = \emptyset$ and $x$ is a value assignment for $X$
\item $P^\star(Y|do(X=x))$: joint probability distribution of $Y$ upon performing the intervention $X=x$ on the {\em true} model $M^\star$

\item $P_{k}$: set of causal effect identifications $P_{k}(Y|do(X=x))$, $\forall G_k \in \cal G$

\end{itemize}

OUTPUT:
\begin{itemize}
\item $\hat{\cal G}' \subset \hat{\cal G}$: subsets of non-distinguishable graphs in $\hat{\cal G}$ for which the causal effect from the intervention $E$ is the same as the effect in the {\em true} graph $G^\star$, as well as subsets for which the effect is unknown (empty set) while the effect in $G^\star$ is not $P^{\star}(Y)$. 
\end{itemize}

\begin{enumerate}

\item let $\hat{\cal G}'$ be all subsets of non-distinguishable graphs in $\hat{\cal G}$ for which, $\forall G_k \in \hat{\cal G}'$:
\begin{itemize}
\item $P_{k}(Y|do(X=x)) \neq \emptyset$ and $P_{k}(Y|do(X=x)) = P^\star(Y|do(X=x))$;

or
\item $P_{k}(Y|do(X=x)) = \emptyset$ and $P^\star(Y|do(X=x)) \neq P^\star(Y)$;

\end{itemize}

\item return $\hat{\cal G}'$;

\end{enumerate}

\caption{The SelectGraphs function}
\label{fig:select-causal}
\medskip\hrule
\end{figure}

\subsection{ALCAM algorithm}

The {\em ALCAM} algorithm for active learning of causal graphs is shown in Algorithm~\ref{alg:active-algo}. The algorithm starts with a set $\cal G$ of candidate causal graphs compatible with any previously available information about the $true$ graph. 

Using the predicted causal effects obtained by the {\em Predictor} function given in Figure~\ref{fig:predictor}, and the power of interventions obtained by the {\em PowerOfIntervention} function given in Figure~\ref{fig:countdv},
the algorithm finds all the subsets of non-distinguishable graphs in $\cal G$. Then it finds the set of interventions that splits $\cal G$ into these subsets with the smallest cost. {\em ALCAM} then selects the next intervention by assigning an order of priority to this set, with {\em SelectIntervention}. Subsets of interventions that split one subset of non-distinguishable graphs from the rest of graphs at the lowest cost are done first. {\em ALCAM} calls the {\em Oracle} given in Figure~\ref{fig:oracle} to obtain the interventional probability distribution of the {\em true} model from the selected interventions, and eliminates from $\cal G$ all subsets of non-distinguishable graphs where the causal effect differs from the interventional probability distribution of the {\em true} model, with {\em SelectGraphs}.

This sequence of three steps {\em SelectIntervention}, {\em Oracle} and {\em SelectGraphs} is done iteratively until no interventions exist with positive {\em PowerOfIntervention} in $\cal G$. The {\em SelectIntervention} function returns {\em NA} and at that stage the selected subset of causal graphs is the one for which the causal effects of each intervention, at each iteration of the algorithm, are the same than what the {\em Oracle} returns as effects in the {\em true} graph. At that stage if there is only one graph left in $\cal G$, then we have identified the {\em true} graph. If there are several candidate graphs left, due to the presence in $\cal G$ of graphs that are non-distinguishable from the {\em true} graph, then we test the edges and hidden confounders that differ among the remaining candidates, to confirm or refute their presence, using conditional independence testing (functions {\em idEdges} shown in Figure~\ref{fig:idEdges} and {\em idHidden}, shown in Figure~\ref{fig:idHidden}). Note that the interventions have reduced the conditional independence testing to the minimal possible set of graphs, edges and hidden confounders. Also, in some sub-cases conditional independence testing is not required at all. 

\begin{lemma}
\label{lem:ALCAM_Distinguish}
The subset of graphs in $\hat{\mathcal {G}}$ that ALCAM finds in line 5 consists of all the graphs that are non-distinguishable from the true graph. All other graphs are removed from $\hat{\mathcal {G}}$.
\end{lemma}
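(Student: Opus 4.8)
The plan is to prove the two set inclusions separately, using the loop's termination condition for one direction and a survival invariant for the other. Write $\mathcal S$ for the set of graphs that survive the pruning loop (the content of $\hat{\mathcal G}$ at line~5) and $\mathcal N=\{G_k\in\mathcal G : G_k\approx_{P}G^\star\}$ for the candidates non-distinguishable from the true graph. I would show $\mathcal S\subseteq\mathcal N$ and $\mathcal N\subseteq\mathcal S$. Since no intervention distinguishes a graph from itself we have $G^\star\approx_{P}G^\star$, so $G^\star\in\mathcal N$, and the inclusion $\mathcal N\subseteq\mathcal S$ in particular re-establishes that $G^\star$ is never discarded. A useful preliminary observation is that at termination the survivors are pairwise non-distinguishable, so $\hat{\mathcal G}$ collapses to a single maximal subset, which is why the statement can speak of ``the subset of graphs in $\hat{\mathcal G}$''.

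For $\mathcal S\subseteq\mathcal N$ I would argue directly from termination. The loop halts exactly when \emph{SelectIntervention} returns NA, i.e.\ when $PI(E,\cdot,P^\star)=0$ for every admissible $E$ over the surviving set (Definition~\ref{def:discr_value}). By the definition of \emph{PowerOfIntervention}, $PI=0$ everywhere means that no two survivors are distinguishable under any intervention. Once $G^\star$ is known to survive, every survivor is therefore non-distinguishable from $G^\star$, giving $\mathcal S\subseteq\mathcal N$. Termination itself follows from finiteness of $\mathcal G$: each iteration with $PI>0$ strictly shrinks the candidate set, because by Lemma~\ref{lem:DistinguishPair} the distinguishing intervention witnesses that one of the two split subsets cannot be $G^\star$, and \emph{SelectGraphs}, fed the oracle's true effect $P^\star(Y|do(X=x))$, discards precisely that non-matching subset.

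The harder inclusion $\mathcal N\subseteq\mathcal S$ is the survival invariant: no call to \emph{SelectGraphs} ever removes a graph non-distinguishable from $G^\star$. Since pruning acts subset-by-subset on the maximal \emph{globally} non-distinguishable subsets $\hat{\mathcal G}$, each $G_k\in\mathcal N$ sits in a subset all of whose members are non-distinguishable from $G^\star$ on every intervention, and I must show such a subset always passes the \emph{SelectGraphs} test. Fix an intervention $E=(X,x,Y)$ and write $e^\star=P^\star(Y|do(X=x))$ for the oracle's answer. Because $G^\star$ is induced by $M^\star$, Theorem~\ref{thm:DoCalculusComplete} (soundness and uniqueness) forces $P_{G^\star}(Y|do(X=x))$ to be either the singleton $\{e^\star\}$ or $\emptyset$, the latter only when a hedge forces $e^\star\neq P^\star(Y)$; in both cases $G^\star$ meets the retention criterion. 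The core is then a case analysis along the rows of Table~\ref{tab:intervention_evaluation}: when both predictions are non-empty and differ from $P^\star(Y)$, non-distinguishability (row~3) forces $P_k=P_{G^\star}=\{e^\star\}$, so $G_k$ is retained; when $P_k=\emptyset$, non-distinguishability (rows~6--7 and their symmetric counterparts) forces $e^\star\neq P^\star(Y)$, so $G_k$ is retained by the empty-prediction clause.

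I expect the main obstacle to be the one remaining configuration: $P_{G^\star}=\emptyset$ while $P_k$ is a non-empty prediction with $P_k\neq\{P^\star(Y)\}$ and $P_k\neq e^\star$ (row~6 with the candidate in the non-empty role), which is non-distinguishable under $E$ yet would fail the first \emph{SelectGraphs} clause. Because retention is decided per subset, a single such member would drag $G^\star$'s own subset --- and hence $G^\star$ --- out of $\hat{\mathcal G}$, so I must rule it out. The crux is therefore to show that any graph realizing this empty-versus-non-empty prediction pattern is in fact \emph{globally} distinguishable from $G^\star$ and so lies in a different subset that \emph{SelectGraphs} may freely discard. I would close this using the structural characterizations of Lemmas~\ref{lem:DistinguishTopological}, \ref{lem:DistinguishEdges} and \ref{lem:DistinguishHC}, which pin non-distinguishability down to differences in hedge-protected edges or confounders, to argue that membership in $G^\star$'s subset guarantees agreement with the oracle on every intervention; Lemma~\ref{lem:multi_prob} handles any multi-valued prediction uniformly with the singleton case, since such a $G_k$ is automatically not induced.
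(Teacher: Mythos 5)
Your two-inclusion decomposition is sound, the termination argument for $\mathcal S\subseteq\mathcal N$ works, and your case analysis of the survival invariant is in fact more careful than the paper's own proof, which is a line-by-line walkthrough of the loop (``line 5c removes all subsets distinguishable from the true graph'') that never confronts the configuration you isolate. The problem is that the step you defer --- ruling out the pattern $P_{G^\star}(Y|do(X{=}x))=\emptyset$, $P_k\neq\emptyset$, $P_k\neq\{P^\star(Y)\}$, $P_k\neq e^\star$ by showing any such $G_k$ is \emph{globally} distinguishable from $G^\star$ --- cannot be carried out, because that claim is false, and the paper's own Figure~\ref{fig:DistinguishThreeGraphs} is a counterexample. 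There $G_3$ has a hedge for every intervention on $X_4$, $P_1\neq P_2$ are both non-empty and both different from $P(X_4)$, and the paper explicitly concludes $G_3\approx_{P}G_1$ and $G_3\approx_{P}G_2$. Take $G^\star=G_3$, so $\mathcal N=\{G_1,G_2,G_3\}$. Because the effect is not identifiable in $G^\star$, the oracle answer $e^\star$ is \emph{not} a function of $(G^\star,P^\star)$ (by Definition~\ref{def:identifiability}, distinct models $M^\star$ inducing the same $G^\star$ and $P^\star$ can yield different interventional distributions), whereas $P_1$ and $P_2$ are computed from $P^\star$ alone. Since $P_1\neq P_2$, at least one of them differs from $e^\star$ no matter which model generated the data; that graph is a member of $\mathcal N$ which fails both retention clauses of \emph{SelectGraphs}, so $\mathcal N\subseteq\mathcal S$ fails outright. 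Worse, under the per-subset semantics of \emph{SelectGraphs} (a subset is kept only if \emph{all} its members pass), the discarded member drags a subset containing $G^\star$ out of $\hat{\mathcal G}$, and if $e^\star\notin\{P_1,P_2\}$ both subsets $\{G_1,G_3\}$ and $\{G_2,G_3\}$ are eliminated and the true graph itself is lost.

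So the gap is genuine, but it is not merely a defect of your write-up: it is a counterexample to the lemma as stated, which the paper's proof papers over by silently equating ``removed by \emph{SelectGraphs}'' with ``distinguishable from the true graph'' --- a strictly larger family whenever $G^\star$ has a hedge for a performed intervention. Lemmas~\ref{lem:DistinguishTopological}, \ref{lem:DistinguishEdges} and \ref{lem:DistinguishHC} cannot close it: they give sufficient conditions for distinguishability and say nothing that would force the non-empty prediction of a non-distinguishable companion to agree with a non-identifiable real-world effect. A repair has to change something: either weaken the conclusion (the survivors form a subset of $\mathcal N$, with $G^\star$ guaranteed to survive only under per-graph removal together with the paper's hedge-implies-effect assumption), or modify \emph{SelectGraphs} so that a graph whose non-empty prediction contradicts the oracle is removed individually without discarding its whole subset, and restate the lemma for that semantics. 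Your instinct that this case is ``the crux'' was exactly right; the resolution you propose for it is the one step that does not survive scrutiny.
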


\begin{proof}
$\hat{\mathcal {E}}_t=\{E_1, E_2, ..., E_n\}$ is a set of interventions for which
{\em PowerOfIntervention}$(E_i,\mathcal {G}_r \cup \mathcal {G}_s,P^\star,P_{k})>0$ for every pair of subsets of non-distinguishable graphs $\mathcal {G}_r$, $\mathcal {G}_s$ in $\hat{\mathcal {G}}$. Line 5a selects one intervention in $\hat{\mathcal {E}}_t$ and line 5b queries the Oracle for its effect on the {\em true} model. Line 5c removes from $\hat{\mathcal {G}}$ all subsets of graphs that are distinguishable from the {\em true} graph with the intervention and line 5d removes the intervention from the set $\hat{\mathcal {E}}_t$. At each iteration of the loop, line 5 selects another intervention in $\hat{\mathcal {E}}_t$ until only one subset of non-distinguishable graphs remain in  $\hat{\mathcal {G}}$, which is the subset of graphs that are non-distinguishable from the true graph. 
\qed\end{proof}

\begin{lemma}
\label{lem:ALCAM_Distinguish_n_int}
The maximal number of interventions in line 5 is at most $|\cal G|$ - $|{\cal G}_q|$, where $\cal G$ is the set of candidate graphs, and ${\cal G}_q \in \hat{\mathcal {G}}$ is the subset of candidate graphs that are non-distinguishable from the true graph.
\end{lemma}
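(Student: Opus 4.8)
The plan is to turn the claim into a monovariant (potential) argument on the size of the candidate set. By Lemma~\ref{lem:ALCAM_Distinguish}, the loop in line 5 starts from the full candidate set $\cal G$ and stops exactly when the only graphs left are those of $\mathcal{G}_q$, the subset non-distinguishable from the true graph; so over the whole execution precisely $|{\cal G}| - |\mathcal{G}_q|$ graphs are discarded. Since each iteration of line 5 performs exactly one intervention, it suffices to show that \emph{every} iteration discards at least one candidate graph: the number of interventions is then bounded above by the number of graphs discarded, i.e.\ by $|{\cal G}| - |\mathcal{G}_q|$.

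First I would record that no graph of $\mathcal{G}_q$ is ever removed. The function \textbf{SelectGraphs} keeps a graph $G_k$ whenever its predicted effect $P_k(Y|do(X=x))$ equals the effect $P^\star(Y|do(X=x))$ returned by the \textbf{Oracle}, or is empty while that effect differs from $P^\star(Y)$. By Theorem~\ref{thm:DoCalculusComplete} (uniqueness and soundness) the effect predicted from the true graph coincides with the actual effect on $M^\star$, and by Definition~\ref{def:distinguishability_int} any graph non-distinguishable from the true graph behaves identically under every intervention; hence each such graph passes the test and survives. Consequently every discarded graph lies in ${\cal G}\setminus\mathcal{G}_q$, which is consistent with the target count.

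Next I would prove the central step: each executed intervention discards at least one graph. An iteration runs only when \textbf{SelectIntervention} returns an $E$ with $PI(E,\cdot,P^\star)>0$ (otherwise it returns \emph{NA} and the loop halts), so by Definition~\ref{def:discr_value} two still-present graphs $G_k,G_l$ are distinguishable by $E$. The key observation is that two graphs that both survive \textbf{SelectGraphs} must be non-distinguishable: since the Oracle returns a single distribution $p = P^\star(Y|do(X=x))$, a survivor has predicted effect either $\{p\}$ or (when $p\neq P^\star(Y)$) $\emptyset$, and a short case analysis over the clauses of Definition~\ref{def:distinguishability_int}---the non-empty/non-empty, empty/$\{P^\star(Y)\}$ and $\{P^\star(Y)\}$/empty cases---shows that any two such survivors are non-distinguishable. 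Contrapositively, the distinguishable pair $G_k,G_l$ cannot both survive, so at least one of them is removed. This case analysis, together with verifying that members of $\mathcal{G}_q$ are never caught by it, is the main obstacle; it mirrors the reasoning already carried out after Lemma~\ref{lem:DistinguishPair}, and it is where I would be careful about the degenerate subcases in which the true graph's own predicted effect is empty (a hedge), using the fact that a hedge forces an actual effect and hence $p\neq P^\star(Y)$.

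Finally I would assemble the monovariant: the candidate set decreases by at least one at every iteration, it starts at $|{\cal G}|$ and ends at $|\mathcal{G}_q|$ by Lemma~\ref{lem:ALCAM_Distinguish}, and it never loses a graph of $\mathcal{G}_q$. Therefore line 5 executes at most $|{\cal G}| - |\mathcal{G}_q|$ interventions, as claimed.
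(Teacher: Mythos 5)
Your proof is correct and takes essentially the same route as the paper: the paper's own proof is the one-line observation that in the worst case each iteration of line 5 removes a single one-graph subset, with the endpoint $\mathcal{G}_q$ left untouched, which is exactly your counting (monovariant) argument. Your version simply makes explicit the two facts the paper leaves implicit --- that every executed intervention discards at least one candidate (your survivor case analysis over Definition~\ref{def:distinguishability_int}) and that the loop terminates exactly at $\mathcal{G}_q$ (Lemma~\ref{lem:ALCAM_Distinguish}) --- so it is a more rigorous rendering of the same argument.
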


\begin{proof}
Assume the worst case, where line 5 removes from $\hat{\mathcal {G}}$ only one subset of graphs consisting on one graph only, at each recurrence.
\qed\end{proof}

\begin{lemma}
\label{lem:ALCAM_Graph}
The subset of graphs in $\hat{\mathcal {G}}$ that ALCAM finds in line 5 have:
\begin{enumerate}

\item the same ancestral relations than the true graph;

\item the same edges than the true graph, except graphs that have edges $ed(V_i,V_j)$ that differ among candidates and have a hedge for $P(V_j|block(V_i,V_j))$;

\item the same hidden confounders than the true graph, except graphs that have hidden confounders $hc(V_i,V_j)$ that differ among the candidates and either have a hedge for $P(V_j|do(V_i))$ where $V_i \in An(V_j)$, or have $V_i \notin An(V_j)$ and $V_j \notin An(V_i)$;

\end{enumerate}
\end{lemma}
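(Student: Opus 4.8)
The plan is to reduce the statement entirely to the distinguishability characterization already established in Lemma~\ref{lem:Dist_Graphs} (equivalently, Lemmas~\ref{lem:DistinguishTopological}, \ref{lem:DistinguishEdges} and \ref{lem:DistinguishHC}), using Lemma~\ref{lem:ALCAM_Distinguish} as the bridge. By Lemma~\ref{lem:ALCAM_Distinguish}, the subset of graphs surviving line 5 is precisely the set of candidates that are non-distinguishable from the true graph $G^\star$ under $P^\star$. Hence it suffices to show that any graph $G_k$ with $G_k \approx_{P^\star} G^\star$ satisfies the three listed conditions, and the whole argument becomes an exercise in taking contrapositives of the three distinguishability lemmas.

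First I would establish condition~(1). By the contrapositive of Lemma~\ref{lem:DistinguishTopological}, if $G_k$ and $G^\star$ had different ancestral relations they would be distinguishable; since $G_k \approx_{P^\star} G^\star$, the two graphs must share the same ancestral relations. This simultaneously discharges the precondition of Lemma~\ref{lem:DistinguishEdges} needed for the next step.

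Next, for condition~(2), I would argue by the contrapositive of Lemma~\ref{lem:DistinguishEdges}. Given that $G_k$ and $G^\star$ have the same ancestral relations, suppose there is an edge $ed(V_i,V_j)$ present in one graph and absent in the other. Lemma~\ref{lem:DistinguishEdges} shows such graphs are distinguishable \emph{unless} there is a hedge for $P(V_j|do(block(V_i,V_j)))$ in one of them; since $G_k$ is non-distinguishable from $G^\star$, every surviving edge difference must be accompanied by such a hedge, which is exactly the exception clause of condition~(2). Condition~(3) is handled analogously via the contrapositive of Lemma~\ref{lem:DistinguishHC}: for graphs that agree on the observable structure, a hidden confounder $hc(V_i,V_j)$ present in only one of them forces distinguishability unless there is a hedge for $P(V_j|do(V_i))$ with $V_i \in An(V_j)$; combining this with the symmetric role of the bidirected edge (applying the lemma also in the $V_j,V_i$ direction) leaves only the two escape routes stated, namely a hedge in the relevant direction, or $V_i \notin An(V_j)$ together with $V_j \notin An(V_i)$.

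The step I expect to be the main obstacle is making condition~(3) precise in the presence of condition~(2)'s exceptions. Lemma~\ref{lem:DistinguishHC} assumes the two graphs share the \emph{same observable graph}, yet a surviving $G_k$ may already differ from $G^\star$ in a hedge-protected edge, so it need not have an identical observable graph. I would address this by running the hidden-confounder analysis componentwise over the C-component factorization, restricting attention to the parts of the graph where the observable structure does coincide, and exploiting that the confounder-difference argument of Lemma~\ref{lem:DistinguishHC} only invokes the local C-component containing $V_i$ and $V_j$ together with the non-cancellation assumption on the remaining C-components. The symmetry bookkeeping for the ancestral direction in condition~(3) is the remaining delicate point and must be stated carefully to justify the two-sided exclusion $V_i \notin An(V_j)$ and $V_j \notin An(V_i)$.
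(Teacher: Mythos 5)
Your proposal is correct and takes essentially the same route as the paper: the paper's proof is just two sentences, citing Lemma~\ref{lem:ALCAM_Distinguish} to identify the surviving set with the graphs non-distinguishable from the true graph, and then invoking Lemma~\ref{lem:Dist_Graphs} (the aggregate of Lemmas~\ref{lem:DistinguishTopological}, \ref{lem:DistinguishEdges} and \ref{lem:DistinguishHC}) in contrapositive form, exactly as you do. If anything, you are more careful than the paper: your observation that Lemma~\ref{lem:DistinguishHC}'s ``same observable graph'' precondition need not hold for survivors that already carry hedge-protected edge differences is a genuine subtlety the paper's proof silently skips over.
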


\begin{proof}
By lemma~\ref{lem:ALCAM_Distinguish} the subset of graphs that ALCAM finds in line 5 consists of all the graphs non-distinguishable from the true graph. Lemma~\ref{lem:Dist_Graphs} provides the three conditions under which two causal graphs are distinguishable.

\qed\end{proof}

\begin{lemma}
\label{lem:ALCAM_Edges}
If there exist edge differences among the graphs in $\hat{\mathcal {G}}$ in line 8, ALCAM uses at most $Q$ conditional independence tests to eliminate from $\hat{\mathcal {G}}$ the graphs that have different edges than the {\em true} graph, where $Q$ is the number of edges $ed(V_i,V_j)$ that differ and for which there is a hedge for $P(V_j|do(block(V_i,V_j)))$ in one or more of the graphs.
\end{lemma}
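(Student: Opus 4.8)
The plan is to combine Lemma~\ref{lem:ALCAM_Graph} (which characterizes exactly which edge differences can survive phase~1) with Lemma~\ref{lem:CI_Edges} (which shows that a single conditional independence test settles any one differing edge). First I would invoke Lemma~\ref{lem:ALCAM_Graph}, part~2: at the point ALCAM reaches line~8, every graph remaining in $\hat{\mathcal{G}}$ agrees with the {\em true} graph on all edges {\em except} possibly those edges $ed(V_i,V_j)$ that both differ among the surviving candidates and admit a hedge for $P(V_j|do(block(V_i,V_j)))$ in one or more of the graphs. By definition these are precisely the edges counted by $Q$, so there are at most $Q$ edges on which the survivors can still disagree.

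Next I would show that each such differing edge is settled by exactly one conditional independence test. Fix an edge $ed(V_i,V_j)$ on which the remaining candidates disagree. Then the subset ${\cal G}_{ed}\subseteq\hat{\mathcal{G}}$ of graphs containing the edge and the subset $\hat{\mathcal{G}}\setminus{\cal G}_{ed}$ of graphs lacking it are both nonempty, so the hypotheses of Lemma~\ref{lem:CI_Edges} are satisfied. Choosing $dsep(V_i,V_j)$ as in that lemma and performing the induced test ``in the real world'' determines whether $V_i$ and $V_j$ are dependent under the intervention on $dsep(V_i,V_j)$ in $M$; since the {\em true} graph lies in $\hat{\mathcal{G}}$, the outcome tells us whether it belongs to ${\cal G}_{ed}$ or to $\hat{\mathcal{G}}\setminus{\cal G}_{ed}$. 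Discarding every candidate on the incorrect side removes, in a single test, all remaining disagreements on this particular edge.

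Iterating over the differing edges, each test resolves at least one of the $Q$ edge disagreements (and possibly several at once, when candidates happen to couple their edge choices), so after at most $Q$ tests no edge difference remains and every graph whose observable edges differ from the {\em true} graph has been pruned. The main point to argue carefully is that the edges enumerated by Lemma~\ref{lem:ALCAM_Graph}, part~2 are exactly the ones that can still require testing---equivalently, that phase~1 has already eliminated every edge difference {\em not} protected by a hedge---so that the hedge-count $Q$ is a genuine upper bound with no extra tests hidden in the non-hedge cases. A secondary, purely bookkeeping subtlety is that as $\hat{\mathcal{G}}$ shrinks after each test, a previously differing edge may cease to differ as a side effect of another test; since we claim only the upper bound $Q$, this can only reduce the count and does not threaten the bound.
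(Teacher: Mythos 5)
Your proposal is correct and follows essentially the same route as the paper: the paper's (much terser) proof likewise argues that the only edge differences surviving phase~1 are the hedge-protected ones enumerated by Lemma~\ref{lem:ALCAM_Graph} (via Lemma~\ref{lem:DistinguishEdges}), and that the \emph{idEdges} function then performs one conditional independence test per such edge, exactly the mechanism of Lemma~\ref{lem:CI_Edges} that you spell out. Your write-up is in fact more explicit than the paper's about why each test settles an edge and why $Q$ is a genuine upper bound, but the underlying argument is the same.
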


\begin{proof}
If the true graph is identified in line 6, ALCAM ends at line 7. If there are no edge differences among graphs in $\hat{\mathcal {G}}$, ALCAM does not execute function $idEdges(\hat{\mathcal {G}})$ in line 8 and moves onto line 9. Finally, if there are edge differences among graphs in $\hat{\mathcal {G}}$, ALCAM executes function $idEdges(\hat{\mathcal {G}})$ and performs conditional independence tests on the edges that differ from the {\em true} graph in the subset of graphs found in line 5 as specified in lemma~\ref{lem:DistinguishEdges}.

\qed\end{proof}

\begin{lemma}
\label{lem:ALCAM_HC}
If there exist hidden confounder differences among the graphs in $\hat{\mathcal {G}}$ in line 9, ALCAM uses at most $H$ conditional independence tests to eliminate from $\hat{\mathcal {G}}$ the graphs that have different hidden confounders than the true graph, where $H$ is the number of hidden confounders $hc(V_i,V_j)$ that differ for which there is a hedge for $P(V_j|do(V_i))$ in one or more of the graphs, or for which $V_i \notin An(V_j)$ and $V_j \notin An(V_i)$.
\end{lemma}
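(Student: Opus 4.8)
The plan is to mirror the structure of the proof of Lemma~\ref{lem:ALCAM_Edges}, dispatching first on the cases in which no confounder tests are needed and then bounding the remaining work by the number of ambiguous confounders. First I would observe that if the {\em true} graph has already been isolated before line 9 (either by the single-valued interventions of line 5 or by the edge tests of line 8), the algorithm terminates and performs zero confounder tests, so the bound holds vacuously; likewise, if $\hat{\mathcal {G}}$ contains no hidden-confounder differences at line 9, then $idHidden$ is not invoked and again no tests are performed.

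The substantive case is when genuine confounder differences remain. Here I would invoke Lemma~\ref{lem:ALCAM_Graph} together with the edge-pruning of line 8 to establish the key structural fact that all graphs surviving into line 9 share the {\em same observable graph} as the true graph: they agree on ancestral relations (item 1), and after line 8 removes the distinguishing edges of item 2 via $idEdges$, they agree on all observable edges. Consequently, by item 3 of Lemma~\ref{lem:ALCAM_Graph}, the only way two survivors can differ is in a hidden confounder $hc(V_i,V_j)$ that is non-distinguishable by single-valued interventions, i.e. one for which there is a hedge for $P(V_j|do(V_i))$ with $V_i \in An(V_j)$, or for which $V_i \notin An(V_j)$ and $V_j \notin An(V_i)$ --- exactly the $H$ confounders counted in the statement.

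With the observable graph thus known, I would then resolve each of these $H$ ambiguous confounders with a single conditional independence test: Theorem~\ref{thm:confounders_non_adjacent} supplies the test when $V_i$ and $V_j$ are non-adjacent, and Theorem~\ref{thm:confounders_adjacent} supplies it when they are adjacent, in both cases using the observed parents $O$ that line 8 has made available. Each such test determines whether $hc(V_i,V_j)$ is present in $M^\star$, and hence lets $idHidden$ discard every surviving graph that disagrees with the true graph on that confounder. Since there are exactly $H$ confounders that differ among the candidates and are not resolvable by interventions, at most $H$ tests suffice, giving the claimed bound.

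The main obstacle I anticipate is the first step of the substantive case: rigorously arguing that line 8 leaves all survivors with a common observable graph, so that the hypotheses of Theorems~\ref{thm:confounders_non_adjacent} and \ref{thm:confounders_adjacent} (which presume the observed parents $O$ are known) are actually satisfied when line 9 runs. This requires confirming that $idEdges$ resolves {\em every} edge on which the candidates disagree, not merely some of them, and that the confounder tests are insensitive to the precise value chosen for $O$, as those theorems assert. A secondary subtlety is justifying that one test per confounder location is enough even when many candidate graphs disagree simultaneously at the same $hc(V_i,V_j)$: the test reports a single yes/no about $M^\star$, which partitions the survivors correctly in one shot, so the count is governed by distinct confounder locations rather than by the number of graphs.
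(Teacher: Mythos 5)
Your proposal is correct and follows essentially the same route as the paper's (much terser) proof: dispatch the cases where no tests are needed, then bound the work by the $H$ confounders left ambiguous after line 5, resolved one conditional independence test per confounder location via the tests embodied in \emph{idHidden}. The details you add --- invoking Lemma~\ref{lem:ALCAM_Graph} and the edge-pruning of line 8 to establish a common observable graph, and citing Theorems~\ref{thm:confounders_non_adjacent} and \ref{thm:confounders_adjacent} for the actual tests --- are precisely what the paper's proof leaves implicit when it defers to Lemma~\ref{lem:DistinguishHC}, so this is an elaboration rather than a different argument.
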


\begin{proof}
If there are no hidden confounder differences among graphs $\hat{\mathcal {G}}$ in line 9, ALCAM does not execute the function $idHidden(\hat{\mathcal {G}})$ and ends at line 10. If there are hidden confounder differences among graphs in $\hat{\mathcal {G}}$, ALCAM executes the function $idHidden(\hat{\mathcal {G}})$ and performs conditional independence tests on the hidden confounders that differ from the {\em true} graph in the subset of graphs found in line 5, as specified in lemma~\ref{lem:DistinguishHC}
\qed\end{proof}

\begin{theorem}[ALCAM is Sound and Complete]
\label{thm:alcam}
ALCAM always returns the true causal graph $G^\star$. 
\end{theorem}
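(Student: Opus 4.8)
The plan is to prove the two halves of the statement separately: \emph{soundness} (ALCAM never discards $G^\star$) and \emph{completeness} (ALCAM eventually discards every candidate other than $G^\star$), following the three-phase structure of the algorithm --- the single-valued interventions of line~5, the edge tests of \emph{idEdges} in line~8, and the hidden-confounder tests of \emph{idHidden} in line~9.

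For soundness, I would argue that $G^\star$ survives every pruning operation. In phase~1, \emph{SelectGraphs} removes a subset only when its predicted effect $P_k(Y|do(X=x))$ disagrees with the value the \emph{Oracle} returns, which is by definition the actual effect $P^\star(Y|do(X=x))$ in $M^\star$. Since $G^\star$ is induced by $M^\star$, Theorem~\ref{thm:DoCalculusComplete} (uniqueness and soundness) guarantees that whenever the effect is identifiable in $G^\star$ its prediction matches the Oracle, and whenever it is not (a hedge is present, forcing $P_{G^\star}=\emptyset$) the true effect in $M^\star$ genuinely differs from $P^\star(Y)$; either way the \emph{SelectGraphs} criterion retains $G^\star$. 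In phases~2 and~3 the pruning is driven by conditional independence tests carried out in $M^\star$ (Lemma~\ref{lem:CI_Edges} and Theorems~\ref{thm:confounders_non_adjacent},~\ref{thm:confounders_adjacent}); because these tests report the genuine presence or absence of an edge or confounder in $M^\star$, and $G^\star$ has exactly that structure, $G^\star$ always agrees with the test outcome and is kept.

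For completeness, I would chain the ALCAM lemmas and then perform a case analysis on how an arbitrary $G_k\neq G^\star$ can differ from $G^\star$. By Lemma~\ref{lem:ALCAM_Distinguish}, after line~5 the candidate set equals the subset of graphs non-distinguishable from $G^\star$, and Lemma~\ref{lem:ALCAM_Graph} tells us those survivors share $G^\star$'s ancestral relations, share its observable edges except edges for which a hedge blocks the distinguishing intervention, and share its hidden confounders except those that are hedged or lie between non-ancestrally-related variables. Hence every surviving $G_k\neq G^\star$ differs from $G^\star$ only in such hedged or non-ancestral edges or confounders. I would then invoke Lemma~\ref{lem:ALCAM_Edges} to remove, in phase~2, all graphs whose observable edges differ, and Lemma~\ref{lem:ALCAM_HC} to remove, in phase~3, all graphs whose hidden confounders differ, so that after line~9 only $G^\star$ remains.

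The main obstacle is the completeness case analysis, specifically verifying that the three distinguishability categories of Lemma~\ref{lem:Dist_Graphs} together with the two conditional-independence phases genuinely \emph{exhaust} every possible difference between $G_k$ and $G^\star$: a difference in ancestral relation is caught in phase~1 (Lemma~\ref{lem:DistinguishTopological}), a residual edge difference necessarily carries a hedge and so is caught by \emph{idEdges} (Lemmas~\ref{lem:DistinguishEdges},~\ref{lem:ALCAM_Edges}), and a residual confounder difference is either hedged or non-ancestral and so is caught by \emph{idHidden} (Lemmas~\ref{lem:DistinguishHC},~\ref{lem:ALCAM_HC}). A delicate point within this argument is that the confounder tests of Theorems~\ref{thm:confounders_non_adjacent} and~\ref{thm:confounders_adjacent} presuppose the observable graph is already known, so I would emphasise that phases~1 and~2 must complete before phase~3 in order for the observed-parent sets used in those tests to be correct. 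I would also flag the standing genericity hypothesis invoked in Lemmas~\ref{lem:DistinguishEdges} and~\ref{lem:DistinguishHC} --- that the contributions of distinct C-components do not exactly cancel --- as the one non-combinatorial assumption on which the whole completeness argument ultimately rests.
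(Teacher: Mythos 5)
Your proposal is correct and follows essentially the same route as the paper's proof: chain Lemma~\ref{lem:ALCAM_Distinguish} (and Lemma~\ref{lem:ALCAM_Graph}) to characterize what survives the intervention phase of line~5, then case-analyse the hedge/non-ancestral exceptions and dispose of them with the conditional-independence phases of lines~8 and~9 via Lemmas~\ref{lem:ALCAM_Edges} and~\ref{lem:ALCAM_HC}. Your explicit soundness half (that $G^\star$ is never pruned, via Theorem~\ref{thm:DoCalculusComplete} and the \emph{SelectGraphs} retention criterion) and your flagging of the phase-ordering and non-cancellation assumptions are points the paper leaves implicit, but they are refinements of the same argument rather than a different one.
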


\begin{proof} 
Let $G^\star$ be the {\em true} causal graph. Let $\mathcal {G}$ be the set of candidate causal graphs and $\hat{\mathcal {G}}$ the subsets of non-distinguishable graphs in $\mathcal {G}$.
 By lemma~\ref{lem:ALCAM_Distinguish}
the subset of graphs in $\hat{\mathcal {G}}$ that ALCAM finds in line 5 consists of all the graphs that have the same ancestral relations than the {\em true} graph, the same edges than the {\em true} graph, and the same hidden confounders than the {\em true} graph, except graphs that have edges or hidden confounders under the conditions of lemma~\ref{lem:ALCAM_Distinguish}. If the exceptions do not exist, then the subset in $\hat{\mathcal {G}}$ consists of one graph only, ALCAM exits at line 7, and $\hat{\mathcal {G}}=\{G^{\star}\}$. If the exceptions do exist for edges, then $\hat{\mathcal {G}}$ contains graphs with edge differences, and ALCAM line 8 executes. By lemma~\ref{lem:DistinguishEdges} ALCAM line 8 eliminates from $\hat{\mathcal {G}}$ all graphs that have different edges than the {\em true} graph. If the exceptions do exist for hidden confounders, then $\hat{\mathcal {G}}$ contains graphs with hidden confounder differences, and ALCAM line 9 executes. By lemma~\ref{lem:DistinguishHC} ALCAM line 9 eliminates from $\hat{\mathcal {G}}$ all graphs that have different hidden confounders than the {\em true} graph, ALCAM exits at line 10, and $\hat{\mathcal {G}}=\{G^{\star}\}$, which completes the proof.

\qed\end{proof}

\begin{figure}[h]
\hrule\medskip
Function \textbf{Predictor}($E,G_k,P^\star)$

INPUT: 
\begin{itemize}
\item $E$: intervention $(X,x,Y)$ where $X, Y \subset V$, $Y\cap X = \emptyset$ and $x$ is a value assignment for $X$
\item $G_k$: causal graph
\item $P^\star$: joint probability distribution over $V$
\end{itemize}

OUTPUT:
\begin{itemize}
\item $P_{k}(Y|do(X=x))$: set of causal effects in $Y$ for intervention $do(X=x)$ from $G_k$ 
\end{itemize}

\begin{enumerate}
\item if there is a hedge for $E$ in $G_k$ then let $P_k = \emptyset$;

\item else, let $P_{k}(Y|do(X=x))$ be all the different evaluations from $P^\star$ of all do-free expressions that can be obtained by repeated application of do-calculus rules, together with standard probability manipulations from $G_k$;

\item return $P_{k}$;

\end{enumerate}

\caption{Predictor based on do-calculus}
\label{fig:predictor}
\medskip\hrule
\end{figure}

\begin{algorithm}
\caption{\textbf{ALCAM}$(\mathcal{G},P^\star,C_I,C_O)$ Algorithm for active learning of causal graphs}
\label{alg:active-algo}
\begin{algorithmic} 
\STATE 

INPUT: 
\begin{itemize}
\item $\cal G$: set of causal graphs over a set $V$ of observed variables and a set $U$ of unobserved variables (hidden confounders)
\item $P^\star$: probability distribution of the observed variables in the true causal model $M^\star$, without interventions
\item $C_I(X,x)$: cost of intervention $do(X=x), \forall X,x$
\item $C_O(Y)$: cost of observing $Y, \forall Y$
\end{itemize}

OUTPUT:
\begin{itemize}
\item $G_{id}\in \cal G$: causal graph for which all interventional distributions provided by the {\em Oracle} are equal to the distributions from the {\em Predictor}, and all the edges and hidden confounders that are not identifiable with interventions are identified with conditional independence tests
\end{itemize}

\begin{enumerate}
\item let $P_{k}=Predictor(E,G_k,P^\star)$ for all $G_k \in \mathcal {G}$, for all $E=(X,x,Y)$;

\item let $\hat{\mathcal {G}} = \{ \mathcal {G}_1, \mathcal {G}_2...\}$ be all maximal subsets of non-distinguishable graphs in $\mathcal {G}$, i.e:

\begin{itemize} 
\item {\em PowerOfIntervention}$(E,\mathcal {G}_r,P^\star,P_{k})=0$, for all $\mathcal {G}_r \in \hat{\mathcal {G}}$ and for all $E=(X,x,Y)$;
\item there exists $E=(X,x,Y)$ for every pair of different subsets $\mathcal {G}_r $, $\mathcal {G}_s \in \hat{\mathcal {G}}$:
{\em PowerOfIntervention}$(E,\mathcal {G}_r \cup \mathcal {G}_s,P^\star,P_{k})>0$;
\end{itemize} 

\item let $\hat{\mathcal {E}} = \{\mathcal {E}_1, \mathcal {E}_2...\}$ be all minimal sets of interventions that split all subsets of non-distinguishable graphs in $\hat{\mathcal {G}}$, i.e for every pair of different subsets $\mathcal {G}_r, \mathcal {G}_s \in \hat{\mathcal {G}}$, all $\mathcal {E}_t$ include an intervention $E=(X,x,Y)$ for which
{\em PowerOfIntervention}$(E,\mathcal {G}_r \cup \mathcal {G}_s,P^\star,P_{k})>0$;

\item let $\mathcal {E}_t=\{E_1, E_2, ..., E_n\}$ be the set of interventions in $\hat{\mathcal {E}}$ with smallest total cost $\sum\limits_{1\leq i\leq n} (C_I(X_i,x_i) + C_O(Y_i))$;

\item While $|\hat{\mathcal {G}}|>1$

\begin{enumerate}
\item let $E_i$ = {\em SelectIntervention}$(\mathcal {E}_t, \hat{\mathcal {G}}, P^\star, P_{k}, C_I, C_O)$;

\item let $P^\star(Y|do(X=x)) = Oracle(E_i)$; 

\item let $\hat{\mathcal {G}} = SelectGraphs(\hat{\mathcal {G}},E_i,P^\star(Y|do(X=x)),P_{k})$;

\item let $\mathcal {E}_t = \mathcal {E}_t \setminus E_i$;
\end{enumerate}

\item let $\hat{\mathcal {G}}$ be the set of graphs in the single subset in $\hat{\mathcal {G}}$

\item If $\hat{\mathcal {G}}$ contains 1 graph, then return $\hat{\mathcal {G}}$;

\item If there exist edge differences in graphs in $\hat{\mathcal {G}}$, then let $\hat{\mathcal {G}} = idEdges(\hat{\mathcal {G}})$

\item If there exist hidden confounder differences in graphs in $\hat{\mathcal {G}}$, then let $\hat{\mathcal {G}} = idHidden(\hat{\mathcal {G}})$

\item return $\hat{\mathcal {G}}$;

\end{enumerate}

\end{algorithmic}
\end{algorithm}

\begin{figure}[h]
\hrule\medskip
Function \textbf{Oracle}($E$)

INPUT: 
\begin{itemize}
\item $E$: intervention $(X,x,Y)$ where $X, Y \subset V$, $Y\cap X = \emptyset$ and $x$ is a value assignment for $X$

\end{itemize}

OUTPUT:
\begin{itemize}
\item $P^\star(Y|do(X=x))$: joint probability of $Y$ upon performing the intervention $do(X=x)$ on the {\em true} causal model $M^\star$
\end{itemize}

\caption{The Intervention Oracle}
\label{fig:oracle}
\medskip\hrule
\end{figure}

\begin{figure}[h]
\hrule\medskip
Function \textbf{idEdges}($\hat{\mathcal {G}}$)

INPUT: 
\begin{itemize}
\item $\hat{\mathcal {G}}$: set of graphs with the same ancestral relations than the {\em true} graph and with edge differences
\end{itemize}

OUTPUT:
\begin{itemize}
\item $\hat{\mathcal {G}}' \subset \hat{\mathcal {G}}$: set of graphs with the same edges than the {\em true} graph
\end{itemize}

\begin{enumerate}
\item for each edge $ed(V_i,V_j)$ appearing only in a subset $\hat{\mathcal {G}}_{ed}$ of $\hat{\mathcal {G}}$:

\begin{enumerate}
\item 
On the {\em true} causal model $M^\star$, perform intervention on the minimal set of variables $dsep(V_i,V_j)$ whose intervention d-separates $V_i$ and $V_j$ in all graphs in $\hat{\mathcal {G}} \setminus \hat{\mathcal {G}}_{ed}$, and do a conditional independence test between $V_i$ and $V_j$. If $V_i$ and $V_j$ are dependent then $\hat{\mathcal {G}} = \hat{\mathcal {G}}_{ed}$;
\item else, $\hat{\mathcal {G}} = \hat{\mathcal {G}} \setminus \hat{\mathcal {G}}_{ed}$;

\end{enumerate}

\end{enumerate}

\caption{Edge Identification Function}
\label{fig:idEdges}
\medskip\hrule
\end{figure}

\begin{figure}[h]
\hrule\medskip
Function \textbf{idHidden}($\hat{\mathcal {G}}$)

INPUT: 
\begin{itemize}
\item $\hat{\mathcal {G}}$: subset of non-distinguishable graphs with hidden confounder differences
\end{itemize}

OUTPUT:
\begin{itemize}
\item $\hat{\mathcal {G}}' \subset \hat{\mathcal {G}}$: subset of graphs with the same hidden confounders than the {\em true} causal model $M^\star$
\end{itemize}

\begin{enumerate}
\item for each hidden confounder $(V_i,V_j)$ appearing only in a subset $\hat{\mathcal {G}}_{hc}$ of $\hat{\mathcal {G}}$ do the following conditional independence tests between $V_i$ and $V_j$ on the {\em true} causal model $M^\star$:

    \begin{enumerate}
    \item if $V_i$ and $V_j$ are adjacent and $P(V_j|do(V_i,O)) = P(V_j|V_i,do(O))$
    \item if $V_i$ and $V_j$ are non-adjacent and $P(V_j|do(O)) = P(V_j|V_i,do(O))$
    \end{enumerate}

where $O=Parents(V_i) \cup Parents(V_j)$

\item then $\hat{\mathcal {G}} = \hat{\mathcal {G}} \setminus \hat{\mathcal {G}}_{hc}$;
\item else, $\hat{\mathcal {G}} = \hat{\mathcal {G}}_{hc}$;

\item return $\hat{\mathcal {G}}$;

\end{enumerate}

\caption{Hidden Confounder Identification Function}
\label{fig:idHidden}
\medskip\hrule
\end{figure}

\begin{theorem}[ALCAM number of interventions]
\label{thm:alcam_cost}
ALCAM requires at most $|\cal G|$ - $|{\cal G}_q|$ interventions on a single value of the intervened variables, and $Q + H$ conditional independence tests, where

\begin{itemize}

\item $\cal G$ is the set of candidate graphs;

\item ${\cal G}_q$ is the subset of candidate graphs that are non-distinguishable from the true graph;

\item $Q$ is the number of edges $ed(V_i,V_j)$ that differ among the graphs in ${\cal G}_q$ and for which there is a hedge for $P(V_j|do(block(V_i,V_j)))$ in one or more of the graphs;

\item $H$ is the number of hidden confounders $hc(V_i,V_j)$ that differ among the graphs in ${\cal G}_q$ and for which there is a hedge for $P(V_j|do(V_i))$ in one or more of the graphs, or for which $V_i \notin An(V_j)$ and $V_j \notin An(V_i)$;

\end{itemize}

\end{theorem}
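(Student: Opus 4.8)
The plan is to assemble the bound as a direct corollary of the three preceding lemmas, each of which already accounts for one of the three types of operations ALCAM performs: the single-valued interventions in the while loop (line 5), and the two rounds of conditional independence testing carried out by \emph{idEdges} (line 8) and \emph{idHidden} (line 9). Since the individual bounds are supplied by Lemmas~\ref{lem:ALCAM_Distinguish_n_int}, \ref{lem:ALCAM_Edges} and \ref{lem:ALCAM_HC}, the work here is mostly bookkeeping: confirming that these operations are the only sources of interventions and tests, and that the quantities $Q$ and $H$ as stated in the theorem dominate the counts the lemmas actually use.

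First I would note that ALCAM issues single-valued interventions only inside the loop of line 5, via one \emph{Oracle} call (line 5b) per iteration. Lemma~\ref{lem:ALCAM_Distinguish_n_int} bounds the number of these iterations by $|\mathcal{G}| - |\mathcal{G}_q|$, which yields the first half of the claim immediately. The crucial linking step is then to pin down $\hat{\mathcal{G}}$ on exit from the loop: by Lemma~\ref{lem:ALCAM_Distinguish}, once line 5 terminates $\hat{\mathcal{G}}$ consists of exactly the graphs non-distinguishable from the true graph, i.e.\ $\hat{\mathcal{G}} = \mathcal{G}_q$. This identifies the set over which the theorem counts $Q$ and $H$ with the set of graphs that \emph{idEdges} and \emph{idHidden} actually operate on.

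With this identification in hand, the conditional independence tests are confined to lines 8 and 9. Lemma~\ref{lem:ALCAM_Edges} bounds the tests performed by \emph{idEdges} by $Q$, the number of differing edges admitting a hedge, computed over $\hat{\mathcal{G}} = \mathcal{G}_q$. Lemma~\ref{lem:ALCAM_HC} bounds the tests performed by \emph{idHidden} by $H$, the number of differing hidden confounders admitting a hedge or lacking any ancestral relation; here I would remark that \emph{idEdges} can only shrink $\hat{\mathcal{G}}$ to a subset of $\mathcal{G}_q$ before line 9 runs, so the confounder count active at line 9 is at most its value over $\mathcal{G}_q$, and $H$ as defined in the theorem remains a valid upper bound. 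Summing the two rounds gives at most $Q + H$ conditional independence tests, completing the proof.

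The main obstacle, such as it is, lies entirely in the last bookkeeping subtlety rather than in any new mathematics: one must verify that the $Q$ and $H$ quantified in the theorem over $\mathcal{G}_q$ dominate the corresponding counts the two identification functions see at their moment of invocation, given that \emph{idEdges} may already have pruned $\hat{\mathcal{G}}$ before \emph{idHidden} is reached. Once this monotonicity is observed, the interventions bound and each test bound are handed over verbatim by the cited lemmas, and the theorem follows by addition.
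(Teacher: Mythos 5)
Your proposal is correct and takes essentially the same approach as the paper: the paper's own proof is a one-line citation of Lemmas~\ref{lem:ALCAM_Distinguish_n_int}, \ref{lem:ALCAM_Edges} and \ref{lem:ALCAM_HC}, which is exactly the decomposition you use. The extra bookkeeping you supply (identifying $\hat{\mathcal{G}}$ with ${\cal G}_q$ via Lemma~\ref{lem:ALCAM_Distinguish} and the monotonicity remark about \emph{idEdges} pruning before \emph{idHidden} runs) simply makes explicit what the paper leaves implicit.
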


\begin{proof}
By Lemma~\ref{lem:ALCAM_Distinguish_n_int}, Lemma~\ref{lem:ALCAM_Edges}, and Lemma~\ref{lem:ALCAM_HC}.

\qed\end{proof}







\chapter{Causality and Time} 

\label{ch:Chapter5} 


\section{Dynamic Causal Networks}

The generic definition of causal models (Definition~\ref{def:causalnetwork}) leaves the functions $f_k$ unspecified. These functions can take any suitable form that best describes the causal dependencies between variables in the model. In natural phenomena some variables may be time independent while others may evolve over time. However rarely does Pearl specifically treat the case of dynamic variables.

The definition of Dynamic Causal Network \parencite{Blondel2017} is an extension of Pearl's causal model definition, by specifying that the variables are sampled over time, as in \parencite{valdes2011effective}.

\begin{definition}[Dynamic Causal Network] 
\label{def:dynamiccausalnetwork}
A dynamic causal network $D$ is a causal model in which the set $F$ of functions is such that $V_{k,t} = f_k(Pa(V_{k,t}),U_{k,t-\alpha})$; where $V_{k,t}$ is the variable associated with the time sampling $t$ of the observed process $V_k$; $U_{k,t-\alpha}$ is the variable associated with the time sampling $t-\alpha$ of the unobserved process $U_k$; $t$ and $\alpha$ are discrete values of time.
\end{definition}

Note that $Pa(V_{k,t})$ may include variables in any time sampling previous to $t$ up to and including $t$, depending on the delays of the direct causal dependencies between processes in comparison with the sampling rate. $U_{k,t-\alpha}$ may be generated by a noise process or by a hidden confounder. In the case of noise, we assume that all noise processes $U_{k}$ are independent of each other, and that their influence to the observed variables happens without delay, so that $\alpha=0$. In the case of hidden confounders, we assume $\alpha\geq 0$ as causes precede their effects.

\begin{figure}
\begin{center}
\includegraphics[width=2cm]{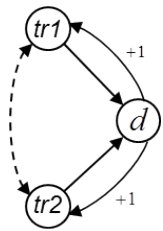}
\end{center}
\caption{Dynamic Causal Network where $tr1$ and $tr2$ have a common unobserved cause, a {\em hidden confounder}. Since both variables are in the same time slice, we call it a {\em static hidden confounder}.}
\label{fig:dcn_confounder_compact}
\end{figure}

To represent hidden confounders in DCN, we extend to the dynamic context the framework developed in \parencite{pearl1991theory} on causal model equivalence and latent structure projections. Let us consider the projection algorithm \parencite{verma1993graphical}, which takes a causal model with unobserved variables and finds an equivalent model (with the same set of causal dependencies), called a "dependency-equivalent projection", but with no links between unobserved variables and where every unobserved variable is a parent of exactly two observed variables. 

The projection algorithm in DCN works as follows. For each pair $(V_{m},V_{n})$ of observed processes, if there is a directed path from $V_{m,t}$ to $V_{n,t+\alpha}$ through unobserved processes then we assign a directed edge from $V_{m,t}$ to $V_{n,t+\alpha}$; however, if there is a divergent path between them through unobserved processes then we assign a bidirected edge, representing a hidden confounder.


\section{Hidden Confounders}

In this thesis, we represent all DCN by their dependency-equivalent projection. Also, we assume the sampling rate to be adjusted to the dynamics of the observed processes. However, both the directed edges and the bidirected edges representing hidden confounders may be crossing several time steps depending on the delay of the causal dependencies in comparison with the sampling rate. We now introduce the concept of static and dynamic hidden confounder.

\subsection{Static Hidden Confounders}

\begin{definition}[Static Hidden Confounder]
\label{def:staticconfounder}
Let $D$ be a DCN. Let $\beta$ be the maximal number of time steps crossed by any of the directed edges in $D$. Let $\alpha$ be the maximal number of time steps crossed by a bidirected edge representing a hidden confounder. If $\alpha\leq\beta$, then the hidden confounder is called "static".
\end{definition}

To give an example of a static confounder, gas consumption and acceleration power at a car are causally influenced by the tire pressure, which may be unmeasured. The gas consumption and acceleration are influenced by the tire pressure at any particular moment in time, and are not impacted by the tire pressure earlier or later.

\subsection{Dynamic Hidden Confounders}

\begin{definition}[Dynamic Hidden Confounder]
\label{def:dynamicconfounder}
Let $D$, $\beta$ and $\alpha$ be as in Definition \ref{def:staticconfounder}. If $\alpha>\beta$, then the hidden confounder is called "dynamic". More specifically, if $\beta<\alpha\leq 2\beta$, we call it "first order" Dynamic Hidden Confounder; if $\alpha>2\beta$, we call it "higher order" Dynamic Hidden Confounder.
\end{definition}

In this thesis, we consider three case scenarios in regards to DCN and their time-invariance properties. If a DCN $D$ contains only static hidden confounders, we can construct a first order Markov process in discrete time, by taking $\beta$ (per Definition \ref{def:staticconfounder}) consecutive time samples of the observed processes $V_k$ in $D$. See Figure \ref{fig:beta_slices1}. This does not mean the DCN generating functions $f_k$ in Definition \ref{def:dynamiccausalnetwork} are time-invariant, but that a first order Markov chain can be built over the observed variables when marginalizing the static confounders over $\beta$ time samples.

\begin{figure}[h]
\begin{center}
\includegraphics[width=0.6\textwidth]{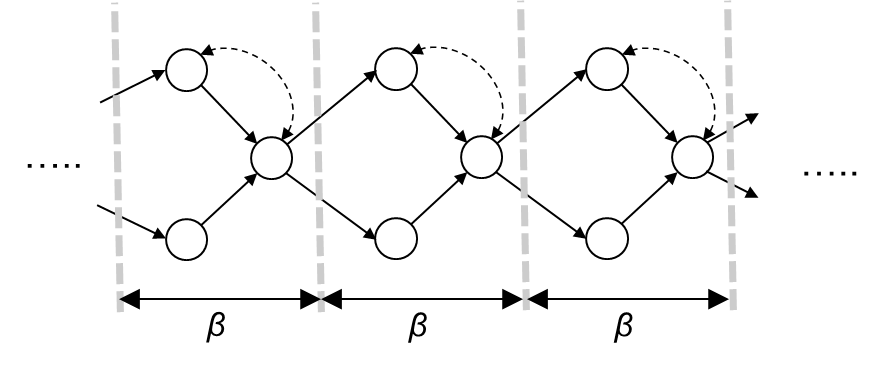}
\end{center}
\caption{Example of static hidden confounder. The time steps crossed by confounders are $\alpha\leq\beta$ and we can construct a first order Markov process in discrete time, using time slices of length $\beta$.}
\label{fig:beta_slices1}
\end{figure}

\begin{figure}[h]
\begin{center}
\includegraphics[width=0.80\textwidth]{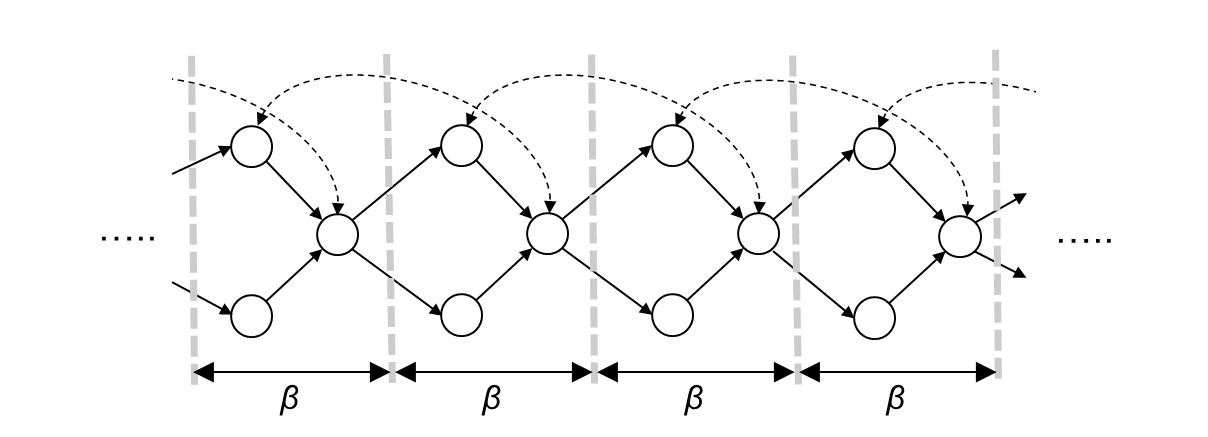}
\end{center}
\caption{Example of first order dynamic hidden confounder. The time steps crossed by confounders are $\beta<\alpha\leq 2\beta$ and we can construct a first order Markov process in discrete time, using time slices of length $\beta$.}
\label{fig:beta_slices2}
\end{figure}

\begin{figure}[h]
\begin{center}
\includegraphics[width=1\textwidth]{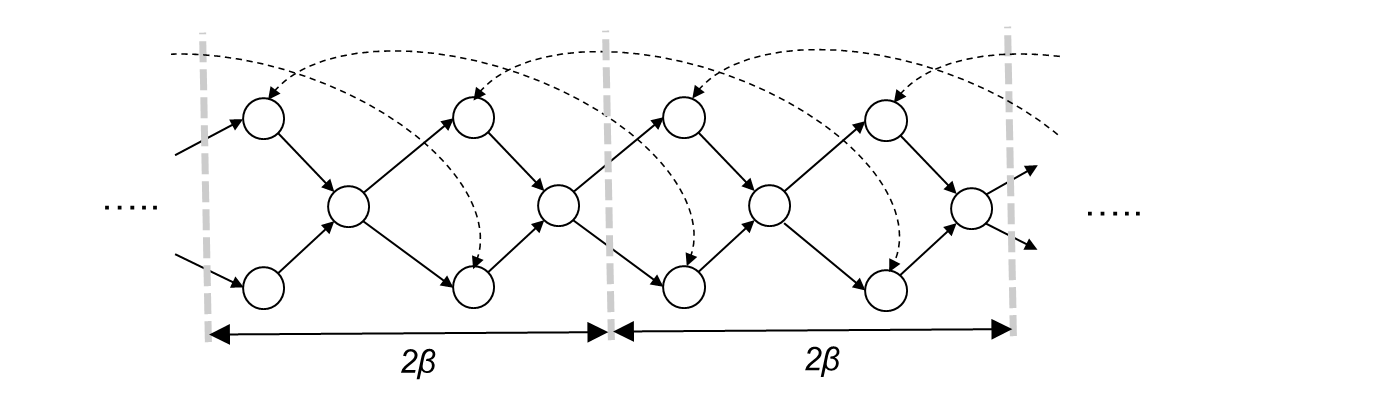}
\end{center}
\caption{Example of higher order dynamic hidden confounder. The time steps crossed by confounders in this example are $2\beta<\alpha\leq 3\beta$ and we can construct a first order Markov process in discrete time, using time slices of length $2\beta$.}
\label{fig:beta_slices3}
\end{figure}

In a second scenario, we consider DCN with first order dynamic hidden confounders. See Figure \ref{fig:beta_slices2}. We can still construct a first order Markov process in discrete time, by taking $\beta$ consecutive time samples. However, we will see in later sections how the effect of interventions on this type of DCN has a different impact than on DCN with static hidden confounders.

Finally, we consider DCN with higher order dynamic hidden confounders, in which case we may construct a first order Markov process in discrete time by taking a multiple of $\beta$ consecutive time samples. See Figure \ref{fig:beta_slices3}.

As we will see in later sections, the difference between these three types of DCN is crucial in the context of identifiability. 
Dynamic hidden confounders cause a time invariant transition matrix to become dynamic after an intervention, e.g., the post-intervention transition matrix will change over time. However, if we perform an intervention on a DCN with static hidden confounders, the network will return to its previous time-invariant behaviour after a transient period. These differences have a great impact on the complexity of the causal identification algorithms that we present. 

Considering that causes precede their effects, the associated graphical representation of a DCN is a DAG. All DCN can be represented as a biinfinite DAG with vertices $V_{k,t}$; edges from $pa(V_{k,t})$ to $V_{k,t}$; and hidden confounders (bi-directed edges). DCN with static hidden confounders and DCN with first order dynamic hidden confounders can be compactly represented as $\beta$ time samples (a multiple of $\beta$ time samples for higher order dynamic hidden confounders) of the observed processes $V_{k,t}$; their corresponding edges and hidden confounders; and some of the directed and bi-directed edges marked with a "+1" label representing the dependencies with the next time slice of the DCN.

Note that a DCN can also be seen as a biinfinite causal model in general, agnostic to whether time is defined or not. As such, removing the time assumption does not make any of the lemmas, theorems or algorithms in this thesis invalid, as they are the result of graphical non-parametric reasoning.


\section{Do-Calculus and Time} 

In the context of DCN, the do-calculus rules, originally identified for the non-temporal context, can be translated under several assumptions. The assumptions rely on the presence of static or dynamic hidden confounders.

DCNs with static hidden confounders contain hidden confounders that impact sets of variables within one time slice only, and contain no hidden confounders between variables at different time slices (see Figure~\ref{fig:dcn_confounder_compact}).

The following two lemmas are based on the application of do-calculus to DCNs with static hidden confounders only. Intuitively, conditioning on the variables that cause time dependent effects d-separates entire parts (future from past) of the DCN (Lemmas \ref{lem:past}, \ref{lem:futureobs}).

\begin{lemma}[Past observations and actions]\label{lem:past}
Let $D$ be a DCN with static hidden confounders. Take any set $X$. Let $C \subseteq V_{t}$ be the set of variables in $G_t$ that are direct causes of variables in $G_{t+1}$. Let $Y \subseteq V_{t+\alpha}$ and $Z \subseteq V_{t-\beta}$, with $\alpha > 0$ and $\beta > 0$ (positive natural numbers). The following distributions are identical:

\begin{enumerate}
\item 
$P(Y | do(X),Z,C)$
\item 
$P(Y | do(X),do(Z),C)$
\item 
$P(Y | do(X),C)$
\end{enumerate}
\end{lemma}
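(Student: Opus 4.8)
The plan is to reduce all three equalities to two applications of do-calculus, each resting on a single d-separation statement: that the interface set $C$ blocks the past from the future. First I would isolate the key graphical fact. Because $D$ has only static hidden confounders, no bidirected edge crosses a time-slice boundary; and after the $\beta$-width slicing that turns $D$ into a first order Markov process, the only edges joining a node at time $\geq t+1$ to a node at time $\leq t$ are the forward directed edges emanating from $C \subseteq V_{t}$. Hence $C$ is a complete cut between the past side (slices $\leq t-1$, where $Z$ lives) and the future side (slices $\geq t+1$, where $Y$ lives). Since $Y$, $Z$ and $C$ sit in distinct slices they are disjoint, so the disjointness hypotheses of the do-calculus rules are met.

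Next I would prove $(Y \perp Z \mid X, C)_{G_{\overline{X}}}$. Take any path $\pi$ joining a node of $Y$ to a node of $Z$ in $G_{\overline{X}}$. As $Y$ lies strictly in the future and $Z$ strictly in the past, $\pi$ must cross the boundary between slice $t$ and slice $t+1$, and every crossing edge has the form $c \to v$ with $c \in C$ and $v \in V_{t+1}$. At such a node $c$ the boundary-crossing edge points \emph{out} of $c$, so $c$ cannot be a collider on $\pi$; being a non-collider lying in the conditioning set $\{X, C\}$, it blocks $\pi$. Crucially this blocking is insensitive to $X$: a path is inactive as soon as one of its non-colliders is conditioned on, so adding $X$ to the conditioning set cannot reopen $\pi$ through some collider elsewhere. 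Every path being blocked, the d-separation holds; and since $G_{\overline{X}}$ is obtained by deleting edges, this is established directly in the mutilated graph.

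With this in hand, rule 1 with $W = C$ gives $P(Y \mid Z, C, do(X)) = P(Y \mid C, do(X))$, i.e.\ (1)$=$(3). For (2)$=$(1) I would invoke rule 2 with $W = C$, which requires $(Y \perp Z \mid X, C)_{G_{\overline{X}\underline{Z}}}$; since $G_{\overline{X}\underline{Z}}$ is a subgraph of $G_{\overline{X}}$ and edge deletion preserves d-separation, the same statement carries over immediately, yielding $P(Y \mid C, do(X), do(Z)) = P(Y \mid Z, C, do(X))$. Chaining the two equalities proves the lemma.

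I expect the main obstacle to be the collider bookkeeping in the middle step: one must argue that the arbitrary set $X$ (possibly straddling past, present and future) cannot activate an otherwise-blocked path when placed in the conditioning set. The clean resolution is to exhibit an explicit conditioned non-collider $c \in C$ on \emph{every} $Y$--$Z$ path and to lean on the fact that a single conditioned non-collider already blocks, which makes the argument robust to $X$. A secondary point worth stating as a standalone graphical claim, before the d-separation argument, is that $C$ genuinely separates the slices — this is exactly where the static (within-slice) character of the confounders and the first-order slicing are used.
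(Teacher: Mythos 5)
Your proof is correct and follows essentially the same route as the paper's: establish that conditioning on $C$ d-separates $Y$ from $Z$ (using the fact that static confounders stay within slices and all slice-crossing edges emanate from $C$), then invoke do-calculus rules 1 and 2 with $W=C$. The only cosmetic difference is that the paper additionally cites rule 3 to get $(2)=(3)$ directly, whereas you obtain it by chaining $(1)=(3)$ and $(2)=(1)$; your version also spells out the blocking argument (conditioned non-collider $c \in C$ on every $Y$--$Z$ path, insensitive to $X$) that the paper leaves implicit.
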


\begin{proof}
By the graphical structure of a DCN with static hidden confounders, conditioning on $C$ d-separates $Y$ from $Z$.
The three rules of do-calculus apply, and (1) equals (3) by rule 1, (1) equals (2) by rule 2, and also (2) equals (3) by rule 3.
\qed\end{proof}

In our traffic example, we want to predict the traffic flow $Y$ in two days caused by traffic control mechanisms applied tomorrow $X$, and conditioned on the traffic delay today $C$. Any traffic controls $Z$ applied before today are irrelevant, because their impact is already accounted for in $C$.

\begin{lemma}[Future observations]\label{lem:futureobs}
Let $D$, $X$ and $C$ be as in Lemma~\ref{lem:past}. Let $Y \subseteq V_{t-\alpha}$ and $Z \subseteq V_{t+\beta}$, with $\alpha > 0$ and $\beta > 0$, then:
$$
P(Y|do(X),Z,C)=P(Y|do(X),C)
$$
\end{lemma}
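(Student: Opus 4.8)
The plan is to recognize this as a single application of do-calculus rule~1 and thereby reduce the entire statement to one d-separation claim, exactly as in the proof of Lemma~\ref{lem:past}. Rule~1 gives $P(Y|Z,W,do(X)) = P(Y|W,do(X))$ whenever $(Y \perp Z \mid X, W)_{G_{\overline{X}}}$. Taking $W = C$, the desired identity $P(Y|do(X),Z,C) = P(Y|do(X),C)$ follows at once, so the whole content of the lemma is the graphical separation of the past set $Y \subseteq V_{t-\alpha}$ from the future set $Z \subseteq V_{t+\beta}$ by the interface $C$, that is,
\[
(Y \perp Z \mid X, C)_{G_{\overline{X}}}.
\]

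First I would make precise why $C$ is a complete cut between the \emph{past} (slices up to and including $t$) and the \emph{future} (slices $t+1$ onward). Two structural facts do the work: edges are directed forward in time and, in the first-order Markov representation obtained by grouping the process into slices of width $\beta$ (Figure~\ref{fig:beta_slices1}), they connect only consecutive slices; and, by Definition~\ref{def:staticconfounder}, a static hidden confounder lies within a single slice, so no bidirected edge crosses the boundary between slice $t$ and slice $t+1$. Consequently every edge crossing that boundary is a directed edge $c \to d$ with $c \in C$ and $d$ in slice $t+1$, which is precisely the definition of $C$ as the set of variables in $V_{t}$ that are direct causes of variables in $G_{t+1}$.

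Next I would argue the d-separation itself. Any path between a node of $Y$ and a node of $Z$ must cross the $t/t{+}1$ boundary at least once, and every crossing uses an edge $c \to d$ with $c \in C$. At the first such crossing the future-side edge at $c$ is outgoing, so $c$ cannot be a collider on the path; it is a chain or fork vertex. Since $c \in C$ belongs to the conditioning set, a non-collider in the conditioning set blocks the path, and no further conditioning (on $X$ or on the rest of $C$) can reopen it. Passing from $G$ to $G_{\overline{X}}$ only deletes edges, so any surviving path is already a path of $G$ and is blocked by the same argument; hence the displayed separation holds, and rule~1 yields the claim.

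The main obstacle is the d-separation bookkeeping rather than any calculus: I must be sure the crossing vertex is a non-collider on \emph{every} past--future path, including paths that oscillate across the boundary several times or route through descendants of $C$, and that conditioning on the intervened set $X$ never opens a collider circumventing $C$. The clean way to discharge this is the observation above that a single conditioned non-collider at the \emph{first} boundary crossing already blocks the whole path, so the number of crossings and the presence of $X$ in the conditioning set are immaterial. A secondary point worth stating explicitly is that the argument relies on the consecutive-slice (first-order) structure guaranteed by the static-confounder assumption of Definition~\ref{def:staticconfounder}; were dynamic confounders present, a bidirected edge could bridge the boundary and $C$ would no longer separate past from future.
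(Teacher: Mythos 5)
Your proposal is correct and takes essentially the same route as the paper: the paper's proof is the one-line observation that, by the structure of a DCN with static hidden confounders, conditioning on $C$ d-separates $Y$ from $Z$, so the identity follows from rule~1 of do-calculus. You simply make explicit the d-separation bookkeeping (every edge crossing the $t/t{+}1$ boundary is directed and emanates from $C$, so the first crossing vertex is a conditioned non-collider blocking the path, also in $G_{\overline{X}}$) that the paper asserts without detail.
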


\begin{proof}
By the graphical structure of a DCN with static hidden confounders, conditioning on $C$ d-separates $Y$ from $Z$ and the expression is valid by rule 1 of do-calculus.
\qed\end{proof}

In our example, observing the travel delay today makes observing the future traffic flow irrelevant to evaluate yesterday’s traffic flow.

The following lemma (Lemma \ref{lem:futureact}) is based on the application of do-calculus to DCN in general, with static or dynamic confounders. Intuitively, future actions have no impact on the past.

\begin{lemma}[Future actions]\label{lem:futureact} 
Let $D$ be a DCN. 
Take any sets $X \subseteq V_{t}$ and $Y \subseteq V_{t-\alpha}$, 
with $\alpha >0$. 
Then for any set $Z$ the following equalities hold:

\begin{enumerate}
\item $P(Y|do(X),do(Z))=P(Y|do(Z))$
\item $P(Y|do(X))=P(Y)$
\item $P(Y|Z,do(X))=P(Y|Z)$ whenever $Z \subseteq V_{t-\beta}$ with $\beta > 0$. 
\end{enumerate}
\end{lemma}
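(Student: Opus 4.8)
The plan is to derive each of the three equalities from a single application of do-calculus Rule~3, reducing everything to one structural fact about DCNs: because causes precede their effects, every directed edge points forward in time (or stays within a slice), so in $G_{\overline{X}}$ the set $X$ (living entirely in slice $t$) has only outgoing edges and can reach, along collider-free paths, only its own descendants, all of which lie in slices $t' \geq t$. Since $Y \subseteq V_{t-\alpha}$ with $\alpha > 0$ sits strictly in the past of $X$, it is never such a descendant. This is the engine behind all three claims.

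For part~(2) I would apply Rule~3 with empty context and no auxiliary intervention, so that the required premise is $(Y \perp X)_{G_{\overline{X}}}$: a path leaving $X$ in $G_{\overline{X}}$ must begin with a forward arrow, and with an empty conditioning set the only active paths are directed ones $X \to \cdots$, which reach only slices $\geq t$ and hence miss $Y$. I would then note that part~(1) is literally part~(2) with an extra $do(Z)$ carried along: applying Rule~3 to strip the $do(X)$ term from $P(Y|do(X),do(Z))$ leaves the premise $(Y \perp X \mid Z)_{G_{\overline{Z}\,\overline{X}}}$, where the set-subtraction correction in Rule~3 collapses to all of $X$ because $X \setminus An(\emptyset) = X$. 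Finally, for part~(3), since $X$ at slice $t$ is not an ancestor of the strictly-past set $Z \subseteq V_{t-\beta}$, that same correction again collapses to $X$, and Rule~3 reduces the claim to $(Y \perp X \mid Z)_{G_{\overline{X}}}$.

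The work therefore concentrates on the two conditional d-separations, and this is the step I expect to be the real obstacle, since conditioning on $Z$ could a priori open colliders. I would argue that any path from $X$ to $Y$ must contain a collider: it cannot be a directed path (that would make $Y$ a descendant of $X$), and it cannot be a fork path or have $Y$ as an ancestor of $X$ (both need an edge into $X$, which is absent in $G_{\overline{X}}$). Taking the first collider $C$ along such a path, its initial segment $X \to \cdots \to C$ is forward-directed, so $C$ is a descendant of $X$ and lives in a slice $\geq t$; it then remains to show $Z$ does not activate $C$. In case~(3) this is immediate, because $C$ and all its descendants sit in slices $\geq t$ while $Z \subseteq V_{t-\beta}$ is strictly earlier, so neither $C$ nor any descendant of $C$ lies in $Z$. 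In case~(1) the key subtlety is that $do(Z)$ deletes every edge into $Z$ in $G_{\overline{Z}}$, so no node of $Z$ is a proper descendant of anything; hence $C$ could be activated only if $C \in Z$, but then $C$ would have no incoming edges and could not be a collider, a contradiction. Either way the first collider blocks the path, yielding the desired d-separation and, through the three Rule~3 applications above, all three equalities.
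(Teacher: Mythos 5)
Your proof is correct, but it takes a genuinely different route from the paper's. The paper disposes of part (1) by citing the result of Shpitser and Pearl that an intervention on variables that are not ancestors of $Y$ has no effect on $Y$ (your first-collider argument is essentially a from-scratch proof of that fact), obtains part (2) as the special case $Z=\emptyset$, and, crucially, handles part (3) not by a conditional d-separation but by the ratio identity $P(Y|Z,do(X))=P(Y,Z|do(X))/P(Z|do(X))$: since $Y\cup Z$ and $Z$ both lie strictly in the past of $X$, the unconditional non-ancestor result applies to numerator and denominator separately, giving $P(Y,Z)/P(Z)=P(Y|Z)$. This sidesteps exactly the issue you identify as the real obstacle, namely that conditioning on $Z$ might open colliders. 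Your proof instead verifies the Rule-3 side conditions directly, including the $X\setminus An(\cdot)$ correction collapsing to $X$ and the observation that intervened conditioning variables have no incoming edges and hence can never activate a collider; this makes the argument longer but fully self-contained, with no appeal to the cited black-box lemma. Both arguments ultimately rest on the same structural fact, that directed edges in a DCN never point backward in time; the paper's ratio trick buys brevity, while your direct d-separation analysis buys transparency about the graphical mechanics and an explicit check of every premise of Rule 3.
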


\begin{proof}
The first equality 
derives from rule 3 and the proof in~\parencite{shpitser2006identification} that interventions on variables that are not ancestors of $Y$ in $D$ have no effect on $Y$. The second is the special case $Z=\emptyset$. We can transform the third expression using the equivalence \[P(Y|Z,do(X))= P(Y,Z|do(X))/P(Z|do(X));\] since $Y$ and $Z$ precede $X$ in $D$, by rule 3
$P(Y,Z|do(X)) = P(Y,Z)$ and 
$P(Z|do(X)) = P(Z)$,
and then the above equals
$P(Y,Z)/P(Z) = P(Y|Z)$.
\qed\end{proof}

In words, traffic control mechanisms applied next week have no causal effect on the traffic flow this week.


\chapter{Identification of Causal Effects in Dynamic Causal Networks} 

\label{ch:Chapter6} 

In this chapter, we analyse the identifiability of causal effects in the DCN setting. We first study DCNs with static hidden confounders and propose a method for identification of causal effects in DCNs using transition matrices. Then we extend the analysis and identification method to DCNs with dynamic hidden confounders. 

First we define the identification problem in the DCN context.

\section{Identification in DCN}

\begin{definition}[Dynamic Causal Network identification] Let $D$ be a DCN, and $t$, $t+\alpha$ be two time slices of $D$. Let $X$ be a subset of $V_{t}$ and $Y$ be a subset of $V_{t+\alpha}$. 
The DCN identification problem consists of computing the probability distribution $P(Y|do(X))$ from the observed probability distributions in $D$, i.e., computing an expression for the distribution containing no do() operators.
\end{definition}

In this thesis, we always assume that $X$ and $Y$ are disjoint and we only consider the case in which all intervened variables $X$ are in the same time sample. It is not difficult to extend our algorithms to the general case.

The following lemma shows that it is possible to limit the size of the graph to be used for the identification of DCNs.

\begin{lemma}
\label{lem:graphID}
Let $D$ be a DCN with biinfinite graph $\hat G$. Let $t_x$, $t_y$ be two time points in $\hat G$.

Let $G_{xy}$ be sub-graph of $\hat G$ consisting of all time slices in between (and including) $G_{t_x}$ and $G_{t_y}$. Let $G_{lx}$ be graph consisting of all time slices in between (and including) $G_{t_x}$ and the left-most time slice connected to $G_{t_x}$ by a path of dynamic hidden confounders. Let $G_{dx}$ be the graph consisting of all time slices that are in $G_{lx}$ or $G_{xy}$. Let $G_{dx-}$ be the graph consisting of the time slice preceding $G_{dx}$. Let $G_{id}$ be the graph consisting of all time slices in $G_{dx-}$ and $G_{dx}$. If $P(Y|do(X))$ is identifiable in $\hat G$ then it is identifiable in $G_{id}$ and the identification provides the same result on both graphs.
\end{lemma}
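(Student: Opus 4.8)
The plan is to prove the two assertions---that identifiability transfers from $\hat G$ to $G_{id}$ and that the two identifications coincide---by working with the C-component factorization (Lemma~\ref{lem:CcomponentFactorization}) together with the last recursive step of the ID algorithm (Section~\ref{sec:ID_alg}), which are legitimate tools here by the soundness, completeness and uniqueness of do-calculus (Theorem~\ref{thm:DoCalculusComplete}). The overall idea is to show that the factorized expression for $P(Y|do(X))$ only ever references the observational distribution restricted to the slices of $G_{id}$, so that evaluating the very same factorization inside $G_{id}$ returns the identical number.

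First I would truncate on the right. Since every directed edge of a DCN points forward in time (causes precede effects), no variable in a slice later than $t_y$ is an ancestor of $Y$; equivalently $An(Y)_{\hat G}\subseteq\bigcup_{t\le t_y}V_t$. By the ancestor-restriction step of the ID algorithm---the same fact invoked in Lemma~\ref{lem:futureact}, that interventions on non-ancestors of $Y$ have no effect---all slices strictly to the right of $t_y$ may be dropped without changing $P(Y|do(X))$. This already confines attention to slices $\le t_y$, which is exactly the right boundary of $G_{dx}=G_{lx}\cup G_{xy}$.

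Next I would bound the relevant structure on the left. Applying Lemma~\ref{lem:CcomponentFactorization} to the subgraph of $G_{\overline X}$ induced by $An(Y)$, the effect is a sum over the non-$(X\cup Y)$ ancestors of a product of C-component factors, each identifiable (no hedge, by the global assumption and Definition~\ref{def:hedge}) and expandable, via the last recursive call of ID, into a product of conditionals $P(V_t\mid V_\pi^{(t-1)})$ taken in a topological order of the C-component. The key claim is that every C-component meeting $An(Y)$ is contained on the left within $G_{dx}$: bidirected connectivity emanating from the slice $t_x$ reaches, by construction of $G_{lx}$, no further left than the left end of $G_{lx}$, and by the time-invariance of the DCN's confounder pattern no confounder originating at a slice $\ge t_x$ reaches further left than that. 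The only way the past before $G_{dx}$ can enter is therefore through \emph{directed} edges into the left-most slice of $G_{dx}$, carried by the non-intervened variables $W=V_{t_x}\setminus X$ and their directed ancestor chains.

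Finally I would collapse that directed tail. The conditionals attached to variables strictly earlier than $G_{dx}$ form, after summation, a telescoping marginalization that reduces to the marginal over the left-most slice of $G_{dx}$ conditioned on its parents, which lie one slice earlier, in $G_{dx-}$. Hence keeping the single extra slice $G_{dx-}$ supplies exactly the needed conditional $P(\,\text{left-most slice of }G_{dx}\mid G_{dx-})$, and summing over $G_{dx-}$ and everything before it reproduces the correct marginals; nothing earlier than $G_{dx-}$ is referenced. Since $G_{id}=G_{dx}\cup G_{dx-}$ retains every ancestor relation and every C-component relevant to $P(Y|do(X))$, a hedge for the effect exists in $G_{id}$ iff it exists in $\hat G$, so identifiability is equivalent in the two graphs, and by the three steps the do-free expression obtained is literally the same. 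I expect the main obstacle to be this left-hand argument: making rigorous that the far past enters only through the boundary slice $G_{dx-}$, which requires simultaneously using the forward orientation of the directed edges, the time-invariant reach of the dynamic confounders to rule out C-components extending past $G_{lx}$, and a clean marginalization of the directed ancestor chains of the non-intervened variables $W$.
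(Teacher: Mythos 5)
Your overall strategy coincides with the paper's: truncate the future by ancestrality (rule 3), apply the C-component factorization restricted to $An(Y)$, and collapse everything to the left of $G_{dx}$ onto the boundary slice $G_{dx-}$ by a chain-rule marginalization. However, your pivotal left-hand claim is false precisely in the case the lemma is designed for. You assert that "every C-component meeting $An(Y)$ is contained on the left within $G_{dx}$," and deduce that the far past reaches $G_{dx}$ only through \emph{directed} edges. Neither holds when dynamic hidden confounders are present: by time-invariance the confounder pattern repeats at every slice, so if bidirected paths starting from slice $t_x$ reach $L$ slices back (which is what defines $G_{lx}$), then bidirected paths starting from the slice $G_{dx-}$ reach $L$ slices forward, i.e., into $G_{dx}$ up to slice $t_x-1$. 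Hence C-components straddle the $G_{dx-}$/$G_{dx}$ boundary, and further C-components meeting $An(Y)$ exist arbitrarily far in the past (all past variables are ancestors of $Y$). Your telescoping collapse of "conditionals attached to variables strictly earlier than $G_{dx}$," which presupposes a purely directed tail, therefore fails for DCNs with dynamic confounders; it is only valid in the static sub-case, where $G_{dx}$ starts at slice $t_x$ and no bidirected edge crosses the boundary.

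What is actually true, and what the paper uses instead, is the weaker statement that no bidirected path connects any variable of $G_{past}\cup G_{dx-}$ to $X$ (this is exactly what the construction of $G_{lx}$ guarantees, since anything bidirectedly connected to slice $t_x$ lies inside $G_{lx}$). The collapse then cannot be organized C-component by C-component; the paper expands each factor via Tian's formula as a product of terms $P(v_j|v_{\pi}^{(j-1)})$, conditioning on \emph{all} predecessors in a topological order of the whole graph rather than on parents, regroups these terms variable-wise, applies the probability chain rule to obtain $Q_{V_{G_{past}}\cup V_{G_{dx-}}}=P(V_{G_{past}}\cup V_{G_{dx-}})$, and invokes the first-order Markov property to argue that the terms attached to $V_{G_{dx}}$ depend only on $V_{G_{dx-}}\cup V_{G_{dx}}$; only then does marginalizing over $V_{G_{past}}$ leave an expression supported entirely on $G_{id}$. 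So the skeleton of your argument is right and matches the paper, but the step you yourself flagged as "the main obstacle" is exactly where the proposal breaks: the containment claim and the directed-tail telescoping must be replaced by this variable-wise Tian/Markov argument.
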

\begin{proof} Let $G_{past}$ be the graph consisting of all time slices preceding $G_{id}$ and $G_{future}$ be the graph consisting of all time slices succeeding $G_{id}$ in $\hat G$. By application of do-calculus rule 3, non-ancestors of $Y$ can be ignored from $\hat G$ for the identification of $P(Y|do(X))$ \parencite{shpitser2006identification}, so $G_{future}$ can be discarded. We will now show that identifying $P(Y|do(X))$ in the graph including all time slices of $G_{past}$ and $G_{id}$ is equal to identifying $P(Y|do(X))$ in $G_{id}$. 

By C-component factorization \parencite{tianphd,shpitser2006identification}, the set $V$ of variables in a causal graph $G$ can be partitioned into disjoint groups called C-components by assigning two variables to the same C-component if and only if they are connected by a path consisting entirely of hidden confounder edges, and 
\begin{align*}
P(Y|do(X))=\sum_{V\setminus (Y\cup X)}\prod_{i} P(S_i|do(V\setminus S_i)) 
\end{align*}

where $S_i$ are the C-components of $G_{An(Y)}\setminus X$ expressed as $C(G_{An(Y)}\setminus X)=\{S_1,...,S_k\}$ and $G_{An(Y)}$ is the sub-graph of $G$ including only the variables that are ancestors of $Y$. If and only if every C-component factor $P(S_i|do(V\setminus S_i))$ is identifiable then $P(Y|do(X))$ is identifiable. 

C-component factorization can be applied to DCN. Let $V_{G_{past}}$, $V_{G_{dx-}}$ and $V_{G_{dx}}$ be the set of variables in $G_{past}$, $G_{dx-}$ and $G_{dx}$ respectively. Then $(V_{G_{past}}\cup V_{G_{dx-}}) \cap (Y\cup X)=\emptyset$ and it follows that
$V\setminus (Y\cup X)=V_{G_{past}}\cup V_{G_{dx-}} \cup (V_{G_{dx}}\setminus (Y\cup X))$.

If $S_i \in C(G_{An(S_i)})$ the C-component factor $P(S_i|do(V\setminus S_i))$ is computed as \parencite{shpitser2006identification}:

\begin{align*}
P(S_i|do(V\setminus S_i))=\prod_{\{j|v_j\in S_i\}} P(v_j|v_{\pi}^{(j-1)}) 
\end{align*}

Therefore, there is a $P(v_j|v_{\pi}^{(j-1)}) $ factor for each variable $v_j$ in the C-component, where $v_{\pi}^{(j-1)}$ is the set of all variables preceding $v_j$ in some topological ordering $\pi$ in $G$. 

Let $v_j$ be any variable $v_j \in V_{G_{past}}\cup V_{G_{dx-}}$. There are no hidden confounder edge paths connecting $v_j$ to $X$, and so $v_j \in S_i \in C(G_{An(S_i)})$. Therefore, the C-component factors $Q_{V_{G_{past}}\cup V_{G_{dx-}}}$ of $V_{G_{past}}\cup V_{G_{dx-}}$ can be computed as (chain rule of probability):

$$
Q_{V_{G_{past}}\cup V_{G_{dx-}}}=\prod_{\{j|v_j\in V_{G_{past}}\cup V_{G_{dx-}}\}} P(v_j|v_{\pi}^{(j-1)})=P(V_{G_{past}}\cup V_{G_{dx-}})
$$

We will now look into the C-component factors of $V_{G_{dx}}$. As the DCN is a first order Markov process, the C-component factors of $V_{G_{dx}}$ can be computed as \parencite{shpitser2006identification}:

$$
Q_{V_{G_{dx}}}=\prod_{i}\sum_{S_i\setminus Y} \prod_{\{j|v_j\in S_i\}} P(v_j|v_{\pi}^{(j-1)})=\prod_{i}\sum_{S_i\setminus Y} \prod_{\{j|v_j\in S_i\}} P(v_j|v_{\pi}^{(j-1)}\cap (V_{G_{dx-}}\cup V_{G_{dx}}))
$$

So, these factors have no dependency on $V_{G_{past}}$ and therefore $P(Y|do(X))$ can be marginalized over $V_{G_{past}}$ and simplified as:

$$
P(Y|do(X))=\sum_{V\setminus (Y\cup X)}\prod_{i} P(S_i|do(V\setminus S_i))=\sum_{V_{G_{past}}\cup V_{G_{dx-}} \cup (V_{G_{dx}}\setminus (Y\cup X))}Q_{V_{G_{past}}\cup V_{G_{dx-}}}Q_{V_{G_{dx}}}
$$

\[=\sum_{V_{G_{dx-}} \cup (V_{G_{dx}}\setminus (Y\cup X))}P(V_{G_{dx-}})Q_{V_{G_{dx}}}\]

We can now replace $V_{G_{dx-}} \cup V_{G_{dx}}$ by $V_{G_{id}}$ and define $S'_i$ as the C-component factors of $V_{G_{id}}$ which leads to

\begin{align*}
P(Y|do(X))=\sum_{V_{G_{id}}\setminus (Y\cup X)}\prod_{i} P(S'_i|do(V\setminus S'_i))
\end{align*}

Therefore, the identification of $P(Y|do(X))$ can be computed in the limited graph $G_{id}$. 
\qed\end{proof}

Note that if a DCN contains no dynamic hidden confounders, then $G_{id}$ consists of $G_{xy}$ and the time slice preceding it. In a DCN with dynamic hidden confounders $G_{id}$ may require additional time slices into the past, depending on the reach of hidden dynamic confounder paths. Note that $G_{id}$ may include infinite time slices to the past, if hidden dynamic confounders connect with each other cyclically in successive time slices. However, in this doctoral thesis we will consider only finite dynamic confounding.

This result is crucial to reduce the complexity of identification algorithms in dynamic settings. In order to describe the evolution of a dynamic system over time, after an intervention, we can run a causal identification algorithm over a limited number of time slices of the DCN, instead of the entire DCN.

As discussed in Chapter \ref{ch:Chapter5}, both the DCNs with static hidden confounders and with dynamic hidden confounders can be represented as a Markov chain. For graphical and notational simplicity, we represent these DCN graphically as recurrent time slices, as opposed to the shorter time samples, on the basis that one time slice contains as many time samples as the maximal delay of any directed edge among the processes. Also, for notational simplicity we assume the transition matrix from one time slice to the next to be time-invariant; however, removing this restriction would not make any of the lemmas, theorems or algorithms invalid, as they are the result of graphical non-parametric reasoning.

Consider a DCN under the above assumptions, and let $T$ be its time invariant transition matrix from any time slice $V_{t}$ to $V_{t+1}$. We assume that there is some time $t_0$ such that the distribution $P(V_{t_0})$ is known. Fix now $t_x > t_0$ and a set $X \subseteq V_{t_x}$. 
We will now see how performing an intervention on $X$ affects the distributions in $D$.

We begin by stating a series of lemmas that apply to DCNs in general.

\begin{lemma}
\label{lem:Tbefore}
Let $t$ be such that $t_0 \le t < t_x$, with $X \subseteq V_{t_x}$. Then $P(V_{t}|do(X)) = T^{t-t_0} P(V_{t_0})$.
Namely, transition probabilities
are not affected by an intervention in the future. 
\end{lemma}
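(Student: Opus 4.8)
The plan is to split the claim into two independent observations: that an intervention on a strictly later slice leaves the distribution of $V_t$ untouched, and that this unintervened distribution is exactly $T^{t-t_0}P(V_{t_0})$ by the Markov evolution of the DCN.

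First I would establish $P(V_t|do(X)) = P(V_t)$. Since $X \subseteq V_{t_x}$ and the hypothesis gives $t < t_x$, the slice $V_t$ sits entirely in the past relative to $X$: writing $\alpha = t_x - t > 0$ we have $V_t \subseteq V_{t_x - \alpha}$. This is precisely the configuration of Lemma~\ref{lem:futureact} (with its ``$t$'' instantiated as $t_x$), so taking $Y = V_t$ in part (2) of that lemma gives $P(V_t|do(X)) = P(V_t)$ directly. This is the formal version of ``future actions have no effect on the past,'' applied to the whole slice $V_t$ rather than a single chosen subset.

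Second, I would compute the unintervened distribution $P(V_t)$ from the transition matrix. Because $D$ is a first-order, time-invariant Markov process with transition matrix $T$, the distributions obey $P(V_{s+1}) = T\,P(V_s)$, and iterating from $t_0$ yields $P(V_{t+\alpha}) = T^{\alpha}P(V_t)$, hence $P(V_t) = T^{t-t_0}P(V_{t_0})$. Chaining this with the first step gives $P(V_t|do(X)) = P(V_t) = T^{t-t_0}P(V_{t_0})$, which is the desired equality.

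I do not expect a substantive obstacle here; the argument is essentially a one-line appeal to Lemma~\ref{lem:futureact} followed by the definition of $T$. The only points requiring care are bookkeeping ones: aligning the time indices so that the ``$t$'' of Lemma~\ref{lem:futureact} is read as $t_x$, checking that part (2) applies to the full slice $V_t$ and not merely a proper subset (it does, since it holds for any $Y$ contained in an earlier slice), and noting that the transition-matrix recurrence invoked is the \emph{pre-intervention} one — which is legitimate precisely because, by the first step, the future intervention at $t_x$ does not perturb the dynamics at times strictly before $t_x$.
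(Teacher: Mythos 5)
Your proposal is correct and follows essentially the same route as the paper: both invoke Lemma~\ref{lem:futureact}(2) to strip the $do(X)$ operator (since $V_t$ lies strictly before $t_x$), and then iterate the defining relation $P(V_{s+1}) = T\,P(V_s)$ from $t_0$ — the paper phrases this iteration as an induction on $t$, you phrase it as unrolling the Markov recurrence, which is the same argument. No gap to report.
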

\begin{proof}
By Lemma~\ref{lem:futureact}, (2),
$P(V_{t}|do(X))= P(V_{t})$ for all such $t$. 
By definition of $T$, this equals $T\,P(V_{t-1})$. Then induct on $t$ with $P(V_{t_0}) = T^0 P(V_{t_0})$ as base.
\qed\end{proof}

\begin{lemma}
\label{lem:A}
Assume that an expression $P(V_{t+\alpha}|V_{t},do(X))$ is identifiable for some $\alpha>0$. Let $A$ be the matrix whose entries $A_{ij}$ correspond to the probabilities $P(V_{t+\alpha} = v_j|V_t = v_i, do(X))$. Then $P(V_{t+\alpha}|do(X)) = A\,P(V_t|do(X))$.
\end{lemma}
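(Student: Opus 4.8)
The plan is to reduce the claim to the elementary law of total probability applied inside the fixed post-intervention model, and then to match the resulting summation against the matrix conventions adopted earlier for transition matrices. There is no deep causal-calculus content here: once $do(X)$ has selected a single modified model, the statement is pure probability.

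First I would record that the interventional distribution $P(\cdot\mid do(X))$ is nothing but the ordinary observational distribution of the modified model $M_{do(X)}$, obtained by applying Equation~\ref{eqn:PofV} to $M_{do(X)}$ rather than to $M$. Consequently $P(\cdot\mid do(X))$ is a genuine probability distribution over $V$, and the usual rules of probability calculus — marginalization, the product rule, and conditioning — all hold within it. This is the only step where the meaning of the $do$ operator enters the argument.

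Next, conditioning on the intermediate slice $V_t$ and summing over all its joint value assignments $v_i$, I would write, for each value $v_j$ of $V_{t+\alpha}$,
\begin{align*}
P(V_{t+\alpha}=v_j \mid do(X)) &= \sum_{v_i} P(V_{t+\alpha}=v_j, V_t=v_i \mid do(X)) \\
&= \sum_{v_i} P(V_{t+\alpha}=v_j \mid V_t=v_i, do(X))\, P(V_t=v_i \mid do(X)).
\end{align*}
The first equality is total probability over the exhaustive, mutually exclusive events $\{V_t=v_i\}$, and the second is the product rule; no Markov or $d$-separation property of the DCN is invoked. Then I would identify $P(V_{t+\alpha}=v_j \mid V_t=v_i, do(X)) = A_{ij}$ by the definition of $A$. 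The identifiability hypothesis on $P(V_{t+\alpha}\mid V_t, do(X))$ is precisely what guarantees these entries are well-defined do-free quantities, uniquely determined from the graph and $P$ by Theorem~\ref{thm:DoCalculusComplete}, so that $A$ is a genuine matrix; the assumption $\alpha>0$ only keeps the map forward in time, consistent with the transition-matrix picture. Substituting yields $P(V_{t+\alpha}=v_j\mid do(X)) = \sum_{v_i} A_{ij}\, P(V_t=v_i\mid do(X))$, which is the $v_j$-th coordinate of $A\,P(V_t\mid do(X))$ under the convention fixed for transition matrices, giving $P(V_{t+\alpha}\mid do(X)) = A\,P(V_t\mid do(X))$.

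The main (mild) obstacle is therefore conceptual rather than computational: I must justify explicitly that ordinary conditional-probability algebra is legitimate once $do(X)$ has fixed a single model, and I must check that the index placement in $A_{ij}$ matches the matrix-multiplication convention the paper adopted, where the $(v,w)$ entry stores the forward transition probability $P(\,\cdot = w \mid \cdot = v)$. I would state both points plainly so that it is clear plain marginalization, and not a causal identity, is doing the work.
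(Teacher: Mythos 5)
Your proof is correct and is essentially the paper's own argument spelled out in full: the paper's proof is the one-line ``case by case evaluation of $A$'s entries,'' which is exactly your coordinate-wise verification via total probability and the product rule inside the fixed post-intervention distribution. Your added care about why ordinary probability algebra applies under $do(X)$ and about the paper's row-indexed transition-matrix convention fills in precisely what the paper left implicit.
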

\begin{proof} 
Case by case evaluation of $A$'s entries. 
\qed\end{proof}


\section{Identification of DCN with Static Hidden Confounders} 

Our analysis of DCN with static hidden confounders requires an additional lemma, and then we will be able to provide a theorem and an algorithm for the causal identification of DCN with static hidden confounders.

\begin{lemma}
\label{lem:Tafter}
If $t > t_x$ then 
$P(V_{t+1}|do(X)) = TP(V_{t}|do(X))$.
Namely, transition probabilities
are not affected by an intervention more than one time unit 
in the past. 
\end{lemma}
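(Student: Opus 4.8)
The plan is to reduce the claim to a statement about the one-step conditional $P(V_{t+1}\mid V_t, do(X))$ and then invoke Lemma~\ref{lem:A}. Concretely, I would first prove that
\[
P(V_{t+1}\mid V_t, do(X)) = P(V_{t+1}\mid V_t),
\]
i.e. that once we condition on the whole state at slice $t$, an intervention performed strictly in the past ($X\subseteq V_{t_x}$ with $t_x<t$) has no further influence on the transition into slice $t+1$. The right-hand side is do-free and, by time-invariance, its matrix is exactly $T$; this equality also certifies that the one-step expression is identifiable, so Lemma~\ref{lem:A} (with $\alpha=1$ and $A=T$) immediately yields $P(V_{t+1}\mid do(X)) = T\,P(V_t\mid do(X))$, which is the lemma.

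To establish the one-step equality I would delete the action $do(X)$ using rule 3 of do-calculus. Rule 3 applies as soon as $(V_{t+1}\perp X\mid V_t)$ holds in the mutilated graph $\hat G_{\overline{X(V_t)}}$, and since deleting edges cannot open a blocked path, it suffices to verify this d-separation in the full graph $\hat G$. This is where the static-confounder hypothesis is essential and where I expect the real work to lie. The key observation is that $V_t$ forms a complete cut of $\hat G$ between the past and slice $t+1$: because confounders are static, no bidirected edge crosses a time boundary, so the only edges entering slice $t+1$ from earlier slices are directed edges $c\to w$ with $c\in V_t$. Since $t>t_x$, every vertex of $X$ lies in a slice strictly earlier than $t$, so any path from $X$ to a vertex of $V_{t+1}$ must traverse such a crossing edge $c\to w$. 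At the interior vertex $c\in V_t$ this edge contributes a tail, hence $c$ is a non-collider on the path; as we condition on all of $V_t$, conditioning on $c$ blocks the path. Thus no active path from $X$ to $V_{t+1}$ survives, giving the required d-separation.

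The main obstacle is precisely this last point: we condition on the \emph{entire} slice $V_t$ rather than only on the direct causes $C$ used in Lemma~\ref{lem:past}, and conditioning on a full slice could a priori re-open colliders inside slice $t$. The crossing-edge/non-collider argument above is what guarantees that, whatever colliders are opened within the slice, every route out of slice $t$ into slice $t+1$ still passes through a conditioned non-collider, so the path cannot reach $V_{t+1}$. I would also emphasize that the hypothesis $t>t_x$ is used crucially here: if $t=t_x$ then $X\subseteq V_t$, conditioning on $V_t$ would fix $X$, and the cut argument would break down — which is exactly why the transition only reverts to $T$ strictly more than one step after the intervention.
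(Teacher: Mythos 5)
Your proposal is correct and takes essentially the paper's route: both proofs reduce the claim, via the transition-matrix decomposition of Lemma~\ref{lem:A}, to the one-step identity $P(V_{t+1}|V_t,do(X))=P(V_{t+1}|V_t)$, and both rest on the same key fact that conditioning on the whole slice $V_t$ d-separates $X$ from $V_{t+1}$ because, with static hidden confounders, every edge entering slice $t+1$ is a directed edge out of $V_t$. The only divergence is bookkeeping --- the paper removes $do(X)$ in two steps (rule 2, invoked through Lemma~\ref{lem:past}, to exchange the action for an observation, then conditional independence to drop it), whereas you delete it in a single application of rule 3; your explicit crossing-edge argument also verifies the d-separation that the paper simply asserts.
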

\begin{proof}
$P(V_{t+1}|do(X)) = T'\,P(V_{t}|do(X))$ where the elements of $T'$ are $P(V_{t+1}|V_t, do(X))$. As $V_{t}$ includes all variables in $G_{t}$ that are direct causes of variables in $G_{t+1}$, conditioning on $V_{t}$ d-separates $X$ from $V_{t+1}$. By Lemma~\ref{lem:past}, we exchange the action $do(X)$ by the observation $X$ and so $P(V_{t+1}|V_t, do(X)) = P(V_{t+1}|V_t, X)$. 

Moreover, $V_t$ d-separates $X$ from $V_{t+1}$, so they are statistically independent given $V_t$. Therefore, \[P(V_{t+1}|V_t, do(X)) = P(V_{t+1}|V_t, X) = P(V_{t+1}|V_t)\] which are the elements of matrix $T$ as required.
\qed\end{proof}

\begin{theorem}
\label{thm:static}
Let $D$ be a DCN with static hidden confounders, and transition matrix $T$.
Let $X\subseteq V_{t_x}$ and $Y\subseteq V_{t_y}$ for two time points $t_x < t_y$.

If the expression $P(V_{t_x+1}|V_{t_x-1}, do(X))$ is identifiable and its values represented in a transition matrix $A$, then $P(Y|do(X))$ is identifiable and 
$$P(Y|do(X))=\sum_{V_{t_y}\setminus Y} T^{t_y-(t_x+1)}AT^{t_x-1-t_0}P(V_{t_0}).$$
\end{theorem}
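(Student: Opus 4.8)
The plan is to transport the known distribution $P(V_{t_0})$ forward to the target slice $t_y$ by splitting the time axis into three regimes and applying a different transition rule in each: the observational matrix $T$ strictly before the intervened slice, the identified matrix $A$ to bridge across the intervened slice $t_x$, and the observational matrix $T$ again once the effect of the intervention has settled. A final marginalization over $V_{t_y}\setminus Y$ then extracts the distribution of $Y$. Since each regime is governed by one of the lemmas already established, the argument is essentially a chaining of matrix propagations applied to the probability vector $P(V_{t_0})$, all slices sharing the same value space by time-invariance so that the dimensions match.

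First I would use Lemma~\ref{lem:Tbefore} to obtain the pre-intervention distribution: for every $t$ with $t_0\le t<t_x$ we have $P(V_t|do(X))=T^{t-t_0}P(V_{t_0})$, and in particular $P(V_{t_x-1}|do(X))=T^{t_x-1-t_0}P(V_{t_0})$. This is where the static-confounder structure first matters, since $V_{t_x-1}$ contains all direct causes of $V_{t_x}$ and hence forms a clean Markov boundary, which is why I can anchor the bridging matrix at slice $t_x-1$. Next I would invoke Lemma~\ref{lem:A} with $t=t_x-1$ and $\alpha=2$: the hypothesis that $P(V_{t_x+1}|V_{t_x-1},do(X))$ is identifiable with matrix $A$ gives $P(V_{t_x+1}|do(X))=A\,P(V_{t_x-1}|do(X))=A\,T^{t_x-1-t_0}P(V_{t_0})$. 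The matrix $A$ absorbs all the anomalous behaviour induced by intervening on $X\subseteq V_{t_x}$, including the coupling created by any static hidden confounders internal to slice $t_x$.

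Once past slice $t_x$, I would appeal to Lemma~\ref{lem:Tafter}, which states that for every $t>t_x$ the transition reverts to the observational matrix, $P(V_{t+1}|do(X))=T\,P(V_t|do(X))$. Iterating this from $t_x+1$ up to $t_y$ yields $P(V_{t_y}|do(X))=T^{t_y-(t_x+1)}P(V_{t_x+1}|do(X))=T^{t_y-(t_x+1)}A\,T^{t_x-1-t_0}P(V_{t_0})$. Marginalizing over the components of $V_{t_y}$ not in $Y$ then gives the claimed formula $P(Y|do(X))=\sum_{V_{t_y}\setminus Y}T^{t_y-(t_x+1)}A\,T^{t_x-1-t_0}P(V_{t_0})$. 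Identifiability follows immediately since every factor is identifiable: $P(V_{t_0})$ is known, $T$ is an observational transition matrix, and $A$ is identifiable by assumption.

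The main obstacle I anticipate is justifying the clean three-way split, namely that the single step captured by $A$ is the only transition that is not a bare $T$. This is exactly what the static-confounder hypothesis buys: Lemma~\ref{lem:Tbefore} controls the left regime and Lemma~\ref{lem:Tafter} the right, the latter crucially relying on the fact that conditioning on $V_t$ d-separates $X$ from $V_{t+1}$ whenever $t>t_x$, a property that would fail under dynamic confounders whose bidirected edges straddle the intervened slice. I would therefore be careful to record that these two lemmas carry the strict inequalities $t<t_x$ and $t>t_x$ respectively, so that slice $t_x$ is never traversed by $T$ alone and is always handled through $A$, and to note that this is precisely the point where the proof for static confounders would break down in the dynamic case treated later.
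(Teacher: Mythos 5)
Your proposal is correct and follows essentially the same route as the paper's own proof: Lemma~\ref{lem:Tbefore} for the pre-intervention slices, Lemma~\ref{lem:A} to bridge across the intervention via $A$, repeated application of Lemma~\ref{lem:Tafter} to reach $t_y$, and a final marginalization over $V_{t_y}\setminus Y$. Your added remarks on why the static-confounder assumption is what licenses Lemma~\ref{lem:Tafter} (and where this breaks for dynamic confounders) accurately reflect the paper's discussion surrounding Theorem~\ref{thm:dynamic}.
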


\begin{proof} 
Applying Lemma~\ref{lem:Tbefore}, we obtain that
\[P(V_{t_x-1}|do(X)) = T^{t_x-1-t_0}P(V_{t_0}).\]
We  assumed that $P(V_{t_x+1}|V_{t_x-1}, do(X))$ is identifiable and,
therefore, Lemma~\ref{lem:A} guarantees that 
$$P(V_{t_x+1}|do(X)) = A\,P(V_{t_x-1}|do(X)) = A\,T^{t_x-1-t_0}P(V_{t_0}).$$
Finally, $P(V_{t_y}|do(X)) = T^{(t_y-(t_x+1))} P(V_{t_x+1}|do(X))$ by repeatedly applying Lemma~\ref{lem:Tafter}.
$P(Y|do(X))$ is obtained by marginalizing variables in $V_{t_y}\setminus Y$ in the resulting expression $T^{t_y-(t_x+1)}AT^{t_x-1-t_0}P(V_{t_0})$.
\qed\end{proof}

As a consequence of Theorem~\ref{thm:static}, causal identification of $D$ reduces to the problem of identifying
the expression $P(V_{t_x+1}|V_{t_x-1},do(X))$. 
The ID algorithm can be used to check whether this expression is identifiable and, if it is, to compute its joint probability from observed data.

Note that Theorem~\ref{thm:static} holds without the assumption of transition matrix time-invariance by replacing powers of $T$ with products of matrices $T_t$. Also, note the assumption on identifiability holds when there is no \textit{hedge} structure (Definition~\ref{def:hedge}) in the DCN for the expression $P(V_{t_x+1}|V_{t_x-1},do(X))$. See Section~\ref{non_id_sec} for the full analysis on non-identifiability.

\subsection{DCN-ID algorithm for DCNs with Static Hidden Confounders}
\label{sec:static}
The DCN-ID algorithm for DCNs with static hidden confounders is given in Figure~\ref{fig:static-algo}. 
Its soundness is immediate from Theorem~\ref{thm:static}, the soundness of the ID algorithm \parencite{shpitser2006identification}, and Lemma~\ref{lem:graphID}.

\begin{theorem}[Soundness] Whenever DCN-ID returns a distribution for  $P(Y|do(X))$, it is correct.\ \qed
\end{theorem}

Observe that line 2 of the algorithm calls ID with a graph of size $4|G|$. Formally, this would require two calls to ID, but notice that in this case we can spare the call for the ``denominator'' $P(V_{t_x-1}|do(X))$ because Lemma~\ref{lem:futureact} guarantees $P(V_{t_x-1}|do(X)) = P(V_{t_x-1})$. Computing transition matrix A in line 3 has complexity $O((4k)^{(b+2)})$, where $k$ is the number of variables in one time slice and $b$ the number of bits encoding each variable.
The formula in line 4 is the multiplication of $P(V_{t_0})$ by $n=(t_y-t_0)$ matrices, which has complexity $O(n.b^2)$. To solve the same problem with the ID algorithm would require running it on the entire graph of size $n|G|$ and evaluating the resulting joint probability with complexity $O((n.k)^{(b+2)})$ compared to $O((4k)^{(b+2)}+n.b^2)$ with DCN-ID.

If the problem that we want to solve is evaluating the trajectory of the system over time,
$$(P(V_{t_x+1}),P(V_{t_x+2}),P(V_{t_x+3}),...P(V_{t_x+n}))$$
after an intervention at time slice $t_x$, with ID we would need to run ID $n$ times and evaluate the $n$ outputs with overall complexity $O((k)^{(b+2)}+(2k)^{(b+2)}+(3k)^{(b+2)}+...+(n.k)^{(b+2)})$. Doing the same with DCN-ID requires running ID one time to identify $P(V_{t_x+1})$, evaluating the output and applying successive transition matrix multiplications to obtain the joint probability of the time slices thereafter, with resulting complexity $O((4k)^{(b+2)}+n.b^2)$.

\begin{figure}[ht]
\hrule\medskip
Function \textbf{DCN-ID}($Y$,$t_y$, $X$,$t_x$, $G$,$C$,$T$,$P(V_{t_0})$)

INPUT: 
\begin{itemize}
\item DCN defined by a causal graph $G$ 
on a set of variables $V$ and a set $C \subseteq V \times V$ describing causal relations from $V_t$ to $V_{t+1}$ for every $t$
\item transition matrix $T$ for $G$ derived
from observational data
\item a set $Y$ included in $V_{t_y}$
\item a set $X$ included in $V_{t_x}$
\item distribution $P(V_{t_0})$ at the initial state, 
\end{itemize}

OUTPUT: The distribution  $P(Y|do(X))$, or else FAIL

\begin{enumerate}
\item let $G'$ be the acyclic graph formed by joining $G_{t_x-2}$, $G_{t_x-1}$, $G_{t_x}$, and $G_{t_x+1}$
by the causal relations given by $C$;
\item run the standard ID algorithm for expression $P(V_{t_x+1}|V_{t_x-1},do(X))$ on $G'$; if it returns FAIL, return FAIL;
\item else, use the resulting distribution to compute the transition matrix $A$, where $A_{ij} = P(V_{t_x+1}=v_i|V_{t_x-1}=v_j, do(X))$;
\item return $\sum_{V_{t_y}\setminus Y} T^{t_y-(t_x+1)}\,A\,T^{t_x-1-t_0}\,P(V_{t_0})$;
\end{enumerate}

\caption{The DCN-ID algorithm for DCNs with static hidden confounders}
\label{fig:static-algo}
\medskip\hrule
\end{figure}


\section{Identification of DCN with Dynamic Hidden Confounders}

We now discuss the case of DCNs with dynamic hidden confounders, that is, with hidden confounders that influence variables in consecutive time slices.

The presence of dynamic hidden confounders d-connects time slices, and we will see in the following lemmas how this may be an obstacle for the identifiability of the DCN.

If dynamic hidden confounders are present, Lemma~\ref{lem:Tafter} no longer holds, since d-separation is no longer guaranteed. As a consequence, we cannot guarantee the DCN will recover its ``natural'' (non-interventional) transition probabilities from one cycle to the next after the intervention is performed.

Our statement of the identifiability theorem for DCNs with dynamic hidden confounders is weaker and includes in its assumptions those conditions that can no longer be guaranteed.

\begin{theorem}
\label{thm:dynamic}
Let $D$ be a DCN with dynamic hidden confounders. Let $T$ be its transition matrix under no interventions. We further assume that:
\begin{enumerate}
\item $P(V_{t_x+1}|V_{t_x-1}, do(X))$ is identifiable and its values represented in a transition matrix $A$ 
\item For all
$t > t_x+1$, $P(V_{t}|V_{t-1}, do(X))$ is identifiable and its values represented in a transition matrix $M_t$
\end{enumerate}
Then $P(Y|do(X))$ is identifiable and computed by
\[P(Y|do(X))=\sum_{V_{t_y}\setminus Y} \left[\prod\limits_{t=t_x+2}^{t_y} M_t\right]\,A\,T^{t_x-1-t_0}P(V_{t_0}).\]
\end{theorem}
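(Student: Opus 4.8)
The plan is to mirror the structure of the proof of Theorem~\ref{thm:static}, splitting the time axis at the intervention into three regimes: the slices strictly before $t_x$, the ``bridge'' from $t_x-1$ to $t_x+1$, and the forward propagation from $t_x+1$ up to $t_y$. The essential difference from the static case is that Lemma~\ref{lem:Tafter} is no longer available, since dynamic hidden confounders d-connect consecutive slices and the post-intervention transition can no longer be identified with the natural matrix $T$; this is precisely why Theorem~\ref{thm:dynamic} promotes what was a consequence of Lemma~\ref{lem:Tafter} into the explicit hypothesis~(2) about the matrices $M_t$.

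For the past regime, I would first observe that Lemma~\ref{lem:Tbefore} is still in force: its proof rests only on Lemma~\ref{lem:futureact}, which is stated for arbitrary DCNs (static or dynamic), so future actions cannot affect past slices. Hence $P(V_{t_x-1}|do(X)) = T^{t_x-1-t_0}P(V_{t_0})$, exactly as before. For the bridge I would invoke hypothesis~(1): since $P(V_{t_x+1}|V_{t_x-1},do(X))$ is identifiable with matrix $A$, Lemma~\ref{lem:A} gives $P(V_{t_x+1}|do(X)) = A\,P(V_{t_x-1}|do(X)) = A\,T^{t_x-1-t_0}P(V_{t_0})$.

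The forward regime is where the new hypothesis does its work. I would apply Lemma~\ref{lem:A} with $\alpha=1$ once per slice: for each $t$ with $t_x+1 < t \le t_y$, hypothesis~(2) makes $P(V_t|V_{t-1},do(X))$ identifiable with matrix $M_t$, so $P(V_t|do(X)) = M_t\,P(V_{t-1}|do(X))$. Chaining these from $t_x+2$ to $t_y$ yields $P(V_{t_y}|do(X)) = \bigl[\prod_{t=t_x+2}^{t_y} M_t\bigr]\,P(V_{t_x+1}|do(X))$, where the product is ordered with the largest index on the left to respect matrix multiplication. Substituting the bridge expression and marginalizing the variables in $V_{t_y}\setminus Y$ produces the claimed formula. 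Identifiability then follows because every factor on the right-hand side is a do-free object: $P(V_{t_0})$ and $T$ come from observational data, while $A$ and the $M_t$ are identifiable by hypotheses~(1) and~(2), so the entire expression contains no $do()$ operator.

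The step I expect to require the most care is justifying that the one-step recursion $P(V_t|do(X)) = M_t\,P(V_{t-1}|do(X))$ remains valid despite the broken d-separation. Here I would stress that Lemma~\ref{lem:A} is nothing more than the law of total probability, $P(V_t|do(X)) = \sum_{V_{t-1}} P(V_t|V_{t-1},do(X))\,P(V_{t-1}|do(X))$, which holds for any joint distribution and needs no Markov or d-separation assumption. The confounders manifest only in the fact that the entries of $M_t$ differ from those of $T$ and genuinely depend on $t$ and on the intervention, not in any failure of the marginalization itself. The whole content of the dynamic case is thus that the time-varying, intervention-dependent transition matrices $M_t$ must be supplied as identifiable expressions rather than reconstructed from the observational $T$, and the proof amounts to verifying that the static argument survives this single substitution.
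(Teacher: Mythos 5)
Your proposal is correct and follows essentially the same route as the paper's proof: Lemma~\ref{lem:Tbefore} for the pre-intervention slices, hypothesis~(1) with Lemma~\ref{lem:A} for the bridge to $V_{t_x+1}$, iterated application of the $M_t$ matrices from hypothesis~(2) up to $t_y$, and a final marginalization over $V_{t_y}\setminus Y$. Your added remarks --- that Lemma~\ref{lem:Tbefore} survives because Lemma~\ref{lem:futureact} holds for arbitrary DCNs, and that the one-step recursion is just the law of total probability and so needs no d-separation --- make explicit what the paper leaves implicit, but do not change the argument.
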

\begin{proof} 
Similar to the proof of Theorem~\ref{thm:static}. By Lemma~\ref{lem:Tbefore}, we can compute the distribution up to time $t_x-1$ as 
$$P(V_{t_x-1}|do(X)) = T^{t_x-1-t_0}P(V_{t_0}).$$ 
Using the first assumption in the statement of the theorem, by Lemma~\ref{lem:A} we obtain
$$P(V_{t_x+1}|do(X)) = A\,T^{t_x-1-t_0}P(V_{t_0}).$$
Then, we compute the final $P(V_{t_y}|do(X))$ using the matrices $M_t$ from the statement of the theorem that allows us to compute probabilities for subsequent time-slices. Namely, 
\begin{align*}
P(V_{t_x+2}|do(X)) &= M_{t_x+2}\,A\,T^{t_x-1-t_0}P(V_{t_0}), \\
P(V_{t_x+3}|do(X)) &= M_{t_x+3}\,M_{t_x+2}\,A\,T^{t_x-1-t_0}P(V_{t_0}),
\end{align*}
and so on until we find
\[P(V_{t_y}|do(X)) = \left[\prod\limits_{t=t_x+2}^{t_y} M_t\right]\,A\,T^{t_x-1-t_0}P(V_{t_0}).\]
Finally, the do-free expression of $P(Y|do(X))$ is obtained by marginalization over variables of $V_{t_y}$ not in $Y$.
\qed\end{proof}

Again, note that Theorem~\ref{thm:dynamic} holds without the assumption of transition matrix time-invariance by replacing powers of $T$ with products of matrices $T_t$. Also, the assumptions of Theorem~\ref{thm:dynamic} on identifiability hold when there is no \textit{hedge} structure (Definition~\ref{def:hedge}) in the DCN for expressions $P(V_{t_x+1}|V_{t_x-1}, do(X))$ and $P(V_{t}|V_{t-1}, do(X))$. See Section~\ref{non_id_sec} for the full analysis on non-identifiability.

\subsection{DCN-ID algorithm for DCNs with Dynamic Hidden Confounders}
\label{sec:dynamic}

The DCN-ID algorithm for DCNs with dynamic hidden confounders  is given in Figure~\ref{fig:dynamic-algo}.

Its soundness is immediate from Theorem~\ref{thm:dynamic}, the soundness of the ID algorithm \parencite{shpitser2006identification}, and Lemma~\ref{lem:graphID}.

\begin{theorem}[Soundness] Whenever DCN-ID returns a distribution for  $P(Y|do(X))$, it is correct.\ \qed
\end{theorem}

Notice that this algorithm is more expensive than the DCN-ID algorithm for DCNs with static hidden confounders. In particular, it requires $(t_y - t_x)$ calls to the ID algorithm with increasingly larger chunks of the DCN. To identify a single future effect $P(Y|do(X))$ it may be simpler to invoke Lemma~\ref{lem:graphID} and do a unique call to the ID algorithm for the expression $P(Y|do(X))$ restricted to the causal graph $G_{id}$. However, to predict the trajectory of the system over time after an intervention, the DCN-ID algorithm for dynamic hidden confounders directly identifies the post-intervention transition matrix and its evolution. A system characterized by a time-invariant transition matrix before the intervention may be characterized by a time dependent transition matrix, given by the DCN-ID algorithm, after the intervention. This dynamic view offers opportunities for the analysis of the time evolution of the system, and conditions for convergence to a steady state.

To give an intuitive example of a DCN with dynamic hidden confounders, let us consider three roads in which the traffic conditions are linked by hidden confounders from $tr1$ to $tr2$ the following day, and from $tr2$ to $tr3$ the day after. After applying control mechanisms to $tr1$, the traffic transition matrix to the next day is different than the transition matrix several days later, because it is not possible to d-separate the future from the controlling action by just conditioning on a given day. As a consequence, the identification algorithm must calculate every successive transition matrix in the future.

\begin{figure}[]
\hrule\medskip
Function \textbf{DCN-ID}($Y$,$t_y$, $X$,$t_x$, $G$,$C$,$C'$,$T$,$P(V_{t_0})$)

INPUT: 
\begin{itemize}
\item DCN defined by a causal graph $G$ 
on a set of variables $V$ and a set $C \subseteq V \times V$ describing causal relations from $V_t$ to $V_{t+1}$ for every $t$, and a set $C' \subseteq V \times V$ describing hidden confounder relations from $V_t$ to $V_{t+1}$ for every $t$
\item transition matrix $T$ for $G$ derived
from observational data
\item a set $Y$ included in $V_{t_y}$
\item a set $X$ included in $V_{t_x}$
\item distribution $P(V_{t_0})$ at the initial state, 
\end{itemize}

OUTPUT: The distribution  $P(Y|do(X))$, or else FAIL

\begin{enumerate}
\item let $G'$ be the graph consisting of all time slices in between (and including) $G_{t_x+1}$ and the time slice preceding the left-most time slice connected to $X$ by a hidden confounder path or, if there is no hidden confounder path to X, $G_{t_x-2}$;
\item run the standard ID algorithm for expression $P(V_{t_x+1}|V_{t_x-1},do(X))$ on $G'$; if it returns FAIL, return FAIL;
\item else, use the resulting distribution to compute the transition matrix $A$, where $A_{ij} = P(V_{t_x+1}=v_i|V_{t_x-1}=v_j, do(X))$;
\item for each $t$ from $t_x+2$ up to $t_y$:
	\begin{enumerate}
    \item let $G''$ be the graph consisting of all time slices in between (and including) $G_{t}$ and the time slice preceding the left-most time slice connected to $X$ by a hidden confounder path or, if there is no hidden confounder path to X, $G_{t_x-1}$;
	\item run the standard ID algorithm on $G''$ for the expression $P(V_t|V_{t-1},do(X))$; if it returns FAIL, return FAIL;
    \item else, use the resulting distribution to compute the transition matrix $M_t$, where $(M_t)_{ij} = P(V_{t}=v_i|V_{t-1}=v_j, do(X))$;
	\end{enumerate}
\item return $\sum_{V_{t_y}\setminus Y} \left[\prod\limits_{t=t_x+2}^{t_y} M_t\right]\,A\,T^{t_x-1-t_0}P(V_{t_0})$;
\end{enumerate}

\caption{The DCN-ID algorithm for DCNs with dynamic hidden confounders}
\label{fig:dynamic-algo}
\medskip\hrule
\end{figure}


\section{Non-Identifiability}
\label{non_id_sec}

In this section we show that the identification algorithms, as formulated in previous sections, are not complete, and we develop the algorithms for complete identification of DCNs. To prove completeness, we use previous results \parencite{shpitser2006identification}. It is shown there that the absence of a \textit{hedge} structure (Definition~\ref{def:hedge}) is a sufficient and necessary condition for identifiability. The same applies in the context of DCNs.

\begin{lemma}[DCN complete identification]
\label{lem:dcn_complete}
Let $D$ be a DCN with hidden confounders. Let $X$ and $Y$ be sets of variables in $D$. $P(Y|do(X))$ is identifiable iff there is no hedge in $D$ for $P(Y|do(X))$.
\end{lemma}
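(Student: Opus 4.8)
The plan is to reduce the statement to the hedge criterion of Shpitser and Pearl, which is quoted immediately after Definition~\ref{def:hedge} and asserts that, for a finite semi-Markovian causal graph $G$, the effect $P(Y|do(X))$ is identifiable if and only if no two C-forests form a hedge for it in $G$. A DCN is by construction a semi-Markovian causal model: its dependency-equivalent projection has only directed edges and bidirected confounder edges, so the cited criterion applies verbatim to any finite portion of its graph. The only gap between our setting and the cited result is that the induced graph $\hat G$ is bi-infinite, whereas the hedge criterion is stated for finite graphs; bridging this gap is exactly what Lemma~\ref{lem:graphID} provides.

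First I would invoke Lemma~\ref{lem:graphID} to pass from the bi-infinite graph $\hat G$ to the finite subgraph $G_{id}$. That lemma guarantees that whenever the effect is identifiable, identification in $\hat G$ and in $G_{id}$ yield the same do-free expression. I would sharpen this into a genuine equivalence of identifiability by recalling that $G_{id}$ contains every ancestor of $Y$ reachable through directed or hidden-confounder paths, under the standing hypothesis (stated after Lemma~\ref{lem:graphID}) that dynamic confounding is finite. Non-ancestors of $Y$ are irrelevant by do-calculus rule~3, and the future slices $G_{future}$ are discarded for the same reason, so no C-forest relevant to $P(Y|do(X))$ can have a root or a node outside $G_{id}$.

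For the ``if'' direction I would argue through the finite criterion: if no hedge exists in $D$ for $P(Y|do(X))$, then in particular no hedge exists in $G_{id}$; the hedge criterion \parencite{shpitser2006identification} then makes $P(Y|do(X))$ identifiable in $G_{id}$, and Lemma~\ref{lem:graphID} transfers this identifiability and its value back to $\hat G$, hence to $D$. For the ``only if'' direction I would run the correspondence in reverse: a hedge $F, F'$ for $P(Y|do(X))$ in $D$ lives entirely inside $G_{id}$ by the confinement argument above, so it is a hedge in the finite graph $G_{id}$, and the hedge criterion yields non-identifiability in $G_{id}$; by Lemma~\ref{lem:graphID} the effect cannot be identifiable in $\hat G$ either, since otherwise it would be identifiable in $G_{id}$, a contradiction.

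The main obstacle I anticipate is precisely the confinement claim: verifying that every C-forest participating in a hedge for $P(Y|do(X))$ is contained in $G_{id}$, so that ``hedge in $D$'' and ``hedge in $G_{id}$'' coincide. This amounts to checking that the roots $R \in An(Y)_{\hat G_{\overline{X}}}$ and the bidirected-connected structure of the C-forests never reach past the left boundary used to define $G_{id}$, which is exactly where the finite-dynamic-confounding hypothesis is needed; without it a hedge could in principle extend infinitely into the past through a cyclic chain of dynamic confounders. Once confinement is secured, the remainder is a direct application of the cited hedge criterion.
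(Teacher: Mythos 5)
Your route is genuinely different from the paper's. The paper's proof of Lemma~\ref{lem:dcn_complete} does not pass through $G_{id}$ or Lemma~\ref{lem:graphID} at all: it applies the hedge criterion of \parencite{shpitser2006identification} directly to the expanded causal graph of $D$, treating the DCN as a bi-infinite semi-Markovian causal model --- a move the thesis licenses by its earlier remark that DCNs can be viewed as bi-infinite causal models and that all results are graphical and non-parametric. Your reduction to the finite subgraph $G_{id}$ confronts the finite-versus-infinite issue that the paper glosses over, which is a virtue; but it also creates an obligation the paper never incurs, and that is where your argument has a gap.

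The gap is in the ``if'' direction. Lemma~\ref{lem:graphID} as stated is one-directional: identifiability in $\hat G$ implies identifiability in $G_{id}$ with the same result. Your chain ``no hedge in $D$ $\Rightarrow$ no hedge in $G_{id}$ $\Rightarrow$ identifiable in $G_{id}$ $\Rightarrow$ identifiable in $\hat G$'' uses the lemma in the converse direction for the last step, which its statement does not provide. Your proposed repair --- the confinement argument --- establishes that hedges in $\hat G$ and hedges in $G_{id}$ coincide, but hedge-equivalence connects to identifiability only through the Shpitser--Pearl criterion, which (by the very premise of your approach) you may apply only to the finite graph; so it cannot by itself convert ``identifiable in $G_{id}$'' into ``identifiable in $\hat G$''. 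To close this you must open up the proof of Lemma~\ref{lem:graphID}: its C-component factorization expresses $P(Y|do(X))$ in $\hat G$ as a marginalization of factors supported entirely in $G_{id}$, so identifiability of those factors in the finite graph yields a do-free expression valid in $\hat G$. Separately, your justification of confinement via ``$G_{id}$ contains every ancestor of $Y$'' is false as stated --- directed ancestors of $Y$ reach arbitrarily far into the past in any DCN; the correct mechanism, which you do give in your closing paragraph, is that the C-forests of a hedge are bidirected-connected to $X$ and rooted at ancestors of $Y$, so forward-pointing directed edges together with the finite dynamic time span confine them to the slices between $G_{lx}$ and $t_y$.
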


\begin{proof}
If a $hedge$ exists in $D$ for $P(Y|do(X))$ then the conditions for the existence of a $hedge$ in every time slice of $D$ are true. By \parencite{shpitser2006identification}, $P(Y|do(X))$ is identifiable iff there is no $hedge$ for $X$ and $Y$ in the expanded causal graph of $D$.
\qed\end{proof}

We can show that the algorithms presented in the previous section, in some cases introduce hedges in the sub-networks they analyse, even if no hedges existed in the original expanded network.

\begin{lemma} The DCN-ID algorithms for DCNs with static hidden confounders (Section~\ref{sec:static}) and dynamic hidden confounders (Section~\ref{sec:dynamic}) are not complete.
\label{lem:dcn_id_not_complete}
\end{lemma}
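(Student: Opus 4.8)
The plan is to refute completeness by exhibiting a single counterexample: a DCN $D$ together with sets $X \subseteq V_{t_x}$ and $Y \subseteq V_{t_y}$ for which $P(Y|do(X))$ \emph{is} identifiable, yet DCN-ID returns \textsc{fail}. By Lemma~\ref{lem:dcn_complete}, identifiability of $P(Y|do(X))$ is equivalent to the absence of a hedge (Definition~\ref{def:hedge}) for $P(Y|do(X))$ in the expanded graph, and by Lemma~\ref{lem:graphID} this can be checked in the finite graph $G_{id}$. So it suffices to build $D$ with no hedge for the target $P(Y|do(X))$ but on which the algorithm nonetheless aborts.

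First I would isolate the structural reason the algorithms can abort spuriously. Both variants route the whole computation through \emph{full} per-slice transition matrices: in Figure~\ref{fig:static-algo} step~2 (and in Figure~\ref{fig:dynamic-algo} steps~2 and 4b) the standard ID algorithm is invoked on the expression $P(V_{t_x+1}|V_{t_x-1},do(X))$, whose effect set is the \emph{entire} slice $V_{t_x+1}$, not merely the target $Y$. Hence ID, and therefore DCN-ID, returns \textsc{fail} exactly when there is a hedge for this whole-slice expression. But a hedge for $P(Y|do(X))$ must be rooted at $R \subseteq An(Y)_{G_{\overline{X}}}$; a hedge that lives entirely among variables that are \emph{not} ancestors of $Y$ is therefore not a hedge for the target, yet it \emph{is} a hedge for the whole-slice expression, since that expression also demands identifying those non-ancestor variables. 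This gap between ``identify $Y$'' and ``identify the entire transition matrix'' is what I want to exploit.

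Second, I would instantiate this gap concretely. I would place in an intermediate slice a variable $W$ that is a descendant of $X$ (so it participates in the transition matrix the algorithm tries to build) but has no directed path to $Y$, so that $W \notin An(Y)$; and I would attach to $W$ a pair of nested $R$-rooted C-forests forming a hedge for $P(W|do(X))$, in the style used earlier for non-identifiability. Because the hedge then involves only non-ancestors of $Y$, there is no hedge for $P(Y|do(X))$ and the target is identifiable (one confirms this by discarding non-ancestors via do-calculus rule~3 exactly as in Lemma~\ref{lem:graphID}); yet $W \in V_{t_x+1}$ forces the whole-slice expression $P(V_{t_x+1}|V_{t_x-1},do(X))$ to contain the hedge, so step~2 of the static algorithm returns \textsc{fail}. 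The identical graph defeats the dynamic algorithm, whose inner ID calls identify the analogous whole-slice expressions $P(V_t|V_{t-1},do(X))$. This shows both algorithms return \textsc{fail} on an identifiable instance, i.e., they are not complete.

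The hard part will be the bookkeeping in the counterexample: I must simultaneously certify that the chosen nested C-forests really do constitute a hedge for the whole-slice expression (so that ID provably aborts) and that \emph{no} hedge is rooted among $An(Y)$ (so that $P(Y|do(X))$ is provably identifiable). Keeping $W$ strictly outside $An(Y)$ while still making it a descendant of $X$ — and checking that slicing the biinfinite graph down to the finite graph analysed by the algorithm neither destroys the intended hedge nor creates an unintended one among ancestors of $Y$ — is the delicate step; everything else is a direct appeal to Lemma~\ref{lem:dcn_complete} and the soundness of ID. This also naturally motivates the corrected, complete algorithms to follow, which must target only $An(Y)$ rather than whole time slices.
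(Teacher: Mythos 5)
Your proposal is correct and takes essentially the same approach as the paper's own proof: both construct a DCN containing nested $R$-rooted C-forests whose root $R$ lies in $An(V_{t_x+1})_{D_{\bar{X}}}$ but not in $An(Y)_{D_{\bar{X}}}$, so that $P(Y|do(X))$ is hedge-free and hence identifiable by Lemma~\ref{lem:dcn_complete}, while the whole-slice expression $P(V_{t_x+1}|V_{t_x-1},do(X))$ that DCN-ID must identify does contain a hedge, forcing a spurious FAIL. Your extra care about excluding unintended hedges among ancestors of $Y$, and your explicit remark that the same counterexample defeats the dynamic-confounder variant, are refinements of detail rather than a different argument.
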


\begin{proof}
Let $D$ be an DCN. Let $X$ be such that $D$ contains two $R$-rooted C-forests $F$ and $F'$, $F'\subseteq F$, $F\cap X \neq 0$, $F'\cap X = 0$. Let $Y$ be such that $R\not\subset An(Y)_{D_{\bar{X}}}$. The condition for $Y$ implies that $D$ does not contain a hedge, and is therefore identifiable by Lemma~\ref{lem:dcn_complete}. Let the set of variables at time slice $t_x+1$ of $D$,  $V_{t_x+1}$, be such that $R\subset An(V_{t_x+1})_{D_{\bar{X}}}$. By Definition~\ref{def:hedge}, $D$ contains a hedge for $P(V_{t_x+1}|V_{t_x-1},do(X))$. The identification of $P(Y|do(X))$ requires DCN-ID to identify $P(V_{t_x+1}|V_{t_x-1},do(X))$ which fails.
\qed\end{proof}

The proof of Lemma~\ref{lem:dcn_id_not_complete} provides the framework to build a complete algorithm for identification of DCNs.

\begin{figure}[H]
\begin{center}
\includegraphics[width=8cm,height=4cm]{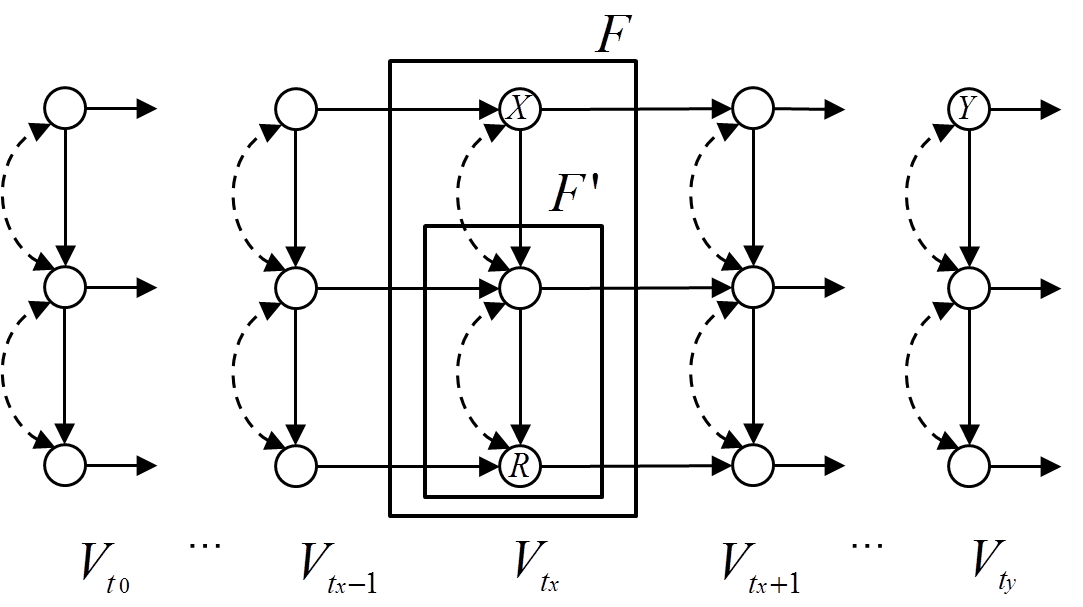}
\end{center}
\caption{Identifiable Dynamic Causal Network which the DCN-ID algorithm fails to identify. $F$ and $F'$ are $R$-rooted C-forests, but since $R$ is not an ancestor of $Y$ there is no hedge for $P(Y|do(X))$. However, $R$ is an ancestor of $V_{t_x+1}$ and DCN-ID fails when finding the hedge for $P(V_{t_x+1}|V_{t_x-1}, do(X))$.}
\label{fig:dcn_no_hedge}
\end{figure}

Figure~\ref{fig:dcn_no_hedge} shows an identifiable DCN that DCN-ID fails to identify.

\subsection{Complete DCN identification with Static Hidden Confounders}
\label{sec:completestatic}

The DCN-ID algorithm can be modified so that no hedges are introduced if none existed in the original network. This is done at the cost of more complicated notation, because the fragments of network to be analysed do no longer correspond to natural time slices. More delicate surgery is needed.

\begin{lemma}
\label{lem:hedge_static}
Let $D$ be a DCN with static hidden confounders. Let $X\subseteq V_{t_x}$ and $Y\subseteq V_{t_y}$ for two time slices $t_x < t_y$. If there is a hedge $H$ for $P(Y|do(X))$ in $D$ then $H\subseteq V_{t_x}$.
\end{lemma}

\begin{proof}
By definition of hedge, $F$ and $F'$ are connected by hidden confounders to $X$. As $D$ has only static hidden confounders $F$, $F'$ and $X$ must be within $t_x$.
\qed\end{proof}

\begin{lemma}
\label{lem:static_complete}
Let $D$ be a DCN with static hidden confounders. Let $X\subseteq V_{t_x}$ and $Y\subseteq V_{t_y}$ for two time slices $t_x < t_y$.
Then, $P(Y|do(X))$ is identifiable if and only if the expression $P(V_{t_x+1}\cap An(Y)|V_{t_x-1}, do(X))$ is identifiable.
\end{lemma}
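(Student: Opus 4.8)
The plan is to prove both implications through the hedge criterion of Lemma~\ref{lem:dcn_complete}, exploiting the fact (Lemma~\ref{lem:hedge_static}) that under static confounding every hedge for an intervention on $X \subseteq V_{t_x}$ is confined to the single slice $V_{t_x}$. The only thing that then distinguishes a hedge for $P(Y|do(X))$ from a hedge for the restricted expression is the root condition $R \subseteq An(\cdot)_{G_{\overline{X}}}$, so the whole argument reduces to comparing the ancestor sets $An(Y)$ and $An(V_{t_x+1}\cap An(Y))$.

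First I would dispose of the conditioning. Following the decomposition already used for the DCN-ID algorithm, I read $P(V_{t_x+1}\cap An(Y)|V_{t_x-1}, do(X))$ as the ratio of the joint $P((V_{t_x+1}\cap An(Y))\cup V_{t_x-1}|do(X))$ over $P(V_{t_x-1}|do(X))$, and the denominator equals $P(V_{t_x-1})$ by Lemma~\ref{lem:futureact} (the slice $V_{t_x-1}$ precedes $X$) and is therefore always identifiable. Hence the conditional expression is identifiable iff its joint numerator is. I then observe that adding the past slice $V_{t_x-1}$ to the target set does not enlarge the set of admissible hedge roots: by Lemma~\ref{lem:hedge_static} any hedge lives in $V_{t_x}$, and no vertex of $V_{t_x}$ can be an ancestor of a vertex of $V_{t_x-1}$ (edges never run backward in time), so the condition $R \subseteq An((V_{t_x+1}\cap An(Y))\cup V_{t_x-1})_{G_{\overline{X}}}$ collapses to $R \subseteq An(V_{t_x+1}\cap An(Y))_{G_{\overline{X}}}$.

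Then I compare the two root conditions. The inclusion $An(V_{t_x+1}\cap An(Y)) \subseteq An(Y)$ is immediate, which gives the $(\Rightarrow)$ direction by contraposition: a hedge witnessing non-identifiability of the restricted expression is rooted at some $R \subseteq An(V_{t_x+1}\cap An(Y))_{G_{\overline{X}}} \subseteq An(Y)_{G_{\overline{X}}}$, hence is also a hedge for $P(Y|do(X))$, so if $P(Y|do(X))$ is identifiable the restricted expression must be too. For $(\Leftarrow)$ I need the reverse routing: a hedge for $P(Y|do(X))$ is, by Lemma~\ref{lem:hedge_static}, contained in $V_{t_x}$ and rooted at $R \subseteq An(Y)_{G_{\overline{X}}}$; for each root $r \in R \subseteq V_{t_x}$ the directed path in $G_{\overline{X}}$ from $r$ to some $y \in Y \subseteq V_{t_y}$ with $t_y > t_x$ must, by the first-order Markov structure (edges cross at most one slice), enter slice $t_x+1$ at some vertex $w$, and that $w$ lies in $V_{t_x+1}\cap An(Y)$; thus $r \in An(w) \subseteq An(V_{t_x+1}\cap An(Y))$, so the same hedge witnesses non-identifiability of the restricted expression. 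Contrapositively, identifiability of the restricted expression forces identifiability of $P(Y|do(X))$.

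The main obstacle I anticipate is the careful bookkeeping around the conditioning together with the $G_{\overline{X}}$ subtleties: I must justify that the forward path from $w$ to $Y$ stays in slices $\ge t_x+1$ and therefore never touches $X \subseteq V_{t_x}$, so that $An$ computed in $G$ and in $G_{\overline{X}}$ agree for these future vertices, and I must confirm the degenerate case $V_{t_x+1}\cap An(Y)=\emptyset$, in which no vertex of $V_{t_x}$ (in particular none of $X$) is an ancestor of $Y$, so $P(Y|do(X))=P(Y)$ is trivially identifiable and the restricted expression is vacuously identifiable, keeping the equivalence intact. A cleaner but less symmetric alternative for $(\Leftarrow)$ would be to mimic Theorem~\ref{thm:static} directly, identifying the restricted transition matrix on $V_{t_x+1}\cap An(Y)$ and propagating it forward to $Y$ through the ancestral sub-chain while marginalizing non-ancestors; I would nonetheless keep the hedge argument as the primary route since it handles both directions uniformly.
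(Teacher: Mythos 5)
Your proposal is correct and takes essentially the same route as the paper's proof: both directions are argued through the hedge criterion of Lemma~\ref{lem:dcn_complete}, the confinement of hedges to slice $t_x$ given by Lemma~\ref{lem:hedge_static}, the collapse of the conditioning on $V_{t_x-1}$ (no backward-in-time ancestors), and the comparison of root conditions via $An(V_{t_x+1}\cap An(Y))_{D_{\overline{X}}} \subseteq An(Y)_{D_{\overline{X}}}$ together with the routing of any directed path from $R \subseteq V_{t_x}$ to $Y$ through a vertex of $V_{t_x+1}\cap An(Y)$. The only difference is presentational: the paper phrases the argument as a case analysis on whether slice $t_x$ contains the C-forest pair and asserts the routing step tersely, whereas you organize both directions as contrapositives and spell out the routing and the degenerate case $V_{t_x+1}\cap An(Y)=\emptyset$ explicitly.
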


\begin{proof} 
(\textbf{if}) By Lemma~\ref{lem:dcn_complete}, if 
\begin{align*}
P(V_{t_x+1}\cap An(Y)&|V_{t_x-1},do(X))=\frac{P(V_{t_x+1}\cap An(Y),V_{t_x-1}|do(X))}{P(V_{t_x-1})}
\end{align*} 
is identifiable, then there is no hedge for this expression in $D$. By Lemma~\ref{lem:hedge_static}, if $D$ has static hidden confounders, a hedge must be within time slice $t_x$. If time slice $t_x$ does not contain two $R$-rooted C-forests $F$ and $F'$ such that $F'\subseteq F$, $F\cap X \neq 0$, $F'\cap X = 0$, then there is no hedge for any set $Y$ so there is no hedge for the expression $P(Y|do(X))$, which makes it identifiable. Now let us assume time slice $t_x$ contains two $R$-rooted C-forests $F$ and $F'$ such that $F'\subseteq F$, $F\cap X \neq 0$, $F'\cap X = 0$, then $R\not\subset An(V_{t_x+1}\cap An(Y),V_{t_x-1})_{D_{\bar{X}}}$. As $R$ is in time slice $t_x$, this implies $R\not\subset An(Y)_{D_{\bar{X}}}$ and so there is no hedge for the expression $P(Y|do(X))$ which makes it identifiable.

(\textbf{only if}) By Lemma~\ref{lem:dcn_complete}, if $P(Y|do(X))$ is identifiable then there is no hedge for $P(Y|do(X))$ in $D$. By Lemma~\ref{lem:hedge_static} if $D$ has static hidden confounders, a hedge must be within time slice $t_x$. If time slice $t_x$ does not contain two $R$-rooted C-forests $F$ and $F'$ such that $F'\subseteq F$, $F\cap X \neq 0$, $F'\cap X = 0$, then there is no hedge for any set $Y$ so there is no hedge for the expression 
\begin{align*}
P(V_{t_x+1}\cap An(Y)&|V_{t_x-1},do(X))=\frac{P(V_{t_x+1}\cap An(Y),V_{t_x-1}|do(X))}{P(V_{t_x-1})}
\end{align*} 
which makes it identifiable. Now let us assume time slice $t_x$ contains two $R$-rooted C-forests $F$ and $F'$ such that $F'\subseteq F$, $F\cap X \neq 0$, $F'\cap X = 0$, then $R\not\subset An(Y)_{D_{\bar{X}}}$ (if $R\subset An(Y)_{D_{\bar{X}}}$ $D$ would contain a hedge by definition). As $R$ is in time slice $t_x$, $R\not\subset An(Y)_{D_{\bar{X}}}$ implies $R\not\subset An(V_{t_x+1}\cap An(Y))_{D_{\bar{X}}}$ and $R\not\subset An(V_{t_x+1}\cap An(Y),V_{t_x-1})_{D_{\bar{X}}}$ so there is no hedge for $P(V_{t_x+1}\cap An(Y)|V_{t_x-1}, do(X))$ which makes this expression identifiable.
\qed\end{proof}

\begin{lemma}
\label{lem:A_generic}
Assume that an expression $P(V'_{t+\alpha}|V_{t},do(X))$ is identifiable for some $\alpha>0$ and $V'_{t+\alpha}\subseteq V_{t+\alpha}$. Let $A$ be the matrix whose entries $A_{ij}$ correspond to the probabilities $P(V'_{t+\alpha} = v_j|V_t = v_i, do(X))$. Then $P(V'_{t+\alpha}|do(X)) = A\,P(V_t|do(X))$.
\end{lemma}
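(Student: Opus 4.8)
The plan is to prove this exactly as the special case Lemma~\ref{lem:A} was proved, namely by an entry-by-entry verification resting on the law of total probability applied inside the post-intervention distribution $P(\,\cdot\,|do(X))$. The only novelty relative to Lemma~\ref{lem:A} is that the target variables $V'_{t+\alpha}$ form a subset of the slice $V_{t+\alpha}$ rather than the whole slice, and the task is to argue that this affects only the row index set of $A$, not the structure of the argument.

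First I would fix an arbitrary value assignment $V'_{t+\alpha}=v_j$ and expand the left-hand side by marginalizing over all value assignments $v_i$ of the full slice $V_t$ within the interventional distribution:
\begin{align*}
P(V'_{t+\alpha}=v_j|do(X)) = \sum_{v_i} P(V'_{t+\alpha}=v_j|V_t=v_i,do(X))\,P(V_t=v_i|do(X)).
\end{align*}
This is simply the law of total probability in the distribution obtained after performing $do(X)$; note that it conditions on the complete slice $V_t$, exactly as in Lemma~\ref{lem:A}, so the fact that $V'_{t+\alpha}$ is only a subset of $V_{t+\alpha}$ does not change the set of values $v_i$ summed over, nor does it require any further marginalization over $V_{t+\alpha}\setminus V'_{t+\alpha}$ (the conditional $P(V'_{t+\alpha}|V_t,do(X))$ is already the relevant marginal). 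Any term with $P(V_t=v_i|do(X))=0$ contributes zero and may be dropped, so the identity is unaffected by value assignments of measure zero.

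Next I would recognize the right-hand side of the displayed equation as the component of the matrix-vector product $A\,P(V_t|do(X))$ indexed by $v_j$: by definition the entry of $A$ selecting source $v_i$ and target $v_j$ is precisely $P(V'_{t+\alpha}=v_j|V_t=v_i,do(X))$, following the matrix convention fixed for transition matrices in the Dynamic Bayesian Networks section, while $P(V_t|do(X))$ is the vector of the $P(V_t=v_i|do(X))$. Summing over $v_i$ therefore reproduces the matrix product componentwise, giving $P(V'_{t+\alpha}|do(X)) = A\,P(V_t|do(X))$ as vectors.

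There is no serious obstacle here: the identity is a direct consequence of the law of total probability and holds on the true interventional distribution irrespective of identifiability. The role of the identifiability hypothesis is only to guarantee that each entry $P(V'_{t+\alpha}=v_j|V_t=v_i,do(X))$ is uniquely determined from the graph and the observational distribution, so that $A$ is a genuine, computable matrix rather than a merely formal object; this is what lets the lemma be invoked downstream to actually evaluate $P(V'_{t+\alpha}|do(X))$. The single point requiring a line of care is the bookkeeping of the row and column index sets of $A$ under the passage from $V_{t+\alpha}$ to the subset $V'_{t+\alpha}$, but this is purely notational.
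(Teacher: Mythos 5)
Your proposal is correct and follows essentially the same route as the paper, whose proof of Lemma~\ref{lem:A_generic} (like that of Lemma~\ref{lem:A}) is a case-by-case evaluation of $A$'s entries; you simply spell out that this evaluation is the law of total probability applied within the post-intervention distribution, conditioning on the full slice $V_t$. Your added remarks---that the subset $V'_{t+\alpha}$ only affects the row index set, and that identifiability serves merely to make $A$ computable---are consistent with the paper's usage and introduce no gap.
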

\begin{proof} 
Case by case evaluation of $A$'s entries. 
\qed\end{proof}

\begin{lemma}
\label{lem:Tafter_marginal}
Let $D$ be a DCN with static hidden confounders. 
Let $X\subseteq V_{t_x}$ and $Y\subseteq V_{t_y}$ for two time slices $t_x < t_y$. Then 
$P(Y|do(X))=\left[\prod\limits_{t=t_x+2}^{t_y} M_t\right]P(V_{t_x+1}\cap An(Y)|do(X))$ where $M_t$ is the matrix whose entries correspond to the probabilities $P(V_{t}\cap An(Y) = v_j|V_{t-1}\cap An(Y) = v_i)$.
\end{lemma}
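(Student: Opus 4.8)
The plan is to establish a one-step recursion for the ancestral marginals $P(V_t\cap An(Y)\mid do(X))$ and then iterate it, mirroring the proof of Theorem~\ref{thm:static} but carrying only the ancestors of $Y$ through each transition. The central claim is that for every $t$ with $t_x+1\le t<t_y$,
\[
P(V_{t+1}\cap An(Y)\mid do(X)) = M_{t+1}\,P(V_t\cap An(Y)\mid do(X)).
\]
Once this is in hand, a straightforward induction on $t$ from $t_x+1$ up to $t_y$ gives $P(V_{t_y}\cap An(Y)\mid do(X)) = \bigl[\prod_{t=t_x+2}^{t_y}M_t\bigr]\,P(V_{t_x+1}\cap An(Y)\mid do(X))$, and since $Y\subseteq V_{t_y}\cap An(Y)$ (taking the reflexive convention $Y\subseteq An(Y)$), marginalising the same-slice ancestors $V_{t_y}\cap An(Y)\setminus Y$ yields $P(Y\mid do(X))$.

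To prove the recursion I would first record the \emph{ancestral-closure} property: if $u$ is a parent of a variable in $An(Y)$ then $u\in An(Y)$. Consequently no variable of $V_{t+1}\setminus An(Y)$ is a parent of any variable of $V_{t+1}\cap An(Y)$ nor of any variable of $V_t\cap An(Y)$, so these non-ancestors can be summed out of $P(V_{t+1}\mid V_t)$, each contributing a factor equal to $1$ (the standard ``non-ancestors of $Y$ are irrelevant'' argument already used in Lemma~\ref{lem:graphID} and in \parencite{shpitser2006identification}). This gives $P(V_{t+1}\cap An(Y)\mid V_t)=P(V_{t+1}\cap An(Y)\mid V_t\cap An(Y))$, i.e. the observational quantity whose entries define $M_{t+1}$. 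Because $D$ has only static hidden confounders, there are no bidirected edges crossing slices, so conditioning on the full slice $V_t$ d-separates $X$ (which lies at $t_x<t$) from $V_{t+1}$; hence by Lemma~\ref{lem:Tafter}, whose content rests on Lemma~\ref{lem:past}, we may exchange $do(X)$ for an observation and then drop it, obtaining $P(V_{t+1}\cap An(Y)\mid V_t,do(X))=P(V_{t+1}\cap An(Y)\mid V_t\cap An(Y))$ for all $t>t_x$.

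Combining these two facts, I would expand $P(V_{t+1}\cap An(Y)\mid do(X))=\sum_{V_t}P(V_{t+1}\cap An(Y)\mid V_t,do(X))\,P(V_t\mid do(X))$, substitute the slice-independent transition, and collapse the outer sum over $V_t$ into a sum over $V_t\cap An(Y)$ using $\sum_{V_t\setminus An(Y)}P(V_t\mid do(X))=P(V_t\cap An(Y)\mid do(X))$. This is exactly the matrix--vector product $M_{t+1}\,P(V_t\cap An(Y)\mid do(X))$, which is the restriction of Lemma~\ref{lem:A_generic} to the ancestral sub-state and completes the recursion.

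The main obstacle will be the careful verification that restricting the state to $V_t\cap An(Y)$ preserves both the first-order transition structure and the action-to-observation exchange in the presence of hidden confounders. Specifically, I must check that marginalising the non-ancestor variables of a slice leaves a clean factor of $1$ even when a within-slice static confounder couples a non-ancestor to an ancestor (it does, since such a non-ancestor has no observed child in $An(Y)$, so summing it out eliminates its confounder dependence), and that the d-separation of $X$ from $V_{t+1}$ survives when we condition only on the ancestral part $V_t\cap An(Y)$ rather than on all of $V_t$. Both reduce to the ancestral-closure property together with the absence of cross-slice bidirected edges guaranteed by Definition~\ref{def:staticconfounder}; once these are pinned down the remaining algebra is routine, including the boundary case $t_y=t_x+1$ where the product is empty and the claim collapses to the final marginalisation alone.
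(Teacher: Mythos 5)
Your proposal is correct and takes essentially the same route as the paper: the paper's (much terser) proof likewise restricts attention to the ancestral subset of variables of $Y$ and then applies Lemma~\ref{lem:Tafter} repeatedly from $t=t_x+2$ to $t=t_y$, which is exactly your one-step recursion iterated. Your write-up merely makes explicit the supporting details the paper glosses over --- ancestral closure, the d-separation surviving restriction to $V_t\cap An(Y)$ under static confounders, and the final marginalisation over $(V_{t_y}\cap An(Y))\setminus Y$.
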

\begin{proof}
For the identification of $P(Y|do(X))$ we can restrict our attention to the subset of variables in $D$ that are ancestors of Y. Then we repeatedly apply Lemma~\ref{lem:Tafter} on this subset from $t=t_x+2$ to $t=t_y$ until we find $P(V_{t_y}\cap An(Y)|do(X))=P(Y|do(X))$.
\qed\end{proof}

\begin{figure}[H]
\hrule\medskip
Function \textbf{cDCN-ID}($Y$,$t_y$, $X$,$t_x$, $G$,$C$,$T$,$P(V_{t_0})$)

INPUT: 
\begin{itemize}
\item DCN defined by a causal graph $G$ 
on a set of variables $V$ and a set $C \subseteq V \times V$ describing causal relations from $V_t$ to $V_{t+1}$ for every $t$
\item transition matrix $T$ representing the probabilities $P(V_{t+1}|V_{t})$ derived from observational data
\item a set $Y$ included in $V_{t_y}$
\item a set $X$ included in $V_{t_x}$
\item distribution $P(V_{t_0})$ at the initial state, 
\end{itemize}

OUTPUT: The distribution  $P(Y|do(X))$ if it is identifiable, or else FAIL

\begin{enumerate}
\item let $G'$ be the acyclic graph formed by joining $G_{t_x-2}$, $G_{t_x-1}$, $G_{t_x}$, and $G_{t_x+1}$
by the causal relations given by $C$;
\item run the standard ID algorithm for expression $P(V_{t_x+1}\cap An(Y)|V_{t_x-1},do(X))$ on $G'$; if it returns FAIL, return FAIL;
\item else, use the resulting distribution to compute the transition matrix $A$, where $A_{ij} = P(V_{t_x+1}\cap An(Y)=v_i|V_{t_x-1}=v_j, do(X))$;
\item let $M_t$ be the matrix $T$ marginalized as $P(V_{t}\cap An(Y) = v_j|V_{t-1}\cap An(Y) = v_i)$
\item return $\left[\prod\limits_{t=t_x+2}^{t_y} M_t\right]A\,T^{t_x-1-t_0}\,P(V_{t_0})$;
\end{enumerate}

\caption{The cDCN algorithm for DCNs with static hidden confounders}
\label{fig:static-algo-complete}
\medskip\hrule
\end{figure}

\begin{theorem}
\label{thm:calc_static_complete}
Let $D$ be a DCN with static hidden confounders and transition matrix $T$. 
Let $X\subseteq V_{t_x}$ and $Y\subseteq V_{t_y}$ for two time slices $t_x < t_y$.
If $P(Y|do(X))$ is identifiable then $P(Y|do(X))=\left[\prod\limits_{t=t_x+2}^{t_y} M_t\right]AT^{t_x-1-t_0}P(V_{t_0})$ where $A$ is the matrix whose entries $A_{ij}$ correspond to $P(V_{t_x+1}\cap An(Y)|V_{t_x-1}, do(X))$ and $M_t$ is the matrix whose entries correspond to the probabilities $P(V_{t}\cap An(Y) = v_j|V_{t-1}\cap An(Y) = v_i)$.
\end{theorem}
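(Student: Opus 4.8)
The plan is to assemble the formula by chaining the three ingredient lemmas already proved for the static case, mirroring the structure of the proof of Theorem~\ref{thm:static} but carrying the restriction to $An(Y)$ that guarantees completeness. The very first step is to invoke the completeness characterization: since we assume $P(Y|do(X))$ is identifiable, Lemma~\ref{lem:static_complete} tells us immediately that the restricted expression $P(V_{t_x+1}\cap An(Y)\mid V_{t_x-1},do(X))$ is identifiable, so the matrix $A$ is well defined and computable by the ID algorithm. This is precisely the step that repairs the incompleteness exposed in Lemma~\ref{lem:dcn_id_not_complete}: by restricting the slice at $t_x+1$ to $V_{t_x+1}\cap An(Y)$ instead of the full slice, we ensure that the root set $R$ of any offending $C$-forest pair fails to be an ancestor of the target, so no spurious hedge is introduced where none exists for $P(Y|do(X))$.

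With identifiability of $A$ in hand, I would then propagate probabilities through time in three moves. Because $t_x-1<t_x$, Lemma~\ref{lem:Tbefore} gives $P(V_{t_x-1}\mid do(X))=T^{t_x-1-t_0}P(V_{t_0})$, reflecting that a future intervention does not disturb past transitions. Next I would apply Lemma~\ref{lem:A_generic} with $V'_{t_x+1}=V_{t_x+1}\cap An(Y)$, $t=t_x-1$, and $\alpha=2$, which yields
\[
P(V_{t_x+1}\cap An(Y)\mid do(X)) = A\,P(V_{t_x-1}\mid do(X)) = A\,T^{t_x-1-t_0}P(V_{t_0}).
\]
Finally, Lemma~\ref{lem:Tafter_marginal} propagates this forward from slice $t_x+1$ to slice $t_y$ using the $An(Y)$-marginalized transition matrices $M_t$, giving $P(Y\mid do(X))=\left[\prod_{t=t_x+2}^{t_y}M_t\right]P(V_{t_x+1}\cap An(Y)\mid do(X))$; substituting the previous line produces the claimed formula.

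The main obstacle is the bookkeeping of the matrix domains so that the chained products are dimensionally coherent. The matrix $A$ must map a distribution over the \emph{full} slice $V_{t_x-1}$ to a distribution over the \emph{restricted} slice $V_{t_x+1}\cap An(Y)$, and each subsequent $M_t$ must map between successive $An(Y)$-restricted slices. I must verify that the output domain of $A$ is exactly the starting slice of the Lemma~\ref{lem:Tafter_marginal} recursion, i.e.\ that $V_{t_x+1}\cap An(Y)$ coincides with the first factor in $\prod_{t=t_x+2}^{t_y}M_t$, and that marginalizing $T$ down to the $An(Y)$-restricted variables is legitimate. The latter is justified because, in a static-confounder DCN, only ancestors of $Y$ contribute to $P(Y\mid do(X))$ (this is the content underlying Lemma~\ref{lem:Tafter_marginal}, which rests on Lemma~\ref{lem:Tafter}). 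Once these restrictions are seen to line up, the algebraic substitution is immediate and the theorem follows.
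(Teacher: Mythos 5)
Your proposal is correct and takes essentially the same route as the paper's own proof: both chain Lemma~\ref{lem:static_complete} (to get identifiability of the restricted expression and hence of $A$), Lemma~\ref{lem:Tbefore}, Lemma~\ref{lem:A_generic}, and Lemma~\ref{lem:Tafter_marginal} in exactly this way. Your additional remarks on checking that the matrix domains (full slice $V_{t_x-1}$ into the $An(Y)$-restricted slices) line up are a sound elaboration of bookkeeping the paper leaves implicit.
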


\begin{proof} 
Applying Lemma~\ref{lem:Tbefore}, we obtain that
$$P(V_{t_x-1}|do(X)) = T^{t_x-1-t_0}P(V_{t_0}).$$
By Lemma~\ref{lem:static_complete} $P(V_{t_x+1}\cap An(Y)|V_{t_x-1}, do(X))$ is identifiable. Lemma~\ref{lem:A_generic} guarantees that $P(V_{t_x+1}\cap An(Y)|do(X)) = A\,P(V_{t_x-1}|do(X)) = A\,T^{t_x-1-t_0}P(V_{t_0})$. Then we apply Lemma~\ref{lem:Tafter_marginal} and obtain the resulting expression \[P(Y|do(X))=\left[\prod\limits_{t=t_x+2}^{t_y} M_t\right]AT^{t_x-1-t_0}P(V_{t_0}).\]
\qed\end{proof}

The cDCN-ID algorithm for identification of DCNs with static hidden confounders is given in Figure~\ref{fig:static-algo-complete}. 

\begin{theorem}[Soundness and completeness]
\label{thm:completeness}
The cDCN-ID algorithm for DCNs with static hidden confounders is sound and complete.
\end{theorem}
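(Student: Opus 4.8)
The plan is to derive both properties from the equivalence already established in Lemma~\ref{lem:static_complete}, which collapses the entire identification question for $P(Y|do(X))$ onto the single expression $\psi := P(V_{t_x+1}\cap An(Y)|V_{t_x-1}, do(X))$, and then to transfer the soundness and completeness guarantees of the standard ID algorithm from the finite window $G'$ (the four slices $G_{t_x-2},\dots,G_{t_x+1}$ used in line 1) to the full DCN $D$ by means of Lemma~\ref{lem:graphID}. Since $\psi$ is a conditional effect, I would first note that its denominator $P(V_{t_x-1}|do(X))$ equals $P(V_{t_x-1})$ by Lemma~\ref{lem:futureact}, so the only substantive work for the ID call is the numerator, an ordinary unconditional do-expression supported on slices $t_x-1,\dots,t_x+1$.

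For soundness I would argue from a successful run. If cDCN-ID returns a distribution, then the ID call on $G'$ did not FAIL; by soundness of ID (\parencite{shpitser2006identification}) the returned object is the correct value of $\psi$ in $G'$, and by Lemma~\ref{lem:graphID} this coincides with the value of $\psi$ in $D$, so in particular $\psi$ is identifiable in $D$. The \textbf{if} direction of Lemma~\ref{lem:static_complete} then yields that $P(Y|do(X))$ is identifiable, and Theorem~\ref{thm:calc_static_complete} certifies that the matrix product $\left[\prod_{t=t_x+2}^{t_y} M_t\right]A\,T^{t_x-1-t_0}P(V_{t_0})$ returned in line 5 is exactly $P(Y|do(X))$, with $A$ assembled from the identified $\psi$ (line 3) and the $M_t$ the marginalizations of $T$ onto $An(Y)$ (line 4).

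For completeness I would run the chain in the opposite direction. Assuming $P(Y|do(X))$ is identifiable, the \textbf{only if} direction of Lemma~\ref{lem:static_complete} gives that $\psi$ is identifiable in $D$; Lemma~\ref{lem:graphID} then guarantees $\psi$ is identifiable in the finite window $G'$ with the same value. By completeness of ID (\parencite{shpitser2006identification}), the ID call in line 2 succeeds rather than returning FAIL, so cDCN-ID proceeds to line 5 and returns a distribution, which by the soundness argument equals $P(Y|do(X))$.

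The main obstacle, and the step I would treat most carefully, is the transfer between $G'$ and $D$: I must check that the window fixed in line 1 agrees with the graph $G_{id}$ prescribed by Lemma~\ref{lem:graphID} for $\psi$ in the static-confounder case (where $G_{id}$ is $G_{xy}$ together with one preceding slice), so that restricting ID to $G'$ neither destroys nor manufactures identifiability relative to $\hat G$. The companion subtlety is verifying that intersecting with $An(Y)$ --- the precise fix over the incomplete DCN-ID of Lemma~\ref{lem:dcn_id_not_complete} --- keeps any hedge confined to $V_{t_x}$ (Lemma~\ref{lem:hedge_static}) and hence prevents the algorithm from introducing inside the window a spurious hedge that is absent from $D$. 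Since this delicacy is exactly what Lemma~\ref{lem:static_complete} already encapsulates, the present proof should reduce to a short and mechanical assembly of these cited pieces.
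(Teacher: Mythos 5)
Your proof is correct and takes essentially the same route as the paper: the paper's own proof is a one-line appeal to Lemma~\ref{lem:static_complete} for completeness and Theorem~\ref{thm:calc_static_complete} for soundness, which is exactly the skeleton you assemble. Your extra care in transferring identifiability between the window $G'$ and the full DCN via Lemma~\ref{lem:graphID} and the soundness and completeness of the ID algorithm is a fuller spelling-out of the same dependencies, not a different argument.
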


\begin{proof} 
The completeness derives from Lemma~\ref{lem:static_complete} and the soundness from Theorem~\ref{thm:calc_static_complete}.
\qed\end{proof}

\subsection{Complete DCN identification with Dynamic Hidden Confounders}
\label{sec:completesdynamic}

We now discuss the complete identification of DCNs with dynamic hidden confounders. First we introduce the concept of dynamic time span from which we derive two lemmas.

\begin{definition}[Dynamic time span]
Let $D$ be a DCN with dynamic hidden confounders and $X\subseteq V_{t_x}$. Let $t_m$ be the maximal time slice d-connected by confounders to $X$; $t_m-t_x$ is called the dynamic time span of $X$ in $D$.
\end{definition}

Note that the dynamic time span of $X$ in $D$ can be in some cases infinite, the simplest case being when $X$ is connected by a hidden confounder to itself at $V_{t_x+1}$. In this thesis we consider finite dynamic time spans only. We will label the dynamic time span of $X$ as $t_{dx}$.

\begin{lemma}
\label{lem:hedge_dynamic}
Let $D$ be a DCN with dynamic hidden confounders. Let $X$, $Y$ be sets of variables in $D$. Let $t_{dx}$ be the dynamic time span of $X$ in $D$. If there is a hedge for $P(Y|do(X))$ in $D$ then the hedge does not include variables at $t>t_x+t_{dx}$.
\end{lemma}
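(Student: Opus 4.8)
The plan is to mirror the strategy used for static confounders in Lemma~\ref{lem:hedge_static}, with the dynamic time span now playing the role that the single time slice $t_x$ played there. The core observation is that a hedge for $P(Y|do(X))$ consists of two $R$-rooted C-forests $F$ and $F'$ with $F' \subseteq F$, so the set of variables appearing in the hedge is exactly the vertex set of $F$. It therefore suffices to show that every vertex of $F$ lies in a time slice $t \le t_x + t_{dx}$.

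First I would invoke the definition of C-forest (Definition~\ref{def:Cforest}): the whole vertex set of $F$ forms a single C-component, meaning any two of its vertices are joined by a path consisting entirely of bidirected (hidden confounder) edges. Since $X$ occurs in $F$ while $X \cap F' = \emptyset$, there is at least one vertex $x \in F \cap X$, and by assumption $x \in V_{t_x}$. Then for an arbitrary vertex $v \in F$, the C-component property supplies a confounder path between $v$ and $x$; this is precisely a path through hidden confounders linking $v$ to $X$. Hence $v$ belongs to the set of variables d-connected by confounders to $X$.

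The next step is to apply the definition of dynamic time span directly: $t_m = t_x + t_{dx}$ is by definition the maximal time slice d-connected to $X$ by confounders, so every vertex confounder-connected to $X$ sits in a time slice no later than $t_m$. Combining this with the previous paragraph, every vertex of $F$, and thus every variable of the hedge, lies at a time slice $t \le t_x + t_{dx}$, which is the claim.

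I expect the main obstacle to be making the link between ``being in the same C-component as $X$'' and ``being d-connected by confounders to $X$'' fully rigorous, in particular checking that the bidirected path guaranteed inside the C-forest is genuinely a confounder path of the kind measured by the dynamic time span, and that the span's ``maximal time slice'' bound is one-sided in exactly the direction the statement needs: it only constrains vertices with $t > t_x + t_{dx}$ and says nothing about earlier slices, which is all the lemma asserts. A minor point to handle with care is that the hedge definition places $X$ inside $F$ but not $F'$, so the argument must be anchored at a vertex of $F \cap X$ rather than at $R$ or at a vertex of $F'$.
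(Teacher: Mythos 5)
Your proposal is correct and follows essentially the same route as the paper's own (much terser) proof: both rest on the fact that a hedge's C-forests are, by the C-component property, connected by bidirected (confounder) paths to the vertex of $X$ in $F$, and then bound every such vertex by the maximal time slice $t_x+t_{dx}$ from the definition of dynamic time span. Your write-up simply makes explicit the steps the paper compresses into two sentences, including the correct anchoring at a vertex of $F\cap X$.
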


\begin{proof}
By definition of hedge, $F$ and $F'$ are connected by hidden confounders to $X$. The maximal time point connected by hidden confounders to $X$ is $t_x+t_{dx}$.
\qed\end{proof}

\begin{lemma}
\label{lem:dynamic_complete}
Let $D$ be a DCN with dynamic hidden confounders. Let $X\subseteq V_{t_x}$ and $Y\subseteq V_{t_y}$ for two time slices $t_x, t_y$. Let $t_{dx}$ be the dynamic time span of $X$ in $D$ and $t_x + t_{dx} < t_y$.
$P(Y|do(X))$ is identifiable if and only if $P(V_{t_x+t_{dx}+1}\cap An(Y)|V_{t_x-1}, do(X))$ is identifiable.
\end{lemma}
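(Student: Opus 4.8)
The plan is to follow the proof of Lemma~\ref{lem:static_complete} essentially line for line, replacing the single-slice localization of hedges (which in the static case confines any hedge to $V_{t_x}$ via Lemma~\ref{lem:hedge_static}) by the wider localization supplied by Lemma~\ref{lem:hedge_dynamic}, namely that no hedge for $P(Y|do(X))$ reaches beyond slice $t_x+t_{dx}$. By Lemma~\ref{lem:dcn_complete}, identifiability of each of the two expressions is equivalent to the non-existence of a hedge for it, so the whole statement reduces to showing that $D$ contains a hedge for $P(Y|do(X))$ if and only if it contains one for $P(V_{t_x+t_{dx}+1}\cap An(Y)|V_{t_x-1},do(X))$. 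Since $V_{t_x-1}$ lies strictly before $X\subseteq V_{t_x}$, Lemma~\ref{lem:futureact} gives $P(V_{t_x-1}|do(X))=P(V_{t_x-1})$, so I can rewrite the conditional expression as the ratio $P(V_{t_x+t_{dx}+1}\cap An(Y),V_{t_x-1}|do(X))/P(V_{t_x-1})$ with a do-free, trivially identifiable denominator; the hedge analysis then concerns only the numerator, whose target set is $W=(V_{t_x+t_{dx}+1}\cap An(Y))\cup V_{t_x-1}$.

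\textbf{Key steps.} First I would apply Lemma~\ref{lem:hedge_dynamic} to pin any $R$-rooted C-forest pair $F'\subseteq F$ with $F\cap X\neq\emptyset$, $F'\cap X=\emptyset$ to the slices $t\le t_x+t_{dx}$; the candidate forests are the same for both expressions, since they depend only on $X$ and the confounder structure, not on the output set. What differs between the two expressions is solely the root condition $R\subseteq An(\cdot)_{D_{\overline{X}}}$ from Definition~\ref{def:hedge}. I would then split into two cases exactly as in Lemma~\ref{lem:static_complete}: if the region $t\le t_x+t_{dx}$ contains no such forest pair, then no hedge exists for any target and both expressions are identifiable; otherwise such a pair exists with root set $R$ confined to $t\le t_x+t_{dx}$, and the argument turns on comparing $R\subseteq An(Y)_{D_{\overline{X}}}$ with $R\subseteq An(W)_{D_{\overline{X}}}$.

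\textbf{Main obstacle.} The crux, and the place where the dynamic span enters, is the forward-crossing equivalence $R\subseteq An(Y)_{D_{\overline{X}}}\iff R\subseteq An(V_{t_x+t_{dx}+1}\cap An(Y))_{D_{\overline{X}}}$. Because directed edges of a DCN advance the time index by at most one slice and $Y\subseteq V_{t_y}$ with $t_x+t_{dx}<t_y$, every directed path in $D_{\overline{X}}$ from a vertex at a slice $\le t_x+t_{dx}$ to $Y$ must traverse some vertex $w$ at slice $t_x+t_{dx}+1$; lying on a directed path to $Y$, this $w$ belongs to $V_{t_x+t_{dx}+1}\cap An(Y)$, and transitivity of ancestry then yields both directions. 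It remains to check that adjoining $V_{t_x-1}$ to the target cannot manufacture a spurious hedge, i.e.\ that $An(W)_{D_{\overline{X}}}$ and $An(V_{t_x+t_{dx}+1}\cap An(Y))_{D_{\overline{X}}}$ agree on whether they contain $R$: this needs that every root lies at a slice $\ge t_x$, so that (causes preceding effects) no root can be an ancestor of the strictly earlier slice $V_{t_x-1}$. I expect this last anchoring claim to be the delicate point, since it is precisely what rules out the conditioning set spuriously activating a hedge, and it is where one must verify that the dynamic confounders reaching the forests extend forward from $X$ rather than behind $t_x-1$. Assembling the two cases with both inclusions then establishes the biconditional, and Lemma~\ref{lem:dcn_complete} converts it into the stated identifiability equivalence.
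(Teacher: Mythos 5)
Your proposal takes essentially the same route as the paper: the paper's entire proof of this lemma reads ``Same as the proof of Lemma~\ref{lem:static_complete}, but replacing `static' by `dynamic', $V_{t_x+1}$ by $V_{t_x+t_{dx}+1}$, Lemma~\ref{lem:hedge_static} by Lemma~\ref{lem:hedge_dynamic}, and `time slice $t_x$' by `time slices $t_x$ to $t_x+t_{dx}$','' which is precisely the substitution scheme you execute (reduction to hedge existence via Lemma~\ref{lem:dcn_complete}, temporal confinement via Lemma~\ref{lem:hedge_dynamic}, case split on the existence of a C-forest pair, and the forward-crossing ancestry argument through slice $t_x+t_{dx}+1$). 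If anything you are more explicit than the paper: your ``anchoring claim'' that no root lies before slice $t_x$ (so that adjoining $V_{t_x-1}$ to the target cannot activate a spurious hedge) is exactly what the paper's substitution ``time slices $t_x$ to $t_x+t_{dx}$'' silently asserts, even though Lemma~\ref{lem:hedge_dynamic} as stated only bounds hedges from above and gives no lower bound keeping them out of slices earlier than $t_x$.
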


\begin{proof} 
Same as the proof of Lemma~\ref{lem:static_complete}, but replacing "static" by "dynamic", $V_{t_x+1}$ by $V_{t_x+t_{dx}+1}$, Lemma~\ref{lem:hedge_static} by Lemma~\ref{lem:hedge_dynamic}, and "time slice $t_x$" by "time slices $t_x$ to $t_x+t_{dx}$".
\qed\end{proof}

\begin{theorem}
\label{thm:calc_dynamic_complete}
Let $D$ be a DCN with dynamic hidden confounders and $T$ be its transition matrix under no interventions. Let $X\subseteq V_{t_x}$ and $Y\subseteq V_{t_y}$ for two time slices $t_x, t_y$. Let $t_{dx}$ be the dynamic time span of $X$ in $D$ and $t_x + t_{dx} < t_y$.
If $P(Y|do(X))$ is identifiable then:
\begin{enumerate}
\item $P(V_{t_x+t_{dx}+1}\cap An(Y)|V_{t_x-1}, do(X))$ is identifiable by matrix $A$ 
\item For $t > t_x+t_{dx}+1$, $P(V_{t}\cap An(Y)|V_{t-1}\cap An(Y), do(X))$ is identifiable by matrix $M_t$
\item $P(Y|do(X))=\left[\prod\limits_{t=t_x+t_{dx}+2}^{t_y} M_t\right]\,A\,T^{t_x-1-t_0}P(V_{t_0})$
\end{enumerate}
\end{theorem}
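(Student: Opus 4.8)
The plan is to mirror the architecture of the static complete proof (Theorem~\ref{thm:calc_static_complete}), splitting the time axis into three regimes — the pre-intervention slices $t < t_x$, a single confounded ``jump'' covering slice $t_x-1$ up to slice $t_x+t_{dx}+1$, and the post-span transitions $t > t_x+t_{dx}+1$ — and then composing them by matrix multiplication. The one genuinely new ingredient is that the dynamic time span $t_{dx}$, rather than the constant offset $1$ of the static case, locates the right end of the confounded region. First I would dispatch the pre-intervention regime exactly as before: since $t_x-1 < t_x$, Lemma~\ref{lem:Tbefore} gives $P(V_{t_x-1}|do(X)) = T^{t_x-1-t_0}P(V_{t_0})$, because interventions in the future do not alter past transition probabilities.

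Next, for the confounded jump I would invoke Lemma~\ref{lem:dynamic_complete}: because $P(Y|do(X))$ is assumed identifiable, the expression $P(V_{t_x+t_{dx}+1}\cap An(Y)\mid V_{t_x-1}, do(X))$ is identifiable, which is precisely claim (1) and defines the matrix $A$. Reading $A$ as a conditional transition from slice $t_x-1$ to the ancestor-restricted slice $t_x+t_{dx}+1$, Lemma~\ref{lem:A_generic} applied with $\alpha = t_{dx}+2$ yields
\[
P(V_{t_x+t_{dx}+1}\cap An(Y)\mid do(X)) = A\,P(V_{t_x-1}\mid do(X)) = A\,T^{t_x-1-t_0}P(V_{t_0}).
\]
The purpose of the ``$+1$'' in the index $t_x+t_{dx}+1$ is to land one slice beyond $t_m=t_x+t_{dx}$, the maximal slice reachable from $X$ by a confounder path; by Lemma~\ref{lem:hedge_dynamic} any hedge for an $X$-based expression is confined to slices $\le t_m$, so absorbing the entire confounded region into the single identifiable matrix $A$ is exactly what keeps the construction complete instead of introducing spurious hedges in the per-slice transitions.

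For the post-span regime I would establish claim (2): for every $t > t_x+t_{dx}+1$ the slice $t-1$ lies strictly beyond $t_m$, so no confounder path joins $X$ to $\{V_{t-1},V_t\}$, and hence by Lemma~\ref{lem:dcn_complete} there is no hedge for $P(V_t\cap An(Y)\mid V_{t-1}\cap An(Y), do(X))$, which is therefore identifiable and defines $M_t$. The restriction to ancestors of $Y$ is justified by the C-component marginalization already used in Lemma~\ref{lem:graphID}, which discards non-ancestors of $Y$ without changing the identified value. The law of total probability then gives the matrix recurrence $P(V_t\cap An(Y)\mid do(X)) = M_t\,P(V_{t-1}\cap An(Y)\mid do(X))$, and iterating it from $t_x+t_{dx}+2$ up to $t_y$, substituting the jump result above, and finally marginalizing over $V_{t_y}\setminus Y$ produces claim (3),
\[
P(Y|do(X)) = \left[\prod_{t=t_x+t_{dx}+2}^{t_y} M_t\right] A\,T^{t_x-1-t_0}P(V_{t_0}).
\]

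The main obstacle is this post-span step, and it is where the dynamic case departs from the static one. In the static setting the transitions past the confounded region collapse to purely observational quantities (Lemma~\ref{lem:Tafter_marginal}) because no confounder crosses a slice boundary; here the dynamic confounders linking consecutive slices create collider paths of the form $X \to \cdots \to W_{t-1} \leftarrow U \to Z_t$ that conditioning on $V_{t-1}$ reopens, so $M_t$ genuinely retains the $do(X)$ and is not in general a marginal of $T$. I therefore must argue two separate things with care: that the bounded dynamic time span forbids any hedge for these post-span conditionals once $t-1 > t_m$ (so each $M_t$ is identifiable despite the reopened colliders), and that these conditionals depend only on the immediately preceding ancestor-restricted slice (so they are bona fide transition matrices composable by the chain rule, and the ancestor restriction commutes with the recurrence). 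Verifying these two points is the delicate part; everything else is a direct transcription of the static argument with $1$ replaced by $t_{dx}+1$.
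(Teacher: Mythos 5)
Your handling of claims (1) and (3) coincides with the paper's proof: claim (1) comes from Lemma~\ref{lem:dynamic_complete} together with Lemma~\ref{lem:A_generic}, the pre-intervention regime is Lemma~\ref{lem:Tbefore}, and claim (3) is the same matrix composition as in Theorem~\ref{thm:dynamic}, now with statements (1) and (2) proved rather than assumed. The gap is in your argument for claim (2).

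You justify identifiability of $P(V_t\cap An(Y)\mid V_{t-1}\cap An(Y),do(X))$ for $t>t_x+t_{dx}+1$ by arguing that slice $t-1$ lies beyond $t_m$, so ``no confounder path joins $X$ to $\{V_{t-1},V_t\}$, and hence there is no hedge.'' That inference is false, and the paper itself supplies the counterexample: in Lemma~\ref{lem:dcn_id_not_complete} and Figure~\ref{fig:dcn_no_hedge} a hedge exists for $P(V_{t_x+1}\mid V_{t_x-1},do(X))$ even though no confounder path connects $X$ to $V_{t_x+1}$. By Definition~\ref{def:hedge}, the C-forests $F,F'$ of a hedge live near $X$, inside the confounded region (Lemma~\ref{lem:hedge_dynamic}); their only link to the outcome set is the requirement that the root set $R$ consist of \emph{ancestors} of the outcome, and directed paths from $R$ can reach arbitrarily far into the future. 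So absence of confounder paths into slices $t-1,t$ does not preclude a hedge rooted back in slices $\leq t_x+t_{dx}$ whose roots are ancestors of $V_t\cap An(Y)$. A telling symptom is that your argument for (2) never invokes the hypothesis that $P(Y|do(X))$ is identifiable, even though statement (2) is precisely conditional on it. The paper's argument does use it: any hedge must lie within slices $t_x$ to $t_x+t_{dx}$ (Lemma~\ref{lem:hedge_dynamic}), and the set $(V_t\cap An(Y))\cup(V_{t-1}\cap An(Y))$ has the same ancestors as $Y$ within those slices (ancestors of $V_t\cap An(Y)$ are ancestors of $Y$, and any directed path from the confounded region to $Y$ crosses slice $t$ at a variable of $V_t\cap An(Y)$); hence a hedge for the per-slice expression would also be a hedge for $P(Y|do(X))$, contradicting identifiability. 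This hedge-transfer argument, driven by the ancestor restriction, is the missing idea; with it in place of your d-connection argument, the rest of your proof goes through.
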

\begin{proof} 
We obtain the first statement from Lemma~\ref{lem:dynamic_complete} and Lemma~\ref{lem:A_generic}. Then if $t > t_x+t_{dx}+1$, then the set $(V_{t}\cap An(Y),V_{t-1}\cap An(Y))$ has the same ancestors than $Y$ within time slices $t_x$ to $t_x+t_{dx}+1$, so if $P(Y|do(X))$ is identifiable then $P(V_{t}\cap An(Y)|V_{t-1}\cap An(Y), do(X))$ is identifiable, which proves the second statement. Finally, we obtain the third statement similarly to the proof of Theorem~\ref{thm:dynamic} but using statements 1 and 2 as proved instead of assumed.
\qed\end{proof}

\begin{figure}[H]
\hrule\medskip
Function \textbf{cDCN-ID}($Y$,$t_y$, $X$,$t_x$, $G$,$C$,$C'$,$T$,$P(V_{t_0})$)

INPUT: 
\begin{itemize}
\item DCN defined by a causal graph $G$ 
on a set of variables $V$ and a set $C \subseteq V \times V$ describing causal relations from $V_t$ to $V_{t+1}$ for every $t$, and a set $C' \subseteq V \times V$ describing hidden confounders from $V_t$ to $V_{t+1}$ for every $t$
\item transition matrix $T$ for $G$ derived
from observational data
\item a set $Y$ included in $V_{t_y}$
\item a set $X$ included in $V_{t_x}$
\item distribution $P(V_{t_0})$ at the initial state, 
\end{itemize}

OUTPUT: The distribution $P(Y|do(X))$ if it is identifiable or else FAIL

\begin{enumerate}
\item let $G'$ be the graph consisting of all time slices in between (and including) $G_{t_x+1}$ and the time slice preceding the left-most time slice connected to $X$ by a hidden confounder path or, if there is no hidden confounder path to X, $G_{t_x-2}$;
\item run the standard ID algorithm for expression $P(V_{t_x+t_{dx}+1}\cap An(Y)|V_{t_x-1}, do(X))$ on $G'$; if it returns FAIL, return FAIL;
\item else, use the resulting distribution to compute the transition matrix $A$, where $A_{ij} = P(V_{t_x+t_{dx}+1}\cap An(Y)=v_i|V_{t_x-1}=v_j, do(X))$;
\item for each $t$ from $t_x+t_{dx}+2$ up to $t_y$:
	\begin{enumerate}
    \item let $G''$ be the graph consisting of all time slices in between (and including) $G_{t}$ and the time slice preceding the left-most time slice connected to $X$ by a hidden confounder path or, if there is no hidden confounder path to X, $G_{t_x-1}$;
	\item run the standard ID algorithm on $G''$ for the expression $P(V_t\cap An(Y)|V_{t-1}\cap An(Y),do(X))$; if it returns FAIL, return FAIL;
    \item else, use the resulting distribution to compute the transition matrix $M_t$, where $(M_t)_{ij} = P(V_{t}\cap An(Y)=v_i|V_{t-1}\cap An(Y)=v_j, do(X))$;
	\end{enumerate}
\item return $ \left[\prod\limits_{t=t_x+t_{dx}+2}^{t_y} M_t\right]\,A\,T^{t_x-1-t_0}P(V_{t_0})$;
\end{enumerate}

\caption{The cDCN algorithm for DCNs with dynamic hidden confounders}
\label{fig:dynamic-algo-complete}
\medskip\hrule
\end{figure}

The cDCN-ID algorithm for DCNs with dynamic hidden confounders is given in Figure~\ref{fig:dynamic-algo-complete}.

\begin{theorem}[Soundness and completeness]
\label{thm:completeness_dynamic}
The cDCN-ID algorithm for DCNs with dynamic hidden confounders is sound and complete.
\end{theorem}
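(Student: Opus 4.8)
The plan is to mirror the structure of the static-case proof (Theorem~\ref{thm:completeness}), splitting the argument into soundness and completeness and, in each half, reducing to the already-established Theorem~\ref{thm:calc_dynamic_complete}, Lemma~\ref{lem:dynamic_complete}, and the soundness and completeness of the ID algorithm~\parencite{shpitser2006identification}. Since Theorem~\ref{thm:calc_dynamic_complete} already does the heavy lifting of both the correctness formula and the identifiability of the constituent expressions, the remaining task is essentially to show that the cDCN-ID algorithm's control flow faithfully realises that theorem.

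For soundness I would argue as follows. Suppose cDCN-ID returns a distribution rather than FAIL. Then every call to ID in lines~2 and~4b succeeded, so by soundness of ID the matrix $A$ correctly encodes $P(V_{t_x+t_{dx}+1}\cap An(Y)|V_{t_x-1},do(X))$ and each $M_t$ correctly encodes $P(V_t\cap An(Y)|V_{t-1}\cap An(Y),do(X))$. Here Lemma~\ref{lem:graphID} guarantees that performing these identifications on the truncated graphs $G'$ and $G''$ yields the same expressions as on the full bi-infinite graph, because the truncation retains every time slice reachable from $X$ by a hidden-confounder path. Theorem~\ref{thm:calc_dynamic_complete}(3) then certifies that the product formula returned in line~5 equals $P(Y|do(X))$, establishing soundness.

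For completeness I would show that an identifiable effect never triggers a FAIL. Assume $P(Y|do(X))$ is identifiable. By Lemma~\ref{lem:dynamic_complete} the expression $P(V_{t_x+t_{dx}+1}\cap An(Y)|V_{t_x-1},do(X))$ is then identifiable, so by completeness of ID the call in line~2 does not return FAIL. Likewise, Theorem~\ref{thm:calc_dynamic_complete}(2) certifies that for every $t>t_x+t_{dx}+1$ the expression $P(V_t\cap An(Y)|V_{t-1}\cap An(Y),do(X))$ is identifiable, so none of the ID calls in line~4b fails either. Hence cDCN-ID produces an output, which by the soundness argument must be the correct $P(Y|do(X))$.

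The delicate step — and the place I expect the real work to lie — is the alignment between the dynamic time span $t_{dx}$ appearing in Theorem~\ref{thm:calc_dynamic_complete} and the graph truncations $G'$, $G''$ that the algorithm actually constructs by following hidden-confounder paths back to $X$. One must verify, via Lemma~\ref{lem:hedge_dynamic} and Lemma~\ref{lem:graphID}, that these truncations are wide enough to contain any hedge that could obstruct identifiability (so that no genuine non-identifiability is overlooked) yet do not fabricate the kind of spurious hedge that caused the incomplete DCN-ID to fail in Lemma~\ref{lem:dcn_id_not_complete}. Intersecting the target slices with $An(Y)$ is precisely what removes those spurious hedges, so the crux is to confirm that this intersection commutes correctly with the matrix-product decomposition across the dynamic span.
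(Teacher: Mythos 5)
Your proposal is correct and follows essentially the same route as the paper, whose entire proof is the two-sentence reduction: completeness from statements (1) and (2) of Theorem~\ref{thm:calc_dynamic_complete} (equivalently, Lemma~\ref{lem:dynamic_complete} plus ID completeness, as you argue), and soundness from statement (3). Your additional care about the graph truncations $G'$, $G''$ and the role of intersecting with $An(Y)$ in avoiding spurious hedges is a more explicit spelling-out of what the paper leaves implicit, not a different argument.
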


\begin{proof} 
The completeness derives from the first and second statements of Theorem~\ref{thm:calc_dynamic_complete}. The soundness derives from the third statement of Theorem~\ref{thm:calc_dynamic_complete}.
\qed\end{proof}


\section{Transportability}

\parencite{pearl2011transportability} introduced the sID algorithm, based on do-calculus, to identify a transport formula between two domains, where the effect in a target domain can be estimated from experimental results in a source domain and some observations on the target domain, thus avoiding the need to perform an experiment on the target domain.

Let us consider a country with a number of alternative roads linking city pairs in different provinces. Suppose that the alternative roads are all consistent with the same causal model (such as the one in Figure~\ref{fig:dcn_confounder_compact}, for example) but have different traffic patterns (proportion of cars/trucks, toll prices, traffic light durations...).
Traffic authorities in one of the provinces may have experimented with policies and observed the impact on, say, traffic delay. This information may be usable to predict the average travel delay in another province for a given traffic policy. The source domain (province where the impact of traffic policy has already been monitored) and target domain (new province) share the same causal relations among variables, represented by a single DCN (see Figure~\ref{fig:dcntransport}). 

\begin{figure}[H]
\begin{center}
\includegraphics[width=2.8cm]{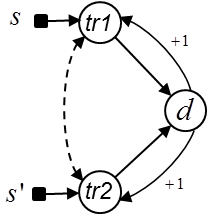}
\end{center}
\caption{A DCN with selection variables $s$ and $s'$, representing the differences in the distribution of variables $tr1$ and $tr1$ in two domains $M_1$ and $M_2$ (two provinces in the same country). This model can be used to evaluate the causal impacts of traffic policy in the target domain $M_2$ based on the impacts observed in the source domain $M_1$.}
\label{fig:dcntransport}
\end{figure}

The target domain may have specific distributions of the toll price and traffic signs, which are accounted for in the model by adding a set of selection variables to the DCN, pointing at variables whose distribution differs among the two domains. If the DCN with the selection variables is identifiable for the traffic delay upon increasing the toll price, then the DCN identification algorithm provides a transport formula which combines experimental probabilities from the source domain and observed distributions from the target domain. Thus, the traffic authorities in the new province can evaluate the impacts before effectively changing traffic policies. This amounts to relational knowledge transfer learning between the two domains \parencite{pan2010survey}.

Consider a DCN with static hidden confounders only. We have demonstrated already that for identification of the effects
of an intervention at time $t_x$ we can restrict our attention to four time slices of the DCN, $t_x-2$, $t_x-1$, $t_x$, and $t_x+1$. Let $M_1$ and $M_2$ be two domains based on this same DCN, 
though the distributions of some variables in $M_1$ and $M_2$ may differ. Then we have 
\begin{align*}
P_{M_2}(Y|do(X))=T_{M_2}^{t_y-(t_x+1)}A_{M_2}T_{M_2}^{t_x-1-t_0}P(V_{t_0}),
\end{align*}
where the entry $ij$ of matrix $A_{M_2}$ corresponds to the transition probability $P_{M_2}(V_{t_x+1}=v_i|V_{t_x-1}=v_j, do(X))$.

By applying the identification algorithm sID, with selection variables, to the elements of matrix $A$ we then obtain a transport formula, which combines experimental distributions in $M_1$ with observational distributions in $M_2$. The algorithm for transportability of causal effects with static hidden confounders is given in Figure~\ref{fig:transport-static-algo}.

\begin{figure}[H]
\hrule\medskip
Function \textbf{DCN-sID}($Y$,$t_y$, $X$,$t_x$, $G$,$C$,$T_{M_2}$,$P_{M_2}(V_{t_0})$,$I_{M_1}$)

INPUT: 
\begin{itemize}
\item DCN defined by a causal graph $G$ (common to both source and target domains $M_1$ and $M_2$) over a set of variables $V$ and a set $C \subseteq V \times V$ describing causal relations from $V_t$ to $V_{t+1}$ for every $t$
\item transition matrix $T_{M_2}$ for $G$ derived
from observational data in $M_2$
\item a set $Y$ included in $V_{t_y}$
\item a set $X$ included in $V_{t_x}$
\item distribution $P_{M_2}(V_{t_0})$ at the initial state in $M_2$
\item set of interventional distributions $I_{M_1}$ in $M_1$
\item set S of selection variables

\end{itemize}

OUTPUT: The distribution  $P_{M_2}(Y|do(X))$ in $M_2$ in terms of $T_{M_2}$, $P_{M_2}(V_{t_0})$ and $I_{M_1}$, or else FAIL

\begin{enumerate}
\item let $G'$ be the acyclic graph formed by joining $G_{t_x-2}$, $G_{t_x-1}$, $G_{t_x}$, and $G_{t_x+1}$
by the causal relations given by $C$;
\item run the standard sID algorithm for expression $P(V_{t_x+1}|V_{t_x-1},do(X))$ on $G'$; if it returns FAIL, return FAIL;
\item else, use the resulting transport formula to compute the transition matrix $A$, where $A_{ij} = P(V_{t_x+1}=v_i|V_{t_x-1}=v_j, do(X))$;
\item return $\sum_{V_{t_y}\setminus Y} T^{t_y-(t_x+1)}\,A\,T^{t_x-1-t_0}\,P(V_{t_0})$;
\end{enumerate}

\caption{The DCN-sID algorithm for the transportability in DCNs with static hidden confounders}
\label{fig:transport-static-algo}
\medskip\hrule
\end{figure}

For brevity, we omit the algorithm extension to dynamic hidden confounders, and the completeness results, which follow the same caveats already explained in the previous sections.


\section{Experiments}

In this section, we provide some numerical examples of causal effect identifiability in DCN, using the algorithms proposed in this thesis.

In our first example, the DCN in Figure~\ref{fig:dcn_confounder_compact} represents how  the traffic between two cities evolves. There are two roads and drivers choose every day to use one or the other road. Traffic conditions on either road on a given day ($tr1$, $tr2$) affect the travel delay between the cities on that same day ($d$). Driver experience influences the road choice next day, impacting $tr1$ and $tr2$. For simplicity we assume variables $tr1$, $tr2$ and $d$ to be binary. Let us assume that from Monday to Friday the joint distribution of the variables follow transition matrix $T_1$ while on Saturday and Sunday they follow transition matrix $T_2$. These transition matrices indicate the traffic distribution change from the previous day to the current day. This system is a DCN with static hidden confounders, and has a Markov chain representation as in Figure~\ref{fig:dcn_confounder_compact}.

\[ T_1 = \left( \begin{matrix}
0.0&0.4&0.0&0.3&0.0&0.2&0.0&0.1 \\
0.0&0.4&0.0&0.3&0.0&0.2&0.0&0.1 \\
0.0&0.4&0.0&0.3&0.0&0.2&0.0&0.1 \\
0.0&0.4&0.0&0.3&0.0&0.2&0.0&0.1 \\
0.2&0.0&0.0&0.1&0.4&0.0&0.0&0.3 \\
0.2&0.0&0.0&0.1&0.4&0.0&0.0&0.3 \\
0.2&0.0&0.0&0.1&0.4&0.0&0.0&0.3 \\
0.2&0.0&0.0&0.1&0.4&0.0&0.0&0.3 \\
\end{matrix} \right)\] 

\[ T_2 = \left( \begin{matrix}
0.1&0.0&0.3&0.1&0.2&0.2&0.0&0.1 \\
0.1&0.0&0.3&0.1&0.2&0.2&0.0&0.1 \\
0.1&0.0&0.3&0.1&0.2&0.2&0.0&0.1 \\
0.1&0.0&0.3&0.1&0.2&0.2&0.0&0.1 \\
0.0&0.2&0.1&0.0&0.1&0.3&0.3&0.0 \\
0.0&0.2&0.1&0.0&0.1&0.3&0.3&0.0 \\
0.0&0.2&0.1&0.0&0.1&0.3&0.3&0.0 \\
0.0&0.2&0.1&0.0&0.1&0.3&0.3&0.0 \\
\end{matrix} \right)\] 

The average travel delay $d$ during a two-week period is shown in Figure~\ref{fig:chart_exp0}. 

\begin{figure}[H]
\begin{center}
\includegraphics[width=7cm]{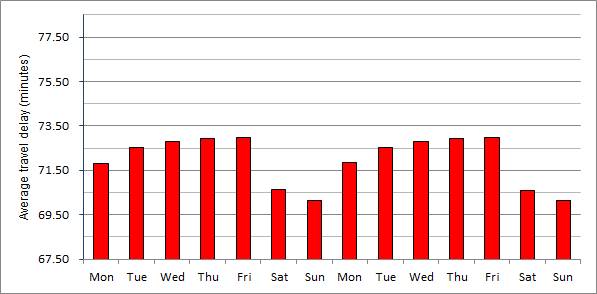}
\end{center}
\caption{Average travel delay of the DCN without intervention.}
\label{fig:chart_exp0}
\end{figure}

Now let us perform an intervention by altering the traffic on the first road $tr1$ and evaluate the subsequent evolution of the average travel delay $d$. We use the algorithm for DCNs with static hidden confounders. We trigger line 1 of the DCN-ID algorithm in Figure~\ref{fig:static-algo-complete} and build a graph consisting of four time slices $G'=(G_{t_x-2},G_{t_x-1},G_{t_x},G_{t_x+1})$ as shown in Figure~\ref{fig:fig_example}.

\begin{figure}[H]
\begin{center}
\includegraphics[width=7cm]{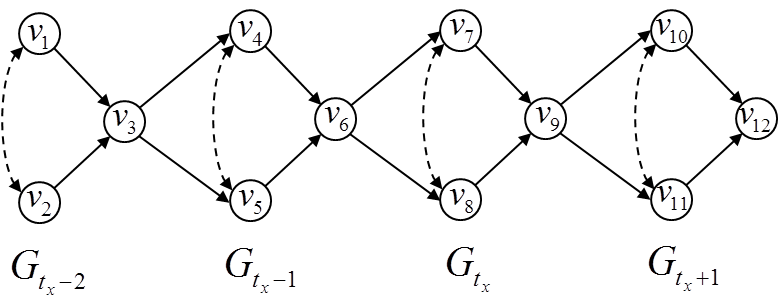}
\end{center}
\caption{Causal graph $G'$ consisting of four time slices of the DCN, from $t_x-2$ to $t_x+1$}
\label{fig:fig_example}
\end{figure}

The ancestors of any future delay at
$t=t_y$ are all the variables in the DCN up to $t_y$, so in line 2 we run the standard ID algorithm for $\alpha=P(v_{10},v_{11},v_{12}|v_4,v_5,v_6, do(v_7))$ on $G'$, which returns the expression $\alpha$:
\begin{align*}
\sum_{v_1,v_2,v_3,v_8,v_9}\frac{P(v_1,v_2,...v_{12})\sum_{v_7,v9}P(v_7,v_8,v_9|v_4,v_5,v_6)}{P(v_4,v_5,v_6)\sum_{v_9}P(v_7,v_8,v_9|v_4,v_5,v_6)}
\end{align*}

Using this expression, line 3 of the algorithm computes the elements of matrix $A$. If we perform the intervention on a Thursday the matrices $A$ for $v_7=0$ and $v_7=1$ can be evaluated from $T_1$.

\[ A_{v_7=0} = \left( \begin{matrix}
0.0&0.4&0.0&0.3&0.0&0.2&0.0&0.1 \\
0.0&0.4&0.0&0.3&0.0&0.2&0.0&0.1 \\
0.0&0.4&0.0&0.3&0.0&0.2&0.0&0.1 \\
0.0&0.4&0.0&0.3&0.0&0.2&0.0&0.1 \\
0.0&0.4&0.0&0.3&0.0&0.2&0.0&0.1 \\
0.0&0.4&0.0&0.3&0.0&0.2&0.0&0.1 \\
0.0&0.4&0.0&0.3&0.0&0.2&0.0&0.1 \\
0.0&0.4&0.0&0.3&0.0&0.2&0.0&0.1 \\
\end{matrix} \right)\]
\[ A_{v_7=1} = \left( \begin{matrix}
0.2&0.0&0.0&0.1&0.4&0.0&0.0&0.3 \\
0.2&0.0&0.0&0.1&0.4&0.0&0.0&0.3 \\
0.2&0.0&0.0&0.1&0.4&0.0&0.0&0.3 \\
0.2&0.0&0.0&0.1&0.4&0.0&0.0&0.3 \\
0.2&0.0&0.0&0.1&0.4&0.0&0.0&0.3 \\
0.2&0.0&0.0&0.1&0.4&0.0&0.0&0.3 \\
0.2&0.0&0.0&0.1&0.4&0.0&0.0&0.3 \\
0.2&0.0&0.0&0.1&0.4&0.0&0.0&0.3 \\
\end{matrix} \right)\] 

In line 4, we find that transition matrices $M_t$ are the same than for the DCN without intervention. Figure~\ref{fig:chart_exp1} shows the average travel delay without intervention, and with intervention on the traffic conditions of the first road.
 
\begin{figure}[H]
\begin{center}
\includegraphics[width=7cm]{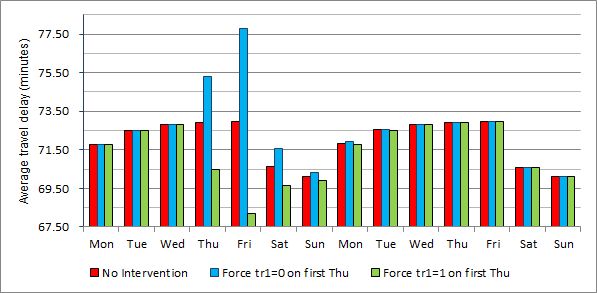}
\end{center}
\caption{Average travel delay of the DCN without intervention, and with interventions $tr1=0$ and $tr1=1$ on the first Thursday}
\label{fig:chart_exp1}
\end{figure}

In a second numerical example, we consider that the system is characterized by a unique transition matrix $T$ and the delay $d$ tends to a steady state. We measure $d$ without intervention and with intervention on $tr1$ at $t=15$. The system's transition matrix $T$ is shown below:

\[ T = \left( \begin{matrix}
0.02&0&0.03&0&0.26&0.13&0.34&0.22 \\
0.02&0&0.03&0&0.26&0.13&0.34&0.22 \\
0.02&0&0.03&0&0.26&0.13&0.34&0.22 \\
0.02&0&0.03&0&0.26&0.13&0.34&0.22 \\
0.34&0.1&0.24&0.21&0&0.02&0.09&0 \\
0.34&0.1&0.24&0.21&0&0.02&0.09&0 \\
0.34&0.1&0.24&0.21&0&0.02&0.09&0 \\
0.34&0.1&0.24&0.21&0&0.02&0.09&0 \\
\end{matrix} \right)\] 

Figure~\ref{fig:chart_exp2} shows the evolution of $d$ with no intervention and with intervention.

\begin{figure}[H]
\begin{center}
\includegraphics[width=7cm]{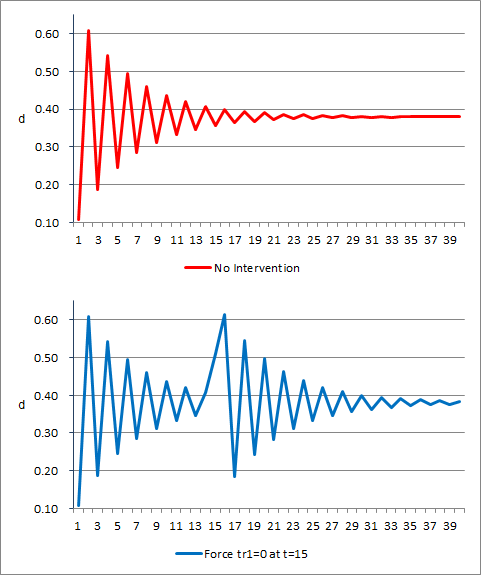}
\end{center}
\caption{Average $d$ of the DCN without intervention and with intervention on $tr1$ at $t=15$.}
\label{fig:chart_exp2}
\end{figure}

As shown in the examples, the DCN-ID algorithm calls ID only once with a graph of size $4|G|$ and evaluates the elements of matrix A with complexity $O((4k)^{(b+2)}$, where $k=3$ is the number of variables per slice and $b=1$ is the number of bits used to encode the variables. The rest is the computation of transition matrix multiplications, which can be done with complexity $O(n.b^2)$, with $n=40-15$ in example 2. To obtain the same result with the ID algorithm by brute force, we would require processing $n$ times the identifiability of a graph of size $40|G|$, with overall complexity $O((k)^{(b+2)}+(2k)^{(b+2)}+(3k)^{(b+2)}+...+(n.k)^{(b+2)})$.

\thispagestyle{plain} 
\mbox{}


\chapter{Conclusions} 

\label{ch:Chapter7} 


\section{Conclusions} 

This doctoral thesis introduces the {\em ALCAM} algorithm for the discovery of causal models with hidden confounders. It uses active learning and chooses a sequence of interventions in order to minimize the overall cost of the discovery process across a set of comprehensive cost dimensions.

Also, this doctoral thesis introduces dynamic causal networks and their analysis with do-calculus, so far studied thoroughly only in static causal graphs. We extend the ID algorithm to the identification of DCNs, and remark the difference between static vs.\ dynamic hidden confounders.  We also provide an algorithm for the transportability of causal effects from one domain to another with the same dynamic causal structure.


\section{Future Work}

The {\em ALCAM} algorithm learns causal graphs with hidden confounders in $O(|{\cal G}|)$ interventions. Using combinatorial  optimisations, we may improve this bound and learn the causal graph with the same fundamental concepts and methodologies but with $O(\log |{\cal G}|)$ interventions instead of  $O(|{\cal G}|)$.

Another line of future research could be to reduce the complexity of the {\em ALCAM} algorithm, by performing some pre-analysis of the causal graph structure and discarding some interventions upfront, instead of calculating the predicted effects for every single possible intervention.

Generalizing the method to a mix of observational and interventional data is also an interesting direction. Defining a comprehensive and integrated set of distinguishability conditions across all available data, both observational and interventional, would close an important chapter of causal research.

For future work on DCN identifiability, note that in the present thesis we have assumed all intervened variables to be in the same time slice; removing this restriction flows rather naturally from our work, and may be an interesting application for the dynamic treatment of time evolving models. Dynamic control of pandemics could be an application for this setting. 

Also, we would like to extend the introduction of causal analysis to a number of dynamic settings, including Hidden Markov Models, and study properties of DCNs in terms of Markov chains (conditions for ergodicity, for example). 

Finally, evaluating the distributions returned by the causal identification algorithms is in general very complex (exponential in the number of variables and domain size); identifying tractable sub-cases and simplifying heuristics is a general question in the field of causality.







\printbibliography[heading=bibintoc]

@inproceedings{Ref1,
 author = {Eberhardt, Frederick and Glymour, Clark and Scheines, Richard},
 title = {On the Number of Experiments Sufficient and in the Worst Case Necessary to Identify All Causal Relations Among N Variables},
 booktitle = {Proceedings of the Twenty-First Conference on Uncertainty in Artificial Intelligence},
 series = {UAI'05},
 year = {2005},
 isbn = {0-9749039-1-4},
 location = {Edinburgh, Scotland},
 pages = {178--184},
 numpages = {7},
 url = {http://dl.acm.org/citation.cfm?id=3020336.3020358},
 acmid = {3020358},
 publisher = {AUAI Press},
 address = {Arlington, Virginia, United States},
}

@ARTICLE{Ref2,
   author = {{Hauser}, A. and {B{\"u}hlmann}, P.},
    title = "{Characterization and Greedy Learning of Interventional Markov Equivalence Classes of Directed Acyclic Graphs}",
  journal = {ArXiv e-prints},
archivePrefix = "arXiv",
   eprint = {1104.2808},
 primaryClass = "stat.ME",
 keywords = {Statistics - Methodology, Computer Science - Discrete Mathematics, Mathematics - Statistics Theory},
     year = 2011,
    month = apr,
   adsurl = {http://adsabs.harvard.edu/abs/2011arXiv1104.2808H},
  adsnote = {Provided by the SAO/NASA Astrophysics Data System}
}

@misc{PetersBuhlmannMeinshausen2015,
      title={Causal inference using invariant prediction: identification and confidence intervals}, 
      author={Jonas Peters and Peter Bühlmann and Nicolai Meinshausen},
      year={2015},
      eprint={1501.01332},
      archivePrefix={arXiv},
      primaryClass={stat.ME}
}

@inproceedings{Ref5,
 author = {Tong, Simon and Koller, Daphne},
 title = {Active Learning for Structure in Bayesian Networks},
 booktitle = {Proceedings of the 17th International Joint Conference on Artificial Intelligence - Volume 2},
 series = {IJCAI'01},
 year = {2001},
 isbn = {1-55860-812-5, 978-1-558-60812-2},
 location = {Seattle, WA, USA},
 pages = {863--869},
 numpages = {7},
 url = {http://dl.acm.org/citation.cfm?id=1642194.1642209},
 acmid = {1642209},
 publisher = {Morgan Kaufmann Publishers Inc.},
 address = {San Francisco, CA, USA},
}

@inproceedings{Ref6,
 author = {Cooper, Gregory F. and Yoo, Changwon},
 title = {Causal Discovery from a Mixture of Experimental and Observational Data},
 booktitle = {Proceedings of the Fifteenth Conference on Uncertainty in Artificial Intelligence},
 series = {UAI'99},
 year = {1999},
 isbn = {1-55860-614-9},
 location = {Stockholm, Sweden},
 pages = {116--125},
 numpages = {10},
 url = {http://dl.acm.org/citation.cfm?id=2073796.2073810},
 acmid = {2073810},
 publisher = {Morgan Kaufmann Publishers Inc.},
 address = {San Francisco, CA, USA},
}

@TECHREPORT{Ref7,
    author = {Kevin P. Murphy},
    title = {Active Learning of Causal Bayes Net Structure},
    institution = {},
    year = {2001}
}

@book{Ref10,
  added-at = {2009-09-12T19:19:34.000+0200},
  author = {Spirtes, P. and Glymour, C. and Scheines, R.},
  biburl = {https://www.bibsonomy.org/bibtex/2e2b107e8fd3469c8b0e944ca37a559f3/mozaher},
  edition = {2nd},
  interhash = {559e17fcd12a76214629ba6c4efe3f9a},
  intrahash = {e2b107e8fd3469c8b0e944ca37a559f3},
  keywords = {imported},
  owner = {Mozaherul Hoque},
  publisher = {MIT press},
  review = {PC algorithm},
  timestamp = {2009-09-12T19:19:43.000+0200},
  title = {Causation, Prediction, and Search},
  year = 2000
}

@inproceedings{Ref11,
 author = {Spirtes, Peter and Meek, Christopher and Richardson, Thomas},
 title = {Causal Inference in the Presence of Latent Variables and Selection Bias},
 booktitle = {Proceedings of the Eleventh Conference on Uncertainty in Artificial Intelligence},
 series = {UAI'95},
 year = {1995},
 isbn = {1-55860-385-9},
 location = {Montr\&\#233;al, Qu\&\#233;, Canada},
 pages = {499--506},
 numpages = {8},
 url = {http://dl.acm.org/citation.cfm?id=2074158.2074215},
 acmid = {2074215},
 publisher = {Morgan Kaufmann Publishers Inc.},
 address = {San Francisco, CA, USA},
}

@article{Ref12,
author = "Colombo, Diego and Maathuis, Marloes H. and Kalisch, Markus and Richardson, Thomas S.",
doi = "10.1214/11-AOS940",
fjournal = "The Annals of Statistics",
journal = "Ann. Statist.",
month = "02",
number = "1",
pages = "294--321",
publisher = "The Institute of Mathematical Statistics",
title = "Learning high-dimensional directed acyclic graphs with latent and selection variables",
url = "http://dx.doi.org/10.1214/11-AOS940",
volume = "40",
year = "2012"
}

@article{Ref13,
 author = {Chickering, David Maxwell},
 title = {Optimal Structure Identification with Greedy Search},
 journal = {J. Mach. Learn. Res.},
 issue_date = {3/1/2003},
 volume = {3},
 month = mar,
 year = {2003},
 issn = {1532-4435},
 pages = {507--554},
 numpages = {48},
 url = {http://dx.doi.org/10.1162/153244303321897717},
 doi = {10.1162/153244303321897717},
 acmid = {944933},
 publisher = {JMLR.org},
}

@inproceedings{Ref14,
 author = {Verma, Thomas and Pearl, Judea},
 title = {Equivalence and Synthesis of Causal Models},
 booktitle = {Proceedings of the Sixth Annual Conference on Uncertainty in Artificial Intelligence},
 series = {UAI '90},
 year = {1991},
 isbn = {0-444-89264-8},
 pages = {255--270},
 numpages = {16},
 url = {http://dl.acm.org/citation.cfm?id=647233.719736},
 acmid = {719736},
 publisher = {Elsevier Science Inc.},
 address = {New York, NY, USA},
}

@book{Ref15,
 author = {Pearl, Judea},
 title = {Causality: Models, Reasoning, and Inference},
 year = {2000},
 isbn = {0-521-77362-8},
 publisher = {Cambridge University Press},
 address = {New York, NY, USA},
}

@inproceedings{shpitser2006identification,
  title={Identification of joint interventional distributions in recursive semi-Markovian causal models},
  author={Shpitser, Ilya and Pearl, Judea},
  booktitle={Proceedings of the National Conference on Artificial Intelligence},
  volume={21},
  pages={1219},
  year={2006},
  organization={Menlo Park, CA; Cambridge, MA; London; AAAI Press; MIT Press; 1999}
}

@phdthesis{TianThesis2002,
author = {Tian, Jin and Pearl, Judea},
title = {Studies in Causal Reasoning and Learning},
year = {2002},
isbn = {0493897879},
publisher = {University of California, Los Angeles},
abstract = {Building intelligent systems that can learn about and reason with causes and effects is a fundamental task in artificial intelligence. This dissertation addresses various issues in causal reasoning and learning in the framework of causal Bayesian networks. We offer a complete characterization of the set of distributions that could be induced by local interventions on variables governed by a causal Bayesian network. The characterization provides a symbolic inferential tool for tasks in causal reasoning. We propose a new method of discovering causal structures, based on the detection of local, spontaneous changes in the underlying data-generating model. We show that the use of information about local changes increases our power of causal discovery beyond the limits set by independence equivalence that governs Bayesian networks. In the presence of unmeasured variables, causal models may impose non-independence functional constraints and no general criterion is previously available for finding those constraints. We offer a systematic method of identifying functional constraints, which facilitates the task of testing causal models. Causal effects permit us to predict how systems would respond to actions or policy decisions. We establish new graphical criteria for ensuring the identification of causal effects that generalize and simplify existing criteria in the literature, and we provide computational procedures for systematically identifying causal effects. Assessing the probability of causation, that is, the likelihood that one event was the cause of another, guides much of what we understand about and how we act in the world. We show how useful information on the probabilities of causation can be extracted from empirical data, and how data from both experimental and nonexperimental studies can be combined to yield information that neither study alone can provide. Our results clarify the basic assumptions that must be made before statistical measures such as the excess-risk-ratio could be used for assessing attributional quantities such as the probability of causation.},
note = {AAI3070088}
}

@article{DBLP:journals/corr/ShanmugamKDV15,
  author    = {Karthikeyan Shanmugam and
               Murat Kocaoglu and
               Alexandros G. Dimakis and
               Sriram Vishwanath},
  title     = {Learning Causal Graphs with Small Interventions},
  journal   = {CoRR},
  volume    = {abs/1511.00041},
  year      = {2015},
  url       = {http://arxiv.org/abs/1511.00041},
  eprinttype = {arXiv},
  eprint    = {1511.00041},
  timestamp = {Mon, 13 Aug 2018 16:46:32 +0200},
  biburl    = {https://dblp.org/rec/journals/corr/ShanmugamKDV15.bib},
  bibsource = {dblp computer science bibliography, https://dblp.org}
}

@misc{https://doi.org/10.48550/arxiv.2011.00641,
  doi = {10.48550/ARXIV.2011.00641},
  
  url = {https://arxiv.org/abs/2011.00641},
  
  author = {Squires, Chandler and Magliacane, Sara and Greenewald, Kristjan and Katz, Dmitriy and Kocaoglu, Murat and Shanmugam, Karthikeyan},
  
  keywords = {Methodology (stat.ME), Machine Learning (cs.LG), Machine Learning (stat.ML), FOS: Computer and information sciences, FOS: Computer and information sciences},
  
  title = {Active Structure Learning of Causal DAGs via Directed Clique Tree},
  
  publisher = {arXiv},
  
  year = {2020},
  
  copyright = {arXiv.org perpetual, non-exclusive license}
}

@article{DBLP:journals/corr/abs-1709-03625,
  author    = {AmirEmad Ghassami and
               Saber Salehkaleybar and
               Negar Kiyavash and
               Elias Bareinboim},
  title     = {Budgeted Experiment Design for Causal Structure Learning},
  journal   = {CoRR},
  volume    = {abs/1709.03625},
  year      = {2017},
  url       = {http://arxiv.org/abs/1709.03625},
  eprinttype = {arXiv},
  eprint    = {1709.03625},
  timestamp = {Mon, 13 Aug 2018 16:47:53 +0200},
  biburl    = {https://dblp.org/rec/journals/corr/abs-1709-03625.bib},
  bibsource = {dblp computer science bibliography, https://dblp.org}
}

@unknown{agrawal19,
author = {Agrawal, Raj and Squires, Chandler and Yang, Karren and Shanmugam, Karthikeyan and Uhler, Caroline},
year = {2019},
month = {02},
pages = {},
title = {ABCD-Strategy: Budgeted Experimental Design for Targeted Causal Structure Discovery}
}

@inproceedings{Kocaoglu2017,
  title={Experimental Design for Learning Causal Graphs with Latent Variables},
  author={M. Kocaoglu and Karthikeyan Shanmugam and Elias Bareinboim},
  booktitle={NIPS},
  year={2017}
}

@article{DBLP:journals/corr/abs-2005-11736,
  author    = {Raghavendra Addanki and
               Shiva Prasad Kasiviswanathan and
               Andrew McGregor and
               Cameron Musco},
  title     = {Efficient Intervention Design for Causal Discovery with Latents},
  journal   = {CoRR},
  volume    = {abs/2005.11736},
  year      = {2020},
  url       = {https://arxiv.org/abs/2005.11736},
  eprinttype = {arXiv},
  eprint    = {2005.11736},
  timestamp = {Mon, 17 May 2021 11:45:26 +0200},
  biburl    = {https://dblp.org/rec/journals/corr/abs-2005-11736.bib},
  bibsource = {dblp computer science bibliography, https://dblp.org}
}

@article{DBLP:journals/corr/abs-2012-13976,
  author    = {Raghavendra Addanki and
               Andrew McGregor and
               Cameron Musco},
  title     = {Intervention Efficient Algorithms for Approximate Learning of Causal
               Graphs},
  journal   = {CoRR},
  volume    = {abs/2012.13976},
  year      = {2020},
  url       = {https://arxiv.org/abs/2012.13976},
  eprinttype = {arXiv},
  eprint    = {2012.13976},
  timestamp = {Mon, 17 May 2021 11:45:26 +0200},
  biburl    = {https://dblp.org/rec/journals/corr/abs-2012-13976.bib},
  bibsource = {dblp computer science bibliography, https://dblp.org}
}

@INPROCEEDINGS{dashfundamental,
  AUTHOR  = "Dash, Denver H. and
             Druzdzel, Marek J.",
  TITLE   = "A Fundamental Inconsistency Between Equilibrium Causal Discovery
             and Causal Reasoning Formalisms",
  BOOKTITLE="Working Notes of the Workshop on Conditional Independence
             Structures and Graphical Models",
  PUBLISHER="Fields Institute",
  ADDRESS = "Toronto, Canada",
  MONTH   = "27 September -- 1 October",
  YEAR    =  1999,
  PAGES   = "17--18"
}

@article{queen2009intervention,
  title={Intervention and causality: forecasting traffic flows using a dynamic Bayesian network},
  author={Queen, Catriona M and Albers, Casper J},
  journal={Journal of the American Statistical Association},
  volume={104},
  number={486},
  pages={669--681},
  year={2009},
  publisher={Taylor \& Francis}
}

@article{dahlhaus2003causality,
  title={Causality and graphical models in time series analysis},
  author={Dahlhaus, Rainer and Eichler, Michael},
  journal={Oxford Statistical Science Series},
  pages={115--137},
  year={2003},
  publisher={Citeseer}
}

@phdthesis{tianphd,
  author       = {Jin Tian}, 
  title        = {Studies in Causal Reasoning and Learning},
  school       = {University of California, Los Angeles},
  year         = 2002,
}

@techreport{tianpearl2002,
  author       = {Tian, J. and Pearl, J.}, 
  title        = {On the identification of causal effects},
  institution  = {Department of Computer Science, University of California, Los Angeles},
  year         = 2002,
  note		   = {Technical Report R-290-L},
}

@inproceedings{tian2002general,
  title={A general identification condition for causal effects},
  author={Tian, Jin and Pearl, Judea},
  booktitle={AAAI/IAAI},
  pages={567--573},
  year={2002}
}

@inproceedings{tian2004identifying,
  title={Identifying conditional causal effects},
  author={Tian, Jin},
  booktitle={Proceedings of the 20th conference on Uncertainty in artificial intelligence},
  pages={561--568},
  year={2004},
  organization={AUAI Press}
}

@inproceedings{huang2006identifiability,
  title={Identifiability in causal Bayesian networks: A sound and complete algorithm},
  author={Huang, Yimin and Valtorta, Marco},
  booktitle={Proceedings of the National Conference on Artificial Intelligence},
  volume={21},
  pages={1149},
  year={2006},
  organization={Menlo Park, CA; Cambridge, MA; London; AAAI Press; MIT Press; 1999}
}

@inproceedings{huang2006complete,
  title={Pearl’s calculus of interventions is complete},
  author={Y. Huang and M. Valtorta},
  booktitle={Proceedings of the 22nd conference on Uncertainty in artificial intelligence},
  year={2006},
  organization={AUAI Press}
}

@inproceedings{pearl1994probabilistic, title={A probabilistic calculus of actions}, author={Pearl, Judea},
  booktitle={Proceedings of the Tenth international conference on Uncertainty in artificial intelligence},
  pages={454--462},
  year={1994},
  organization={Morgan Kaufmann Publishers Inc.}
}

@inproceedings{pearl2011transportability,
  title={Transportability of causal and statistical relations: A formal approach},
  author={Pearl, Judea and Bareinboim, Elias},
  booktitle={Data Mining Workshops (ICDMW), 2011 IEEE 11th International Conference on},
  pages={540--547},
  year={2011},
  organization={IEEE}
}

@article{pan2010survey,
  title={A survey on transfer learning},
  author={Pan, Sinno Jialin and Yang, Qiang},
  journal={Knowledge and Data Engineering, IEEE Transactions on},
  volume={22},
  number={10},
  pages={1345--1359},
  year={2010},
  publisher={IEEE}
}

@article{valdes2011effective,
  title={Effective connectivity: influence, causality and biophysical modeling},
  author={Valdes-Sosa, Pedro A and Roebroeck, Alard and Daunizeau, Jean and Friston, Karl},
  journal={Neuroimage},
  volume={58},
  number={2},
  pages={339--361},
  year={2011},
  publisher={Elsevier}
}

@inproceedings{iwasaki1989causality,
  title={Causality in device behavior},
  author={Iwasaki, Yumi and Simon, Herbert A},
  booktitle={Readings in qualitative reasoning about physical systems},
  pages={631--645},
  year={1989},
  organization={Morgan Kaufmann Publishers Inc.}
}

@article{dash2008note,
  title={A note on the correctness of the causal ordering algorithm},
  author={Dash, Denver and Druzdzel, Marek J},
  journal={Artificial Intelligence},
  volume={172},
  number={15},
  pages={1800--1808},
  year={2008},
  publisher={Elsevier}
}

@inproceedings{meek2014toward,
  title={Toward Learning Graphical and Causal Process Models.},
  author={Meek, Christopher},
  booktitle={CI@ UAI},
  pages={43--48},
  year={2014}
}

@article{chicharro2015algorithms,
  title={Algorithms of causal inference for the analysis of effective connectivity among brain regions},
  author={Chicharro, Daniel and Panzeri, Stefano},
  journal={Information-based methods for neuroimaging: analyzing structure, function and dynamics},
  year={2015},
  publisher={Frontiers Media SA}
}

@inproceedings{moneta2006graphical,
  title={Graphical Models for the Identification of Causal Structures in Multivariate Time Series Models.},
  author={Moneta, Alessio and Spirtes, Peter},
  booktitle={JCIS},
  year={2006}
}

@article{aalen2014can,
  title={Can we believe the DAGs? A comment on the relationship between causal DAGs and mechanisms},
  author={Aalen, OO and R{\o}ysland, K and Gran, JM and Kouyos, R and Lange, T},
  journal={Statistical methods in medical research},
  pages={0962280213520436},
  year={2014},
  publisher={SAGE Publications}
}

@article{voortman2012learning,
  title={Learning why things change: the difference-based causality learner},
  author={Voortman, Mark and Dash, Denver and Druzdzel, Marek J},
  journal={arXiv preprint arXiv:1203.3525},
  year={2012}
}

@incollection{dash2001caveats,
  title={Caveats for causal reasoning with equilibrium models},
  author={Dash, Denver and Druzdzel, Marek},
  booktitle={Symbolic and Quantitative Approaches to Reasoning with Uncertainty},
  pages={192--203},
  year={2001},
  publisher={Springer}
}

@inproceedings{dash2005restructuring,
  title={Restructuring Dynamic Causal Systems in Equilibrium.},
  author={Dash, Denver},
  booktitle={AISTATS},
  year={2005},
  organization={Citeseer}
}

@article{lauritzen2002chain,
  title={Chain graph models and their causal interpretations},
  author={Lauritzen, Steffen L and Richardson, Thomas S},
  journal={Journal of the Royal Statistical Society: Series B (Statistical Methodology)},
  volume={64},
  number={3},
  pages={321--348},
  year={2002},
  publisher={Wiley Online Library}
}

@article{white2010granger,
  title={Granger causality and dynamic structural systems},
  author={White, Halbert and Lu, Xun},
  journal={Journal of Financial Econometrics},
  volume={8},
  number={2},
  pages={193--243},
  year={2010},
  publisher={Oxford Univ Press}
}

@inproceedings{white2011linking,
  title={Linking Granger Causality and the Pearl Causal Model with Settable Systems.},
  author={White, Halbert and Chalak, Karim and Lu, Xun and others},
  booktitle={NIPS Mini-Symposium on Causality in Time Series},
  pages={1--29},
  year={2011}
}

@inproceedings{gong2015discovering,
  title={Discovering temporal causal relations from subsampled data},
  author={Gong, Mingming and Zhang, Kun and Schoelkopf, Bernhard and Tao, Dacheng and Geiger, Philipp},
  booktitle={Proceedings of the 32nd International Conference on Machine Learning (ICML-15)},
  pages={1898--1906},
  year={2015}
}

@article{eichler2010granger,
  title={On Granger causality and the effect of interventions in time series},
  author={Eichler, Michael and Didelez, Vanessa},
  journal={Lifetime data analysis},
  volume={16},
  number={1},
  pages={3--32},
  year={2010},
  publisher={Springer}
}

@article{eichler2012causal,
  title={Causal reasoning in graphical time series models},
  author={Eichler, Michael and Didelez, Vanessa},
  journal={arXiv preprint arXiv:1206.5246},
  year={2012}
}

@article{eichler2012causal2,
  title={Causal inference in time series analysis},
  author={Eichler, Michael},
  journal={Causality: statistical perspectives and applications. Wiley, Chichester},
  pages={327--354},
  year={2012}
}

@article{didelezcausal,
  title={Causal Reasoning for Events in Continuous Time: A Decision--Theoretic Approach},
  author={Didelez, Vanessa},
  year={2015},
  journal = {}
}

@article{verma1993graphical,
  title={Graphical aspects of causal models},
  author={Verma, TS},
  journal={Technical R eport R-191, UCLA},
  year={1993}
}

@article{lacerda2012discovering,
  title={Discovering cyclic causal models by independent components analysis},
  author={Lacerda, Gustavo and Spirtes, Peter L and Ramsey, Joseph and Hoyer, Patrik O},
  journal={arXiv preprint arXiv:1206.3273},
  year={2012}
}

@article{shpitser2012efficient,
  title={An efficient algorithm for computing interventional distributions in latent variable causal models},
  author={Shpitser, Ilya and Richardson, Thomas S and Robins, James M},
  journal={arXiv preprint arXiv:1202.3763},
  year={2012}
}

@book{pearl1991theory,
  title={A theory of inferred causation},
  author={Pearl, Judea and Verma, Thomas and others},
  year={1991},
  publisher={Morgan Kaufmann San Mateo, CA}
}

@InProceedings{Meganck,
author="Meganck, Stijn
and Leray, Philippe
and Manderick, Bernard",
editor="Torra, Vicen{\c{c}}
and Narukawa, Yasuo
and Valls, A{\"i}da
and Domingo-Ferrer, Josep",
title="Learning Causal Bayesian Networks from Observations and Experiments: A Decision Theoretic Approach",
booktitle="Modeling Decisions for Artificial Intelligence",
year="2006",
publisher="Springer Berlin Heidelberg",
address="Berlin, Heidelberg",
pages="58--69",
abstract="We discuss a decision theoretic approach to learn causal Bayesian networks from observational data and experiments. We use the information of observational data to learn a completed partially directed acyclic graph using a structure learning technique and try to discover the directions of the remaining edges by means of experiment. We will show that our approach allows to learn a causal Bayesian network optimally with relation to a number of decision criteria. Our method allows the possibility to assign costs to each experiment and each measurement. We introduce an algorithm that allows to actively add results of experiments so that arcs can be directed during learning. A numerical example is given as demonstration of the techniques.",
isbn="978-3-540-32781-3"
}

@inproceedings{He,
  title={Active Learning of Causal Networks with Intervention Experiments and Optimal Designs},
  author={Yang-Bo He and Zhi Geng and Andr{\'e} Elisseeff},
  year={2008}
}

@article{hyttinen,
  author  = {Antti Hyttinen and Frederick Eberhardt and Patrik O. Hoyer},
  title   = {Experiment Selection for Causal Discovery},
  journal = {Journal of Machine Learning Research},
  year    = {2013},
  volume  = {14},
  pages   = {3041-3071},
  url     = {http://jmlr.org/papers/v14/hyttinen13a.html}
}

@phdthesis{Eberhardt_Thesis,
 author = {Eberhardt, Frederick},
 title = {Causation and Intervention},
 year = {2007},
 isbn = {0-521-77362-8},
 publisher = {Carnegie Mellon University},
 address = {Pittsburgh, PA, USA},
}

@inproceedings{Eberhardt2008,
 author = {Eberhardt, Frederick},
 title = {Almost Optimal Intervention Sets for Causal Discovery},
 booktitle = {Proceedings of the Twenty-Fourth Conference on Uncertainty in Artificial Intelligence},
 series = {UAI'08},
 year = {2008},
 isbn = {0-9749039-4-9},
 location = {Helsinki, Finland},
 pages = {161--168},
 numpages = {8},
 url = {http://dl.acm.org/citation.cfm?id=3023476.3023496},
 acmid = {3023496},
 publisher = {AUAI Press},
 address = {Arlington, Virginia, United States},
}

@article{Hauser2014,
 author = {Hauser, Alain and B\"{u}hlmann, Peter},
 title = {Two Optimal Strategies for Active Learning of Causal Models from Interventional Data},
 journal = {Int. J. Approx. Reasoning},
 issue_date = {June, 2014},
 volume = {55},
 number = {4},
 month = jun,
 year = {2014},
 issn = {0888-613X},
 pages = {926--939},
 numpages = {14},
 url = {http://dx.doi.org/10.1016/j.ijar.2013.11.007},
 doi = {10.1016/j.ijar.2013.11.007},
 acmid = {2599296},
 publisher = {Elsevier Science Inc.},
 address = {New York, NY, USA},
 keywords = {Active learning, Bayesian network, Causal inference, Graphical model, Interventions},
}

@article{Ref50,
author = {Eberhardt, Frederick and Hoyer, Patrik and Scheines, Richard},
year = {2010},
month = {01},
pages = {},
title = {Combining experiments to discover linear cyclic models with latent variables}
}

@article{Ref51,
author = {Hyttinen, Antti and Eberhardt, Frederick and Hoyer, Patrik},
year = {2010},
month = {09},
pages = {},
title = {Causal discovery for linear cyclic models with latent variables}
}

@article{Ref52,
author = {Hyttinen, Antti and Eberhardt, Frederick and Hoyer, Patrik},
year = {2012},
month = {10},
pages = {},
title = {Causal Discovery of Linear Cyclic Models from Multiple Experimental Data
Sets with Overlapping Variables},
journal = {Uncertainty in Artificial Intelligence - Proceedings of the 28th Conference, UAI 2012}
}

@article{Ref53,
author = {Hyttinen, Antti and Eberhardt, Frederick and Hoyer, Patrik},
year = {2012},
month = {11},
pages = {},
title = {Learning Linear Cyclic Causal Models with Latent Variables},
volume = {13},
journal = {Journal of Machine Learning Research (JMLR)}
}

@article{DawidDidelez,
author = {Dawid, Alexander and Didelez, Vanessa},
year = {2010},
month = {10},
pages = {},
title = {Identifying the consequences of dynamic treatment strategies: A decision-theoretic overview},
volume = {4},
journal = {Statistics Surveys},
doi = {10.1214/10-SS081}
}

@article{Acharya,
  author    = {Jayadev Acharya and
               Arnab Bhattacharyya and
               Constantinos Daskalakis and
               Saravanan Kandasamy},
  title     = {Learning and Testing Causal Models with Interventions},
  journal   = {CoRR},
  volume    = {abs/1805.09697},
  year      = {2018},
  url       = {http://arxiv.org/abs/1805.09697},
  eprinttype = {arXiv},
  eprint    = {1805.09697},
  timestamp = {Sun, 09 Feb 2020 11:10:45 +0100},
  biburl    = {https://dblp.org/rec/journals/corr/abs-1805-09697.bib},
  bibsource = {dblp computer science bibliography, https://dblp.org}
}

@InProceedings{amort_causal_disc,
  title = 	 {Amortized Causal Discovery: Learning to Infer Causal Graphs from Time-Series Data},
  author =       {L{\"o}we, Sindy and Madras, David and Zemel, Richard and Welling, Max},
  booktitle = 	 {Proceedings of the First Conference on Causal Learning and Reasoning},
  pages = 	 {509--525},
  year = 	 {2022},
  editor = 	 {Schölkopf, Bernhard and Uhler, Caroline and Zhang, Kun},
  volume = 	 {177},
  series = 	 {Proceedings of Machine Learning Research},
  month = 	 {11--13 Apr},
  publisher =    {PMLR},
  pdf = 	 {https://proceedings.mlr.press/v177/lowe22a/lowe22a.pdf},
  url = 	 {https://proceedings.mlr.press/v177/lowe22a.html},
  abstract = 	 {On time-series data, most causal discovery methods fit a new model whenever they encounter samples from a new underlying causal graph. However, these samples often share relevant information which is lost when following this approach. Specifically, different samples may share the dynamics which describe the effects of their causal relations. We propose Amortized Causal Discovery, a novel framework that leverages such shared dynamics to learn to infer causal relations from time-series data. This enables us to train a single, amortized model that infers causal relations across samples with different underlying causal graphs, and thus leverages the shared dynamics information. We demonstrate experimentally that this approach, implemented as a variational model, leads to significant improvements in causal discovery performance, and show how it can be extended to perform well under added noise and hidden confounding.}
}

@article{Peters2020CausalMF,
  title={Causal Models for Dynamical Systems},
  author={J. Peters and Stefan Bauer and Niklas Pfister},
  journal={Probabilistic and Causal Inference},
  year={2020}
}

@article{HeinzeDemlPetersMeinshausen2018,
url = {https://doi.org/10.1515/jci-2017-0016},
title = {Invariant Causal Prediction for Nonlinear Models},
title = {},
author = {Christina Heinze-Deml and Jonas Peters and Nicolai Meinshausen},
pages = {20170016},
volume = {6},
number = {2},
journal = {Journal of Causal Inference},
doi = {doi:10.1515/jci-2017-0016},
year = {2018},
lastchecked = {2023-03-23}
}

@article{HYTTINEN2017208,
title = {A constraint optimization approach to causal discovery from subsampled time series data},
journal = {International Journal of Approximate Reasoning},
volume = {90},
pages = {208-225},
year = {2017},
issn = {0888-613X},
doi = {https://doi.org/10.1016/j.ijar.2017.07.009},
url = {https://www.sciencedirect.com/science/article/pii/S0888613X17304577},
author = {Antti Hyttinen and Sergey Plis and Matti Järvisalo and Frederick Eberhardt and David Danks},
keywords = {Causality, Causal discovery, Graphical models, Time series, Constraint satisfaction, Constraint optimization},
abstract = {We consider causal structure estimation from time series data in which measurements are obtained at a coarser timescale than the causal timescale of the underlying system. Previous work has shown that such subsampling can lead to significant errors about the system's causal structure if not properly taken into account. In this paper, we first consider the search for system timescale causal structures that correspond to a given measurement timescale structure. We provide a constraint satisfaction procedure whose computational performance is several orders of magnitude better than previous approaches. We then consider finite-sample data as input, and propose the first constraint optimization approach for recovering system timescale causal structure. This algorithm optimally recovers from possible conflicts due to statistical errors. We then apply the method to real-world data, investigate the robustness and scalability of our method, consider further approaches to reduce underdetermination in the output, and perform an extensive comparison between different solvers on this inference problem. Overall, these advances build towards a full understanding of non-parametric estimation of system timescale causal structures from subsampled time series data.}
}

@Article{Blondel2017,
author={Blondel, Gilles
and Arias, Marta
and Gavald{\`a}, Ricard},
title={Identifiability and transportability in dynamic causal networks},
journal={International Journal of Data Science and Analytics},
year={2017},
month={Mar},
day={01},
volume={3},
number={2},
pages={131-147},
abstract={In this paper, we propose a causal analog to the purely observational dynamic Bayesian networks, which we call dynamic causal networks. We provide a sound and complete algorithm for the identification of causal effects in dynamic causal networks, namely for computing the effect of an intervention or experiment given a dynamic causal network and probability distributions of passive observations of its variables, whenever possible. We note the existence of two types of hidden confounder variables that affect in substantially different ways the identification procedures, a distinction with no analog in either dynamic Bayesian networks or standard causal graphs. We further propose a procedure for the transportability of causal effects in dynamic causal network settings, where the result of causal experiments in a source domain may be used for the identification of causal effects in a target domain.},
issn={2364-4168},
doi={10.1007/s41060-016-0028-8},
url={https://doi.org/10.1007/s41060-016-0028-8}
}


\end{document}